\def\ps@pprintTitle{%
    \let\@oddhead\@empty
    \let\@evenhead\@empty
    \let\@evenfoot\@oddfoot
    }
\tikzstyle{bag} = [align=center]
\xpatchcmd{\@todo}{\setkeys{todonotes}{#1}}{\setkeys{todonotes}{inline,#1}}{}{}
\theoremstyle{plain}
\newtheorem{theorem}{Theorem}[section]
\newtheorem{lemma}[theorem]{Lemma}
\newtheorem{corollary}[theorem]{Corollary}
\newtheorem{proposition}[theorem]{Proposition}
\newtheorem{assumption}[theorem]{Assumption} 
\theoremstyle{definition}
\newtheorem{definition}[theorem]{Definition}
\theoremstyle{remark}
\newtheorem{remark}{Remark}[section]
\newcommand{\bC}{\mathbb{C}}
\newcommand{\bR}{\mathbb{R}}
\newcommand{\bE}{\mathbb{E}}
\newcommand{\bT}{\mathbb{T}}
\newcommand{\bN}{\mathbb{N}}
\newcommand{\bZ}{\mathbb{Z}}
\newcommand{\cA}{\mathcal{A}}
\newcommand{\cC}{\mathcal{C}}
\newcommand{\cE}{\mathcal{E}}
\newcommand{\cF}{\mathcal{F}}
\newcommand{\cG}{\mathcal{G}}
\newcommand{\cH}{\mathcal{H}}
\newcommand{\cI}{\mathcal{I}}
\newcommand{\cJ}{\mathcal{J}}
\newcommand{\cK}{\mathcal{K}}
\newcommand{\cL}{\mathcal{L}}
\newcommand{\cM}{\mathcal{M}}
\newcommand{\cN}{\mathcal{N}}
\newcommand{\cO}{\mathcal{O}}
\newcommand{\cP}{\mathcal{P}}
\newcommand{\cQ}{\mathcal{Q}}
\newcommand{\cR}{\mathcal{R}}
\newcommand{\cS}{\mathcal{S}}
\newcommand{\cT}{\mathcal{T}}
\newcommand{\cX}{\mathcal{X}}
\newcommand{\cY}{\mathcal{Y}}
\newcommand{\cZ}{\mathcal{Z}}
\newcommand{\HS}{\mathrm{HS}}
\newcommand{\Op}{\mathscr{L}}
\newcommand{\dx}{\mathrm{d}x}
\newcommand{\real}{\mathrm{Re}}
\newcommand{\imag}{\mathrm{Im}}
\newcommand{\id}{\mathrm{Id}}
\newcommand{\zlex}{\mathscr{K}_{+}}
\newcommand{\smootha}{\mathbb{A}_{\infty}}
\newcommand{\srla}{\mathbb{G}_{\infty}}
\newcommand{\ift}{\mathrm{IFT}}
\newcommand{\tcutoff}{\mathrm{cut}}
\newcommand{\tabsval}{\mathrm{abs}}
\newcommand{\tclip}{\mathrm{clip}}
\newcommand{\cutoff}{\tcutoff}
\newcommand{\fcutoff}{f_{\tcutoff}}
\newcommand{\clip}{f_{\tclip}}
\newcommand{\absval}{f_{\tabsval}}
\newcommand{\funcut}{F_{\tcutoff}}
\newcommand{\sigmoid}{\Phi}
\newcommand{\cont}{C} 
\newcommand{\const}{\mathscr{C}} 
\DeclareMathOperator*{\argmin}{arg\,min}
\newcommand{\dobs}{d_{\mathrm{obs}}}
\newcommand{\yobs}{y_{\mathrm{obs}}}
\newcommand{\noise}{\mathrm{noise}}
\Crefname{equation}{}{}
\begin{document}
\begin{frontmatter}
\title{Derivative-Informed Fourier Neural Operator: Universal Approximation and Applications to PDE-Constrained Optimization}
\author[1]{Boyuan Yao\corref{joint}}
\ead{johnyao@utexas.edu}

\author[2]{Dingcheng Luo\corref{joint}}
\ead{dingcheng.luo@qut.edu.au}

\author[3]{Lianghao Cao}
\ead{lianghao@caltech.edu}

\author[4]{Nikola Kovachki}
\ead{nkovachki@nvidia.com}

\author[5]{Thomas O'Leary-Roseberry}
\ead{oleary-roseberry.1@osu.edu}

\author[1,6]{Omar Ghattas}
\ead{omar@oden.utexas.edu}

\address[1]{Oden Institute for Computational Engineering and Sciences, The University of Texas at Austin, 201 E 24th St, Austin, TX 78712, USA.
}

\address[2]{School of Mathematical Sciences and Centre for Data Science, Queensland University of Technology, 2 George St, Brisbane City, QLD 4000, Australia.}

\address[3]{Department of Computing and Mathematical Sciences,
California Institute of Technology, 1200 E. California Blvd.,
Pasadena, CA 91125, USA.}

\address[4]{Nvidia Corporation, 2788 San Tomas Expressway, Santa Clara, CA 95051, USA.}

\address[5]{Department of Mathematics, The Ohio State University, 100 Math Tower, 231 W 18th Ave, Columbus, OH 43210, USA.
}

\address[6]{Walker Department of Mechanical Engineering, The University of Texas at Austin, 204 E. Dean Keeton St, Austin, TX 78712, USA.
}

\cortext[joint]{These two authors contributed equally to this work as joint first authors.}

\begin{abstract}
We present approximation theories and efficient training methods for derivative-informed Fourier neural operators (DIFNOs) with applications to PDE-constrained optimization. A DIFNO is an FNO trained by minimizing its prediction error jointly on output and Fr\'echet derivative samples of a high-fidelity operator (e.g., a parametric PDE solution operator). As a result, a DIFNO can closely emulate not only the high-fidelity operator's response but also its sensitivities. To motivate the use of DIFNOs instead of conventional FNOs as surrogate models, we show that accurate surrogate-driven PDE-constrained optimization requires accurate surrogate Fr\'echet derivatives. Then, we establish (i) simultaneous universal approximation of continuously differentiable operators and their Fréchet derivatives by FNOs on compact sets, and (ii) universal approximation of continuously differentiable operators by FNOs in weighted Sobolev spaces with input measures that have unbounded supports. Our theoretical results certify the capability of FNOs for accurate derivative-informed operator learning and for the solution of PDE-constrained optimization problems. Furthermore, we develop efficient training schemes that leverage dimensionality reduction and multi-resolution techniques to significantly reduce memory and computational costs in Fr\'echet derivative learning. Numerical examples on nonlinear diffusion––reaction, Helmholtz, and Navier--Stokes equations demonstrate that DIFNOs are superior in sample complexity for operator learning and solving infinite-dimensional PDE-constrained inverse problems, achieving high accuracy at low training sample sizes.
\end{abstract}
\begin{keyword}
Operator learning, PDE-constrained optimization, universal approximation, surrogate modeling, inverse problems

\textit{AMS subject classifications:} 41A65, 68T07, 65K10, 65J22.\\\vspace{-15 pt}
\hspace{1pt}
\end{keyword}
\end{frontmatter}

\section{Introduction}\label{section:intro}
We consider the approximation of operators on function spaces, particularly solution operators of parametric partial differential equations (PDEs). Let $a \in \cX$ denote the parametric input, e.g., coefficient fields and initial conditions, and $u \in \cY$ denote the PDE solution where $\cX$ and $\cY$ are normed vector spaces. A \textit{PDE solution operator} $\cG:\cX \rightarrow \cY$ is defined as
\begin{equation}\label{eq:operator_definition}
    \mathcal{G}:\cX\ni a \mapsto u\in\cY\qquad \text{such that } \qquad \cM(\mathcal{G}(a),a) = 0 \in \cY',
\end{equation}
where $\cY'$ denotes the topological dual of $\cY$, and $\cM:\cY\times\cX\to\cY'$ is the PDE residual. For PDEs that describe nonlinear multiscale systems, the numerical evaluation costs of $\cG$ at a single instance of the input $a$ can be high due to (i) a high-dimensional solution space after discretizing $\cY$ and (ii) use of iterative procedures for time stepping and residual minimization. Furthermore, rapid and repeated evaluations of $\cG$ at different inputs are necessary for solving inverse problems \cite{kaipio2005statistical, stuart2010inverse, ghattas2021learning, nelsen2025operator} and optimal control problems \cite{troltzsch2010optimal, manzoni2021optimal} that are formulated as \textit{PDE-constrained optimization} problems \cite{biegler2003large, hinze2009optimization, kouri2018optimization, antil2018brief}. The task is to find an input to the PDE model that leads to desirable features of the PDE solution through optimization. In these cases, it is advantageous to first invest in the construction of a fast-to-evaluate approximation of the operator $\cG$, denoted $\widetilde{\cG}:\cX\times \mathbb{R}^{d_{w}}\to\cY$ with weights $w\in\mathbb{R}^{d_{w}}$. We refer to $\widetilde{\cG}_w(\cdot)\coloneqq \widetilde{\cG}(\cdot, w)$ as an \textit{operator surrogate} and $\cG$ as the \textit{high-fidelity operator}. The operator surrogate $\widetilde{\cG}_w$ is constructed by minimizing a surrogate discrepancy measure (e.g., mean squared error) to find the optimal set of weights $w^{\dagger}$, and then deployed in place of the high-fidelity operator $\cG$ for procuring fast but approximate PDE solutions.

We limit our attention to a class of neural network-based operator surrogates called \textit{Fourier neural operators} (FNOs) \cite{li2021fourier}. Classical developments of neural networks have primarily focused on learning mappings between finite-dimensional Euclidean spaces or finite sets. In contrast, neural operators \cite{kovachki2023nueral,berner2025principled} are designed for function spaces and enjoy two key advantages: (i) they are discretization-agnostic, i.e., using the same weights across different discretizations of the function spaces, and (ii) they are universal approximators of nonlinear continuous operators. An FNO is a neural operator that parametrizes integral kernels via coefficients in an expansion onto a Fourier basis, thereby enabling efficient computation through Fast Fourier Transform (FFT) algorithms. There has been considerable interest in the last few years in developing approximation theory for FNOs \cite{kovachki2021universal, kim2024bounding, koshizuka2024understanding, lanthaler2025nonlocality, lanthaler2025discretization}, extending the FNO architecture \cite{wen2022ufno, you2022ifno, li2023geofno, huang2025fnm}, developing novel FNO training schemes \cite{li2024physics, li2023gino}, and applying FNOs to tackle modeling and computation challenges in science and engineering \cite{kurth2023fourcastnet, wen2023real, azizzadenesheli2024neural, bhattacharya2024learning, bhattacharya2025learning, wang2025coarse}.

The focus of this work is the integration of FNOs with \textit{derivative-informed operator learning} \cite{oleary2024dino, cao2025derivative, luo2025dis}, an emerging paradigm for supervised learning of operator surrogates. Conventional operator learning \cite{kovachki2023nueral, kovachki2024operator} minimizes the mean-squared prediction error on output samples of the high-fidelity operator:
\begin{equation}\label{eq:mse_loss}
    w^{\dagger}\in\argmin_{w} \frac{1}{N}\sum_{i=1}^N \left\|u^{(i)}- \widetilde{\cG}_w\left(a^{(i)}\right)\right\|_{\cY}^2,\quad a^{(i)}\sim \mu,\quad u^{(i)}=\cG\left(a^{(i)}\right),
\end{equation}
where $\mu$ is a data-generating probability measure on $\cX$, and $\|\cdot\|_{\cY}$ denote the norm on $\cY$. 
While this formulation is flexible and applies to black-box models, one may further exploit the structure of $\cG$ to derive formulations with superior performance; see, e.g., physics-informed operator learning \cite{li2024physics, goswami2023physics}. We are interested in derivative-informed operator learning, which exploits the structure of $\cG$ by incorporating its \textit{Fr\'echet derivative}, denoted as $D\cG$, into the operator learning formulation:
\begin{equation}\label{eq:dino_loss_samples}
\begin{gathered}
        a^{(i)}\sim\mu,\quad u^{(i)}=\cG\left(a^{(i)}\right),\quad J^{(i)} = D\cG\left(a^{(i)}\right),\\
        w^{\dagger}\in\argmin_{w}\frac{1}{N}\sum_{i=1}^N \left(\left\|u^{(i)} -\widetilde{\cG}_w\left(a^{(i)}\right)\right\|^2_\cY + \left\|J^{(i)} - D\widetilde{\cG}_w\left(a^{(i)}\right)\right\|^2_{\text{HS}(\cX, \cY)} \right),
\end{gathered}
\end{equation}
where $\|\cdot\|_{\text{HS}(\cX, \cY)}$ denotes the Hilbert--Schmidt norm for linear operators between $\cX$ and $\cY$, and $D\widetilde{\cG}_w$ denotes the Fr\'echet derivative of $\widetilde{\cG}_w$. In practice, derivative samples can be generated using solutions to the direct or adjoint sensitivity equations \cite{giles2000introduction, ghattas2021learning, cao2025derivative} or using differentiable PDE solvers \cite{griewank2008evaluating, deavilabelbuteperes2018endtoend, holl2024phiflow}, often at low incremental cost relative to generating the solutions $u^{(i)}$. A precise definition of derivative-informed operator learning considered in this work is given in \Cref{subsec:dino}.

Derivative-informed learning of neural operators (DINO) offers two key practical advantages over the conventional method. First, it consistently improves training efficiency, achieving high generalization accuracy with a significantly lower training sample size and generation cost \cite{oleary2024dino, qiu2024derivative, cao2024lazydino, cao2025derivative, luo2025efficient, go2025accurate}. Second, it directly controls errors in the Fr\'echet derivative approximation, thereby enhancing the performance in a variety of downstream tasks such as Bayesian inverse problems \cite{cao2024lazydino, cao2025derivative}, optimal design under uncertainty \cite{luo2025efficient}, and optimal experimental design \cite{go2025accurate, go2025sequential}. Existing studies on DINO mainly focused on reduced-basis neural operators, such as PCANet \cite{hesthaven2018pod, bhattacharya2021pca, luo2025dis}, DIPNet \cite{oleary2022dipnet, luo2025dis, cao2024lazydino, cao2025derivative, go2025accurate}, and DeepONet \cite{lu2021deep, qiu2024derivative}. The FNOs differ from these architectures because they do not involve linear input projection and output reconstruction steps. It has empirically been shown that they are more efficient for PDEs exhibiting hyperbolic and advection-dominated behavior \cite{lanthaler2022nonlinear}. Moreover, among alternative neural operator architectures that share this feature, such as graph-based neural operators \cite{li2020multipole,li2020gno}, FNOs offer higher computational efficiency through the FFT. It is therefore of interest to extend the DINO framework to FNOs.

\subsection{Contributions}
This work develops approximation theory and efficient training schemes for derivative-informed FNO (DIFNO) with applications to PDE-constrained optimization. The contributions of this work are listed below.

\begin{itemize}

    \item[(C1)] \textit{\Cref{prop:surrogate_optimization_error,prop:surrogate_optimization_error_strongly_convex}: Accurate surrogate-driven solutions of PDE-constrained optimization require accurate surrogate Fr\'echet derivative.} We show that errors in surrogate-driven solutions of PDE-constrained optimization are controlled by a sum of pointwise errors of the operator surrogate $\widetilde{\cG}_w$ and its Fr\'echet derivative $D\widetilde{\cG}_w$. We first establish this result for the residual errors measured through the high-fidelity objective function gradient at surrogate solutions. In regions where the high-fidelity objective is strongly convex, this result is extended to hold for the error between the high-fidelity and surrogate optimization solutions. These findings highlight that, for surrogate-driven optimization to succeed, the operator surrogate must closely emulate the high-fidelity operator's sensitivities, motivating our study of DIFNOs.
    
    \item[(C2)]\textit{\Cref{theorem:main_ua}: Simultaneous universal approximation of continuously differentiable operators and their Fr\'echet derivatives by FNOs on compact sets.}
    We show that FNOs with smooth activation functions can simultaneously approximate a continuously differentiable operator and its Fr\'echet derivative on compact input sets uniformly. Our analysis is based on the strategy presented in \cite{kovachki2021universal}, which considers the universal approximation of a continuous operator. We extend several of the approximation results in \cite{kovachki2021universal} to simultaneously consider both the approximation of an operator and its Fr\'echet derivative. This result certifies the approximation capabilities of FNOs in the setting of derivative-informed operator learning, providing a firm foundation for deriving further theoretical results. For instance, it supplies the conditions for proving that FNOs have the capacity to solve inverse problems arbitrarily well (\Cref{theorem:inverse_problem_ua}).
    
    \item[(C3)]\textit{\Cref{theorem:main_ua_mu}: Simultaneous universal approximation of continuously differentiable operators and their Fr\'echet derivatives by FNOs in weighted Sobolev space with input measures that have unbounded topological supports.}
    Using a novel cutoff argument, we extend the result of \Cref{theorem:main_ua} to the setting where the errors are measured using weighted Sobolev norms with input measures that have unbounded topological supports (e.g., Gaussian measures).
    We show that FNOs with \textit{GELU-like} activation functions (\Cref{def:activation}) can approximate continuously differentiable operators that have locally Lipschitz derivatives, under moment assumptions on the measure, operator, and local Lipschitz constants (\Cref{assumption:measurability}) similar to those in \cite{bhattacharya2021pca}.
    In particular, our cutoff argument simultaneously addresses the FNO and its Fr\'echet derivatives, whereas previous approaches (e.g., \cite{lanthaler2022error}) typically only handle the operator approximation. This result establishes a more practical foundation for DIFNOs, as it is consistent with numerical implementation, where the training and testing errors are measured using these weighted Sobolev norms for operators.
    
    \item[(C4)] \textit{Efficient methods for DIFNO training.} The computational and memory cost of DIFNO training increases rapidly with the grid size and can be prohibitive under a limited training budget. We propose two techniques for derivative compression that alleviate these costs: dimension reduction and multi-resolution. We note that dimension reduction is commonly used in derivative-informed learning of reduced-basis neural operators \cite{oleary2024dino, qiu2024derivative, luo2025dis}, but the multi-resolution technique has not been explored. These approaches eliminate the quadratic dependence of training costs on the high-fidelity grid size, resulting in vastly improved efficiency in derivative-informed FNO training, i.e., achieving the same accuracy at significantly lower cost.

    \item[(C5)] \textit{High accuracy in learning PDE solution operators and solving inverse problems.} 

    Supported by numerical results on nonlinear diffusion--reaction, Helmholtz, and Navier--Stokes PDEs, we show that derivative-informed operator learning consistently improves the accuracy of FNOs in approximating PDE solution operators compared to conventional operator learning at various training sample sizes. We also demonstrate that the high accuracy of DIFNOs translates into high accuracy in solving inverse problems, where the goal is to recover the input parameter $a$ from sparse observations of the PDE solution $u$ using a trained DIFNO in place of the PDE model. These results extend previous findings on derivative-informed learning of reduced basis neural operators \cite{oleary2024dino, cao2024lazydino, qiu2024derivative, cao2025derivative, luo2025efficient, go2025accurate} and DeepONet \cite{qiu2024derivative} to FNOs. These results also agree with our theoretical findings in more general and practical settings.

\end{itemize}

\subsection{Outline of the Paper}
The remainder of this paper is organized as follows. Section 2 introduces the mathematical framework for operator learning and PDE-constrained optimization, including an error bound for surrogate-driven optimization and an overview of the derivative-informed operator learning formulation. Section 3 defines FNOs and presents our universal approximation result on compact sets, along with its implications for inverse problems solved by FNOs. Section 4 extends the analysis to weighted Sobolev spaces. Section 5 introduces efficient training strategies. Section 6 presents numerical results for operator learning and inverse problems. Section 7 summarizes the findings and discusses directions for future work. The appendices contain proofs of our theoretical results.

\section{Operator Learning and PDE-Constrained Optimization}\label{section:background}
\subsection{Mathematical Setting and Notation}
We begin by introducing the mathematical setting and notation. Our work is focused on the approximation of nonlinear operators $\cG : \cX \rightarrow \cY$, where $\cX$ and $\cY$ are separable Hilbert spaces. 
For a Hilbert space $\cX$, let $\cX'$ denote its topological dual, and $\langle \cdot, \cdot \rangle_{\cX}$ denote the inner product on $\cX$ which induces the norm $\| \cdot \|_{\cX}$. 

\paragraph{Continuous Linear Operators}Let $\Op(\cX, \cY)$ denote the space of continuous linear operators from $\cX$ to $\cY$ with the operator norm given by
\begin{equation*}
    \|\cA\|_{\Op(\cX, \cY)} = \sup_{v \in \cX, v \neq 0} \frac{\|\cA v\|_{\cY}}{\|v\|_{\cX}}.
\end{equation*}
Additionally, let $\HS(\cX, \cY)$ denote the space of Hilbert--Schmidt operators with the Hilbert--Schmidt norm given by
\begin{equation*}
    \|\cA\|_{\HS{(\cX, \cY})} = \left(  
    \sum_{k=1}^{\infty} \|\cA \psi_k \|_{\cY}
    \right)^{1/2},
\end{equation*}
where $(\psi_k)_{k=1}^{\infty}$ is any orthonormal basis of $\cX$.
Analogously, we define the space of continuous $m$-linear operators from $(\cX)^m$ to $\cY$, 
denoted $\Op_m(\cX, \cY)$, which has the norm
\begin{equation*}
    \|\cA\|_{\Op_m(\cX, \cY)} := \sup_{\substack{v_k \in \cX, v_k \neq 0 \\ k = 1 \dots m}} \frac{\|\cA(v_1, \dots, v_m) \|_{\cY}}{\|v_1\|_{\cX} \cdots \|v_k\|_{\cX}},
\end{equation*}
where $\Op(\cX,\cY) = \Op_{1}(\cX,\cY)$.
Their Hilbert--Schmidt counterpart, $\HS_{m}(\cX, \cY)$, has the norm 
\begin{equation}\label{eq:hilbert-schmidt-multi}
    \|\cA\|_{\HS_m(\cX, \cY)} := \left(\sum_{k_1, \dots, k_m = 1}^{\infty} \|\cA(\psi_{k_1}, \dots, \psi_{k_m})\|_{\cY}^2\right)^{1/2}.
\end{equation}
\paragraph{Continuously Differentiable Operators}
For an open set $U \subset \cX$, let $\cont^{m}(U;\cY)$ denote the space of $m$-times continuously Fr\'echet differentiable operators from $U$ to $\cY$,
along with the shorthand notation $\cont(U; \cY) := \cont^{0}(U;\cY)$ and $\cont^{m}(U) := \cont^{m}(U;\bR)$. For an operator $\cG :\cX \rightarrow \cY$ that is $m$-times differentiable at $a \in \cX$,  its $m$-th derivative is denoted as $D^m\cG(a) \in \Op_m(\cX, \cY)$. 
Note that when $\cX$ and $\cY$ are finite-dimensional, $D^m \cG$ corresponds to a finite-dimensional tensor, and hence any equivalent tensor norm (e.g., Frobenius, $\ell^{\infty}$) can be used.
In this work, when not specified, 
the terms \textit{derivative} and \textit{differentiability} are assumed to be in the Fr\'echet sense.
For any compact subset $K \subset U$, we also define the $\cont^{m}(K;\cY)$ norm as 
\[
    \| \cG \|_{\cont^{m}(K;\cY)} := \max_{k \in \{ 0, \dots, m \}} \sup_{a \in K} \| D^k \cG(a) \|_{\Op_k(\cX;\cY)}.
\]

\paragraph{Hilbert Scales} For a given Hilbert space $\cX$, we are also interested in the notion of Hilbert scales generated by an unbounded, self-adjoint operator $\cA : D(\cA) \subset \cX \rightarrow \cX$.
That is, for $s \geq 0$, 
we consider the spaces $\cX_{s} := D(\cA^{s})$, where $\cX_0 = \cX$. These are separable Hilbert spaces with the inner product $\langle u, v \rangle_{\cX_{s}} = \langle \cA^{\frac{s}{2}}u, \cA^{\frac{s}{2}} v \rangle_{\cX}$.
For $s' \geq s$, we have the continuous embedding $\cX_{s'} \hookrightarrow \cX_{s}$. Much of our theory will be developed using Hilbert scales $H^{s}(\bT^d)$ as input/output spaces, i.e., fractional order Sobolev spaces on the $d$-dimensional torus. Here, by default, we assume that the function spaces are real-valued. 
First, let $L^p(\bT^d) := L^p(\bT^d; \bR)$, $p \in [1, \infty]$, denote the Lebesgue spaces on the torus $\bT^d$ identified with $[0, 2\pi]^d$. For $u \in L^2(\bT^d)$, we define its Fourier transform as
\begin{equation*}
    \hat{u}(k) := \cF(u)(k) = \frac{1}{(2\pi)^d} \int_{\bT^d} u(x) \exp(-i\langle k, x\rangle) dx, 
\end{equation*}
where $\cF : L^2(\bT^d) \rightarrow \ell^2(\bZ^d; \bC)$ and $k = (k_1, \dots, k_d) \in \bZ^d$. The inverse transform $\cF^{-1} :  \ell^2(\bZ^d; \bC) \rightarrow L^2(\bT^d)$ is given by
\begin{equation*}
    u(x) := \cF^{-1}(\hat{u})(x) = \sum_{k \in \bZ} \hat{u}(k) \exp(i \langle k , x\rangle),
\end{equation*}
where we have used $\langle \cdot, \cdot \rangle$ to denote the standard Euclidean inner product on $\bR^d$.
Note that for real-valued functions, the Fourier coefficients satisfy the constraint $\hat{u}(k) = \overline{\hat{u}(-k)}$, 
where $\overline{z}$ denotes the complex conjugate of $z \in \bC$. We then define the norm $\| \cdot \|_{H^s}$ as
\begin{equation*}
    \| u \|^2_{H^s} := (2\pi)^d \sum_{k \in \bZ^d} (1 + |k|^{2})^{s} |\hat{u}(k)|^2.
\end{equation*}
The spaces $H^s(\bT^d) := H^s(\bT^d; \bR)$ are then defined for $s \geq 0$ as
\begin{equation}\label{eq:real_sobolev}
    H^s(\bT^d) := \{ u \in L^2(\bT^d) : \|u \|_{H^s} < \infty \}.
\end{equation}
In particular, the $H^s(\bT^d)$ spaces form a Hilbert scale generated by the operator $\cA = \cI - \Delta$ over the domain $D(\cA) = H^2(\bT^d)$, where $\cI$ and $\Delta$ denote the identity and Laplacian operators, respectively.
Moreover, for $s \geq 0$, the scale can be shifted such that 
\begin{equation}\label{eq:shifted_X}
    \cX_{\delta} \coloneqq H^{s+\delta}(\bT^d)
\end{equation}
is also a Hilbert scale with $\delta \geq 0$.
Additional useful properties of the $H^s$ spaces are presented in \Cref{sec:notes_on_periodic_sobolev_spaces}.

\paragraph{Spectral Projection and Truncated Fourier Transform} As in \cite{kovachki2021universal}, given $N \in \bN$, we will denote by $L^2_N(\bT^d)$ the space of trigonometric polynomials with Fourier modes up to $|k|_{\infty} \leq N$, i.e.~$u_N \in L^2(\bT^d)$ is of the form 
\[
    u_N(x) = \sum_{|k|_{\infty} \leq N} \hat{u}(k) \exp(i \langle k, x \rangle),
\]
where $|k|_{\infty} := \max(|k_1|, \dots, |k_d|)$. We will also use $H^s_{N}(\bT^d)$ to denote $L^2_{N}(\bT^d)$ equipped with the $H^s(\bT^d)$ norm.
We denote the $L^2$-orthogonal projection onto $L^2_N(\bT^d)$ as $\cP_N : L^2(\bT^d) \rightarrow L_N^2(\bT^d)$,
which can be explicitly written as 
\begin{equation*}
    (\cP_N u)(x) = \sum_{|k|_{\infty} \leq N} \hat{u}(k) \exp(i \langle k , x \rangle), \qquad \hat{u}(k) =(\cF u)(k).
\end{equation*}
Additionally, we will define the operators $\cF_N : L^2(\bT^d) \rightarrow \bC^{K_N}$ and $\cF_{N}^{-1} : \bC^{K_N} \rightarrow L^2(\bT^d)$ 
as the truncated Fourier transforms, i.e.~
\begin{align*}
    (\cF_N u)(k) = (\cF u)(k) \quad |k|_{\infty} \leq N,\quad \text{and}\quad\cF_N^{-1}(\hat{u})(x) = \sum_{|k|_{\infty} \leq N} \hat{u}(k) \exp(i \langle k, x \rangle).
\end{align*}
Here, $K_N$ is the number of elements in the set $\{ k = (k_1, \dots, k_d) : |k|_{\infty} \leq N \}.$
We can also identify $\bC^{K_N}$ with $\bR^{2K_N}$, so that $\cF_N$ and $\cF_N^{-1}$ can be viewed as mapping to and from $\bR^{2K_N}$.

\subsection{Operator Surrogate Driven PDE-Constrained Optimization}

We motivate derivative-informed learning of operator surrogates in the setting where the surrogate substitutes for the PDE model in solving PDE-constrained optimization. Let us consider the following constrained minimization problem defined through the functional $F:\cY\times\cX\rightarrow \overline{\mathbb{R}}$:
\begin{equation} \label{eq:det_opt_constrained}
    \inf\limits_{a\in\cX} F(u, a)\quad \text{such that}\quad \cM(u,a)=0
\end{equation}
where $\cM:\cY\times \cX\to\cY'$ is the PDE residual, and $\overline{\mathbb{R}}$ is the extended real number line. This constrained optimization problem arises in inverse problems and optimal control. For example, an inverse problem often has the following form:
\begin{equation}\label{eq:inverse_problem}
    F(u, a) = \frac{1}{2}\left\|{\Gamma_{\noise}^{-1/2}}\left(\cH(u, a) - \yobs \right)\right\|^2 + \mathcal{R}(a),
\end{equation}
where $\yobs \in \bR^{\dobs}$ is the observation data, $\mathcal{H}:\cY\times \cX\to\bR^{\dobs}$ is an observation operator that maps the input and the PDE solution to the predicted observations, $\Gamma_{\mathrm{noise}} \in \bR^{\dobs \times \dobs}$ is a noise covariance matrix, and $\mathcal{R}:\cX\to\overline{\bR}$ is a regularization functional.

In the setting of operator learning, we often write the constrained optimization problem in \Cref{eq:det_opt_constrained} as an unconstrained optimization problem using the PDE solution operator $\cG$. The unconstrained problem is defined through the objective function $f:\cX\rightarrow \overline{\mathbb{R}}$:
\begin{equation} \label{eq:det_opt}
    \inf\limits_{a\in\cX} f(a),\quad f(a)\coloneqq F(\cG(a), a).
\end{equation}
When $f\in C^1(U)$, where $U\subset \cX$ is a non-empty open set, any minimizer in $U$ is a stationary point. This means that an optimization solution $a^{\star}\in U$, when it exists, satisfies
\begin{equation}\label{eq:local_minimizer}
    Df(a^\star) = 0\quad \in \cX'.
\end{equation}
The stationary point can be found via gradient-based optimization iterations starting from an initial guess $a_0\in \cX$:
\begin{equation*}
    a_{k+1} = a_k - P_k Df(a_k),\quad k\in \bN_0,
\end{equation*}
where $P_k \in \Op(\cX', \cX)$ is an operator that returns the update given the derivative and could possibly also depend on the history $\{a_i\}_{i=0}^k$. Due to the costs associated with the evaluation of $f$ and $Df$, which involve PDE solutions and sensitivity computations, it is advantageous to substitute an operator surrogate $\widetilde{\cG}_w$ for $\cG$ in the objective function, leading to the approximate optimization problem:
\begin{equation}\label{eq:fw_approx_optimization}
    \inf\limits_{a\in\cX} \widetilde{f}_w(a),\quad \widetilde{f}_w(a) = F(\widetilde{\cG}_w(a), a).
\end{equation}
Similarly, when $\widetilde{f}_w\in C^1(U)$ and there exists a minimizer $a^{\dagger}\in U$, we have
\begin{equation*}
    D\widetilde{f}_w(a^{\dagger})=0 \quad \in \cX'.
\end{equation*}

We are interested in understanding how errors in the operator surrogate approximation affect the optimization solutions via the following quantities.
\begin{enumerate}[(i)]
    \item \textit{The residual error} $\|Df(a^{\dagger})\|_{\cX'}$ of a surrogate-driven optimization solution $a^{\dagger}$ according to the high-fidelity objective function $f$.
    \item \textit{The solution error} $\|a^{\dagger}-a^{\star}\|_{\cX}$ between a solution $a^{\star}$ of the high-fidelity objective $f$ and a nearby solution $a^{\dagger}$ of surrogate objective $\widetilde{f}_w$.
\end{enumerate}
One key observation is that while controlling the pointwise errors of the operator surrogate $\widetilde{\cG}_w$ is often sufficient for controlling the error of the surrogate objective $\widetilde{f}_w$, it is insufficient for the errors of the surrogate-driven optimization solutions that are stationary points. The distinction between the two errors is crucial for neural operator surrogates, as they tend to introduce non-convexity into the objective function $\widetilde{f}_w$, leading to the discovery of potentially spurious local minimizers rather than a global one. To illustrate this point further, we provide a simple example below.

\paragraph{Example: Pointwise Convergence of the Objective is Insufficient}
Consider the sequence of one dimensional functions $f_k:[-1,1] \to \mathbb{R}$ given by
\begin{equation*}
    f_k(x) = x^2 + \frac{1}{k}\sin(k\pi x).
\end{equation*}
We have $\lim\limits_{k\rightarrow \infty} f_k(x) = x^2$ uniformly. Each $f_k$ is nonconvex, and yet the limit is strongly convex. Here, the limiting global minimizer converges; however, pre-asymptotically, any given choice of $f_k$ has many local minima, which may be severely sub-optimal. Here, the function $f_k$ is an analogue of the surrogate objectives given by a sequence of operator surrogates with decreasing pointwise errors.

\begin{figure}[h]
\center
    \begin{tikzpicture}[node distance=8cm, auto]
    \node (img1) {\includegraphics[width=6cm]{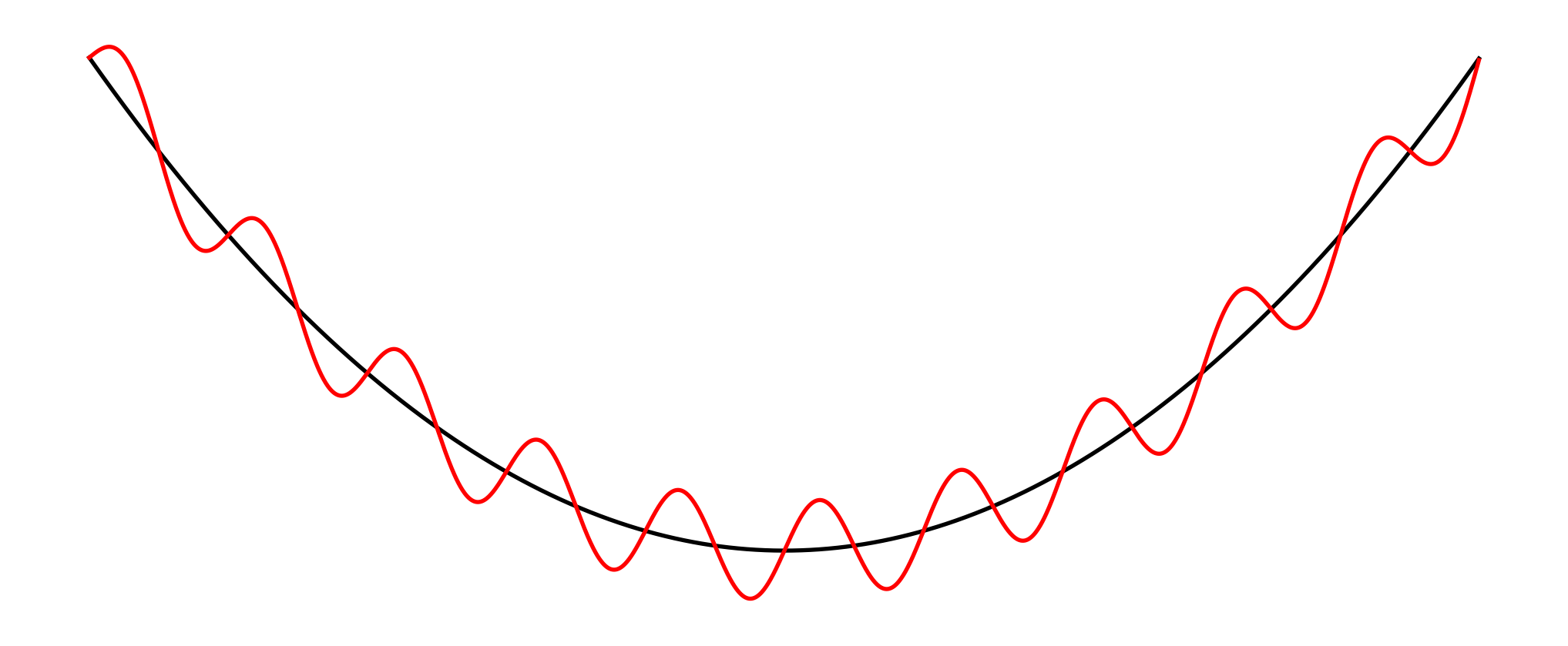}};
    
    \node (img2) [right of=img1] {\includegraphics[width=6cm]{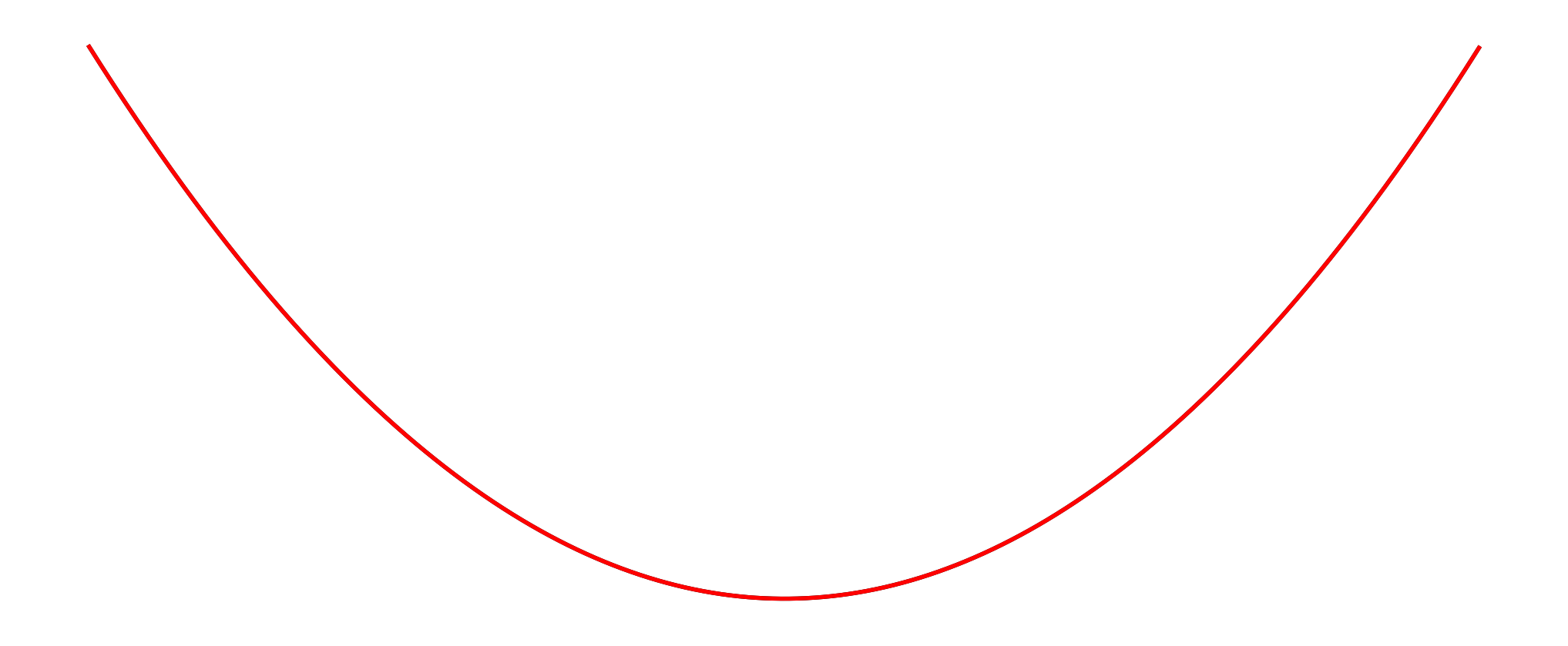}};
    
    \draw[->, thick] (img1) -- (img2);
    \end{tikzpicture}
    \caption{An illustration of a uniformly converging sequence of $f_k(x) = x^2 + \frac{1}{k}\sin(k\pi x)$, where each function is nonconvex and has many local minima which are far away from being optimal with respect to the limiting function, which is strongly convex.}
    \label{fig:local_convergence_pathology}
\end{figure}

In the following proposition, we show that controlling the residual error of the surrogate-driven optimization requires controlling the pointwise errors of the operator surrogate $\widetilde{\cG}_w$ and its Fr\'echet derivative $D\widetilde{\cG}_w$. The proof of this result is shown in \Cref{sec:proof_optimization_error}.

\begin{proposition}[Surrogate-Driven Optimization Errors]\label{prop:surrogate_optimization_error}
Let $\cX$ and $\cY$ be Banach spaces. Furthermore, let $\cG, \widetilde{\cG}_{w} \in C^1(\cX;\cY)$ be continuously differentiable operators and let $F : \cY \times \cX \rightarrow \bR$ be continuously differentiable with a Lipschitz continuous derivative such that $f, \widetilde{f}_w$ are as defined in \eqref{eq:det_opt} and \eqref{eq:fw_approx_optimization}, respectively.
Consider a stationary point $a^{\dagger}$ of $\widetilde{f}_w$, i.e., $D \widetilde{f}_{w}(a^{\dagger}) = 0$. 

Then, the residual error at $a^{\dagger}$ is bounded by
\begin{equation*}
    \| D f(a^{\dagger}) \|_{\cX'}
    \leq \const 
    \left(\|\cG(a^{\dagger}\|_{\cY} +  \| D \cG(a^{\dagger}) \|_{\Op(\cX,\cY)} + \|a^{\dagger}\|_{\cX} + 1\right) \left( \mathscr{E}_0(a^{\dagger}) + \mathscr{E}_1(a^{\dagger}) + \mathscr{E}_0(a^{\dagger}) \mathscr{E}_1(a^{\dagger}) \right),
\end{equation*}
where $\const > 0 $ is a constant depending on $F$, while
$\mathscr{E}_0(a) := \|\cG(a) - \widetilde{\cG}_{w}(a)\|_{\cY}$ and 
$\mathscr{E}_1(a) := \|D \cG(a) - D \widetilde{\cG}_{w}(a)\|_{\Op(\cX,\cY)}$ 
denote the output and derivative errors, respectively.
\end{proposition}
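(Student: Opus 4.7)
The plan is to expand the chain rule for $Df$ and $D\widetilde{f}_w$, use $D\widetilde{f}_w(a^{\dagger}) = 0$ to write $Df(a^\dagger) = Df(a^\dagger) - D\widetilde{f}_w(a^\dagger)$, and then bound the resulting difference using the triangle inequality together with Lipschitz continuity of $DF = (\partial_u F, \partial_a F)$.

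First I would write
\[
    Df(a) = \partial_u F(\cG(a), a) \circ D\cG(a) + \partial_a F(\cG(a), a),
\]
and the analogous identity for $D\widetilde{f}_w$. Subtracting, the $\partial_a F$ terms combine into a difference that is directly controlled by $L_F \mathscr{E}_0(a^\dagger)$. The $\partial_u F$ terms are regrouped by an add-and-subtract:
\[
    \partial_u F(\cG, \cdot)\circ D\cG - \partial_u F(\widetilde{\cG}_w,\cdot)\circ D\widetilde{\cG}_w = \bigl[\partial_u F(\cG,\cdot)-\partial_u F(\widetilde{\cG}_w,\cdot)\bigr]\circ D\cG + \partial_u F(\widetilde{\cG}_w,\cdot)\circ\bigl[D\cG - D\widetilde{\cG}_w\bigr],
\]
evaluated at $a^\dagger$. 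By Lipschitz continuity of $DF$, the first piece is bounded by $L_F\mathscr{E}_0(a^\dagger)\|D\cG(a^\dagger)\|_{\Op(\cX,\cY)}$, and the second by $\|\partial_u F(\widetilde{\cG}_w(a^\dagger), a^\dagger)\|_{\cY'}\mathscr{E}_1(a^\dagger)$.

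Next I would control the operator-norm factor $\|\partial_u F(\widetilde{\cG}_w(a^\dagger), a^\dagger)\|_{\cY'}$ by linear growth, which again follows from Lipschitz continuity of $DF$ applied from a fixed base point (e.g.\ $(0,0)$):
\[
    \|\partial_u F(\widetilde{\cG}_w(a^\dagger), a^\dagger)\|_{\cY'} \le \|\partial_u F(0,0)\|_{\cY'} + L_F\bigl(\|\widetilde{\cG}_w(a^\dagger)\|_{\cY} + \|a^\dagger\|_{\cX}\bigr) \le \const\,\bigl(1 + \|\cG(a^\dagger)\|_{\cY} + \mathscr{E}_0(a^\dagger) + \|a^\dagger\|_{\cX}\bigr).
\]
Multiplying this bound by $\mathscr{E}_1(a^\dagger)$ produces exactly the cross term $\mathscr{E}_0(a^\dagger)\mathscr{E}_1(a^\dagger)$ appearing in the statement, together with the $\|\cG(a^\dagger)\|_\cY\mathscr{E}_1(a^\dagger)$ and $\|a^\dagger\|_\cX\mathscr{E}_1(a^\dagger)$ contributions. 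Collecting every estimate, one is left with a linear combination of the six monomials $\mathscr{E}_0$, $\|D\cG(a^\dagger)\|\mathscr{E}_0$, $\mathscr{E}_1$, $\|\cG(a^\dagger)\|_\cY\mathscr{E}_1$, $\|a^\dagger\|_\cX \mathscr{E}_1$, and $\mathscr{E}_0\mathscr{E}_1$; factoring these into $(\|\cG(a^\dagger)\|_\cY + \|D\cG(a^\dagger)\|_{\Op(\cX,\cY)} + \|a^\dagger\|_\cX + 1)(\mathscr{E}_0 + \mathscr{E}_1 + \mathscr{E}_0 \mathscr{E}_1)$ yields the claimed inequality, with $\const$ depending only on $F$ through $L_F$ and $\|DF(0,0)\|$.

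The main obstacle is not conceptual but bookkeeping: one must keep the error pieces organized so that the $\mathscr{E}_0\mathscr{E}_1$ cross term drops out cleanly and the multiplicative prefactor collapses to the symmetric form stated. The crucial observation is that $\partial_u F$ appears evaluated at $\widetilde{\cG}_w(a^\dagger)$, not at $\cG(a^\dagger)$, so one cannot avoid converting back to quantities involving $\cG(a^\dagger)$ via the Lipschitz bound, and it is precisely this step that introduces the $\mathscr{E}_0 \mathscr{E}_1$ contribution. No extra regularity on $\cG$, $\widetilde{\cG}_w$, or the stationary point $a^\dagger$ is needed beyond what is already assumed.
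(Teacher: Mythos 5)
Your proof is correct and follows essentially the same route as the paper: chain rule, stationarity to replace $Df(a^\dagger)$ by the difference $Df(a^\dagger)-D\widetilde f_w(a^\dagger)$, an add-and-subtract split of the $\partial_u F$ terms, Lipschitz continuity of $DF$, and a linear-growth bound on $\partial_u F$ from the base point $(0,0)$. The only difference is the choice of intermediate term in the split (you use $\partial_u F(\widetilde{\cG}_w,\cdot)\circ D\cG$, the paper uses $\partial_u F(\cG,\cdot)\circ D\widetilde{\cG}_w$), so your cross term $\mathscr{E}_0\mathscr{E}_1$ comes from bounding $\|\widetilde{\cG}_w(a^\dagger)\|_{\cY}\leq\|\cG(a^\dagger)\|_{\cY}+\mathscr{E}_0$ rather than $\|D\widetilde{\cG}_w(a^\dagger)\|\leq\|D\cG(a^\dagger)\|+\mathscr{E}_1$ --- a cosmetic distinction yielding the same final estimate.
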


Additionally, we can provide more concrete results when the high-fidelity objective $f$ is strongly convex. In particular, provided that the surrogate is sufficiently accurate within a ball around the minimizer of $f$, the stationary points of $\widetilde{f}_w$ within the ball have errors of the same order.
The proof of this result is provided in \Cref{sec:proof_optimization_error_strongly_convex}.

\begin{proposition}[Surrogate-Driven Optimization Errors for Strongly Convex Objectives]\label{prop:surrogate_optimization_error_strongly_convex}
Consider the setting of \Cref{prop:surrogate_optimization_error}.
Furthermore, let $\cG$ and $F$ be such that
$f$ is twice differentiable and strongly convex, 
i.e., there exists $\lambda > 0$ such that $D^{2} f(a)(h,h) \geq \lambda \|h\|^{2}_{\cX}$ for all $a, h \in \cX$.

Consider the minimizer $a^{\star}$ of $f$ and a ball $B_R(a^{\star}) := \{\|a - a^{\star}\|_{\cX} \leq R\}$ of radius $R > 0$ around $a^{\star}$ over which $\cG$ and $D\cG$ are bounded. That is, $f(a^{\star}) = \inf_{a \in \cX}f(a)$ and there exists $M > 0$ such that 
$
    \sup_{a \in B_R(a^{\star})} \|\cG(a) \|_{\cY} < M 
$
and
$
    \sup_{a \in B_R(a^{\star})} \|D \cG(a) \|_{\Op(\cX, \cY)} \leq M.
$
Then, we have the following:
\begin{enumerate}[(i)]
    \item There exists a constant $\const = \const(\cG, F, M, R)$ such that for any stationary point $a^{\dagger} \in \cX$ of $\widetilde{f}_{w}$ that lies in $B_R(a^{\star})$ (i.e., $D \widetilde{f}_{w}(a^{\dagger}) = 0$ and $a^{\dagger} \in B_R(a^{\star})$) the error is bounded as 
    \begin{equation*}
        \|a^{\dagger} - a^{\star} \|_{\cX}
        \leq \const 
        \left(
        \mathscr{E}_0(a^{\dagger}) + \mathscr{E}_1(a^{\dagger}) + \mathscr{E}_0(a^{\dagger}) \mathscr{E}_1(a^{\dagger})
        \right).
    \end{equation*}
    \item Suppose additionally $\widetilde{\cG}_{w}$ is such that $\widetilde{f}_w$ is weakly lower semicontinuous. Then, there exists $\epsilon_0 > 0$ such that $\widetilde{f}_{w}$ has at least one local minimizer in $B_R(a^{\star})$ whenever the surrogate errors satisfy $\sup_{a \in B_R(a^{\star})} \mathscr{E}_0(a) \leq \epsilon_0$ and $\sup_{a \in B_R(a^{\star})} \mathscr{E}_1(a) \leq \epsilon_0$.
\end{enumerate}
\end{proposition}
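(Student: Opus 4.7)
The plan has two prongs mirroring the two claims. For (i), I would exploit strong convexity to convert residual error control into solution error control, and then invoke \Cref{prop:surrogate_optimization_error} to bound the residual by the surrogate errors. Specifically, strong convexity combined with $Df(a^{\star}) = 0$ gives
\[
\lambda \|a^{\dagger} - a^{\star}\|_{\cX}^{2} \leq \langle Df(a^{\dagger}) - Df(a^{\star}), a^{\dagger} - a^{\star}\rangle_{\cX',\cX} \leq \|Df(a^{\dagger})\|_{\cX'} \|a^{\dagger} - a^{\star}\|_{\cX},
\]
so $\|a^{\dagger} - a^{\star}\|_{\cX} \leq \lambda^{-1} \|Df(a^{\dagger})\|_{\cX'}$. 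Applying \Cref{prop:surrogate_optimization_error} to the right-hand side and using $a^{\dagger} \in B_R(a^{\star})$ together with the hypothesis $\sup_{B_R(a^{\star})}\|\cG\|_{\cY}, \|D\cG\|_{\Op(\cX,\cY)} \leq M$ and $\|a^{\dagger}\|_{\cX} \leq \|a^{\star}\|_{\cX} + R$ uniformly bounds the prefactor $(\|\cG(a^{\dagger})\|_{\cY} + \|D\cG(a^{\dagger})\|_{\Op(\cX,\cY)} + \|a^{\dagger}\|_{\cX} + 1)$ by a constant depending only on $M$, $R$, and $\|a^{\star}\|_{\cX}$. Absorbing this, together with $\lambda^{-1}$ and the constant from \Cref{prop:surrogate_optimization_error}, into a single $\const = \const(\cG, F, M, R)$ yields the claimed bound.

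For (ii), I would apply the direct method of the calculus of variations on the weakly compact closed ball $\overline{B_R(a^{\star})}$, which is weakly sequentially compact by reflexivity of the Hilbert space $\cX$. The first technical step is to quantify how $\widetilde{f}_w$ deviates from $f$: the Lipschitz continuity of $DF$ implies that $F$ is locally Lipschitz in its first argument, with a Lipschitz constant bounded on any bounded region of $\cY \times \cX$. Restricting to $\epsilon_0 \leq 1$ ensures $\widetilde{\cG}_w(B_R(a^{\star})) \subset \overline{B_{M+1}(0)}_{\cY}$, so there exists a constant $L = L(F, M, R, \|a^{\star}\|_{\cX})$ with $|\widetilde{f}_w(a) - f(a)| \leq L \mathscr{E}_0(a)$ uniformly on $B_R(a^{\star})$. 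Combined with the quadratic-growth estimate $f(a) - f(a^{\star}) \geq \tfrac{\lambda}{2}\|a - a^{\star}\|_{\cX}^{2}$ from strong convexity, this gives
\[
\inf_{a \in \partial B_R(a^{\star})} \widetilde{f}_w(a) \geq f(a^{\star}) + \tfrac{\lambda R^{2}}{2} - L\epsilon_0, \qquad \widetilde{f}_w(a^{\star}) \leq f(a^{\star}) + L\epsilon_0.
\]
Choosing $\epsilon_0 < \min(1, \lambda R^{2}/(4L))$ produces the strict gap $\inf_{\partial B_R(a^{\star})} \widetilde{f}_w > \widetilde{f}_w(a^{\star})$.

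Finally, weak lower semicontinuity of $\widetilde{f}_w$ on the weakly compact set $\overline{B_R(a^{\star})}$ guarantees attainment of $\inf_{\overline{B_R(a^{\star})}} \widetilde{f}_w$, and the strict boundary gap forces any such minimizer into the open ball $B_R(a^{\star})$, making it a local minimizer of $\widetilde{f}_w$ on all of $\cX$. The main obstacle I anticipate is the careful bookkeeping needed to convert the Lipschitz-derivative hypothesis on $F$ into a uniform Lipschitz-in-$y$ constant $L$ on the relevant bounded subsets of $\cY \times \cX$, and to simultaneously ensure that $\widetilde{\cG}_w(B_R(a^{\star}))$ remains inside those subsets — this is what dictates the implicit a priori restriction $\epsilon_0 \leq 1$, independent of the threshold $\lambda R^{2}/(4L)$ enforcing the strict boundary inequality.
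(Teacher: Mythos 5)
Your proposal is correct and follows essentially the same route as the paper: strong convexity (via gradient monotonicity) converts the residual bound of \Cref{prop:surrogate_optimization_error} into the solution-error bound for (i), and for (ii) a uniform objective-error estimate from the Lipschitz derivative of $F$, quadratic growth of $f$, and weak lower semicontinuity on the weakly compact ball yield an interior minimizer. The only cosmetic differences are that you enforce $\epsilon_0 \leq 1$ to obtain a linear bound $L\,\mathscr{E}_0(a)$ where the paper keeps the $\mathscr{E}_0 + \mathscr{E}_0^2$ form, and the paper phrases existence through the constrained problem's variational inequality rather than the strict boundary gap alone.
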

\Cref{prop:surrogate_optimization_error_strongly_convex} shows that the accuracy of the outputs and derivatives of the surrogate is sufficient to preclude the pathological example illustrated in \Cref{fig:local_convergence_pathology}.
Therefore, the results in \Cref{prop:surrogate_optimization_error,prop:surrogate_optimization_error_strongly_convex} strongly suggest that, if the end goal of operator surrogate construction is to obtain surrogate-driven solutions of PDE-constrained optimization, one should construct the operator surrogate taking into consideration the quality of the Fr\'echet derivative approximation. This motivates two aspects of our work:

\begin{enumerate}[(i)]
    \item The operator surrogate $\widetilde{\cG}_w$ and its derivative $D\widetilde{\cG}_w$ should be  universal approximators of the true operator $\cG$ and its Fr\'echet derivative $D\cG$, \emph{simultaneously}. We prove these results for FNOs on compact sets in \Cref{theorem:main_ua} and in weighted Sobolev spaces in \Cref{theorem:main_ua_mu}.
    \item The operator learning formulation should also minimize error in the surrogate predictions of the Fr\'echet derivative. This is a key motivation for derivative-informed operator learning which is introduced in the next subsection.
\end{enumerate}

\subsection{Derivative-Informed Neural Operator Learning}\label{subsec:dino}

The operator surrogate is constructed by minimizing a surrogate discrepancy measure, denoted as $\textrm{dist}(\cdot,\cdot)$:
\begin{equation*}
    \inf_{w\in\bR^{d_w}}\textrm{dist}(\cG, \widetilde{\cG}_w).
\end{equation*}
Let $\mu$ be a data-generating probability measure on $\cX$ and $p\geq 1$ be a norm exponent. The conventional formulation uses the norm of the weighted Bochner space $L^{p}_{\mu}(\cX;\cY)$ to measure the surrogate discrepancy
\begin{equation*}
\textrm{dist}(\cG, \widetilde{\cG}_w) = \|\cG - \widetilde{\cG}_w\|^{p}_{L^p_{\mu}} \coloneqq \bE_{a\sim\mu}\left[\|\cG(a) - \widetilde{\cG}_w(a)\|^{p}_{\cY}\right].
\end{equation*}
When $p=2$ and the expectation is estimated using samples from $\mu$, we arrive at the mean square error minimization in \cref{eq:mse_loss}.

Motivated by \Cref{prop:surrogate_optimization_error,prop:surrogate_optimization_error_strongly_convex}, we use alternative surrogate discrepancy measures that penalize errors in both the operator's output and its Fr\'echet derivatives. We consider the norm of the weighted Sobolev space $W^{m,p}_{\mu, \delta}(\cX; \cY)$, with  $m \in \bN$, $p \geq 1$, and $\delta \geq 0$, defined as follows
\begin{equation}\label{eq:sobolev_norm}
    \| \cG \|_{W^{m,p}_{\mu, \delta}}^p
    = 
        \bE_{a \sim \mu} \left[ \| \cG(a) \|_{\cY}^p 
            + \sum_{1 \leq n \leq m}  \|D^{n} \cG(a)\|_{\HS(\cX_{\delta}, \cY)}^p
        \right].
\end{equation}
Note that $\cX_{\delta}$ is defined by shifting $\cX$ in the Hilbert Scale by $\delta\geq 0$ (see \cref{eq:shifted_X}) which means that the scale parameter $\delta$ controls the increment in regularity of $\cX_{\delta}$ relative to $\cX$. This weighted Sobolev norm is used to measure the surrogate discrepancy:
\begin{equation}\label{eq:dino_loss_expectation}
    \textrm{dist}(\cG, \widetilde{\cG}_w) =  
    \|\cG - \widetilde{\cG}_w \|_{W^{m,p}_{\mu, \delta}}^{p}.
\end{equation}
When $m = 1$, $p = 2$, and $\delta=0$, and the expectation is estimated using samples, we arrive at the canonical formulation of derivative-informed operator learning in \cref{eq:dino_loss_samples}. We will focus on $m=1$ and $p=2$ in this work. 

In the rest of this subsection, we discuss the use of the Hilbert--Schmidt norm and the scale parameter $\delta\geq 0$ in the derivative-informed operator learning formulation. The computational challenges of derivative-informed learning of neural operators (DINOs) are reviewed.

\paragraph{The Hilbert--Schmidt Norm}
The use of the Hilbert--Schmidt norm (as opposed to, say, the operator norm $\|\cdot\|_{\Op_n}$) for the derivative error in \cref{eq:dino_loss_expectation} is important for the definition of the Sobolev norm in infinite-dimensional input and output spaces. First, as we will see in the following section, the compactness of the derivative operator implied by the Hilbert--Schmidt norm is required for universal approximation of neural operators. Second, the Hilbert--Schmidt norm can be conveniently evaluated using \eqref{eq:hilbert-schmidt-multi}, which reverts to computing Frobenius norms of Jacobian matrices upon discretization; see \Cref{section:training}.

\paragraph{The Scale Parameter $\delta$}
For $\cG \in \cont^m(\cX; \cY)$,
we can only guarantee that $D^{n} \cG(a) \in \cL_{n}(\cX, \cY)$ for $n \leq m$ and not $D^{n}\cG(a) \in \HS_{n}(\cX, \cY)$. To address this, we relax the norm by considering a smaller space $\cX_{\delta} \hookrightarrow \cX$ within the Hilbert scale for some $\delta \geq 0$. In the context of the Sobolev spaces $\cX = H^{s}$, restricting derivative directions to $\cX_{\delta} = H^{s+\delta}$ essentially corresponds to measuring the accuracy of derivative operators acting on smoother functions, and as we will show, there always exists $\delta \geq 0$ such that $\cL_{n}(\cX, \cY) \hookrightarrow \HS_{n}(\cX_{\delta}, \cY)$. This restriction of derivative directions to $\cX_{\delta}$ is natural in the context of regularized inverse problems such as that of \eqref{eq:inverse_problem}, where regularization via norms in smoother Hilbert scales, $\cR(a) = \frac{1}{2} \|a - \bar{a}\|_{\cX_{\delta}}^2$, is often used to impose additional regularity requirements on the inverse solution. In this setting, the accuracy of derivative operators acting in directions from $\cX_{\delta}$ is of primary importance. Moreover, this choice of derivative norms is also used in Malliavin calculus to define derivatives with respect to Gaussian random fields, and plays an important role in Bayesian inverse problems with Gaussian priors; see \cite{stuart2010inverse, cotter2013mcmc, cao2025derivative}.

\paragraph{Computational Challenges of DINOs}

 While DINO offers numerous benefits over conventional operator learning, including significantly improved sample complexity and better performance in surrogate-driven optimization \cite{ cao2024lazydino, oleary2024dino, qiu2024derivative, go2025accurate, luo2025efficient, cao2025derivative}, a major challenge of DINOs stems from the increased computational and memory costs of generating and training on Fr\'echet derivative samples. For high-dimensional discretizations of the input and output spaces, dimension-reduction techniques are necessary to manage the cost of training on derivative samples, whose size scales quadratically with the grid dimensions. When using reduced-basis neural operators \cite{hesthaven2018pod, bhattacharya2021pca,  oleary2022dipnet}, these costs are mitigated by (i) directly generating reduced derivative samples on the input and output latent spaces and (ii) constraining the derivative-informed learning to a finite-dimensional mapping from the input to output latent spaces. Consequently, the costs of Fr\'echet derivative learning for reduced-basis DINOs scale with the reduced-basis dimension rather than the grid dimension. However, for general architectures such as the FNO that do not employ linear input projection and output reconstruction steps, these computational benefits are not immediately available. In this work, we discuss several techniques to mitigate the training costs for DIFNOs in \Cref{section:training}.

\section{Universal Approximation on Compact Sets}\label{section:difno_section}
In this section, we establish the simultaneous universal approximation of continuous operators and their Fr\'echet derivatives by FNOs. First, we review the FNO architecture. Then we state the main result and sketch the proof. Assumptions underlying the main results are discussed. Finally, we show that this result implies that FNO can accurately solve PDE-constrained optimization problems.

\subsection{Fourier Neural Operator}
\begin{figure}[h]
    \centering
    \includegraphics[width=0.6\linewidth]{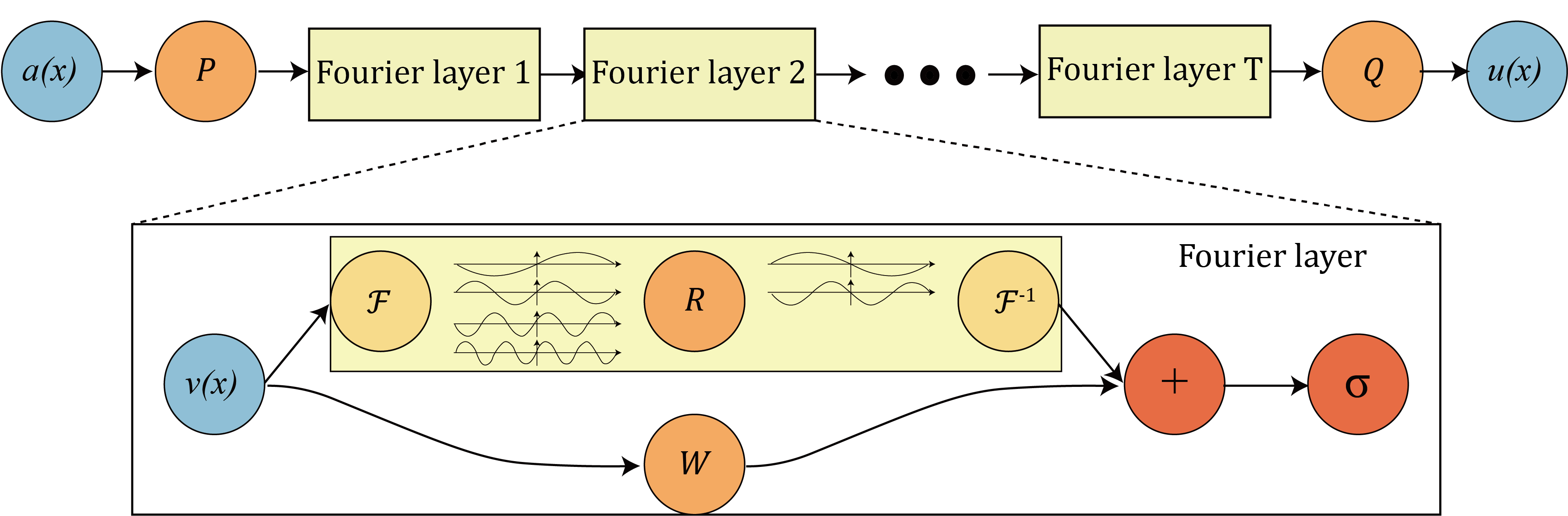}
    \caption{A schematic of the Fourier neural operator. Reprinted (adapted) with permission from \cite{li2021fourier}.}
    \label{fig:fno_schematic}
\end{figure}

We begin by reviewing the FNO architecture \cite{li2021fourier}. 
For a given spatial dimension $d \in \bN$, an FNO defines a mapping on the periodic domain $\bT^d$ of the form:
\begin{equation}
    \cN(a) = \cQ \circ \cL_{d_L} \circ \dots \circ \cL_{1} \circ \cR(a),
\end{equation}
where $\cR(a)(x) = Ra(x)$ with $R \in \bR^{d_v \times d_a}$ is referred to as the lifting layer, and $\cQ(v)(x) = Qv(x)$ with $Q \in \bR^{d_u \times d_v}$ is referred to as the projection layer. 
Given a nonlinear activation function $\sigma : \bR \rightarrow \bR$,
the nonlinear operator layers $\cL_{\ell}, \ell = 1, \dots, d_L$ are defined as
\begin{equation}
    \cL_{\ell}(v)(x) = \sigma\left( W_{\ell} v(x) + b_{\ell}(x) + \cF^{-1} \big( P_{\ell} \odot \cF(v) \big)(x) \right),
\end{equation}
where $W_{\ell} \in \bR^{d_v \times d_v}$ is a pointwise mapping, and $b_{\ell}(x): \bT^d \rightarrow \bR^{d_v}$ is a spatially varying bias, $\cF^{-1} (P_{\ell} \odot \cF(v))$ is a convolution operation. For $\hat{v}(k) = (\hat{v}_1(k), \dots, \hat{v}_{d_v}(k))$, the mapping $P_{\ell} \odot \hat{v}$ is defined as a frequency-wise matrix multiplication:
\begin{equation}
    (P_{\ell} \odot \hat{v})(k) = P_{\ell}(k) \hat{v}(k) = \left( \sum_{j=1}^{d_v} (P_{\ell}(k))_{ij} \hat{v}_j(k) \right)_{i=1}^{d_v},
\end{equation}
where $P_{\ell}(k) \in \bC^{d_v \times d_v}$ for $k \in \bZ^d$ is the representation of the convolution kernel in the frequency domain.
To ensure that the FNO is finitely parameterized, we take $P_{\ell}(k) = 0 $ for all $|k|_{\infty} > N$, 
where the cutoff frequency $N$ is an architecture hyperparameter.
\begin{remark}
    We use a notation of $\cN$ in place of $\widetilde{\cG}_w$ for an element in the class of FNOs. This emphasizes the distinction between (i) the operator learning problem involving finding an optimal set of weights $w$ in a given operator surrogate $\widetilde{\cG}_w$, where the architectural hyperparameters are fixed, and (ii) the universal approximation problem that allows the variations in the architectural hyperparameters in addition to the weights which is encapsulated into a generic notation $\cN$.
\end{remark}

\subsection{Main Result}
Our main result is the universal approximation property of FNOs in the $\cont^1$ sense over compact sets. 
To formulate this result, we will focus on FNOs with activation functions that are smooth, non-polynomial, and globally Lipschitz. 
\begin{definition}[Smooth Activation Functions]\label{def:smooth_activation}
    We say that $\sigma : \bR \rightarrow \bR$ is a smooth activation function, denoted $\sigma \in \smootha$, if $\sigma \in \cont^{\infty}(\bR)$ is non-polynomial and globally Lipschitz.
\end{definition}

The main result is then stated below, along with a sketch of the proof. A detailed proof is provided in \Cref{sec:supporting_lemmas_and_ua}, which develops several supporting lemmas before completing the proof in \Cref{sec:proof_of_main_ua}.

\begin{theorem}[Universal Approximation in $C^1$ on Compact Sets]
\label{theorem:main_ua}
     Let $s,s',\delta \geq 0$ and suppose
     \begin{enumerate}[label=(\roman*)]
         \item $\cG:\cX\to\cY$ is continuously differentiable, i.e., $\cG\in C^1(\cX;\cY)$ for $\cX := H^s(\bT^{d}; \bR^{d_a})$ and $\cY := H^{s'}(\bT^{d}; \bR^{d_u})$;
         \item $D\cG:\cX\to \HS(\cX_{\delta}, \cY)$ is well defined and continuous, i.e., $D\cG\in C(\cX;\HS(\cX_{\delta}, \cY))$, for  $\cX_{\delta} := H^{s+\delta}(\bT^d; \bR^{d_a})$.
     \end{enumerate}
     Furthermore, consider $\sigma \in \smootha$.
     Then, for any compact set $K \subset \cX$ and $\epsilon > 0$, 
     there exists a continuously differentiable FNO, denoted as $\cN\in C^1(\cX;\cY)$, with activation function $\sigma$, 
     such that 
     \begin{equation}
         \max
         \left\{ 
            \sup_{a \in K} 
            \|\cG(a) - \cN(a)\|_{\cY}, \;
            \sup_{a \in K} 
            \|D \cG(a) - D \cN(a)\|_{\HS(\cX_{\delta}, \cY)}
         \right\}
         \leq \epsilon.
     \end{equation}
     
\end{theorem}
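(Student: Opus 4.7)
The plan is to extend the spectral-reduction strategy of \cite{kovachki2021universal} from $C^0$ universal approximation to the $C^1$ setting, treating the operator $\cG$ and its Fréchet derivative $D\cG$ in tandem. I would decompose the construction of $\cN$ into three stages. First, approximate $\cG$ by the finite-rank sandwich $\cG_N := \cP_N \circ \cG \circ \cP_N$. Second, identify $\cG_N$ with a $C^1$ map $g_N := \cF_N \circ \cG \circ \cF_N^{-1}$ between finite-dimensional spaces of Fourier coefficients, and approximate $g_N$ in the $C^1$ norm on the compact set $\cF_N(K)$ by a multilayer perceptron $\tilde{g}_N$ with activation $\sigma$. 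Third, realize $\cF_N^{-1} \circ \tilde{g}_N \circ \cF_N$ as a genuine FNO $\cN$, and choose $N$ large and $\tilde{g}_N$ accurate enough that the resulting $\cN$ meets the tolerance $\epsilon$ simultaneously for $\cG$ and $D\cG$.

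The first stage is the crux of the $C^1$ extension. Since $\cP_N$ is the orthogonal projection onto Fourier modes $|k|_\infty \le N$, it is simultaneously orthogonal in every Hilbert scale $H^r(\bT^d)$ and converges strongly to the identity on $\cX$, $\cX_\delta$, and $\cY$. Uniform $\cY$-convergence of $\cG_N$ on $K$ follows from continuity of $\cG$ and uniform convergence $\cP_N a \to a$ on the compact set $K$. By the chain rule, $D\cG_N(a) = \cP_N \circ D\cG(\cP_N a) \circ \cP_N$, so the derivative convergence reduces to showing that $\cP_N T \cP_N \to T$ in $\HS(\cX_\delta, \cY)$ for every $T \in \HS(\cX_\delta, \cY)$, uniformly as $T$ ranges over $D\cG(K)$. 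Pointwise convergence follows from strong convergence of $\cP_N$ combined with dominated convergence against the summable series defining $\|T\|_{\HS}^2$ (an adjoint switch handles the right-hand projection, since $T^* \in \HS(\cY, \cX_\delta)$). The uniform Lipschitz estimate $\|\cP_N T \cP_N - \cP_N T' \cP_N\|_{\HS} \le \|T - T'\|_{\HS}$ together with compactness of $D\cG(K)$ in $\HS(\cX_\delta, \cY)$ upgrades pointwise to uniform convergence in the operator, and continuity of $D\cG$ on the precompact enlargement $K \cup \bigcup_N \cP_N(K)$ absorbs the discrepancy between $D\cG(\cP_N a)$ and $D\cG(a)$.

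The second and third stages mirror \cite{kovachki2021universal}, but lifted to the $C^1$ category. The map $g_N$ is $C^1$ on the compact set $\cF_N(K) \subset \bR^{2K_N d_a}$, with derivative obtained by conjugating $D\cG$ with the Fourier operators. Since $\sigma \in \smootha$ is smooth, non-polynomial, and globally Lipschitz, the classical Hornik--Pinkus $C^1$ universal approximation theorem for feedforward networks supplies an MLP $\tilde{g}_N$ with activation $\sigma$ achieving $\|g_N - \tilde{g}_N\|_{C^1(\cF_N(K))} \le \epsilon'$ for arbitrary $\epsilon' > 0$. To realize $\cF_N^{-1} \circ \tilde{g}_N \circ \cF_N$ as an FNO, I would adopt the construction of \cite{kovachki2021universal}: lift to a channel dimension large enough to encode the $2K_N d_a$ truncated Fourier coefficients of the input as channels, use the pointwise components $W_\ell, b_\ell$ of interior layers (with spectral kernels $P_\ell = 0$) to emulate the hidden layers of $\tilde{g}_N$, and use one terminal layer with nontrivial $P_\ell$ to synthesize the spatial output from the transformed coefficients. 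Because each building block depends continuously on the MLP weights in the $C^1$ sense, $C^1$ closeness of $\tilde{g}_N$ to $g_N$ lifts to $C^1$ closeness of $\cN$ to $\cG_N$, and a triangle inequality then delivers the required bound against $\cG$ and $D\cG$.

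The principal obstacle is the uniform $\HS$-norm convergence of $D\cG_N$ to $D\cG$ on $K$ in the first stage. Strong convergence $\cP_N \to I$ is not enough for operator-norm convergence of a general bounded operator, and the Hilbert--Schmidt structure of $D\cG(a)$ is precisely what allows strong projection convergence to be upgraded to HS-norm convergence of the sandwich $\cP_N T \cP_N$. The shift parameter $\delta \geq 0$ plays an essential role here: it is exactly the hypothesis that guarantees $D\cG(a) \in \HS(\cX_\delta, \cY)$ (rather than merely $\Op(\cX, \cY)$), which is indispensable for the dominated-convergence argument. Once this stage is controlled, the remaining finite-dimensional $C^1$ approximation by an MLP and its re-encoding as an FNO are largely bookkeeping and follow the $C^0$ argument of \cite{kovachki2021universal} with only minor modifications to track derivatives.
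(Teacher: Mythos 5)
Your proposal follows essentially the same route as the paper's proof: truncate to $\cG_N = \cP_N\circ\cG\circ\cP_N$ with simultaneous control of the output error and the $\HS(\cX_{\delta},\cY)$ derivative error on compact sets (the paper's truncation lemma argues this the same way, using orthogonality of $\cP_N$ in every $H^r$, the compact enlargement $K\cup\bigcup_N \cP_N K$, and the Hilbert--Schmidt structure to upgrade strong projection convergence), then approximate the induced finite-dimensional map in $C^1$ by a $\sigma$-network and re-encode it as an FNO via the construction of Kovachki et al., composing the pieces at the end. The only caution is that what you call bookkeeping is where the paper does substantial work — a $C^1$ replacement lemma for compositions and $C^1$-accurate FNO emulations of $\cF_N\circ\cP_N$ and $\cP_N\circ\real\cF_N^{-1}$ (built from neural approximations of pointwise products with sinusoidal biases, zero-mode extraction, and smooth identity approximations in $H^m$ for $m>d/2$ to keep the layers differentiable) — and a pointwise lifting layer alone cannot literally "encode the Fourier coefficients as channels"; that encoding is exactly what these product-approximation layers in the cited construction provide.
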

\begin{proof}[A Sketch of the Proof] 
Our approach follows that of \cite{kovachki2021universal}. 
We first consider the approximation of $\cG$ by a projection onto the finite set of Fourier modes $|k|_{\infty} \leq N$, $\cG_N := \cP_N \circ \cG \circ \cP_N$.
We show in \Cref{lemma:truncated_operator} that for sufficiently large $N$, the operator and derivative errors, i.e., $\cG - \cG_N$ and $D \cG - D \cG_N$, 
can simultaneously be made arbitrarily small over compact sets. 

Since the truncated operator $\cG_N$ is essentially a mapping between finite-dimensional spaces, we can then approximate $\cG_N$ by an FNO with finitely many parameters (\Cref{lemma:truncated_operator}).
This approximation is constructed by further decomposing $\cG_N$ as 
\[ \cG_N = \cP_N \circ \real \cF_N^{-1} \circ \widehat{\cG}_N \circ \cF_N \circ \cP_N, \quad \widehat{\cG}_N\coloneqq \cF_N \circ \cG \circ \real \cF_N^{-1}, \]
where the Fourier space $\bR^{2K_N}$ is identified with the space of constant vector-valued functions, $L^2_0(\bT^d; \bR^{2K_N})$.
We then show that $\cF_N \circ \cP_N$ and $\cP_N \circ \real \cF_N^{-1}$ 
can all be approximated to arbitrary accuracy over compact sets (\Cref{lemma:ft_approximation} and \Cref{lemma:inverse_ft_approximation}).
On the other hand, $\widehat{\cG}_N := \cF_N \circ \cG \circ \real \cF_N^{-1}$, which corresponds to a mapping on $\bR^{2K_N}$, 
is handled using standard universal approximation theorems for functions in  $C^{1}(\bR^{2K_N}; \bR^{2K_N})$ (e.g., \cite[Theorem 4.1]{Pinkus99}), 
where the resulting neural network $\widetilde{G} \in C^{1}(\bR^{2K_N}; \bR^{2K_N})$ 
is identified with an FNO that maps between constant vector-valued functions.
Finally, 
\Cref{lemma:composition} (an extension of \cite[Lemma 47]{kovachki2021universal} to include derivative accuracy) is used to show that their composition, $\cG_N$, 
can then be arbitrarily well-approximated in terms of both the output and the derivative error by composing the approximations to $\cF_N \circ \cP_N$, $\widehat{\cG}_N$, and $\cP_N \circ \real \cF_N^{-1}$. 

\end{proof}

\subsection{Remarks on the Main Result}
Here we provide remarks on the main results in terms of (i) continuous differentiability of FNOs and (ii) compactness of the FNO derivatives.

\paragraph{On the Continuous Differentiability of the FNOs}
   \Cref{theorem:main_ua} guarantees the existence of a continuously differentiable FNO approximating $\cG$. 
    However, for arbitrary activations $\sigma \in C^{\infty}(\bR)$, values of $s$ and $s'$, widths, or depths, there is no guarantee that the FNOs are continuously differentiable or even continuous. 
    For example, consider an FNO consisting of purely local layers, $\cL_{\ell}(v) = \sigma(W_{\ell} v + b_{\ell})$.
    When $\sigma$ is nonlinear, differentiability of this layer is not guaranteed when viewed as a superposition operator from $L^2(\bT^d)$ to $L^2(\bT^d)$ (see \cite[Section 4.3.3]{troltzsch2010optimal}).
    Additional assumptions, such as smoother input spaces (e.g., large $s$) and growth conditions on $\sigma$, may be required to guarantee continuity and differentiability. Nevertheless, the approximate FNOs constructed in the proof of \Cref{theorem:main_ua} are continuously differentiable due to the initial finite-rank projection onto $L^2_N(\bT^d)$.

\paragraph{On the Compactness of the FNO Derivatives}
        Similar to the definition of $W^{k,p}_{\mu,\delta}$, we have assumed that the derivatives $D \cG(a)$ 
        are Hilbert--Schmidt operators from $ \cX_{\delta}=H^{s+\delta}(\bT^d; \bR^{d_a})$ to $\cY=H^{s'}(\bT^{d}; \bR^{d_u})$ for some $\delta \geq 0$, which is stronger than being continuous linear operators from $\cX=H^{s}(\bT^{d}; \bR^{d_a})$ to $\cY$.
        This enforces compactness of the derivative operators $D \cG(a)$ and ensures that finite-rank projections $D \cG(a)$, such as the Fourier projections $\cP_N \circ D \cG(a) \circ \cP_N$ used in the analysis, converge to $D \cG(a)$ as the rank increases.
        
        Indeed, the inherent finiteness of the FNO parameterization means that it cannot represent arbitrary continuous linear operators. 
        To see this, consider a two-layered FNO, $\cN = \cQ \circ \cL_2 \circ \cL_1 \circ \cR$ mapping from $H^s(\bT^d)$ to $H^s(\bT^d)$.
        Evaluating the derivative of this operator at some point $a \in H^s(\bT^d)$, we have 
        \[
            D \cN (a) =  \cQ 
            \sigma'(W_{2} v_1 + b_2 + \cK_{2}) ( W_2 + \cK_2) 
            \sigma'(W_{1} v_0 + b_1 + \cK_{1}) ( W_1 + \cK_1) 
            \cR, 
        \]
        where $v_0 = \cR a$, $v_1 = \cL_1(\cR(a))$, 
        $\cK_2 = \cF^{-1}  P_{2}  \odot \cF$, $\cK_1 = \cF^{-1}  P_{1} \odot \cF$, and $\sigma'(v)$ denotes the local linear operator defined as 
        \[
            \sigma'(v) w = [\sigma'(v_1(x)) w_1(x), \dots, \sigma'(v_{d_v}(x)) w_{d_v}(x)],
        \]
        when $v, w$ are vector-valued functions of size $d_v$.
        Note that $\cK_2$ and $\cK_1$ are both of finite rank due to the truncation to Fourier frequencies $|k|_{\infty} \leq N$.
        Using the shorthand 
        $\cA_{2} = \sigma'(W_2 v_1 + b_2 + \cK_2 v_1) $
        and 
        $\cA_{1} = \sigma'(W_2 v_1 + b_2 + \cK_2 v_1)$, we can write the derivative as 
        \[
        D \cN(a) = 
        \underbrace{\cQ \cA_2 W_2 \cA_1 W_1 \cR}_{\text{local multiplication}}  + 
        \underbrace{
        \cQ \cA_2 W_2 \cA_1 \cK_1 \cR  + 
        \cQ \cA_2 \cK_2 \cA_1 W_1 \cR  + 
        \cQ \cA_2 \cK_2 \cA_1 \cK_1 \cR
        }_{\text{finite rank}}.
        \]
        By induction, one can show that FNOs of arbitrary depths have derivatives $D\cN(a)$ that are a finite-rank perturbation of a local pointwise multiplication operation, which is not dense in $\Op(H^s(\bT^d), H^s(\bT^d))$.

        Nevertheless, even when the derivative operator is not Hilbert--Schmidt, 
        its restriction to smoother spaces in the Hilbert scale can be Hilbert--Schmidt, i.e., restricting the derivative operator to act on directions in $\cX_{\delta} = H^{s+\delta}(\bT^d; \bR^{d_a})$ instead of $\cX = H^s(\bT^d; \bR^{d_a})$. 
        In particular, for any $s \geq 0$, the inclusion operator $\cI : H^{s+\delta}(\bT^d) \rightarrow H^{s}(\bT^d)$ is Hilbert--Schmidt whenever $\delta > d/2$ (see \Cref{prop:inclusion_operator_is_hilbert_schmidt}). 
        Thus, for $D \cG(a) \in \Op(H^{s}(\bT^d), H^{s'}(\bT^d))$, the restriction of $D \cG(a)$ to directions in $H^{s+\delta}(\bT^d)$ is Hilbert--Schmidt, since 
        \[
            \|D \cG(a) \|_{\HS(H^{s+\delta}, H^{s'})} 
            \leq 
            \|D \cG(a) \|_{\Op(H^{s}, H^{s'})} 
            \| \cI \|_{\HS(H^{s+\delta}, H^{s})}, 
        \]
        which additionally implies that $a \mapsto D \cG(a) \in \HS(H^{s + \delta}(\bT^d; \bR^{d_a}), H^{s'}(\bT^d; \bR^{d_u}))$
        is well defined and continuous. 
        
        For this reason, the universal approximation result in \Cref{theorem:main_ua} can still be applied to operators $\cG \in \cont^{1}(\cX; \cY)$ whose derivatives are not in $\HS(\cX, \cY)$ provided that one is interested in measuring the accuracy of the derivative in $\HS(\cX_{\delta}, \cY)$ using a smoother space $\cX_{\delta}$. 
        An example of this is in regularized inverse problems, which we discuss below.

\subsection{Application to PDE-Constrained Optimization}

Under the settings of \Cref{theorem:main_ua} and \Cref{prop:surrogate_optimization_error_strongly_convex}, FNOs can be utilized to approximate PDE-constrained optimization problems to arbitrary accuracy. 
As an example, we will consider the regularized inverse problem from \eqref{eq:inverse_problem}
where the regularization $\cR(a)$ is the squared norm on a smoother space $\cX_{\delta}$ from the Hilbert scale.

\begin{theorem}\label{theorem:inverse_problem_ua}
    Let $s, s' \geq 0$ and suppose $\cG \in C^1(\cX;\cY)$ for $\cX := H^{s}(\bT^d)$ and $\cY := H^{s'}(\bT^d)$.
    For any $\delta > d/2$, consider the minimization problem over $\cX_{\delta} := H^{s+\delta}(\bT^d)$,
    \begin{equation}\label{eq:cost_function_inverse_problem}
       \min_{a \in \cX_{\delta}} F(\cG(a), a) = \frac{1}{2} \| \Gamma_{\noise}^{-1/2}(\cH  (\cG(a)) - \yobs) \|^2 + \frac{\beta}{2} \|a\|_{\cX_{\delta}}^2,
    \end{equation}
    where $\cH \in \Op(\cY, \mathbb{R}^{\dobs})$ is a continuous observation operator, $\yobs \in \mathbb{R}^{\dobs}$ is the data vector, 
    $\Gamma_{\noise} \in \mathbb{R}^{\dobs \times \dobs}$ is the noise covariance matrix, and $\beta > 0$ is the regularization constant.
    Additionally, assume that $\beta$ and $\cG$ are such that $f(a) = F(\cG(a), a)$ is strongly convex on $\cX_{\delta}$, i.e.,
    $D^2 f(a)(h, h) \geq \lambda \|h\|_{\cX_{\delta}}^2$.

    Then, given $\sigma \in \smootha$, for any $\epsilon, R > 0$, there exists an FNO $\cN \in \cont^1(\cX_{\delta};\cY)$ with activation function $\sigma$ such that 
    $f_{\cN} = F(\cN(a), a)$ has at least one stationary point $a^{\dagger}$ inside the ball $B_R(a^{\star}) := \{ \|a - a^{\star}\|_{\cX_{\delta}} \leq R\}$ and any of such stationary points $a^{\dagger} \in B_R(a^{\star})$ have their error bounded by
    \begin{equation}
        \|a^{\dagger} - a^{\star} \|_{\cX_{\delta}} \leq \epsilon,
    \end{equation}
    where $a^{\star} = \argmin_{a \in \cX_{\delta}} f(a)$ is the minimizer of $f$.
\end{theorem}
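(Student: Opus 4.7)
The plan is to combine \Cref{theorem:main_ua} with \Cref{prop:surrogate_optimization_error_strongly_convex}, regarding the smoother Hilbert space $\cX_{\delta} = H^{s+\delta}(\bT^d)$ as the underlying Banach space of the optimization problem. The key is to build an FNO $\cN$ whose output and derivative errors on the ball $B_R(a^{\star})$ are sufficiently small that \Cref{prop:surrogate_optimization_error_strongly_convex} simultaneously supplies the existence of a nearby stationary point of $f_{\cN}$ and a bound on its distance from $a^{\star}$.

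The next step is to invoke \Cref{theorem:main_ua} with input space $\cX = H^s(\bT^d)$, output space $\cY$, scale parameter $\delta$, and compact set $K = B_R(a^{\star})$. Two observations make this invocation possible. First, since $\delta > d/2$, \Cref{prop:inclusion_operator_is_hilbert_schmidt} makes the inclusion $\cI : \cX_{\delta} \hookrightarrow \cX$ Hilbert--Schmidt, and the estimate
\begin{equation*}
    \|D \cG(a)\|_{\HS(\cX_{\delta}, \cY)} \leq \|D \cG(a)\|_{\Op(\cX, \cY)}\, \|\cI\|_{\HS(\cX_{\delta}, \cX)}
\end{equation*}
upgrades $D \cG \in C(\cX; \Op(\cX, \cY))$, which follows from $\cG \in C^1(\cX; \cY)$, to $D \cG \in C(\cX; \HS(\cX_{\delta}, \cY))$, so the hypotheses of \Cref{theorem:main_ua} are satisfied. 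Second, the same compact embedding makes $B_R(a^{\star})$, which is a closed ball in $\cX_{\delta}$, precompact in $\cX$, so $K$ is an admissible compact set. For any $\epsilon_1 > 0$, \Cref{theorem:main_ua} then yields an FNO $\cN \in C^1(\cX; \cY)$ with $\sup_{a \in K} \|\cG(a) - \cN(a)\|_{\cY} \leq \epsilon_1$ and $\sup_{a \in K} \|D \cG(a) - D \cN(a)\|_{\HS(\cX_{\delta}, \cY)} \leq \epsilon_1$; since $\|\cdot\|_{\Op(\cX_{\delta}, \cY)} \leq \|\cdot\|_{\HS(\cX_{\delta}, \cY)}$, the operator-norm derivative error $\mathscr{E}_1$ is also bounded by $\epsilon_1$ on $K$.

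I would then apply \Cref{prop:surrogate_optimization_error_strongly_convex} with $\cX_{\delta}$ as the Banach space. The hypotheses are verified as follows: $F$ is quadratic, so $DF$ is affine and globally Lipschitz; $\cG, \cN \in C^1(\cX_{\delta}; \cY)$ by restriction along $\cX_{\delta} \hookrightarrow \cX$; and $\cG$, $D\cG$ are bounded on $B_R(a^{\star}) \subset \cX$ by continuity and compactness. For part (ii), weak lower semicontinuity of $f_{\cN}$ on $\cX_{\delta}$ is needed: the regularizer $\tfrac{\beta}{2}\|\cdot\|_{\cX_{\delta}}^2$ is weakly lower semicontinuous, and the misfit is in fact weakly continuous because $a_n \rightharpoonup a$ in $\cX_{\delta}$ implies $a_n \to a$ strongly in $\cX$ by compact embedding, whence $\cH \cN(a_n) \to \cH \cN(a)$ in $\bR^{\dobs}$ by continuity of $\cN$ on $\cX$ and of $\cH$. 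Part (ii) then supplies, for $\epsilon_1$ below some threshold $\epsilon_0$, a local minimizer of $f_{\cN}$ in $B_R(a^{\star})$; part (i) bounds any stationary point $a^{\dagger} \in B_R(a^{\star})$ as $\|a^{\dagger} - a^{\star}\|_{\cX_{\delta}} \leq \const\,(2\epsilon_1 + \epsilon_1^2)$, and choosing $\epsilon_1$ small enough yields the claimed bound $\leq \epsilon$.

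The main subtlety, which the above threading is designed to handle, is the interplay between the two Hilbert scales involved: \Cref{theorem:main_ua} approximates $\cG$ uniformly on compact subsets of the coarser space $\cX$, whereas the optimization problem, the strong-convexity inequality, and the target error bound all live in the smoother space $\cX_{\delta}$. The condition $\delta > d/2$, combined with the compact embedding $\cX_{\delta} \hookrightarrow \cX$, simultaneously supplies compactness of $K$ in $\cX$, the weak-to-strong convergence needed for weak lower semicontinuity of $f_{\cN}$, and the Hilbert--Schmidt regularity of $D \cG$ that permits the invocation of \Cref{theorem:main_ua} in the first place.
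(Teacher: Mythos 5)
Your proposal is correct and follows essentially the same route as the paper's proof: use the compact embedding $\cX_{\delta}\hookrightarrow\cX$ (with \Cref{prop:inclusion_operator_is_hilbert_schmidt} supplying $D\cG\in C(\cX;\HS(\cX_{\delta},\cY))$) to invoke \Cref{theorem:main_ua} on the image of $B_R(a^{\star})$ in $\cX$, restrict the resulting FNO to $\cX_{\delta}$, verify Lipschitzness of $DF$, boundedness of $\cG$ and $D\cG$ on the ball, and weak lower semicontinuity of the surrogate objective via the compact embedding, and then conclude with parts (i) and (ii) of \Cref{prop:surrogate_optimization_error_strongly_convex} with $\cX_{\delta}$ as the underlying space. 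The only cosmetic difference is that the paper works with the closure $K=\overline{\cI(B_R(a^{\star}))}$ in $\cX$, whereas you call $B_R(a^{\star})$ precompact in $\cX$; passing to the closure resolves this immediately, so there is no gap.
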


The proof of \Cref{theorem:inverse_problem_ua} is given in \Cref{subsec:proof_inverse_problem_ua}.
We note that the assumptions used here are common for regularized inverse problems involving PDEs. 
In particular, strong convexity of $f$ is guaranteed for linear inverse problems (i.e., $\cH \circ \cG$ is linear). For nonlinear inverse problems, strong convexity can often be achieved using sufficiently strong regularization (large $\beta$).

\section{Universal Approximation in Weighted Sobolev Spaces}
In this section, we extend the result on compact domains to the entire input space $\cX$ by considering the $W^{1,2}_{\mu, \delta}$ norm.
To do so, we first adopt additional moment and growth assumptions. Then we state our main results and sketch the proof.

\subsection{Assumptions and Definitions}
We consider the following assumptions for the high-fidelity operator $\cG$ following \cite{bhattacharya2021pca}.
\begin{assumption}\label{assumption:measurability}
    Let $s, s', \delta \geq 0$ and let $\cX := H^s(\bT^d; \bR^{d_a})$, $\cY := H^{s'}(\bT^d; \bR^{d_u})$, and $\cX_{\delta} := H^{s + \delta}(\bT^d; \bR^{d_a})$.
    For a Borel probability measure $\mu$ on $\cX$ and a continuously differentiable operator $\cG : \cX \rightarrow \cY$, 
    assume that $\mu$ has bounded fourth moments, i.e. $\int_{\cX} \|a\|_{\cX}^4 d \mu (a) < \infty$,
    $\cG \in L^2_{\mu}(\cX; \cY)$, and $D \cG \in L^2_{\mu}(\cX; \HS(\cX_{\delta}, \cY))$.
    
    Additionally, assume that $\cG$ and $D{\cG}$ are locally Lipschitz, i.e., 
    \begin{align}
        \| \cG(a_1) - \cG(a_2) \|_{\cY} &\leq L_0(a_1, a_2) \|a_1 - a_2\|_{\cX} \\
        \| D \cG(a_1) - D \cG(a_2) \|_{\HS(\cX_{\delta}, \cY)} &\leq L_1(a_1, a_2) \|a_1 - a_2\|_{\cX} 
    \end{align}
    for any $a_1, a_2 \in \cX$, 
    where the local Lipschitz constants $L_i: \cX \times \cX \rightarrow \bR_{+}$, $i = 1, 2$ satisfy
    \begin{enumerate}[label=(\roman*)]
        \item (Symmetry) For all $a_1, a_2 \in \cX$, $L_{i}(a_1, a_2) = L_{i}(a_2, a_1)$; \label{assumption:measure_symm}
        \item (Measurability) For all $a \in \cX$, $L_i(a, \cdot) : \cX \rightarrow \bR_{+}$ is measurable;
        \item (Non-increasing) For all $a, a_1, a_2 \in \cX$, $L_i(a, a_1) \leq L(a, a_2)$ whenever $\|a_1\|_{\cX} \leq \|a_2\|_{\cX}$;
        \item (Bounded fourth moments) $\int_{\cX} L_i(a,a)^4 d \mu(a) < \infty$.
        \label{assumptions:measure_moments}
    \end{enumerate}
\end{assumption}
The local Lipschitz constants can be understood as generalizations of the derivative norm that appears in the mean value theorem. 
For the output value, this is simply
\[
    \| \cG(a_1) - \cG(a_2) \|_{\cY} \leq \sup_{t \in [0,1]} \|D\cG( t a_1 + (1-t)a_2) \|_{\Op(\cX,\cY)} \|a_1 - a_2 \|_{\cX}.
\]
\Cref{assumption:measurability} can then be thought of as assumptions on the growth rate of the derivatives of $\cG$. 
As a simple example, if 
$\|D\cG(a)\|_{\Op(\cX,\cY)} \lesssim \|a\|_{\cX}$, then $L_0(a_1, a_2) \propto \max \{ \|a_1\|_{\cX}, \|a_2\|_{\cX} \} $
satisfies \Cref{assumption:measure_symm,assumptions:measure_moments}.
Analogous statements can be made about the derivative Lipschitz constant $L_1(a_1, a_2)$ through growth rates of the second derivatives.

We will also focus on activation functions similar to the Gaussian Error Linear Unit (GELU) in the following sense:
\begin{definition}[GELU-like Activation Functions]\label{def:activation}
    Let $\sigmoid \in \cont^{\infty}(\bR)$ be a sigmoid function satisfying the following: 
    \begin{enumerate}[label=(\roman*)]
        \item (Monotonicity) $\sigmoid(x)$ is strictly increasing over $\bR$;
        \item (Boundedness) $\sigmoid(x) \rightarrow 0 $ as $x \rightarrow -\infty$ and $\sigmoid(x) \rightarrow 1$ as $x \rightarrow \infty$;
        \item (Symmetry) $\sigmoid(x) + \sigmoid(-x) = 1$;
        \item (Bounded growth rate) $x^2\sigmoid'(x) \rightarrow 0$ as $|x| \rightarrow \infty$ \label{def:activation-point-bounded-growth}.
    \end{enumerate}
    Then, we say $\sigma$ is a GELU-like activation function, denoted $\sigma \in \srla$, if it takes the form $\sigma(x) = x \sigmoid(x)$.
    Note that \Cref{def:activation-point-bounded-growth} additionally implies that $x \sigmoid'(x) \rightarrow 0$.
\end{definition}
In particular, the GELU function uses the cumulative distribution function of the standard normal distribution as $\Phi(x)$. Other examples of activation functions in $\srla$ include the swish function, which uses the logistic sigmoid, and approximations to GELU that use $\Phi(x)$ of the form $(1 + \tanh(a_1(x + a_2 x^3)))/2$ for chosen values of $a_1, a_2 > 0$.

Activation functions $\sigma \in \srla$ satisfy typical properties of GELU. Namely, $\sigma(x) \approx x$ as $x \rightarrow \infty$ and $\sigma(x) \approx 0$ as $x \rightarrow -\infty$; this is made precise in \Cref{proposition:properties_of_gelu}. This class of activation functions is used in the universal approximation results in the next subsection.

\subsection{Main Result}
The main result is stated below, the proof of which is based on a cutoff argument that extends our approximation results over compact sets. A detailed proof is provided in \Cref{sec:unbounded_domains}, which again develops several supporting lemmas before completing the proof in \Cref{sec:proof_of_main_ua_mu}.
\begin{theorem}[Universal Approximation in $W^{1,2}_{\mu, \delta}$]
    \label{theorem:main_ua_mu}
     
    Let $s, s', \delta \geq 0$ and let $\cX = H^s(\bT^d; \bR^{d_a})$, $\cY = H^{s'}(\bT^d; \bR^{d_u})$, and $\cX_{\delta} = H^{s + \delta}(\bT^d; \bR^{d_a})$.
    Suppose $\mu$ is a Borel probability measure on $\cX$ and $\cG \in \cont^{1}(\cX;\cY)$, which together satisfy \Cref{assumption:measurability}. 
    Then, given $\sigma \in \srla$, for any $\epsilon > 0$,
    there exists an FNO $\cN \in \cont^1(\cX;\cY)$ with activation function $\sigma$ such that 
    \begin{equation}
    \|\cG - \cN\|^2_{W^{1,2}_{\mu, \delta}} = 
     \| \cG - \cN \|_{L^2_{\mu}(\cX; \cY)}^2 + \| D \cG - D \cN \|_{L^2_{\mu}(\cX; \HS(\cX_{\delta}, \cY))}^2 \leq \epsilon^2. 
    \end{equation}
\end{theorem}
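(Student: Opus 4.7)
The plan is to reduce to Theorem \ref{theorem:main_ua} via a cutoff-and-truncate argument. Writing $E_{\mathrm{val}}(a) = \|\cG(a) - \cN(a)\|_{\cY}^2$ and $E_{\mathrm{der}}(a) = \|D\cG(a) - D\cN(a)\|_{\HS(\cX_\delta, \cY)}^2$, and setting $K_R = \{a \in \cX : \|a\|_{\cX} \leq R\}$, the target norm decomposes as
\begin{equation*}
    \|\cG - \cN\|_{W^{1,2}_{\mu,\delta}}^2 = \int_{K_R} (E_{\mathrm{val}} + E_{\mathrm{der}}) \, d\mu + \int_{\cX \setminus K_R} (E_{\mathrm{val}} + E_{\mathrm{der}}) \, d\mu.
\end{equation*}
Assumption \ref{assumption:measurability} gives $\cG \in L^2_\mu(\cX;\cY)$ and $D\cG \in L^2_\mu(\cX;\HS(\cX_\delta,\cY))$, so by dominated convergence $\int_{\cX \setminus K_R} (\|\cG\|_{\cY}^2 + \|D\cG\|_{\HS}^2) \, d\mu \to 0$ as $R \to \infty$; the tail from $\cG$ can thus be made arbitrarily small by taking $R$ large. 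The bounded fourth moments of $\mu$ and of the local Lipschitz constants $L_0, L_1$ will be used to absorb cross terms on the annular region where the cutoff is active.

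The core step is to produce an FNO $\cN$ that approximates $\cG$ accurately on $K_R$ \emph{and} has controlled growth outside it. I would build a smooth scalar gate $\chi_R : \cX \to [0,1]$ that equals $1$ on $K_R$, effectively vanishes off a slightly enlarged ball $K_{R'}$, and depends on $a$ only through a finite-dimensional functional realizable by the FNO grammar (e.g., a norm-squared applied to a finite Fourier projection, implemented via the nonlinearity $\sigma(x) = x\sigmoid(x)$ with $\sigma \in \srla$, exploiting the saturating behavior of $\sigmoid$). Letting $\cN_0$ be the FNO supplied by Theorem \ref{theorem:main_ua} applied to $\cG$ on the compact set $K_{R'}$ at tolerance $\eta > 0$, the candidate surrogate is $\cN := \chi_R \cdot \cN_0$. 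On $K_R$ this coincides with $\cN_0$, inheriting the uniform approximation of both value and Fr\'echet derivative; on $\cX \setminus K_{R'}$ it vanishes, so that only the $\cG$-tail survives there; and the annulus $K_{R'} \setminus K_R$ is controlled using the local Lipschitz bounds of Assumption \ref{assumption:measurability} together with uniform bounds on $\chi_R$ and $D\chi_R$. Choosing $R$ large enough to suppress the $\cG$-tail, then $R'-R$ and $\eta$ small enough to suppress the annular and interior contributions, drives the total $W^{1,2}_{\mu,\delta}$ error below $\epsilon$.

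The main obstacle is realizing the product structure $\chi_R \cdot \cN_0$ within the strict FNO grammar while preserving the simultaneous approximation of $D\cG$. FNO layers of the form $\sigma(Wv + b + \cK v)$ do not natively implement multiplication of two distinct channels, so the scalar--vector product must be approximated through the GELU-like nonlinearity (for instance via a polarization of squares built from $\sigma$, followed by rebroadcasting across channels). Moreover, the derivative of the gated operator expands as $D(\chi_R \cdot \cN_0)(a) = \cN_0(a) \otimes D\chi_R(a) + \chi_R(a) \cdot D\cN_0(a)$, and the cross term $\cN_0 \otimes D\chi_R$ must be bounded in $\HS(\cX_\delta, \cY)$ uniformly over the transition region $K_{R'} \setminus K_R$. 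This forces a careful choice of the transition width $R' - R$, since the derivative $D\chi_R$ scales inversely with it, balanced against the pointwise size of $\cN_0$ on the annulus (controlled by the local Lipschitz constants of $\cG$) and against the $\mu$-mass of the annulus (controlled by the fourth-moment hypothesis). Verifying this derivative bookkeeping -- rather than the purely value-level cutoff used in prior works -- is the distinctive technical contribution anticipated by the authors' remark following (C3).
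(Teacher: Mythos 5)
Your overall cutoff strategy points in the same direction as the paper's, but there are two concrete gaps that break the argument as written. First, you invoke \Cref{theorem:main_ua} for $\cG$ on the "compact set" $K_{R'}=\{\|a\|_{\cX}\le R'\}$, but a closed ball in the infinite-dimensional space $\cX=H^{s}(\bT^d;\bR^{d_a})$ is not compact, so the compact-set theorem does not apply to it. This is precisely why the paper first replaces $\cG$ by the Fourier truncation $\cG_N=\cP_N\circ\cG\circ\cP_N$ and proves $\|\cG-\cG_N\|_{W^{1,2}_{\mu,\delta}}\to 0$ (\Cref{lemma:truncation_mu}); that truncation step is exactly where the local Lipschitz constants $L_0,L_1$ and the fourth-moment hypotheses of \Cref{assumption:measurability} are consumed (via Cauchy--Schwarz against $\|a-\cP_N a\|_{\cX}^4$), not in "absorbing cross terms on the annulus" as you propose. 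After truncation, the relevant input set is $\cP_N(B_{R'})$, a bounded subset of a finite-dimensional space, and only then does the compact-set machinery apply. Without this step your reduction to \Cref{theorem:main_ua} has no valid starting point.

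Second, your gated surrogate $\chi_R\cdot\cN_0$ is not controlled off the ball: the FNO $\cN_0$ produced by the compact-set theorem carries no global bound on $\|\cN_0(a)\|_{\cY}$ or $\|D\cN_0(a)\|_{\HS(\cX_\delta,\cY)}$, so "approximately zero gate times $\cN_0$" need not be small, or even $\mu$-square-integrable, on $\cX\setminus K_{R'}$; the same unboundedness defeats your plan to emulate the scalar--vector product by a neural network, since that emulation is only available over a bounded range of the factors. The paper resolves this by explicitly clipping the coefficient map so the approximant is globally bounded (\Cref{lemma:coefficient_approximation_bounded}), multiplying by a cutoff whose value and derivative are globally bounded by a constant $M(\sigma,N,d,d_a)$ \emph{independent of $\epsilon$ and $R$} (\Cref{lemma:cutoff_l2}), and then proving growth bounds of the form $\|\cN(a)\|_{\cY}\lesssim\|\cG_N(a)\|_{\cY}+1$ and $\|D\cN(a)\|_{\HS}\lesssim\|\cG_N(a)\|_{\cY}+\|D\cG_N(a)\|_{\HS}+1$ with an $R$- and $\epsilon$-independent constant (\Cref{lemma:coefficient_approximation_bounded_growth}, \Cref{corollary:operator_approximation_bounded_growth}). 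That independence is what allows the tail over $B_R^c$ to be killed by dominated convergence in $R$ \emph{before} fixing the approximation accuracy on $B_R$ (\Cref{lemma:truncation_approx_mu}); your scheme, in which the gate's derivative scales like $(R'-R)^{-1}$ and the error budget is balanced pointwise on the annulus, introduces $R$-dependence into the very quantities you later need to integrate, and you give no argument that prevents this circularity. Note also that the paper needs no smallness on the annulus at all -- only a dominated growth bound there -- which is what makes the bookkeeping close.
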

\begin{proof}[A Sketch of the Proof] 
As in the proof of \Cref{theorem:main_ua}, we first approximate $\cG$ by $\cG_N$ and then approximate $\cG_N$ by an FNO $\cN$.
In \Cref{lemma:truncation_mu}, we show that under the moment and local Lipschitz continuity assumptions of \Cref{assumption:measurability}, the truncation error $\|\cG  - \cG_N\|_{W^{1,2}_{\mu, \delta}}^2$ can be made arbitrarily small for sufficiently large $N$.

Given an appropriate choice of $N$ satisfying 
$\|\cG  - \cG_N\|_{W^{1,2}_{\mu, \delta}}^2 \leq \epsilon^2/2$, we then look to approximate $\cG_N$ by $\cN$. 
Note that the $W^{1,2}_{\mu, \delta}$ approximation errors of $\cG_N$ can be decomposed by splitting the contributions on balls 
of radius $R$, $B_R:=B_R(0) = \{\|a \|_{\cX} \leq R\}$, 
and its complement, $B_R^c$. That is,
\[
    \|\cG_N - \cN \|_{L^2_{\mu}}^2 = 
    \int_{B_R} \|\cG_N(a) - \cN(a) \|_{\cY}^2  d\mu(a) + 
    \int_{B_R^c} \|\cG_N(a) - \cN(a) \|_{\cY}^2  d\mu(a)
\]
and 
\[
    \|D \cG_N - D \cN \|_{L^2_{\mu}}^2 = 
    \int_{B_R} \|D \cG_N(a) - D \cN(a) \|_{\HS(\cX_{\delta}, \cY)}^2  d\mu(a) + 
    \int_{B_R^c} \|D \cG_N(a) - D \cN(a) \|_{\HS(\cX_{\delta}, \cY)}^2  d\mu(a).
\]
From this, we observe that the $W^{1,2}_{\mu, \delta}$ result 
can be obtained by taking $R \rightarrow \infty$
provided that for any $R$, we can approximate $\cG_N$ over $B_R$ arbitrarily well in a way that its error over $B_R^{c}$ is controlled.

To this end, we use a cutoff argument along with the results of the universal approximation on compact sets to construct such an approximation of $\cG_{N}$.
In particular, \Cref{lemma:truncated_operator} can be used to construct an FNO $\cN_G$ approximating the mapping to the Fourier coefficients of $\cG_N$, i.e., $\cN_G \approx \cF_N \circ \cG_N$,
such that $\real \cF^{-1} \circ \cN_G$ approximates $\cG_N$ in $B_R$ with errors less than any $\epsilon_1 > 0$.
However, this does not necessarily control the approximation error in $B_R^{c}$.

To address this, 
we show that using activation functions $\sigma \in \srla$, 
we can construct a cutoff function
$\funcut : \cX \rightarrow \bR$ (\Cref{lemma:cutoff_l2})
such that its product with $\cN_G$ has the properties
\begin{enumerate}[(i)]
    \item for $a \in B_R$, the product $\funcut(a) \cdot {\cN}_{G}(a) \approx \cN_G(a)$,
    \item for any $a \in \cX$, the product $\funcut (a) \cdot \cN_G(a)$ and its derivative $D_a (\funcut(a) \cdot \cN_G(a)) $ both have growth rates with respect to $a \in \cX$ that are independent of $\epsilon_1$ and $R$.
\end{enumerate}
In particular, the cutoff function $\funcut$ 
can be explicitly constructed as an FNO, where the output is identified with constant functions.

We then show that the operator $\real \cF_{N}^{-1} (\funcut (a) \cdot {\cN_G}(a))$ is well-approximated by an FNO $\cN$, 
and can in turn approximate $\cG_N$ to arbitrary accuracy within $B_R$, 
while again having 
growth rates independent of 
$\epsilon_1$ and $R$
(\Cref{corollary:operator_approximation_bounded_growth}).
This allows us to construct $\cN$
such that 
$\|\cG_N - \cN\|_{W^{1,2}_{\mu, \delta}}^2 \leq \epsilon^2/2$
for any $\epsilon > 0$ by considering arbitrarily large $R$
such that 
\[
    \int_{B_R^c} \|\cG_N(a) - \cN(a) \|_{\cY}^2  d\mu(a)
    +  
    \int_{B_R^c} \|D \cG_N(a) - D \cN(a) \|_{\HS(\cX_{\delta}, \cY)}^2  d\mu(a)    
    \leq \frac{\epsilon^2}{4},
\]
and refining the approximation accuracy over $B_R$ such that 
\[
    \int_{B_R} \|\cG_N(a) - \cN(a) \|_{\cY}^2  d\mu(a) 
    +
    \int_{B_R} \|D \cG_N(a) - D \cN(a) \|_{\HS(\cX_{\delta}, \cY)}^2  d\mu(a)
    \leq \frac{\epsilon^2}{4}.
\]
Combining the approximation results for 
$\|\cG - \cG_N\|_{W^{1,2}_{\mu, \delta}}^2$
and 
$\|\cG_N - \cN\|_{W^{1,2}_{\mu, \delta}}^2$
then yields \Cref{theorem:main_ua_mu}.

\end{proof}

\section{Efficient Derivative-Informed Learning}\label{section:training}

The approximation-theoretic results above establish that for sufficiently smooth classes of operators, FNO can arbitrarily well approximate the operator and its derivatives simultaneously. In practice, however, the computational and memory costs for DIFNO training increase rapidly with the grid size. Consequently, DIFNO training on 2D and 3D geometry with dense grids can be prohibitive under a limited training budget. In this section, we address this challenge and discuss methods to mitigate these costs.

The rest of the section is organized as follows. First, we derive the form of the loss function when the input and output functions are approximated using a uniform grid. We then introduce two approaches for efficient DIFNO training. The first approach uses reduced bases of the input and output spaces to explicitly compress the derivative of the loss function. In contrast, the second approach employs mixed-resolution training using a hierarchy of grids. Lastly, we suggest additional techniques to accelerate DIFNO training.

\begin{remark}
   In this work, we focus on reducing training costs rather than those associated with generating derivative training samples. Derivative data can be generated by solving sensitivity equations (e.g. via the adjoint method) or by employing automatic differentiation with a differentiable PDE solver. In particular, the costs associated with sensitivity equations are discussed in \cite{ghattas2021learning, cao2025derivative}.
\end{remark}

\subsection{Estimating the Output Loss on a Grid}
We estimate the output loss function by discretizing the input and output spaces using a uniform grid. Let $\Lambda$ denote the set of grid point coordinates and $|\Lambda|\in\mathbb{N}$ denote the size of the grid. Let $\{\mathcal{B}_j^{\Lambda}\}_{j=1}^{|\Lambda|}$ denote nodal basis functions, each corresponding to a grid point $x_j\in\Lambda$, e.g., the piecewise linear hat functions generated by $\mathbb{P}_1$ finite element. Let $\cX^{\Lambda}\subset\cX$ and $\cY^{\Lambda}\subset\cY$ denote the finite-dimensional input and output subspaces spanned by the nodal basis functions; the following nodal interpolation properties hold
\begin{equation*}
    a(x) = \sum_{j=1}^{|\Lambda|} a(x_j)\mathcal{B}^{\Lambda}_j(x) \quad\forall a\in \cX^{\Lambda},\quad u(x) = \sum_{j=1}^{|\Lambda|} u(x_j)\mathcal{B}^{\Lambda}_j(x) \quad\forall u\in\cY^{\Lambda}.
\end{equation*}
Define the symmetric and positive definite weighting tensor $\mathbf{W}_{\cY^{\Lambda}}\in \bR^{d_{u}\times|\Lambda|\times d_{u}\times|\Lambda|}$ on the discretized output space:
\begin{equation*}
    [\mathbf{W}_{\cY^{\Lambda}}]_{ijkl} = 
    \begin{cases}
         \left\langle \mathcal{B}^{\Lambda}_j, \mathcal{B}^{\Lambda}_l\right\rangle_{\cY},& 1\leq i=k\leq d_{\cY},\\
         0, &\text{otherwise}.
    \end{cases}
\end{equation*}
Note that $\mathbf{W}_{\cY^{\Lambda}}$ is typically sparse for locally supported nodal basis functions. When the nodal basis functions are piecewise constant and $\langle\cdot,\cdot\rangle_{\cY}$ is the spatial $L^2$ inner product, the action of $\mathbf{W}_{\cY^{\Lambda}}$ divides each nodal value by the grid cell size. When the nodal basis functions are piecewise constant and $\langle\cdot,\cdot\rangle_{\cY}$ is the spatial $H^1$ inner product, the action of $\mathbf{W}_{\cY^{\Lambda}}$ is equivalent to the finite difference discretization of the differential operator $(\mathcal{I} - \Delta)$. We use this weighting tensor to compute the norm on $\cY$ for functions in $\cY^{\Lambda}$. i.e., for all $u\in \cY^{\Lambda}$ we have
\begin{equation*}
    \|u\|_{\cY} = \left\|\mathbf{W}_{\cY^{\Lambda}}^{1/2} \left(u|_{\Lambda}\right)\right\|_F,
\end{equation*}
where $\|\cdot\|_F$ denotes the Frobenius norm.

We define an input projection $\mathcal{P}_{\Lambda}:\cX\to\cX^{\Lambda}$ that maps an input to its grid representation:
\begin{equation*}
    \mathcal{P}_{\Lambda} a = \sum_{j=1}^{|\Lambda|} a(x_j)\mathcal{B}_j^{\Lambda}.
\end{equation*}
The output loss function can be estimated on the grid $\Lambda$ as follows:
\begin{equation}
    \mathbb{E}_{a\sim \mu} \Big[\left\|\cG(a) - \widetilde{\cG}_w(a)\right\|^2_{\cY}\Big]\approx \mathbb{E}_{a\sim \mu} \Big[\left\|\mathbf{W}_{\cY^\Lambda}^{1/2}\left(\cG(a)|_{\Lambda} - \widetilde{\cG}_w(\mathcal{P}_{\Lambda}a)|_{\Lambda}\right)\right\|_F^2\Big],\label{eq:discretized_output_loss}
\end{equation}
rwhere $\cG(a)|_{\Lambda}\in\bR^{d_{u}\times|\Lambda|}$ is the vector of the output function evaluated at grid points, with columns given by $[\cG(a)](x_j)$ where $x_j\in\Lambda$. Here we revert to the notation $\widetilde{\cG}_w$ in place of $\cN$ for an FNO with a fixed architecture and tunable weights $w$.

\begin{remark}
    We note that the $H^1$ Sobolev norm in the output space can be consistently evaluated via the FFT of the nodal values, but the computational cost incurred during training can be substantial. On the contrary, assembling the weighting tensor $\mathbf{W}_{\cY^{\Lambda}}$ only has to be done once, and the subsequent norm evaluations are relatively cheap. Finally, the weighting-tensor approach using $\mathbb{P}_1$ finite elements is equivalent to the commonly adopted finite-difference method (with 3-, 5-, and 7-point stencils in 1-, 2-, and 3D, respectively) for estimating the $H^1$ norm in the output loss function.
\end{remark}

\subsection{Estimating the Derivative Loss on a Grid}
We consider a matrix representation for the derivative operators $D\mathcal{G}(a)$ and $D\widetilde{\mathcal{G}}_w(a)$, which is used to estimate the derivative loss function measured in the norm of $\text{HS}(\cX_{\delta}, \cY)$.
Let $\{\psi_k\}_{k=1}^{\infty}$ and $\{\phi_j\}_{j=1}^{\infty}$ be the orthonormal bases for $\cX_{\delta}$ and $\cY$, respectively.
They define a \textit{Jacobian} $\mathbf{J}: \cX \mapsto \text{HS}(\ell^2, \ell^2)$ of the derivative operator $D\cG$, 
where $\ell^2$ denotes the set of square-summable sequences, following an isometric isomorphism between $\text{HS} (\cX_{\delta}, \cY)$ and $\text{HS} (\ell^2, \ell^2)$:
\begin{equation}
    [\mathbf{J}(a)]_{jk} := \left\langle\phi_j, D\cG(a) \psi_k\right\rangle_{\cY},\quad \|\mathbf{J}(a)\|_F = \|D\cG(a)\|_{\text{HS}(\cX_{\delta}, \cY)}, \label{eq:jacobian_isometry}
\end{equation}
where $\| \cdot \|_F$ is the Frobenius norm. We similarly define $\widetilde{\mathbf{J}}_w: \cX \mapsto \text{HS}(\ell^2, \ell^2)$ for the FNO derivative $D\widetilde{\mathcal{G}}_w(a)$. With the matrix representation, the derivative training samples can be generated as Jacobian matrices at each parameter sample, and the derivative approximation error can be estimated in the Frobenius norm of the Jacobian matrices.

\begin{table}[h]
    \centering
\scalebox{0.88}{
\begin{tikzpicture}[scale = 1.0, transform shape,  node distance=2cm, box/.style={draw, rectangle, minimum width=1cm, minimum height=1cm},
 double/.style={draw, anchor=text, rectangle split,rectangle split parts=2}]
    \node[draw, thick, rounded corners=2pt, inner sep=6pt] at (0,0) (Origin) {$\mathbb{E}_{a\sim \mu} \Big[\underbrace{\left\|\mathbf{W}_{\cY^\Lambda}^{1/2}\left(\cG(a)|_{\Lambda} - \widetilde{\cG}_w(\mathcal{P}_{\Lambda}a)|_{\Lambda}\right)\right\|_F^2}_{\displaystyle\mathclap{\text{Discretized Output Loss}}} + \underbrace{\left\| \mathbf{J}^{\Lambda}(a) - \widetilde{\mathbf{J}}^{\Lambda}_w(\mathcal{P}_{\Lambda}a)\right\|_F^2}_{\displaystyle\text{Discretized Derivative Loss}}\Big]$};

    \node at ($(Origin)+(0,1.3)$) {DIFNO Loss Function on a Grid $\Lambda$};

    \node at ($(Origin)+(-0.3,-2.8)$) {{\renewcommand{\arraystretch}{1.5}
                \begin{tabular}{|c|c|c|}
                \hline
                & Discretized Output Loss & Discretized Derivative Loss \\
                \hline
                 \makecell{Data Generation\\Costs}& 1 PDE solve &  $\mathcal{O}(|\Lambda|)$ linearized PDE solves \\ \hline
                 Training Costs& \begin{tabular}{c}
                    $\mathcal{O}(|\Lambda|)$ memory \\ $\mathcal{O}(|\Lambda| \log(|\Lambda|))$ computation
                 \end{tabular} & \begin{tabular}{c}
                    $\mathcal{O}(|\Lambda|^2)$ memory \\ $\mathcal{O}(|\Lambda|^2 \log(|\Lambda|))$ computation
                 \end{tabular} \\ \hline
                \end{tabular}}};
\end{tikzpicture}
}
\caption{The loss function and training costs of DIFNOs on a uniform grid $\Lambda$. The discretized output loss and discretized derivative loss are defined as \eqref{eq:discretized_output_loss} and \eqref{eq:discretized_derivative_loss}.
Note that the costs of direct training on the derivative loss scales quadratically or more with the grid size and are much higher than training on the output loss. Hence, a more efficient and scalable training scheme for DIFNOs is necessary.}
\label{table:cost_full}
\end{table}

When estimating the derivative loss by discretizing the input and output spaces using a uniform grid, we may represent the finite-dimensional input and output spaces $\cX_{\delta}^{\Lambda}$ and $\cY^{\Lambda}$ using orthonormal bases $\{\psi^{\Lambda}_k\}_{k=1}^{d_{a}|\Lambda|}$ and $\{\phi^{\Lambda}_i\}_{j=1}^{d_{u}|\Lambda|}$, respectively. They define the \textit{discretized Jacobian} $\mathbf{J}^\Lambda:\cX\to \mathbb{R}^{d_{a}|\Lambda|\times d_{u}|\Lambda|}$:
\begin{equation}
    [\mathbf{J}^\Lambda(a)]_{jk} := \left\langle\phi^{\Lambda}_j, D\cG(a) \psi^{\Lambda}_k\right\rangle_{\cY}, \quad \|\mathbf{J}^\Lambda(a)\|_F = \|D\cG(a)\|_{\text{HS}(\cX_{\delta}, \cY)}.\label{eq:discretized_jacobian_isometry}
\end{equation}
Similarly, we have $\widetilde{\mathbf{J}}_w^\Lambda:\cX\to \mathbb{R}^{d_{a}|\Lambda|\times d_{u}|\Lambda|}$ for the discretized Jacobian of the FNO. The derivative loss is thus given by
\begin{equation}
    \mathbb{E}_{a\sim\mu}[\|D\cG(a) - D\widetilde{\cG}_w(a)\|^2_{\text{HS}(\cX_{\delta},\cY)}]\approx \mathbb{E}_{a\sim\mu}[\|\mathbf{J}^\Lambda(a) - \widetilde{\mathbf{J}}_w^\Lambda(\mathcal{P}_{\Lambda}a)\|_F^2].\label{eq:discretized_derivative_loss}
\end{equation}

In terms of both memory and evaluation costs (see \Cref{table:cost_full}), each discretized Jacobian vector product is similar to a forward pass. Consequently, estimating the derivative-loss  requires $\cO(|\Lambda|^2 \log (|\Lambda|)$ computation and $\cO (|\Lambda|^2)$ memory costs, while each output loss estimate requires $\cO(|\Lambda| \log (|\Lambda|)$ computation and $\cO (|\Lambda|)$ memory costs. Estimating and backpropagating through the derivative loss creates a memory and computational bottleneck that must be removed for efficient DIFNO training.

\subsection{Reduced Basis Derivative-Informed Learning} \label{sec:reduced_bases}

In this subsection, we consider compressing the derivative loss using linear dimension-reduction techniques so that its memory and computational costs scale similarly to those of the output loss with respect to the grid size. Let $\cX^r_{\delta}$ and $\cY^r$ denote low-dimensional subspaces of $\cX^{\Lambda}_{\delta}$ and $\cY^{\Lambda}$ with dimension $r_{\cX_{\delta}}\ll d_{a}|\Lambda|$ and $r_{\cY}\ll d_{u}|\Lambda|$, respectively. Suppose $\cX^r_{\delta}=\text{span}(\{\psi_k\}_{k=1}^{r_{\cX}})$ and $\cY^r=\text{span}(\{\phi_j\}_{j=1}^{r_{\cY}})$, the discretized Jacobian can be decomposed as
\begin{equation*}
    \mathbf{J}(a) = \begin{bmatrix}
        \mathbf{J}^{rr}(a)& \mathbf{J}^{r_{\perp}}(a)\\
        \mathbf{J}^{\perp r}(a) & \mathbf{J}^{\perp \perp}(a)
    \end{bmatrix}, 
\end{equation*}
where $\widetilde{\mathbf{J}}^{rr}(a)\in\bR^{r_{\cY}\times r_{\cX_{\delta}}}$ is the Jacobian component in the low-dimensional subspaces $\cX^{r}_{\delta}$ and $\cY^{r}$, and the other three components have sizes that grow with the grid size. Similarly, we define the components of $\widetilde{\mathbf{J}}_{w}(a)$ as $\widetilde{\mathbf{J}}_{w}^{rr}(a)$, $\widetilde{\mathbf{J}}_{w}^{r\perp}(a)$, $\widetilde{\mathbf{J}}_{w}^{\perp r}(a)$, and $\widetilde{\mathbf{J}}_{w}^{\perp \perp}(a)$. The discretized derivative loss can be decomposed as
\begin{equation*}
    \mathbb{E}_{a\sim \mu}\left[\|\mathbf{J}^\Lambda(a) - \widetilde{\mathbf{J}}_w^\Lambda(\mathcal{P}_{\Lambda}a)\|_F^2\right] = \mathbb{E}_{a\sim \mu}\left[\sum_{j, k \in\{r, \perp\}}\|\mathbf{J}^{jk}(a) - \widetilde{\mathbf{J}}_w^{jk}(\mathcal{P}_{\Lambda}a)\|_F^2\right].
\end{equation*}
We consider approximating the derivative loss using a reduced loss in a pre-determined low-dimensional subspace:
\begin{equation}
    \mathbb{E}_{a\sim \mu}\left[\|\mathbf{J}^\Lambda(a) - \widetilde{\mathbf{J}}_w^\Lambda(\mathcal{P}_{\Lambda}a)\|_F^2\right] \approx \mathbb{E}_{a\sim \mu}\left[\|\mathbf{J}^{rr}(a) - \widetilde{\mathbf{J}}_w^{rr}(\mathcal{P}_{\Lambda}a)\|_F^2\right]. \label{eq:reduced_derivative_loss}
\end{equation}
The key to enabling reduced-basis DIFNO training is careful choice of the low-dimensional subspaces $\cX^r_{\delta}$ and $\cY^r$, i.e., their reduced basis functions and dimensions, such that the discarded Jacobian components have small contributions to the discretized derivative loss. Here, we discuss several commonly used subspace detection techniques that involve solving generalized eigenvalue problems, in which the magnitudes of the eigenvalues quantify the relevance of the corresponding eigenfunctions for reducing the Jacobian. See \cite{luo2025dis} for a detailed analysis. 

\begin{table}[h]
    \centering
\scalebox{0.88}{
\begin{tikzpicture}[scale = 1.0, transform shape,  node distance=2cm, box/.style={draw, rectangle, minimum width=1cm, minimum height=1cm},
 double/.style={draw, anchor=text, rectangle split,rectangle split parts=2}]
    \node[draw, thick, rounded corners=2pt, inner sep=6pt] at (0,0) (Origin) {$\mathbb{E}_{a\sim \mu} \Big[\underbrace{\big\|(\mathbf{W}_{\cY^{\Lambda}})^{1/2}\big(\cG(a)|_{\Lambda} - \widetilde{\cG}_w(\mathcal{P}_{\Lambda}a)|_{\Lambda}\big)\big\|_F^2}_{\displaystyle\mathclap{\text{Discretized Output Loss}}} + \underbrace{\big\|\mathbf{J}^{rr}(a)-\widetilde{\mathbf{J}}_w^{rr}(\mathcal{P}_{\Lambda}a)\big\|_F^2}_{\displaystyle\mathclap{\text{Reduced Derivative Loss}}}\Big]$};
    
    \node at ($(Origin)+(0,1.2)$) {
   Reduced-Basis DIFNO Loss Function on a Grid $\Lambda$
    };

    \node at ($(Origin)+(0,-2.8)$) {{\renewcommand{\arraystretch}{1.5}
                \begin{tabular}{|c|c|c|}
                \hline
                & Discretized Output Loss & Reduced Derivative Loss \\
                \hline
                 \makecell{Data generation\\Costs}& 1 PDE solve &  $\min \{r_{\cX_{\delta}}, r_{\cY}\}$ linearized PDE solves \\ \hline
                 Training Costs& \begin{tabular}{c}
                    $\mathcal{O}(|\Lambda|)$ memory \\ $\mathcal{O}(|\Lambda| \log(|\Lambda|))$ computation
                 \end{tabular} & \begin{tabular}{c}
                    $\mathcal{O}(\min\{r_{\cX_{\delta}}, r_{\cY}\} \times |\Lambda|)$ memory \\ $\mathcal{O}(\min\{r_{\cX_{\delta}}, r_{\cY}\} \times |\Lambda| \log(|\Lambda|))$ computation
                 \end{tabular} \\ \hline
                \end{tabular}}};

\end{tikzpicture}
}
\caption{The loss function and training costs of reduced-basis DIFNOs on a uniform grid $\Lambda$. Reducing the derivative loss using dimensionality reduction techniques eliminates the quadratic dependence of the training cost on the grid size, replacing it with a linear dependence on memory and a linearithmic dependence on computation.}
\label{table:cost_subspace}
\end{table}

We first consider sample-based dimension reduction, in particular, principal component analysis (PCA), also known as proper orthogonal decomposition (POD), for input and output reduction. Let $\overline{a}$ be the mean of $\mu$, and let $\overline{u}$ be the mean of the pushforward $\cG_\sharp \mu$, then we define the empirical covariance operators:
\begin{equation}
    \cC_{\cX} = \mathbb{E}_{a\sim \mu}\left[\langle \cdot, a - \overline{a}\rangle_{\cX} (a - \overline{a})\right], \quad \cC_{\cY} = \mathbb{E}_{a\sim \mu}\left[\langle \cdot, \mathcal{G}(a) - \overline{u}\rangle_\cY (\mathcal{G}(a) - \overline{u})\right].
\end{equation}
Consider the following eigenvalue problems for the two covariance operators:
\begin{align}
    &\cC_{\cX} \psi_k = \lambda^{\cX}_k\psi_k,\quad \left\langle\psi_j, \psi_k\right\rangle_{\cX} = \delta_{jk},\\
    &\cC_{\cY} \phi_j = \lambda^{\cY}_j\phi_j,\quad \left\langle\phi_j, \phi_k\right\rangle_{\cY} = \delta_{jk},
\end{align}
where $\delta_{jk}$ is the Kronecker delta. The eigenfunctions with leading eigenvalues are used to define the subspaces $\cX^r_{\delta}$ and $\cY^r$. Note that when $\mu$ is Gaussian, and the covariance is known, this dimension reduction technique follows the Karhunen--Lo\`eve expansion (KLE) of the input variable, which will be referred to as KLE for short. The distinction from PCA is that KLE is computed directly from the known covariance operator rather than from samples. When $\cX_{\delta}$ is the image of $\cC_{\cX}^{s}$ for some $s>0$, the $\cX_{\delta}$-orthonormal basis of the PCA or KLE subspace can be found by rescaling each eigenfunction $\psi_k$ by its eigenvalue.

Next, we consider using derivative-based dimension reduction, which involves solving the following eigenvalue problems
\begin{align} \label{eq:dis}
    \mathbb{E}_{a\sim \mu} \left[D\cG(a)^*D\cG(a) \right]\psi_k &= \lambda^{\cX}_k\psi_k, \quad\left\langle\psi_j, \psi_k\right\rangle_{\cX_{\delta}} = \delta_{jk}, \\
    \mathbb{E}_{a\sim \mu} \left[D\cG(a)D\cG(a)^*\right]\phi_j &= \lambda^{\cY}_j\phi_j, \quad\left\langle\phi_j, \phi_k\right\rangle_{\cY} = \delta_{jk}.
\end{align}
Here, the adjoint $D\cG(a)^*$ of $D\cG(a)$ is taken with an input norm in $\cX_{\delta}$ and an output norm in $\cY$. The eigenfunctions with leading eigenvalues are used to define the subspaces $\cX^r_{\delta}$ and $\cY^r$. In this way, the eigenfunctions capture the left and right singular functions of the derivative, which are most important under the input distribution, making them optimal for reducing derivative loss. We refer to this dimension reduction technique as the derivative-informed subspace (DIS).

\subsection{Mixed-Resolution Derivative-Informed Learning} \label{sec:mixed_res}

\begin{table}[h]
    \centering
\scalebox{0.88}{
\begin{tikzpicture}[scale = 1.0, transform shape, node distance=2cm, box/.style={draw, rectangle, minimum width=1cm, minimum height=1cm},
 double/.style={draw, anchor=text, rectangle split,rectangle split parts=2}]
    \node[draw, thick, rounded corners=2pt, inner sep=6pt] at (0,0) (Origin) {$\mathbb{E}_{a\sim \mu} \Big[\underbrace{\big\|(\mathbf{W}_{\cY^{\Lambda_H}})^{1/2}\big(\cG(a)|_{\Lambda_H} - \widetilde{\cG}_w(a)|_{\Lambda_H}\big)\big\|_F^2}_{\displaystyle\mathclap{\text{High-Res.\ Output Loss}}} + \underbrace{\|\mathbf{J}^{\Lambda_L}\big(a\big) - \widetilde{\mathbf{J}}^{\Lambda_L}_w\big(\mathcal{P}_{\Lambda_L}a\big)\|_{F}^2}_{\displaystyle\mathclap{\text{Low-Res.\ Derivative Loss}}}\Big]$};

    \node at ($(Origin)+(0,1.2)$) {Mixed-Resolution DIFNO Loss Function on Grids $\Lambda_H$ and $\Lambda_L$};

    \node at ($(Origin)+(-0.3,-2.7)$) {{\renewcommand{\arraystretch}{1.5}
                \begin{tabular}{|c|c|c|}
                \hline
                & High-Resolution Output Loss & Low-Resolution Derivative Loss \\
                \hline
                 \makecell{Data Generation\\Costs}& 1 PDE solve &  $\cO (|\Lambda_L|)$ linearized PDE solves \\ \hline
                 Training Costs& \begin{tabular}{c}
                    $\mathcal{O}(|\Lambda_H|)$ memory \\ $\mathcal{O}(|\Lambda_H| \log(|\Lambda_H|))$ computation
                 \end{tabular} & \begin{tabular}{c}
                    $\mathcal{O}(|\Lambda_L|^2)$ memory \\ $\mathcal{O}(|\Lambda_L|^2 \log(|\Lambda_L|))$ computation
                 \end{tabular} \\ \hline
                \end{tabular}}};

\end{tikzpicture}
}
\caption{The loss function and training costs of mixed-resolution DIFNOs on a uniform grid $\Lambda$. By estimating the derivative loss on low-resolution,
the training costs can be dramatically reduced.}

\label{table:cost_mixedres}
\end{table}

We propose a mixed-resolution derivative learning method that leverages the resolution-agnostic property of FNOs to reduce the memory and computational costs associated with derivative loss during training. Let $\Lambda_L$ and $\Lambda_H$ denote a low- and high-resolution uniform grid so that $\Lambda_L\subset\Lambda_H$. The mixed-resolution derivative-informed learning estimates the output loss and the derivative loss at high and low resolutions, respectively. In particular, the high-resolution discretized derivative loss is approximated as
\begin{equation}
    \mathbb{E}_{a\sim \mu}\left[\|\mathbf{J}^{\Lambda_H}(a) - \widetilde{\mathbf{J}}_w^{\Lambda_H}(\mathcal{P}_{\Lambda_H}a)\|_F^2\right] \approx \mathbb{E}_{a\sim \mu}\left[\|\mathbf{J}^{\Lambda_L}(a) - \widetilde{\mathbf{J}}_w^{\Lambda_L}(\mathcal{P}_{\Lambda_L}a)\|_F^2\right]. \label{eq:low_res_derivative_loss}
\end{equation}

Mixed-resolution derivative-informed learning is closely related to reduced-basis derivative-informed learning, with one important distinction. The low-resolution input and output spaces, $\cX_{\delta}^{\Lambda_L}$ and $\cY^{\Lambda_L}$, can be interpreted as the low-dimensional input and output subspaces, $\cX^r_{\delta}$ and $\cY^r$, in the reduced-basis approach. In particular, the mixed-resolution approach compresses the discretized Jacobian matrices using reduced bases that span $\cX^{\Lambda_L}_{\delta}$ and $\cY^{\Lambda_L}$. The key difference between the mixed-resolution and reduced-basis approaches is that the former evaluates the FNO derivative at low resolution, whereas the latter evaluates it at high resolution. Consequently, the memory and computational cost of the mixed-resolution approach can be much lower than those of the reduced-basis approach when the grid reduction from high to low resolution is aggressive. 

On the other hand, the derivative samples $\mathbf{J}^{\Lambda_L}(a)$ are often generated at high resolution with $a\in \cX^{\Lambda_H}$ via sensitivity solves or automatic differentiation of a numerical solver. When we evaluate FNO Jacobians at the low resolution $\widetilde{\mathbf{J}}_w^{\Lambda_L}(\mathcal{P}_{\Lambda_L}a)$, we introduce errors due to the input projection to low resolution. When the reduction from the high to low resolution is aggressive, it will introduce an irreducible gap in the derivative loss, which we found to be mainly attributed to the input projection errors, in particular, the difference between $a$ and $\mathcal{P}_{\Lambda_L}a$. This gap can be significant in some cases. Moreover, the low resolution may limit the number of Fourier modes that can be manipulated during training. To mitigate this discrepancy in practice, we introduce an intermediate resolution $\Lambda_M$ such that $\Lambda_L\subset\Lambda_M\subset \Lambda_H$ and evaluate the discretized Jacobian of the FNO at the resolution of $\Lambda_M$ to reduce this gap in the derivative loss, i.e., 
\begin{equation*}
    \mathbb{E}_{a\sim \mu}\left[\|\mathbf{J}^{\Lambda_H}(a) - \widetilde{\mathbf{J}}_w^{\Lambda_H}(\mathcal{P}_{\Lambda_H}a)\|_F^2\right] \approx \mathbb{E}_{a\sim \mu}\left[\|\mathbf{J}^{\Lambda_M}(a) - \widetilde{\mathbf{J}}_w^{\Lambda_M}(\mathcal{P}_{\Lambda_M}a)\|_F^2\right].
\end{equation*}

\subsection{Additional Computational Savings}

In addition to the above techniques for reducing DIFNO training cost, we introduce the following practical techniques that enhance the DIFNO training process.

\paragraph{Modes of Differentiation}

One can choose between the forward or reverse mode differentiation to compute the Jacobian vector product or Jacobian transport vector product of the FNO on a grid. One should choose the forward mode when $r_{\cX_{\delta}}<r_{\cY}$ or $d_a<d_u$ in the reduced-basis or mixed-resolution approach, and the reverse mode otherwise. In particular, it is shown in \cite{paszke2019pytorch} that reverse mode requires slightly less memory in PyTorch when $r_{\cX_{\delta}}=r_{\cY}$ or $d_a=d_u$,
but forward mode is more computationally efficient.

\paragraph{Gradient Accumulation}
Gradient accumulation techniques further reduce the memory burden by avoiding Jacobian action computation all at once, meaning that we form their contributions to the derivative loss via column partitions of the reduced or low-resolution Jacobian entries.

\paragraph{Multi-GPU Training}

Another mechanism to reduce the memory complexity is to reduce the batch size. This is necessary for large FNO training runs. Combining this strategy with a distributed, multi-GPU data ecosystem enables training FNOs with finer grid representations.

\section{Numerical Experiments}\label{section:numerics}

In this section, we evaluate DIFNOs in comparison to the conventional FNOs for learning solution operators of parametric PDEs. In particular, we consider the nonlinear diffusion--reaction, Helmholtz, and Navier--Stokes equations. We investigate the generalization accuracy of the FNOs and DIFNOs at varying training sample sizes. Additionally, we use FNOs and DIFNOs to solve inverse problems that, given sparse pointwise observations of the PDE solution $u$, recover the input $a$. The surrogate-driven inverse solutions are compared against those obtained using the PDE models.

\subsection{Numerical Setup}

\paragraph{Input Spaces and Data-Generating Distributions} We consider $\cX \coloneqq L^2(\Omega)$ where $\Omega$ is the domain of the input and the data-generating distribution is defined through a centered Gaussian measure with covariance $\cC_{\cX}$. We either define the measure $\mu$ directly as this Gaussian measure or as a pushforward thereof. We employ Mat\'ern covariances arising from inverses of elliptic operators \cite{villa2024note}, defined as $\cC_\cX = (\omega\cI - \rho\Delta)^{-\tau}$, where $\omega,\rho>0$ and $\tau\geq 2$ control the regularity and statistics of the input samples. We define the shifted input space $\cX_{\delta}$ as the image of $\cC_{\cX}^{1/2}$, i.e., the Cameron--Martin space of the Gaussian, with a norm given by $\|a\|_{\cX_{\delta}}\coloneqq\|\cC_{\cX}^{-1/2}a\|_{\cX}$.

\paragraph{Derivative Sample Generation}
We employ two distinct approaches for generating derivative samples.
For the Navier–Stokes equations, derivatives are computed via Automatic Differentiation (AD), 
and for the nonlinear diffusion-reaction and Helmholtz equations, we solve the forward sensitivity equations using sparse direct solvers. 
Efficiency in the latter is achieved by reusing the factorized left-hand-side matrix across multiple sensitivity equations at each input sample. 
Detailed discussions on the cost of this approach can be found in \cite{ghattas2021learning, cao2025derivative, luo2025efficient}.

\paragraph{Error Metrics} 

We define relative error for output and derivative by the following
\begin{subequations}
\begin{align}\label{eq:error}
    \cE_{\text{output}} &=  \frac{\left\|\mathbf{W}_{\cY^\Lambda}^{1/2}\left(\cG\left(a\right)\big|_{\Lambda} - \widetilde{\cG_w}\left(\mathcal{P}_{\Lambda}a\right)\big|_{\Lambda}\right)\right\|_F}
    {\left\|\mathbf{W}_{\cY^\Lambda}^{1/2} \cG\left(a\right)\big|_{\Lambda}\right\|_F}, \\
    \cE_{\text{derivative}} &= \frac{\left\| \mathbf{J}^{\Lambda}\left(a\right) - \widetilde{\mathbf{J}}^{\Lambda}_w\left(\mathcal{P}_{\Lambda}a\right)\right\|_F}
    {\| \mathbf{J}^{\Lambda}\left(a\right) \|_F}.
\end{align}
\end{subequations}
For training, we employ mean-squared relative errors using Monte Carlo samples $a^{(i)} \sim \mu$, and the reported generalization error is computed by the mean relative error.
For all examples, we used 128 samples for model validation and 128 for testing.

\paragraph{Inverse Problem}
Following the inverse problem formulation in \eqref{eq:inverse_problem}, we seek to reconstruct the input function $a$ that best fits the observation data $\yobs$ 
using a PDE model as a reference and a learned FNO as a surrogate. In all examples, we take the noise covariance to be a scalar multiple of the identity, $\Gamma_{\noise} = \gamma^2 \cI$.
We use a regularization term given by $\cR(a)=\|a\|_{\cX_{\delta}}^2/2$.
For surrogate inverse solutions, we compute the relative spatial $L^2$ error with respect to PDE solutions and report them in \Cref{fig:poisson_map_comparison}, \Cref{fig:helmholtz_map_comparison}, and \Cref{fig:ns_map_comparison}.

\paragraph{Fourier Neural Operator}
We adopt the standard FNO architecture introduced in \cite{li2021fourier}, with 4 Fourier layers, each with 32 channels. For the Helmholtz and Navier-Stokes equations, 12 Fourier modes are used per dimension, while 8 are used for the nonlinear diffusion-reaction equation. For both conventional operator learning and derivative-informed operator learning, we train FNOs with the Adam optimizer using an initial learning rate of $0.001$ for 500 epochs, reducing the learning rate by a factor of $0.1$ when training plateaus.

\begin{remark}
In our numerical examples, we refer to FNO trained via conventional operator learning in $L^2_{\mu}$ as FNO, and refer to FNO trained via derivative-informed operator learning in $H^1_{\mu}$ as DIFNO. We emphasize that the distinction is in the training formulation rather than the architecture.
\end{remark}

\paragraph{Software Implementation}

For numerical experiments, 
we employ FEniCS \cite{logg2012fenics, alnaes2015fenics} for finite element solutions of PDEs. 
The libraries hIPPYlib \cite{villa2021hippylib} and hIPPYflow \cite{oleary2021hippyflow} are used for data generation and inverse problem solutions for the nonlinear diffusion--reaction and Helmholtz equations,
while for the Navier--Stokes equation, we implement the above pipelines in PyTorch \cite{paszke2019pytorch}. 
The FNOs and their training are implemented using PyTorch. Unless otherwise specified, inexact Newton-CG, implemented in hIPPYlib, is used for solving the inverse problem, while L-BFGS, implemented in PyTorch, is used for FNO-driven solutions of the inverse problem.

\subsection{Nonlinear Diffusion--Reaction}
\paragraph{Problem Setup} We consider the following nonlinear diffusion--reaction equation
\begin{equation}\label{eq:poisson}
    \begin{aligned}
        -\nabla \cdot (\exp(a(x)) \nabla u(x)) + u(x)^3 &= f(x),  &&x \in \Omega\coloneqq(0,1)^2, \\
        u(x) &= 0, &&x\in\partial\Omega,
    \end{aligned}
\end{equation}
where $\partial\Omega$ is the boundary of the unit square domain, the source term $f$ consists of four Gaussian bumps, and the input to the PDE solution operator is the log-diffusivity field. The input distribution has a covariance of $\cC_\cX = (10\cI/3 - \Delta/30)^{-2}$. The input and output spaces are discretized using linear triangular finite elements with $64$ cells in each direction, and we use the $L^2((0,1)^2)$ norm for the output space $\cY$. We use a $65\times 65$ high-fidelity grid for the FNOs and DIFNOs. For the reduced-basis DIFNOs, we use $r_{\cX_{\delta}}=r_{\cY} = 289$. For mixed-resolution DIFNOs, we use a low-fidelity grid $17 \times 17$ and set $\Lambda_M$ to $33 \times 33$. The source term, a sample of the log-diffusivity field, and the PDE solution at the sample are visualized in \Cref{fig:poisson_source_term}.

\begin{figure}[htb]
\centering
{
\small
\begin{tabular}{c c c}
    \hspace{-0.05\linewidth}Source Term&\hspace{-0.05\linewidth} \makecell{Log-Diffusivity\\ Field (Input)}&\hspace{-0.05\linewidth} \makecell{PDE Solution\\ with Observations}\\
    \includegraphics[width=0.25\linewidth]{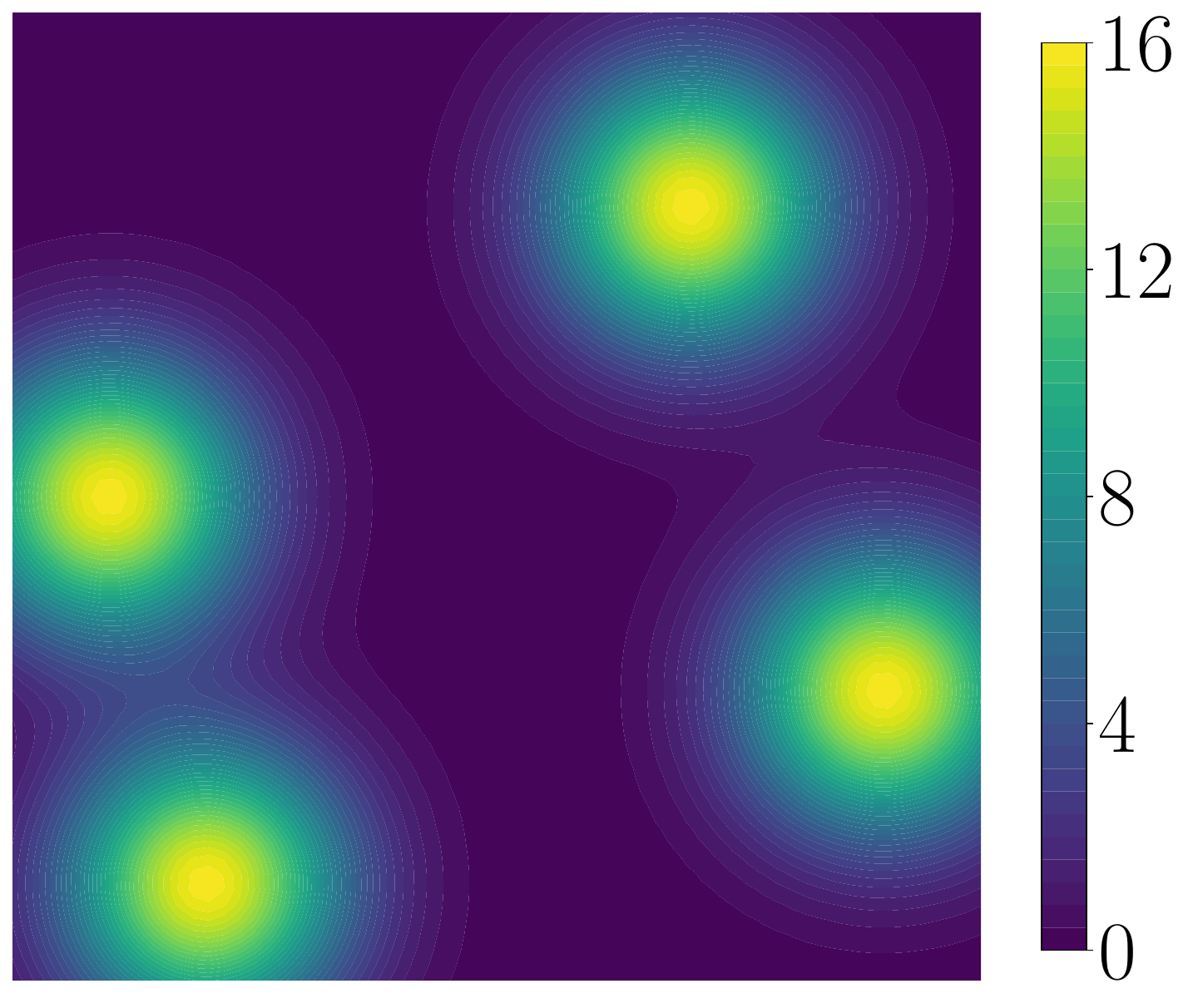}  &
    \includegraphics[width=0.25\linewidth]{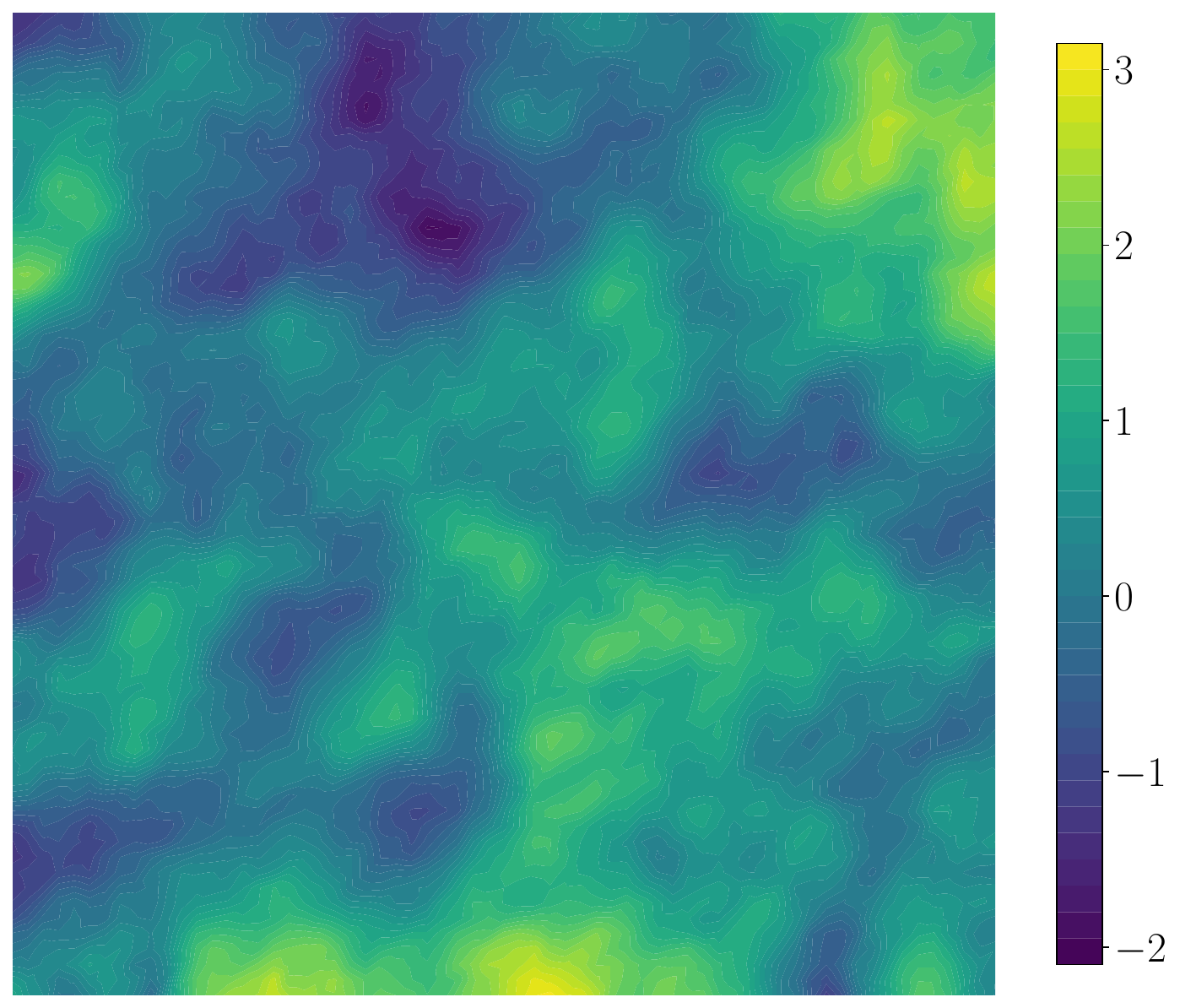} &
    \includegraphics[width=0.25\linewidth]{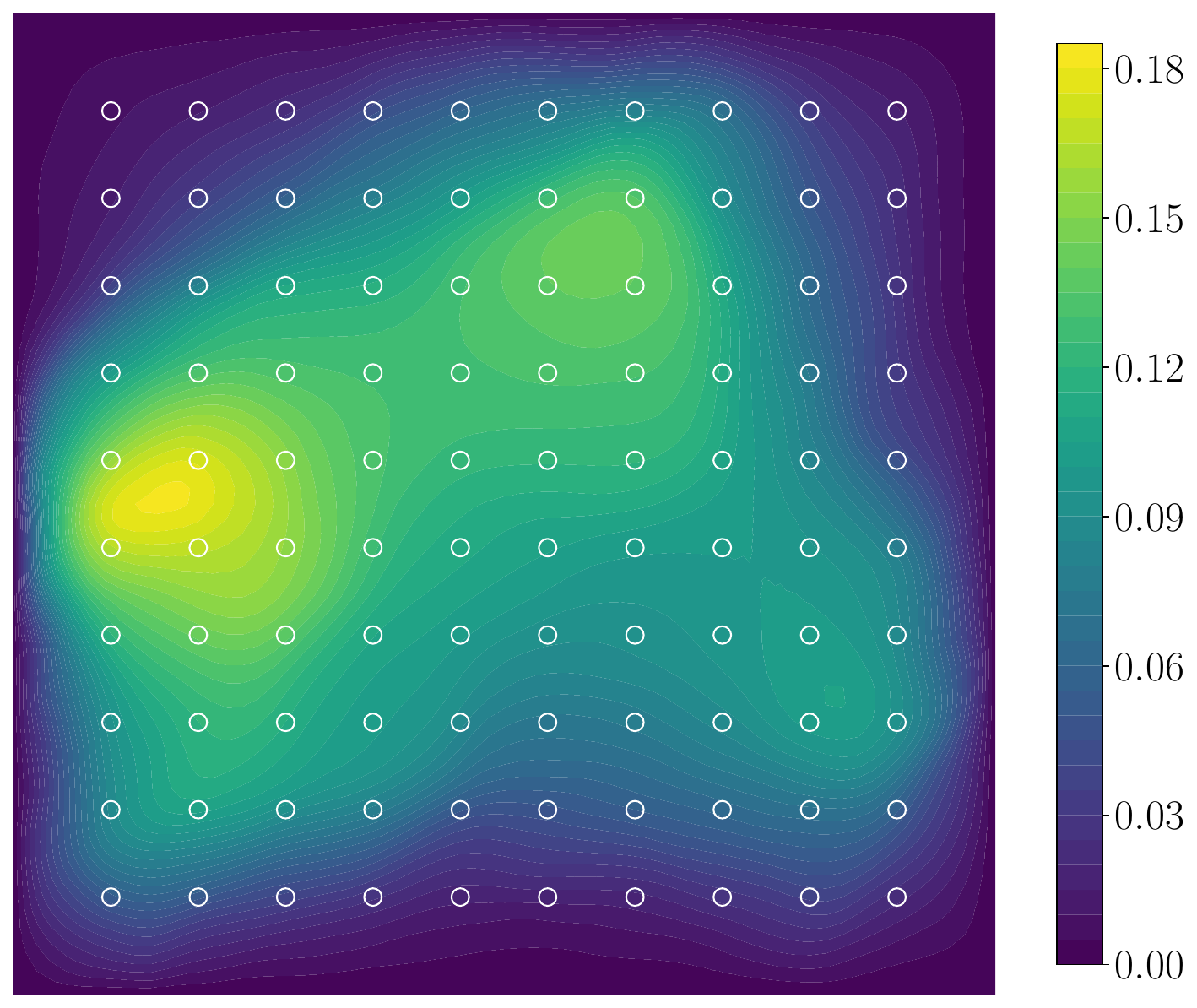}
\end{tabular}
}
    \caption{The step up of the numerical study on the nonlinear diffusion--reaction equation. (\textit{Left}) The source term consists of four Gaussian bumps. (\textit{Center}) An input sample $a^{(i)} \sim \mu$ of the log-diffusivity field. (\textit{Right}) The PDE solution at $a^{(i)}$. The circles indicate the locations of the pointwise observations of the PDE solution for the inverse problem.}
    \label{fig:poisson_source_term}
\end{figure}

\paragraph{Training Results} We train FNOs, reduced-basis DIFNOs, and mixed-resolution DIFNOs at varying training sample sizes, and their testing results are shown in \Cref{fig:poisson_train}. These results demonstrate that DIFNOs achieve substantial improvements in predicting both PDE solutions and Fr\'echet derivatives. The reduced-basis DIFNOs with two linear dimension reduction methods, i.e., (i) KLE input reduction and POD output reduction, and (ii) DIS input and output reduction, perform similarly, and both outperform the mixed-resolution DIFNOs. On the other hand, the mixed-resolution training is much faster as the derivative loss is computed on a lower-resolution grid.

\begin{figure}[htb]
    \centering
{\small
    \renewcommand{\arraystretch}{1.2} 
    \setlength{\tabcolsep}{5pt}       

    \begin{tabular}{c c c}
        & \hspace{0.03\linewidth} \makecell{Nonlinear Diffusion--Reaction\\
        PDE Solution Relative Error} & \hspace{0.03\linewidth} \makecell{Nonlinear Diffusion--Reaction\\Fr\'echet Derivative Relative Error} \\
        
        \raisebox{3em}{\rotatebox{90}{Relative Error}} & 
        \includegraphics[width=0.42\linewidth]{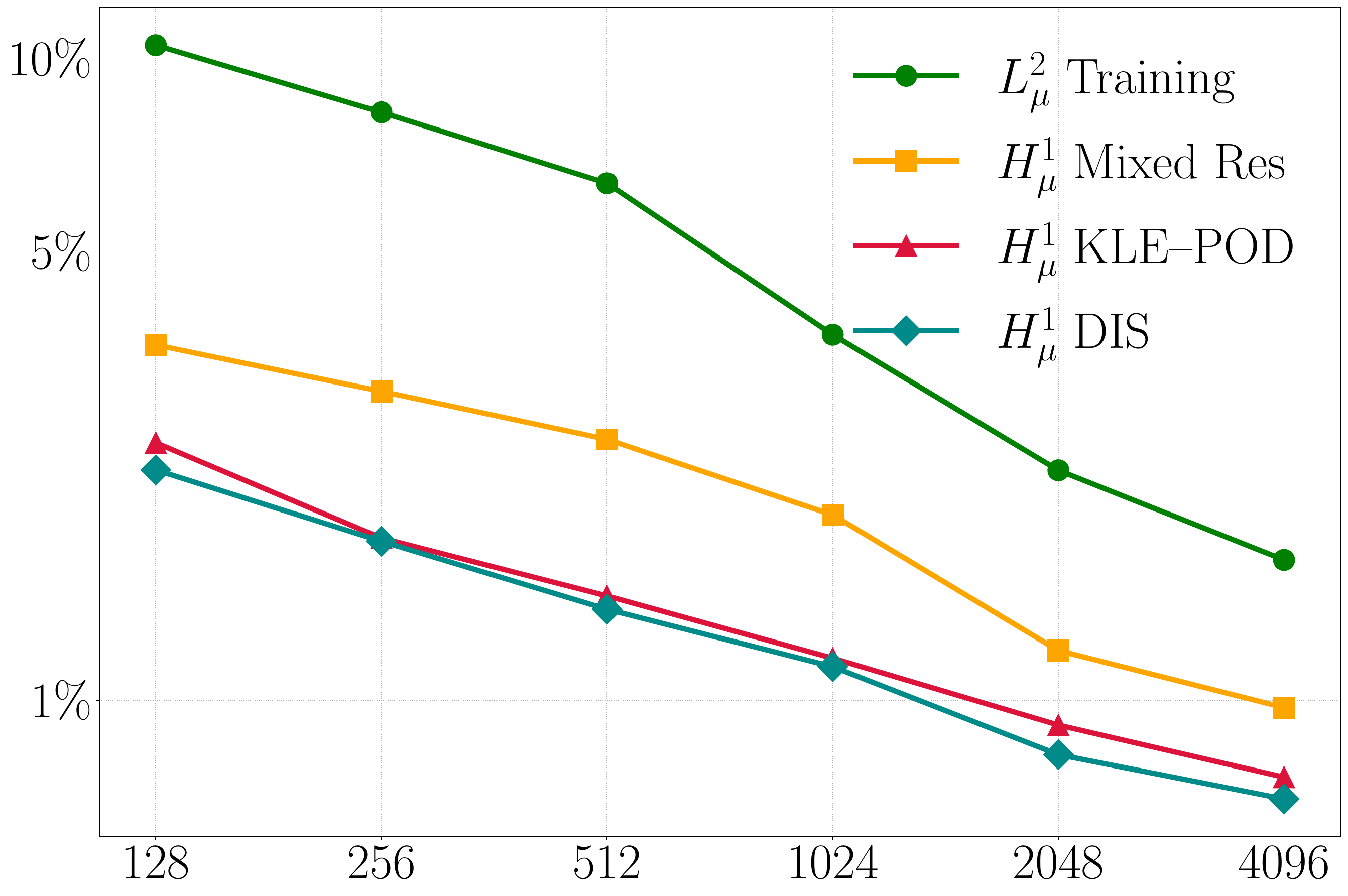} & 
        \includegraphics[width=0.42\linewidth]{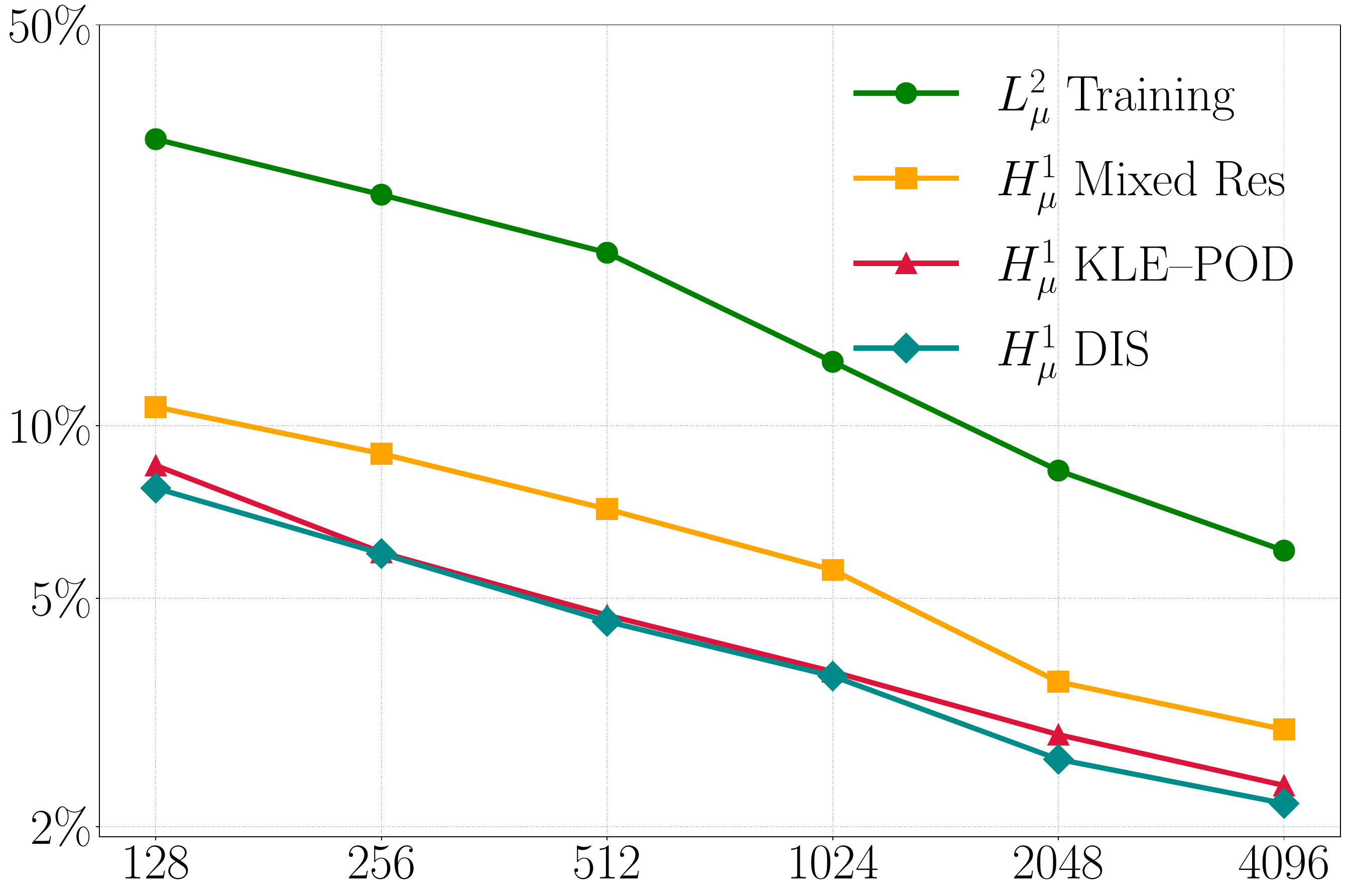} \\
        
        & \hspace{0.03\linewidth} Number of Training Samples & \hspace{0.03\linewidth} Number of Training Samples \\
    \end{tabular}
}
    \caption{The relative error in the PDE solutions and Fr\'echet derivative predictions for the nonlinear diffusion--reaction equation. We train FNOs ($L^2_{\mu}$ Training), reduced-basis DIFNOs with KLE--POD dimension reduction ($H^1_{\mu}$ KLE--POD), reduced-basis DIFNOs with DIS dimension reduction ($H^1_{\mu}$ DIS), and mixed-resolution DIFNOs ($H^1_{\mu}$ Mixed Res.) on a varying number of training samples. We use an additional 128 testing samples to evaluate the relative error.}
    \label{fig:poisson_train}
\end{figure}

\paragraph{Inverse Problem Results} We study solutions to the inverse problem obtained by the PDE, the trained FNOs, and the trained DIFNOs. The data-generating (true) log-diffusivity field consists of a highly diffusive region and a low-diffusive region embedded in a medium-diffusivity background (left of \Cref{fig:poisson_map}), as adapted from \cite{cui2016dimension, cao2025derivative}. Pointwise observations of the PDE solution at this log-diffusivity field are extracted; see locations of the observations in \Cref{fig:poisson_source_term}. We then add a $1\%$ additive white noise to obtain the observation data for the inverse problem. The inverse solutions and their spatial pointwise errors are visualized in \Cref{fig:poisson_map,fig:poisson_map_comparison}. Even with a small training sample size of $512$, the inverse solutions by the DIFNOs closely follow the reference solution, and DIFNOs outperform FNO on this task in different training sizes. These results highlight the crucial role of accurate Fr\'echet derivative approximation in surrogate-driven solutions of inverse problems. Furthermore, although the reduced-basis DIFNO achieves superior accuracy on test samples, the mixed-resolution DIFNO performs similarly to the reduced-basis DIFNO in solving the inverse problem.

\begin{figure}[htb]
    \centering
{
\small
    \begin{tabular}{c c c c}
        \hspace{-0.03\linewidth}\makecell{Data-Generating\\ Input Field (True) } \quad&
          \hspace{-0.03\linewidth}\makecell{Inverse Solution \\ by PDE (Reference)} & \hspace{-0.03\linewidth}\makecell{Inverse Solution \\ by FNO}&  \hspace{-0.03\linewidth}\makecell{Inverse Solution \\ by DIFNO}\\
         \includegraphics[width=0.2\linewidth]{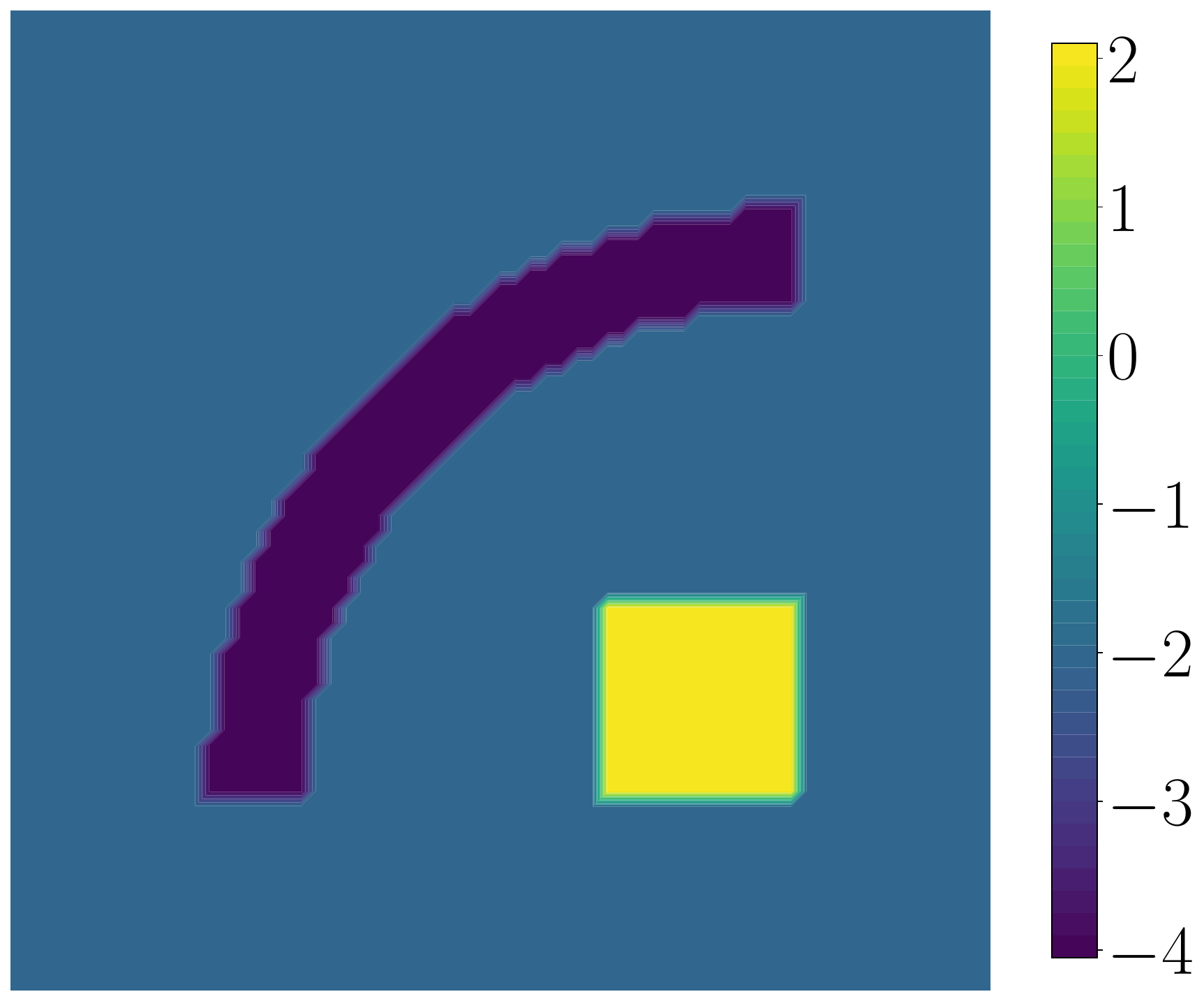} & 
         \includegraphics[width=0.2\linewidth]{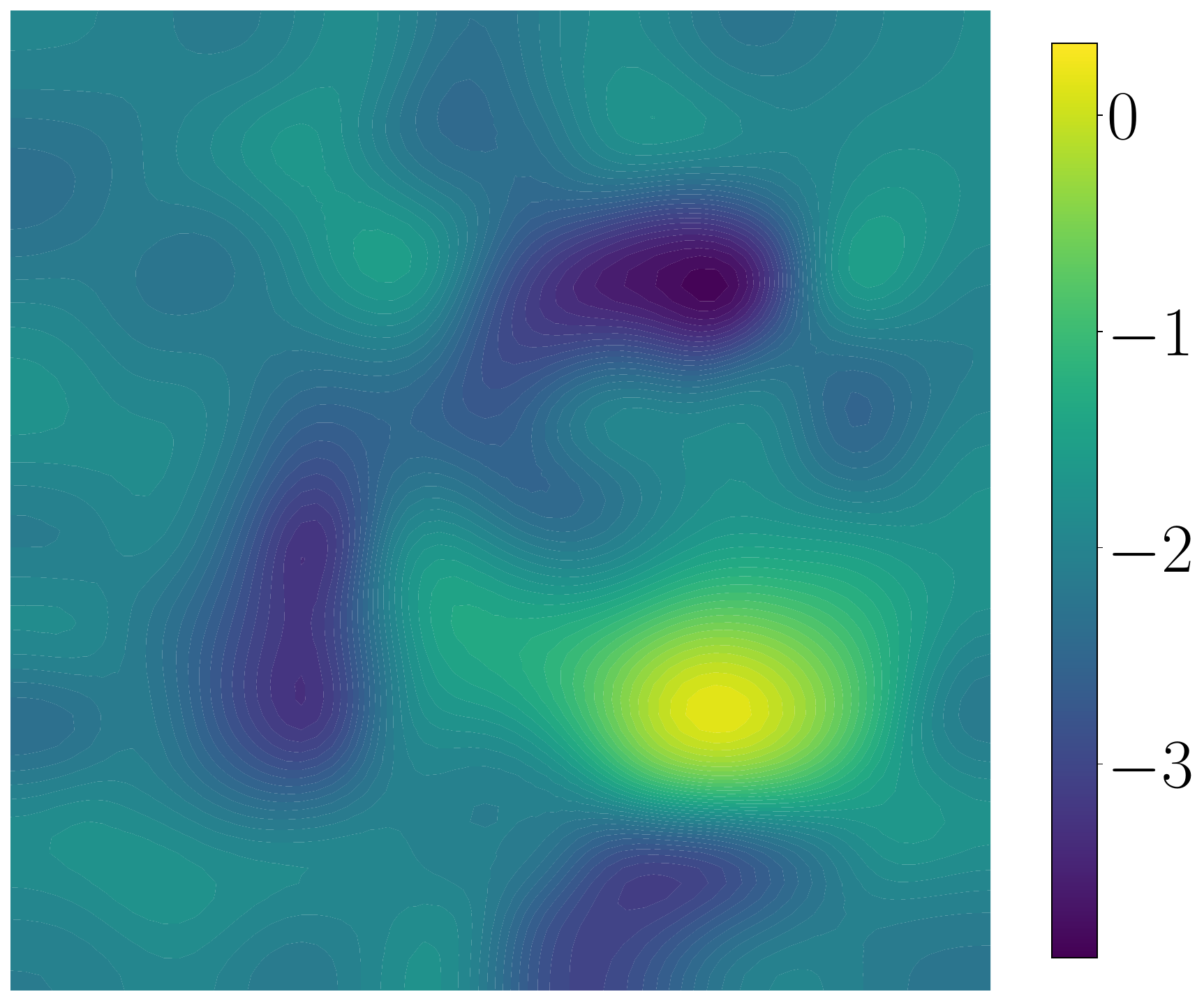} &
         \includegraphics[width=0.2\linewidth]{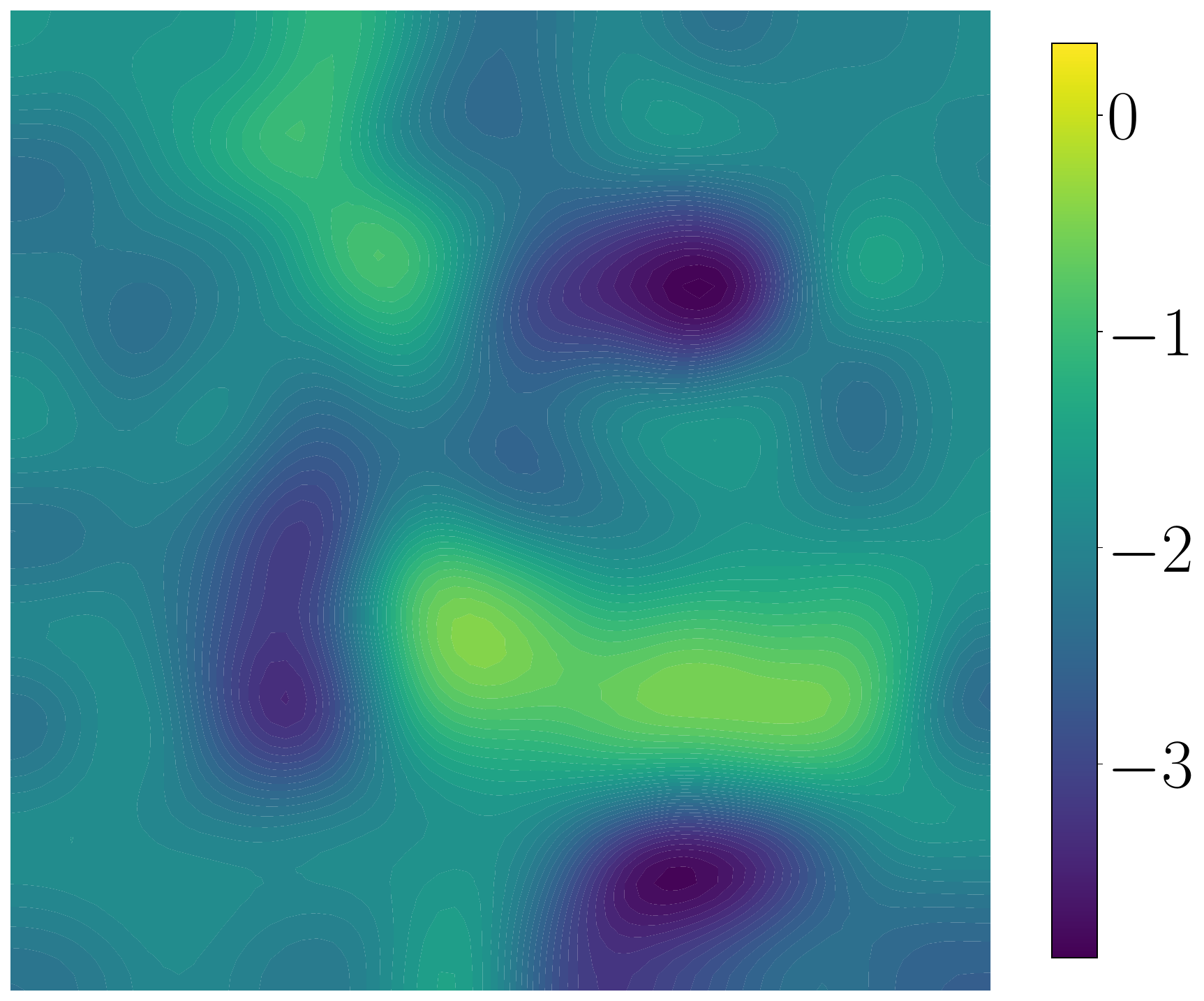} &
         \includegraphics[width=0.2\linewidth]{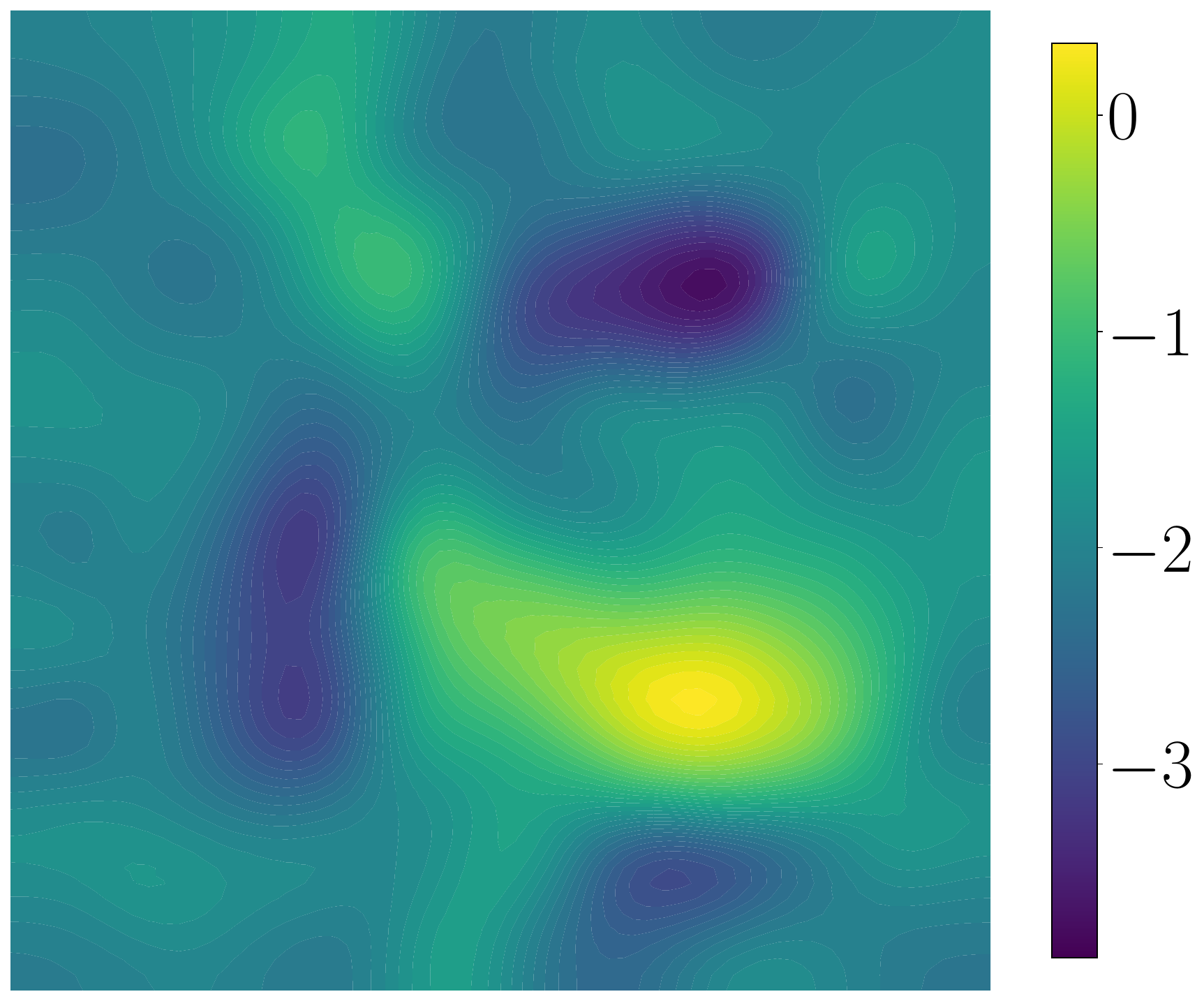}
    \end{tabular}
}
    \caption{Solutions of inverse problems based on the nonlinear diffusion--reaction equation in \Cref{fig:poisson_source_term}. The left figure is the data-generating (true) log-diffusivity field. Then, from left to right, are inverse solutions obtained by PDE, the best-performing FNO, and the best-performing DIFNO. }
    \label{fig:poisson_map}
\end{figure}

\begin{figure}[htb]
\centering
\scalebox{0.9}{    \renewcommand{\arraystretch}{1.5}
\begin{tabular}{
|>{\centering\arraybackslash} m{0.115\linewidth}||
>{\centering\arraybackslash} m{0.13\linewidth}|
>{\centering\arraybackslash} m{0.13\linewidth}|
>{\centering\arraybackslash} m{0.13\linewidth}|
>{\centering\arraybackslash} m{0.13\linewidth}|
>{\centering\arraybackslash} m{0.1\linewidth}|}\hline
     \multicolumn{6}{|c|}{\makecell{Errors in the Inverse Solutions for Nonlinear Diffusion--Reaction}}  \\\hline\hline
     &\multicolumn{4}{c|}{Number of Training Samples} & \multirow{2}{*}{\makecell{Pointwise\\Error}}\\\cline{1-5}
     & 512 & 1024 & 2048 & 4096 &  \\\hline
     FNO&  \raisebox{-0.15\linewidth}{\includegraphics[width=0.9\linewidth]{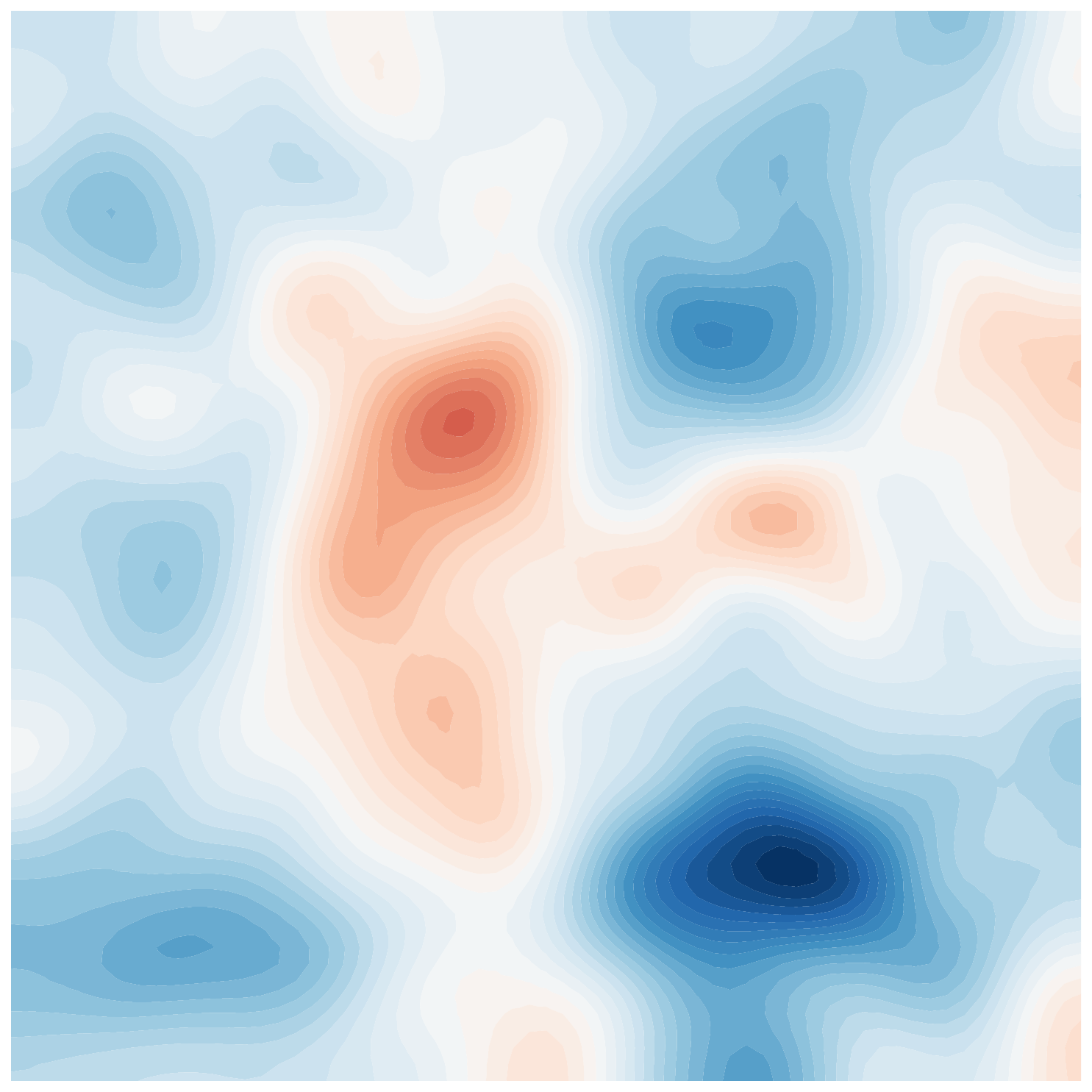}} & \raisebox{-0.15\linewidth}{\includegraphics[width=0.9\linewidth]{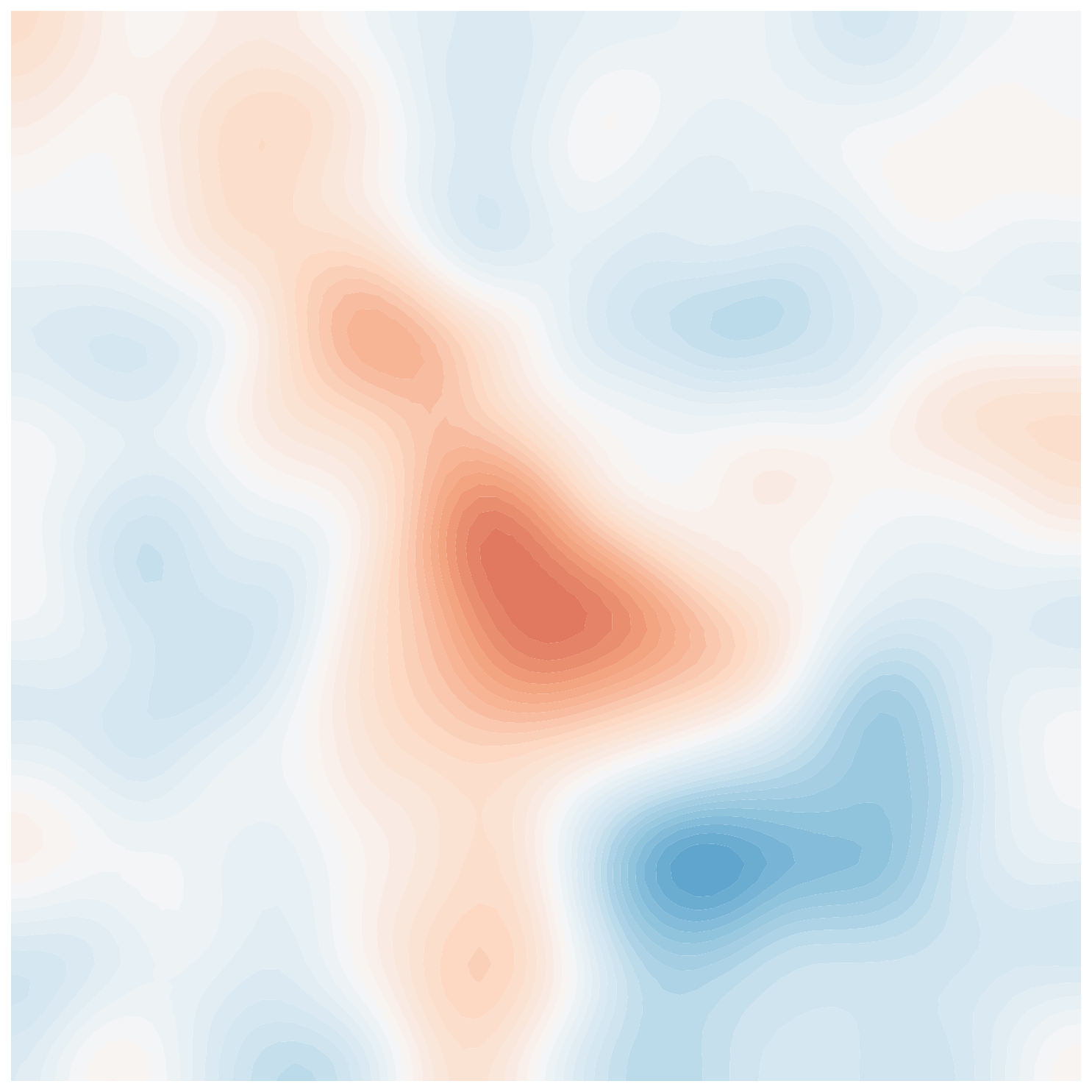}} & \raisebox{-0.15\linewidth}{\includegraphics[width=0.9\linewidth]{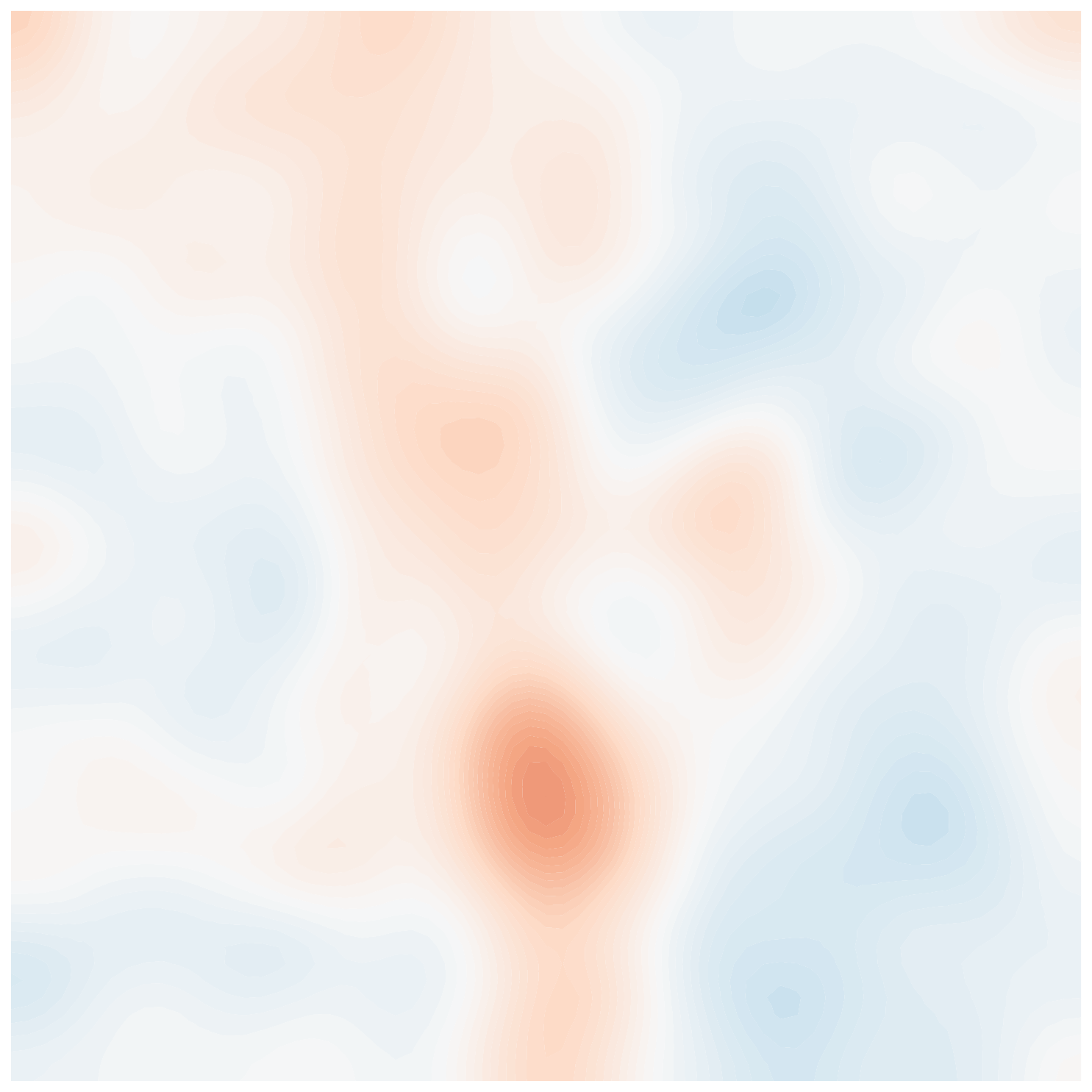}} & \raisebox{-0.15\linewidth}{\includegraphics[width=0.9\linewidth]{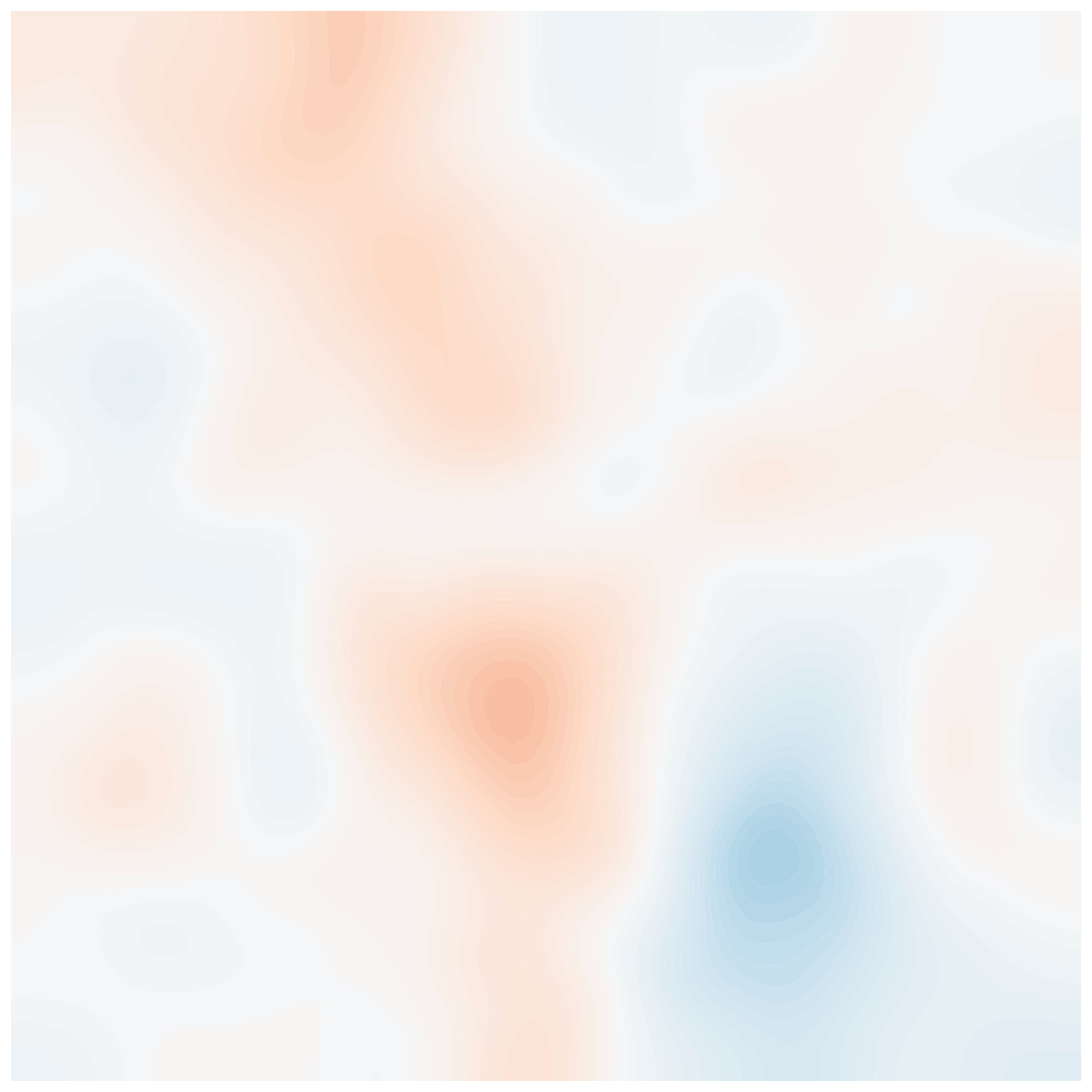}} & \multirow{5}{*}{\includegraphics[width=0.32\linewidth]{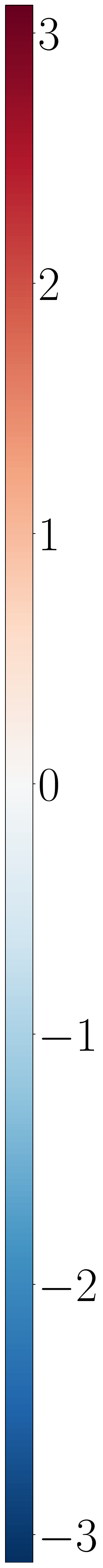}}\\\cline{1-5}
     \makecell{Relative\\Error}& \quad 42.76 \% & \quad 26.29 \% & \quad 15.51 \% & \quad 14.41 \% & \\\cline{1-5}
     \makecell{Red.-Basis\\DIFNO}&  \raisebox{-0.13\linewidth}{\includegraphics[width=0.9\linewidth]{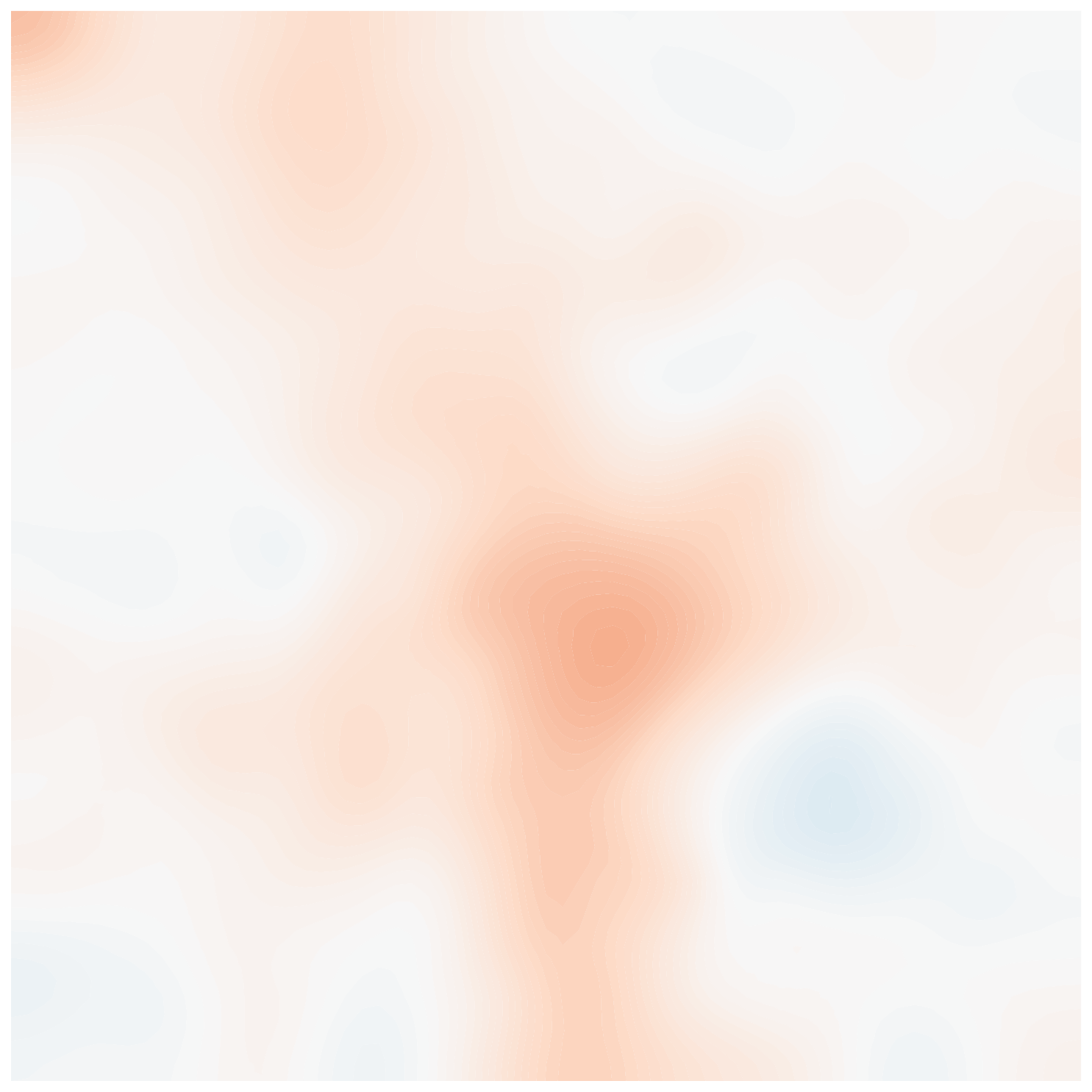}} & \raisebox{-0.13\linewidth}{\includegraphics[width=0.9\linewidth]{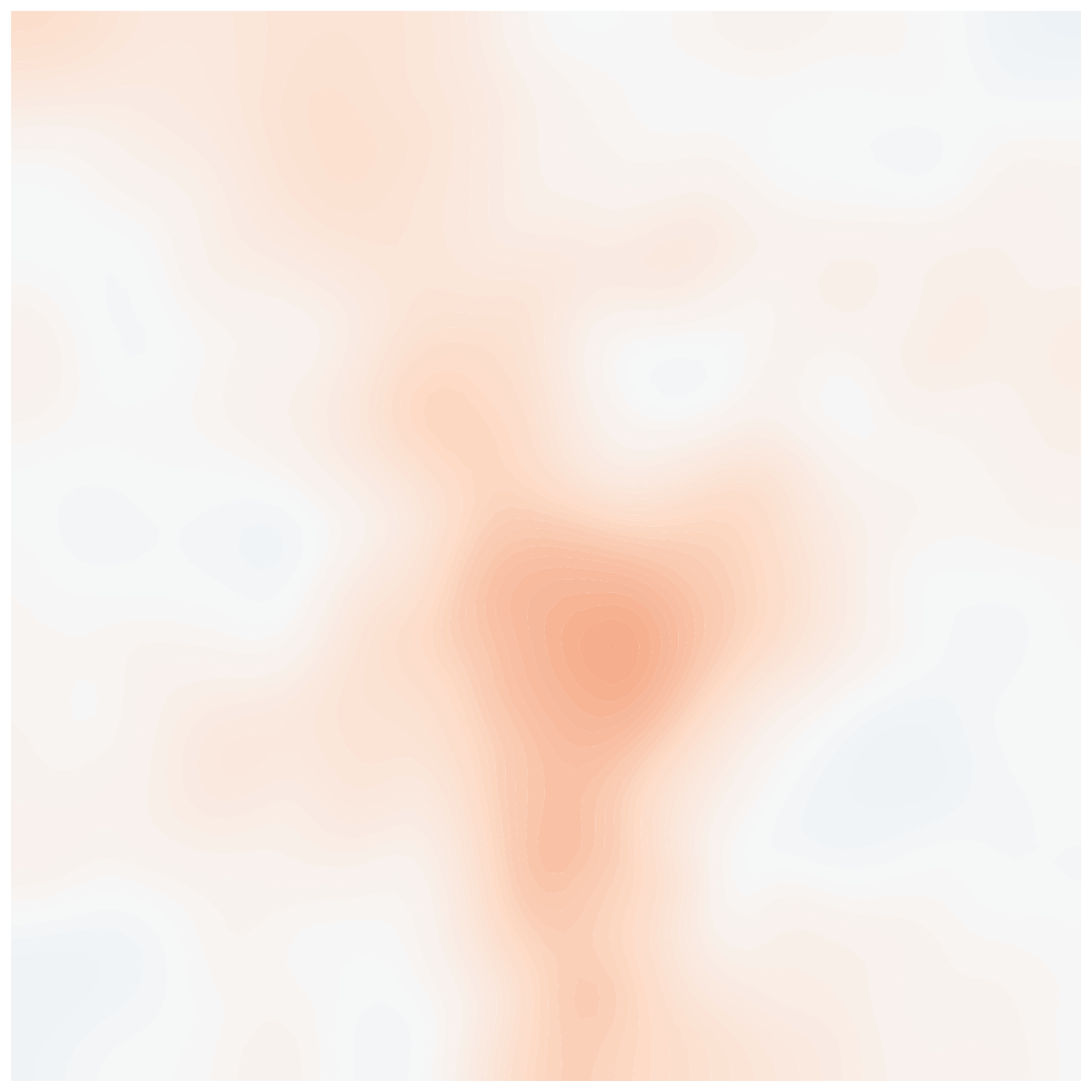}} & \raisebox{-0.13\linewidth}{\includegraphics[width=0.9\linewidth]{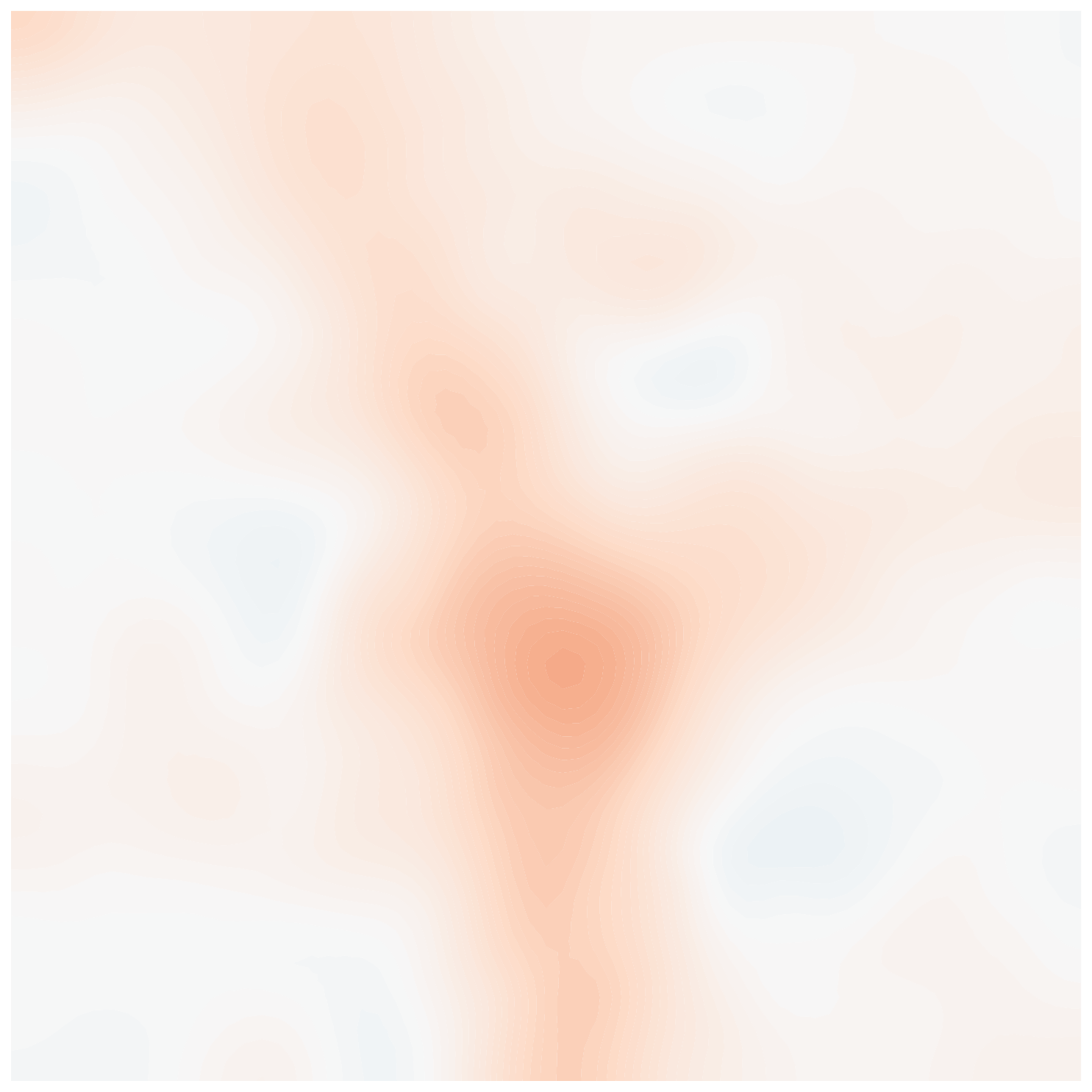}} & \raisebox{-0.13\linewidth}{\includegraphics[width=0.9\linewidth]{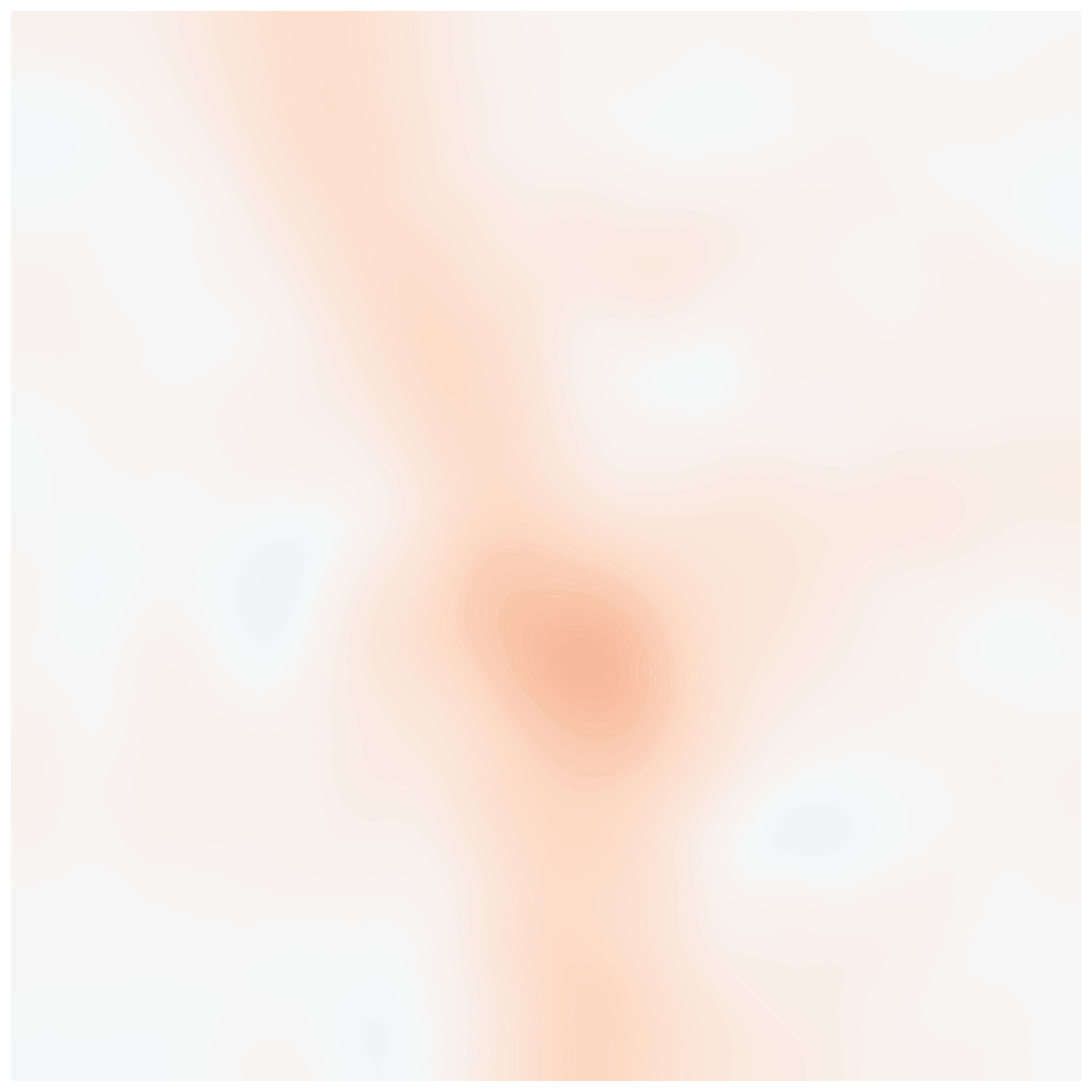}} & \\\cline{1-5}
     \makecell{Relative\\Error}&  \quad \textbf{15.06 \%} & \quad \textbf{16.12 \%} & \quad 16.41 \% & \quad \textbf{13.00 \%} &\\\cline{1-5}
     \makecell{Mixed-Res.\ \\DIFNO}&  \raisebox{-0.13\linewidth}{\includegraphics[width=0.9\linewidth]{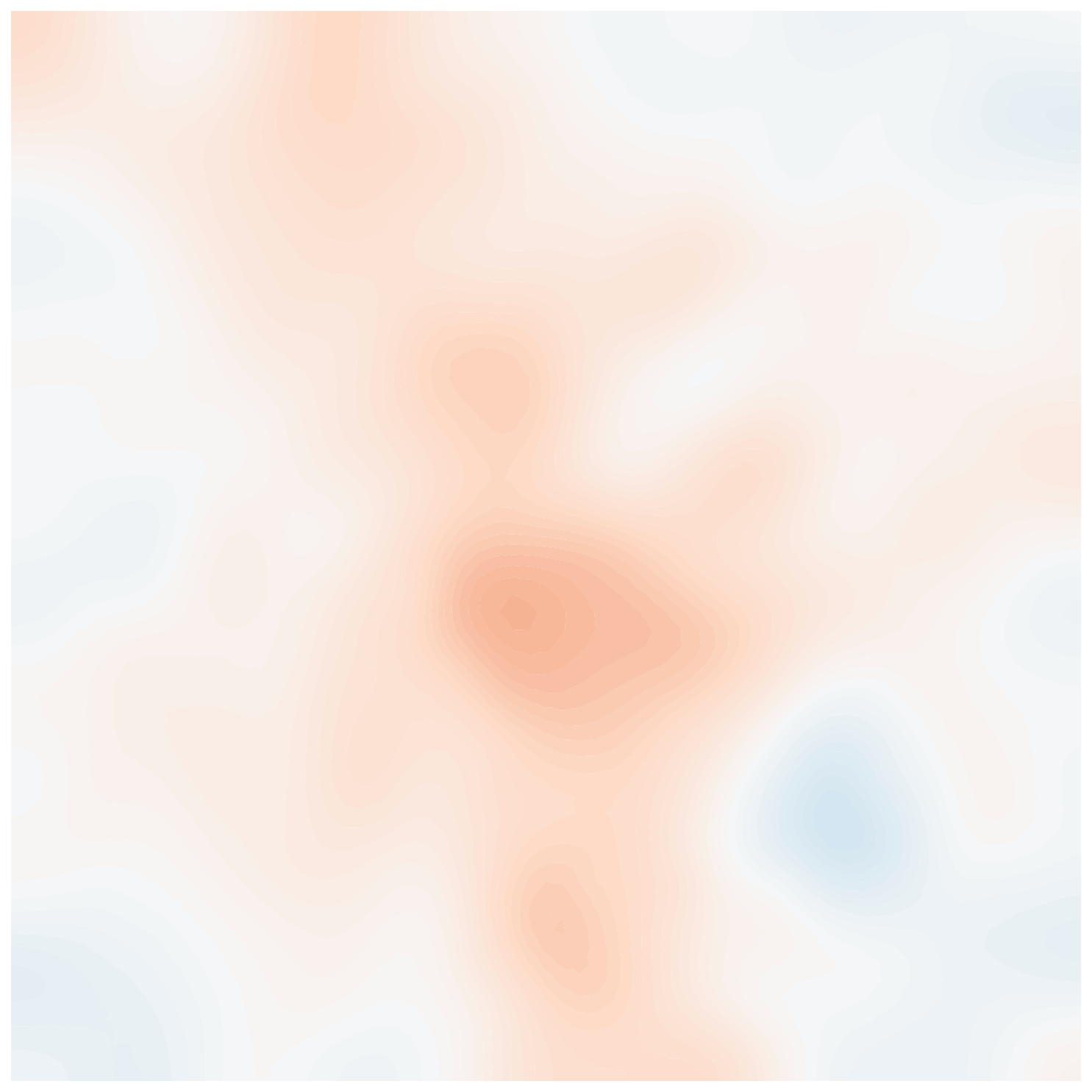}} & \raisebox{-0.13\linewidth}{\includegraphics[width=0.9\linewidth]{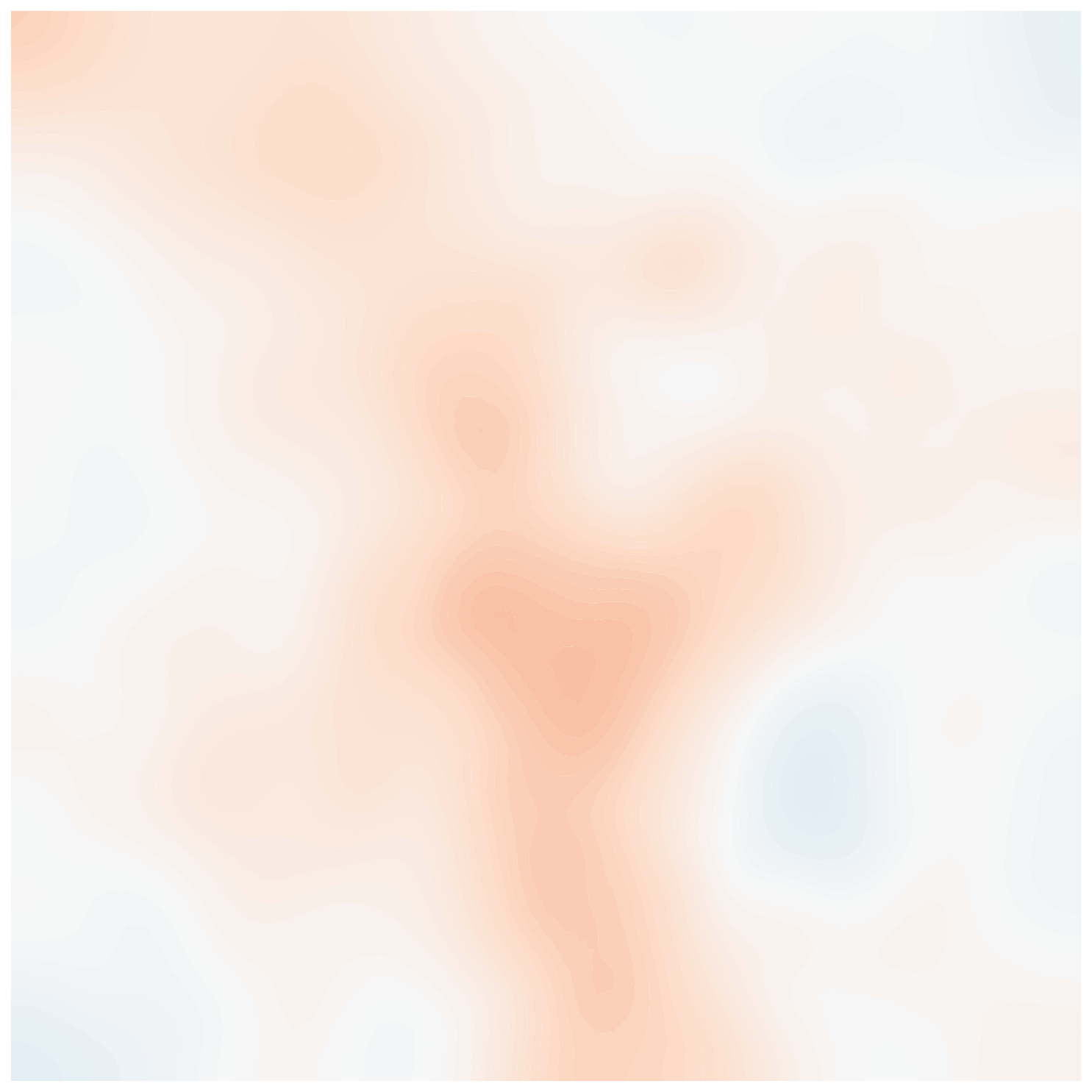}} & \raisebox{-0.13\linewidth}{\includegraphics[width=0.9\linewidth]{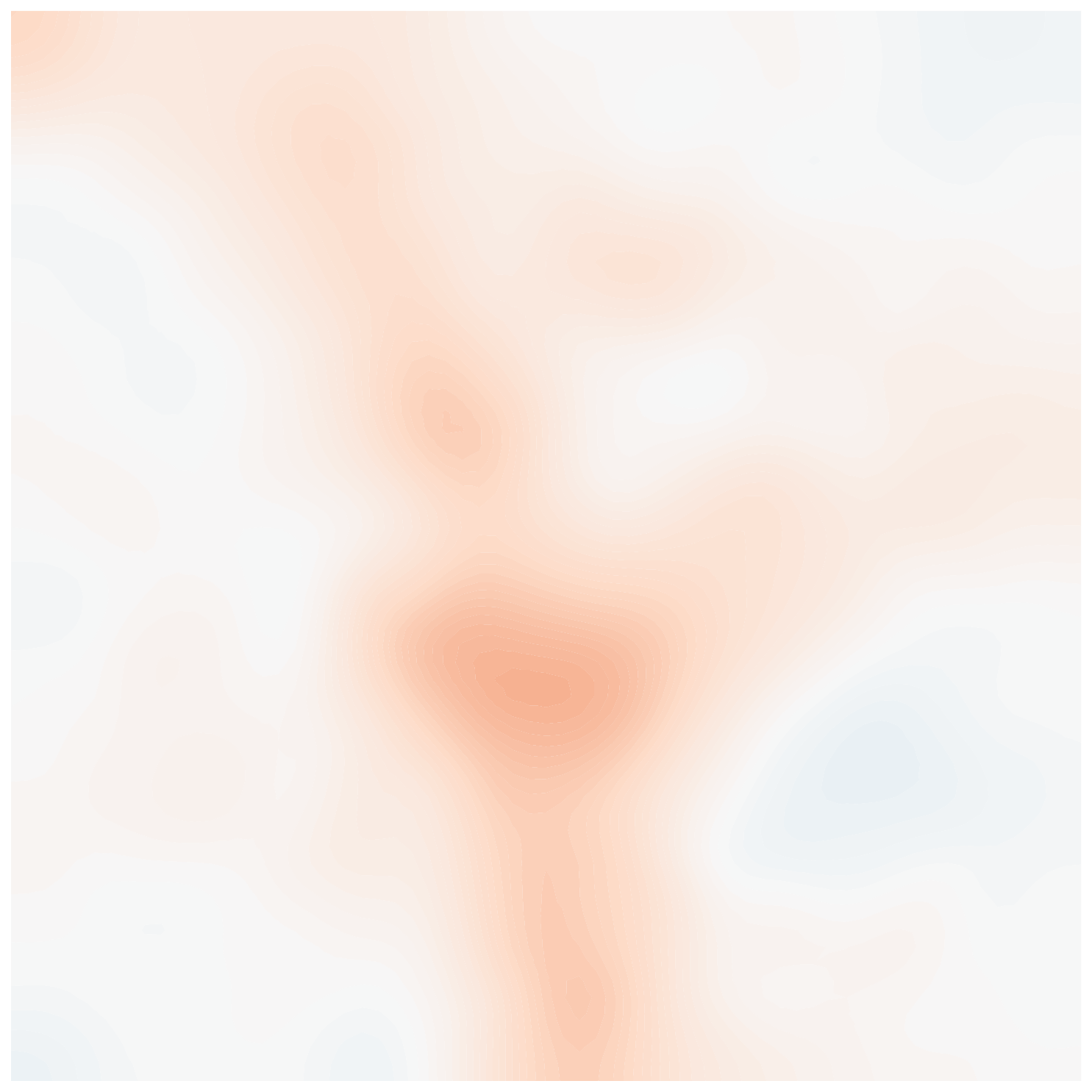}} & \raisebox{-0.13\linewidth}{\includegraphics[width=0.9\linewidth]{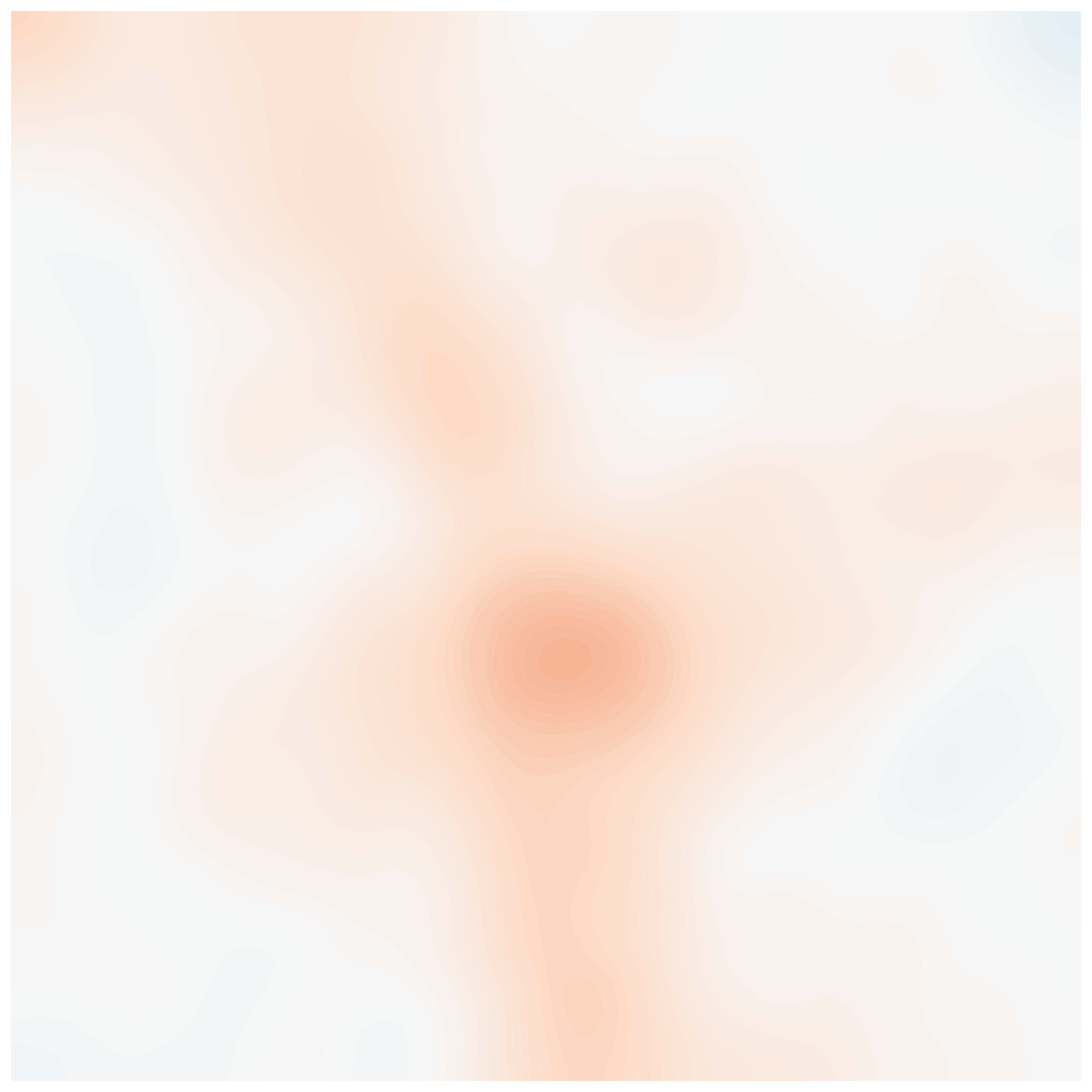}} & \\\cline{1-5}
     \makecell{Relative\\Error}& \quad 16.09 \% & \quad 16.21 \% & \quad \textbf{15.15 \%} & \quad 14.06 \% & \\\hline
     
\end{tabular}
    \renewcommand{\arraystretch}{1}
}

    \caption{Spatial pointwise errors in the inverse solutions by FNOs and DIFNOs when compared to the reference solution found by the PDE model. From top to bottom are results by the FNOs, the reduced-basis DIFNOs with DIS dimension reduction, and the mixed-resolution DIFNOs at different numbers of training samples; see their generalization error measures in \Cref{fig:poisson_train}.
    The relative spatial $L^2$ errors in the inverse solutions are shown below each plot.}
    \label{fig:poisson_map_comparison}
\end{figure}

\subsection{Helmholtz}
\paragraph{Problem Setup}For a second numerical example, we consider the Helmholtz equation, adapted from \cite{Wu2023}. This PDE is particularly difficult for operator learning due to its non-coercive ellipticity and the resulting oscillatory PDE solutions. We consider a Helmholtz problem given by:
\begin{equation}\label{eq:helmholtz}
    \begin{aligned} 
        \left(-\Delta - \exp(2a(x))\kappa^{2} \right)u(x) = f(x), \quad & x \in \Omega = [0, 3]^2, \\
        \text{PML boundary condition}, \quad &x \in \partial \Omega \setminus \Gamma_{\text{top}}, \\
        \nabla u(x) \cdot n = 0, \quad & x \in \Gamma_{\text{top}}.
    \end{aligned}
\end{equation}
where $\Gamma_{\text{top}}$ denote the top boundary, $\kappa =9.11$ is the wavenumber, and $n$ is the outward normal. The PDE solution $u$ represents the total wave field, and the input $a$ to the PDE solution operator is the logarithmic prefactor of the wavenumber, with an input distribution that has a covariance of $\cC_\cX = (12.5\cI - 0.5\Delta)^{-2}$. The source term $f$ is a point source located at $x = (0.775, 2.5)$. The perfectly matched layer (PML) boundary condition approximates a semi-infinite domain \cite{berenger1994pml}. The input and output spaces are discretized using linear triangular finite elements with 104 cells in each direction, including unit-width PML layers. We use the $L^2((-1, 4) \times (-1, 3))$ norm for the output space $\cY$. We use a $105\times 105$ high-fidelity grid for the FNOs and DIFNOs. For reduced-basis DIFNOs, we use $r_{\cX_{\delta}} = 196$ and $r_{\cY} = 392$. For mixed-resolution DIFNOs, we use a $14 \times 14$ low-fidelity grid and a $27 \times 27$ intermediate-resolution grid for the calculations.

\begin{figure}[htb]
\centering
{
\small
\begin{tabular}{c c c}
    \hspace{-0.03\linewidth}\makecell{Wavenumber \\
    Log-Prefactor Field (Input)}&\hspace{-0.03\linewidth} \hspace{-0.03\linewidth}\makecell{PDE solution (Re.)\\ with Observations} &\hspace{-0.04\linewidth}\makecell{PDE solution (Im.)\\ with Observations}\\
    \includegraphics[width=0.27\linewidth]{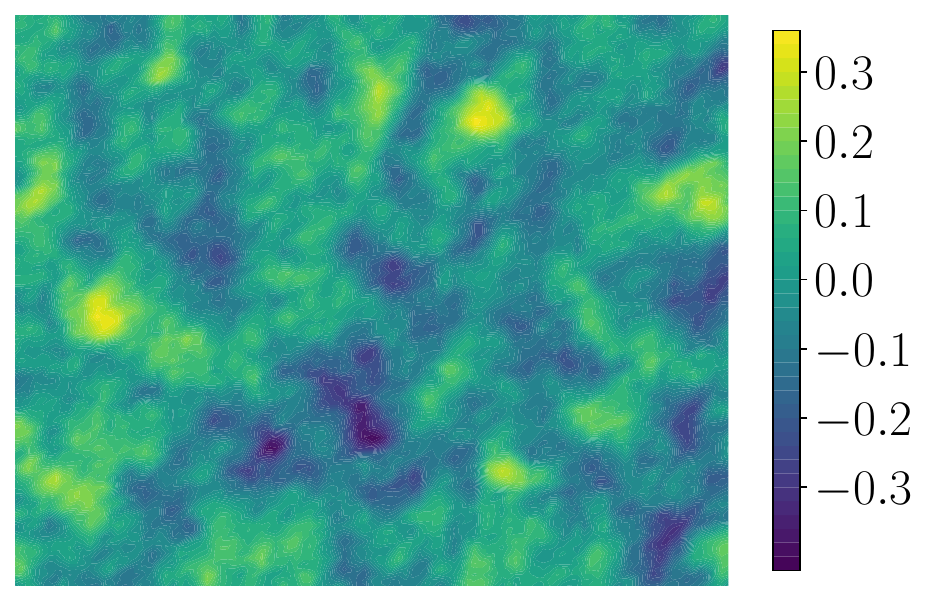}  & \includegraphics[width=0.27\linewidth]{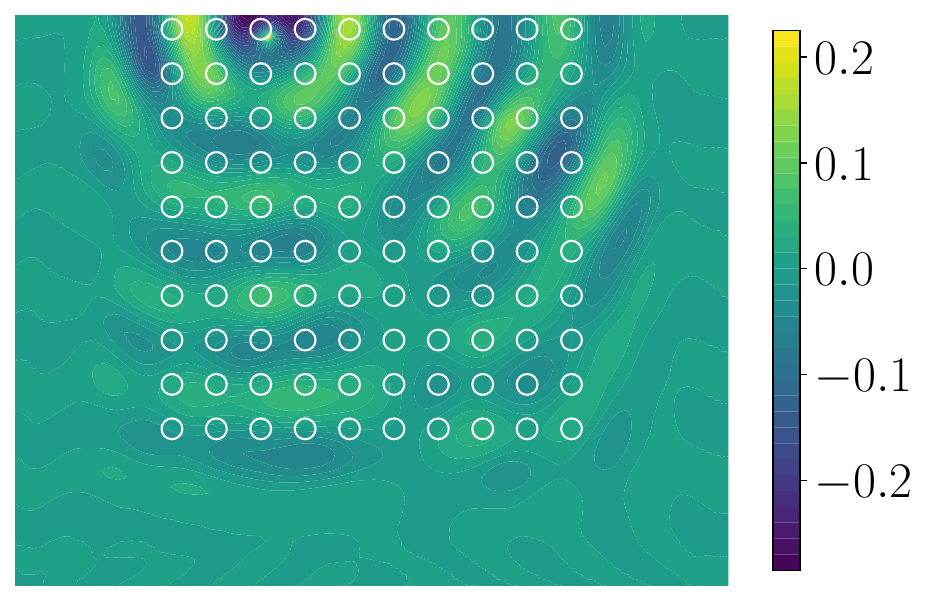} & \includegraphics[width=0.27\linewidth]{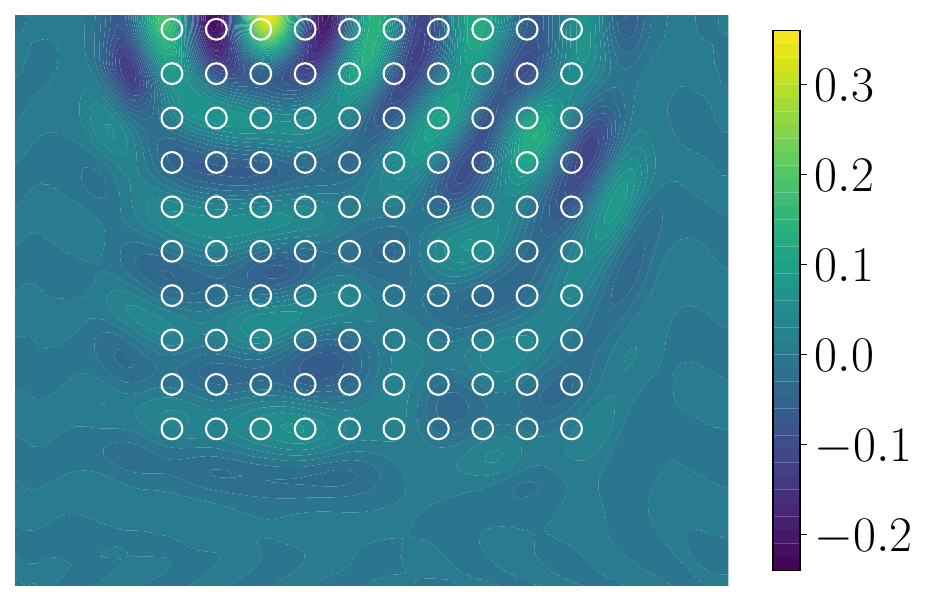}
\end{tabular}
}
    \caption{The step up of the numerical study on the Helmholtz equation. (\textit{Left}): An input sample $a^{(i)} \sim \mu$ of the wavenumber log-prefactor field. (\textit{Center}) The real part of the PDE solution at $a^{(i)}$. (\textit{Right}): The imaginary  part of the PDE solution at $a^{(i)}$. The circles indicate the locations of the pointwise observations of the PDE solution for the inverse problem.}
    \label{fig:helmholtz_param_state}
\end{figure}

\paragraph{Training Results} We train FNOs, reduced-basis DIFNOs, and mixed-resolution DIFNOs at varying training sample sizes, and their testing results are shown in \Cref{fig:poisson_train}. These results demonstrate that DIFNOs achieve substantial improvements in predicting both PDE solutions and Fr\'echet derivatives. Reduced-basis DIFNOs obtained with two different linear dimension reduction methods perform similarly, and both substantially outperform mixed-resolution DIFNOs. We reiterate that, despite inferior generalization accuracy, the mixed-resolution DIFNOs train faster.

\begin{figure}[htb]
    \centering
\small
    \renewcommand{\arraystretch}{1.2} 
    \setlength{\tabcolsep}{5pt}       

    \begin{tabular}{c c c}
        & \hspace{0.03\linewidth} \makecell{Helmholtz\\
        PDE Solution Relative Error} & \hspace{0.01\linewidth} \makecell{Helmholtz\\Fr\'echet Derivative Relative Error} \\
        
        \raisebox{3em}{\rotatebox{90}{Relative Error}} & 
        \includegraphics[width=0.42\linewidth]{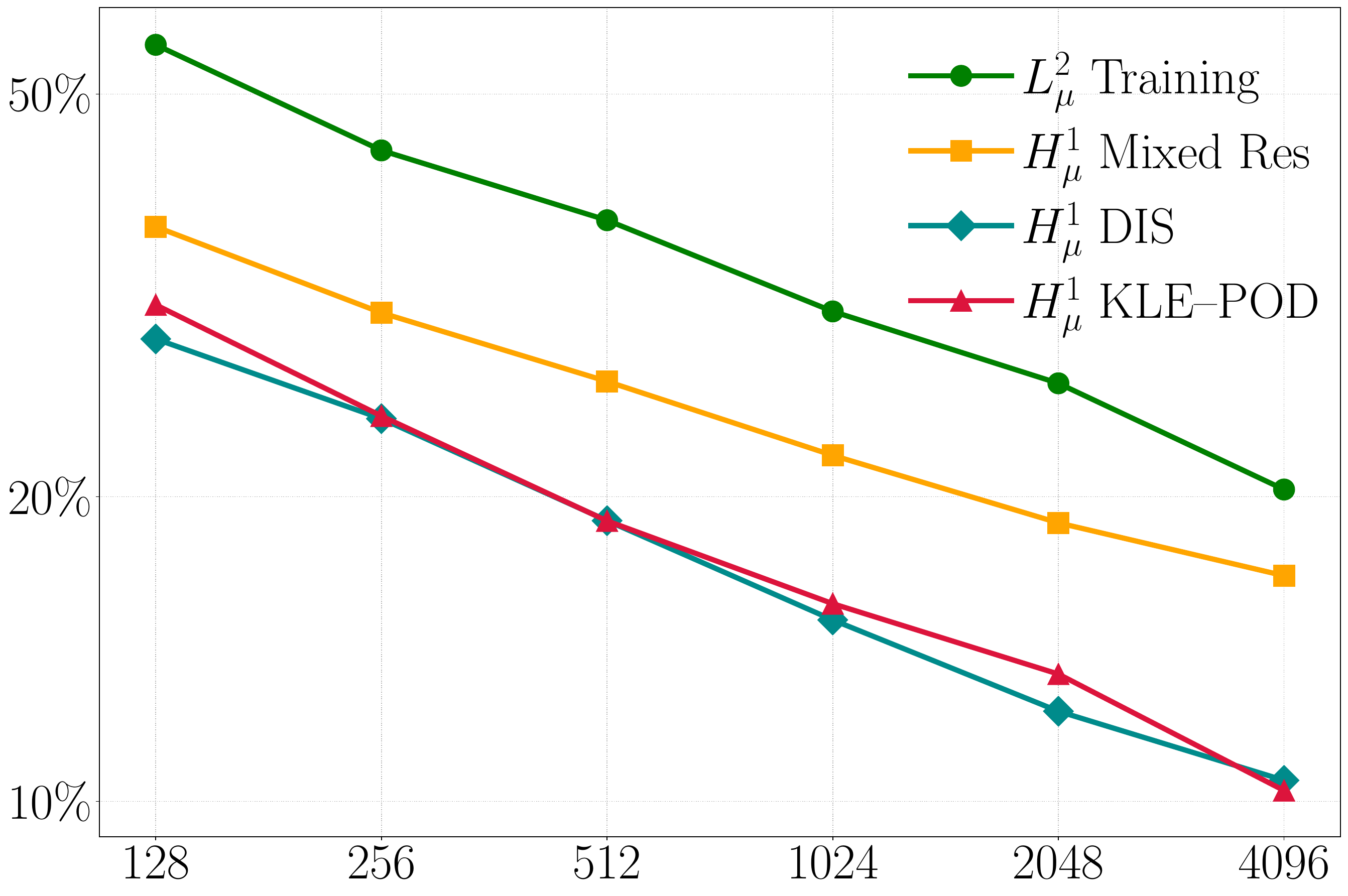} & 
        \includegraphics[width=0.42\linewidth]{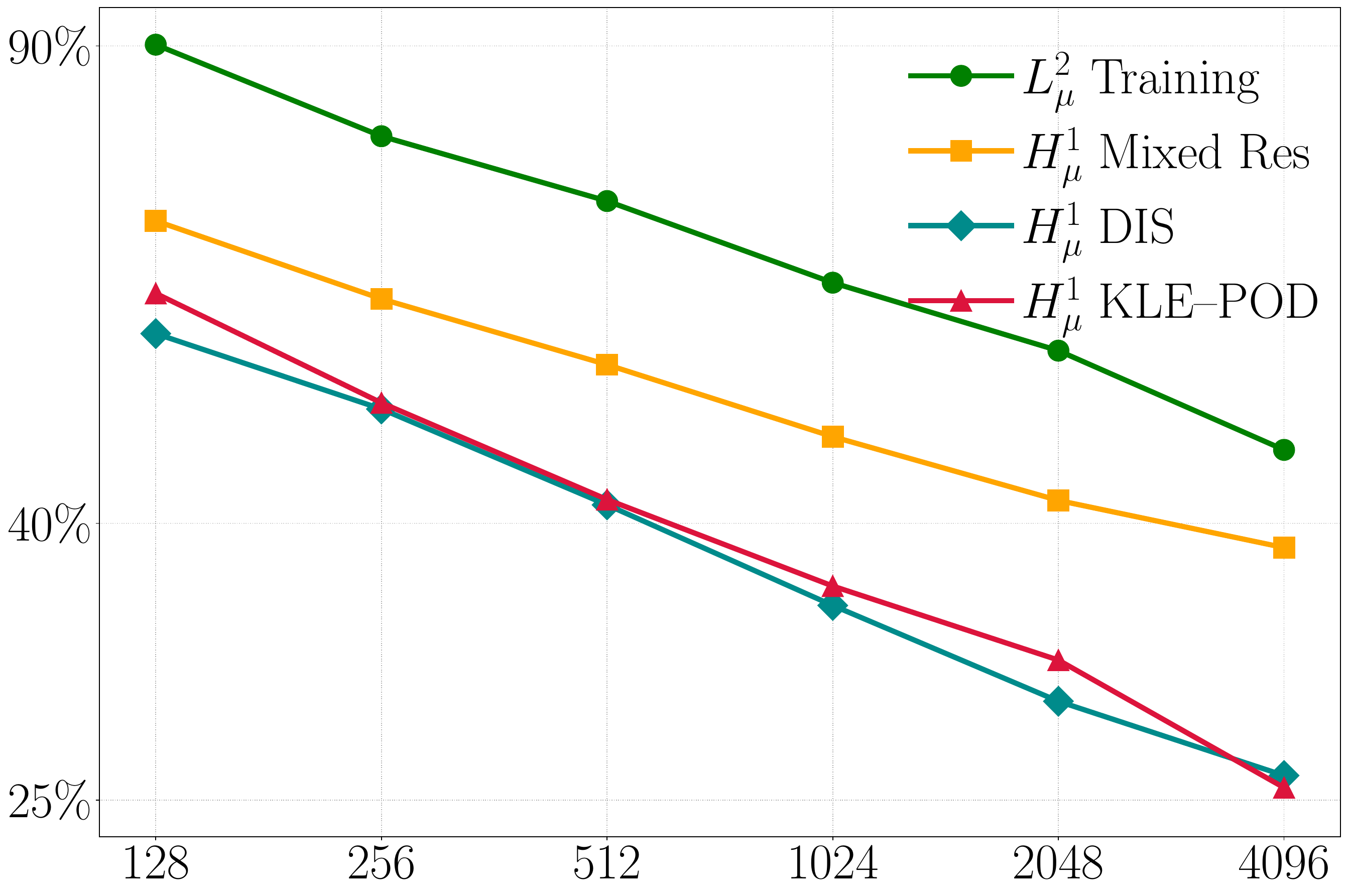} \\
        
        & \hspace{0.03\linewidth} Number of Training Samples & \hspace{0.03\linewidth} Number of Training Samples \\
    \end{tabular}

    \caption{The relative error in the PDE solutions and Fr\'echet derivative predictions for the Helmholtz equation. We train FNOs ($L^2_{\mu}$ Training), reduced-basis DIFNOs with KLE--POD dimension reduction ($H^1_{\mu}$ KLE--POD), reduced-basis DIFNOs with DIS dimension reduction ($H^1_{\mu}$ DIS), and mixed-resolution DIFNOs ($H^1_{\mu}$ Mixed Res.) on a varying number of training samples. We use an additional 128 testing samples to evaluate the relative error.}
    \label{fig:helmholtz_train}
\end{figure}

\paragraph{Inverse Problem Results} We now study the solutions of the inverse problem using the PDE model, FNO, and DIFNO. To create an out-of-distribution sample, the data-generating (true) wavenumber log-prefactor field is obtained by first sampling the input distribution $\mu$ and then applying a filtering function $g$ pointwise to the sample:
\begin{equation*}
    g(x) = \text{sign}(x) \cdot \min \left\{\frac{\sup\{|a|\} |x|}{0.25}, \sup \{|a|\} \right\}
\end{equation*}
The pointwise observations of the PDE solution at this filtered sample are extracted; see the locations of the observations in \Cref{fig:helmholtz_param_state}. The data is then obtained by adding 0.1\% white noise. The data-generating input field, the inverse solutions, and their spatial pointwise errors are shown in \Cref{fig:helmholtz_map} and \Cref{fig:helmholtz_map_comparison}. The results show similar trends as the inverse problem results for the nonlinear diffusion--reaction equation. Here, the mixed-resolution DIFNOs struggle to keep up with the reduced-basis DIFNOs, yet still substantially outperform the FNOs. The FNOs failed to capture the vanishing spatial gradient in the PML layers. Furthermore, we pay special attention to two inverse solutions produced by (i) the FNO with 4096 training samples and (ii) the reduced-basis DIFNO with 512 training samples. This pair of FNO and DIFNO performs similarly in predicting PDE solutions, but the FNO has slightly lower Fr\'echet derivative accuracy. We see that the pointwise error in the inversion solutions by this FNO is much more pronounced, which supports our theoretical analysis in 
Propositions \ref{prop:surrogate_optimization_error}, \ref{prop:surrogate_optimization_error_strongly_convex}, and \Cref{theorem:inverse_problem_ua}.

\begin{figure}[htb]
    \centering
{
    \small
   \begin{tabular}{c c c c c}
        \makecell{Data-Generating\\ Input Field (True)} \quad&
          \makecell{Inverse Solution \\ by PDE (Reference)} & \makecell{Inverse Solution \\ by FNO}&  \makecell{Inverse Solution \\ by DIFNO} &\\
         \includegraphics[width=0.20\linewidth]{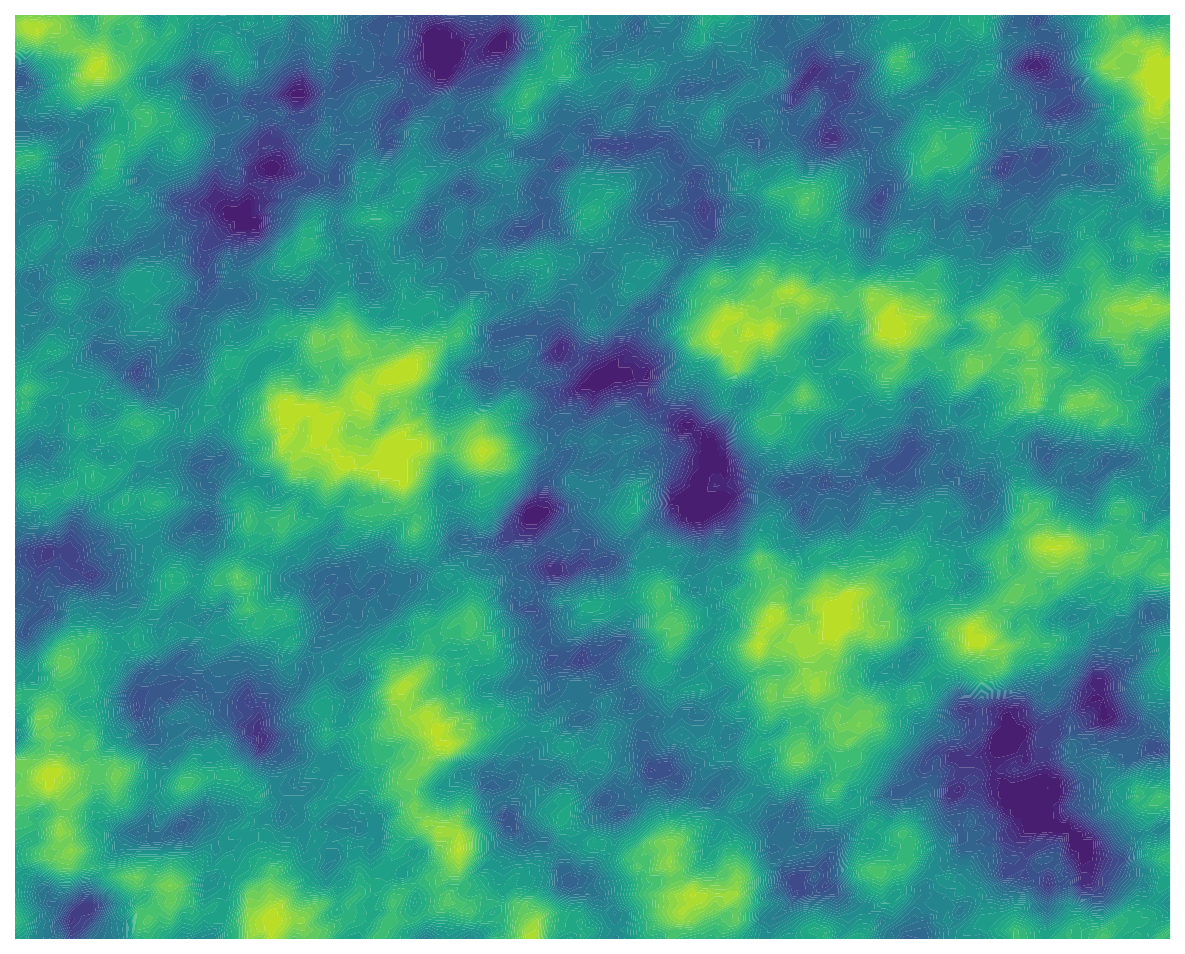} & 
         \includegraphics[width=0.20\linewidth]{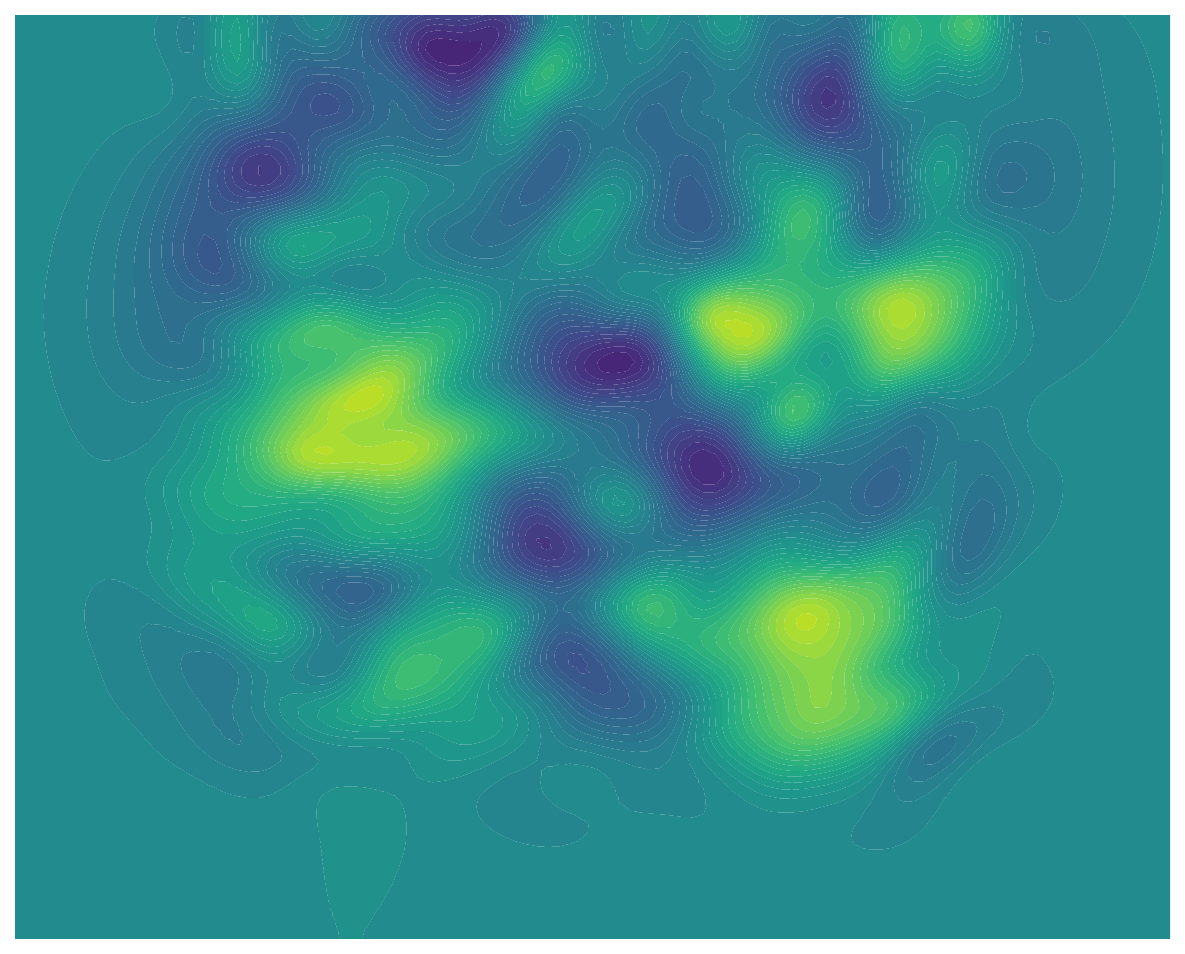} &
         \includegraphics[width=0.20\linewidth]{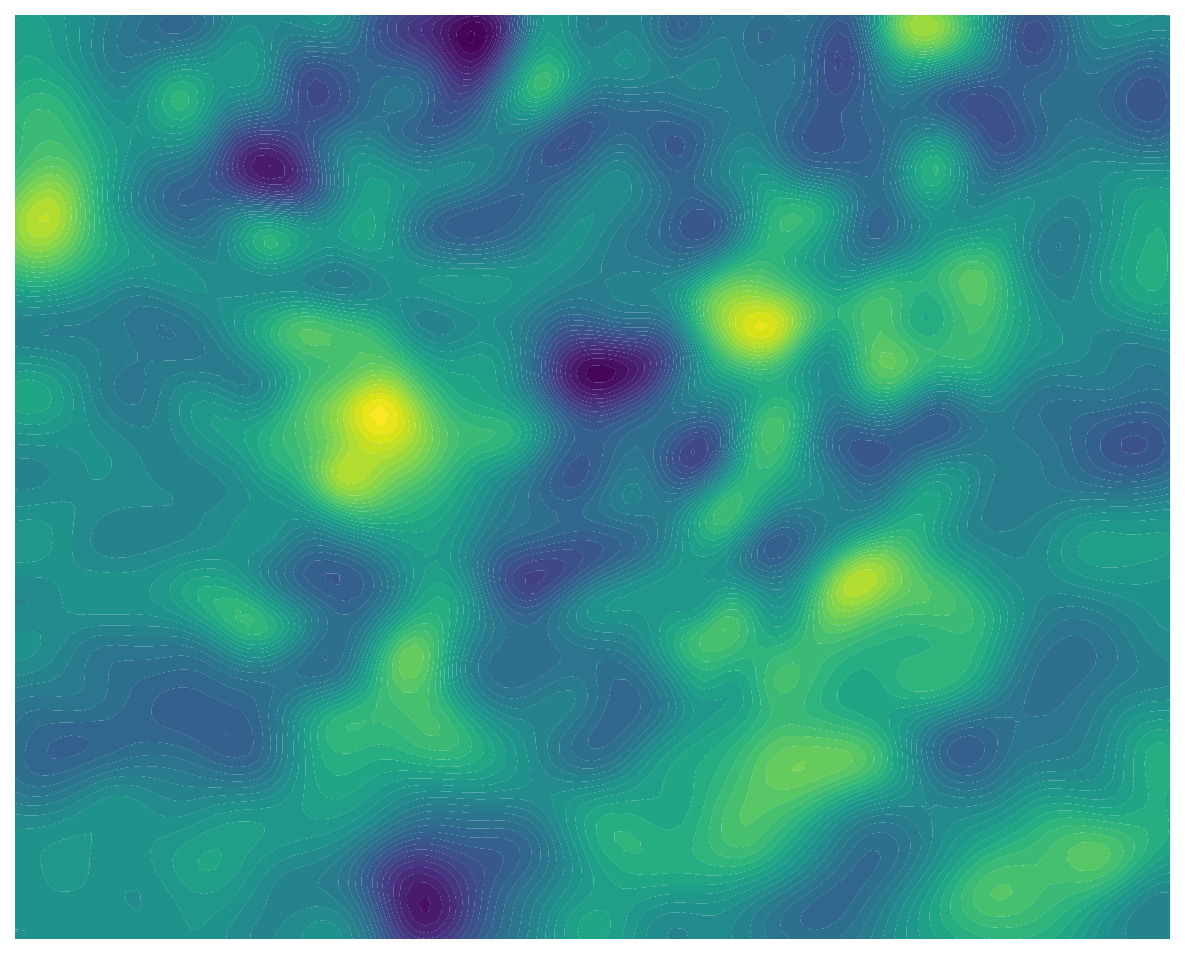} &
         \includegraphics[width=0.20\linewidth]{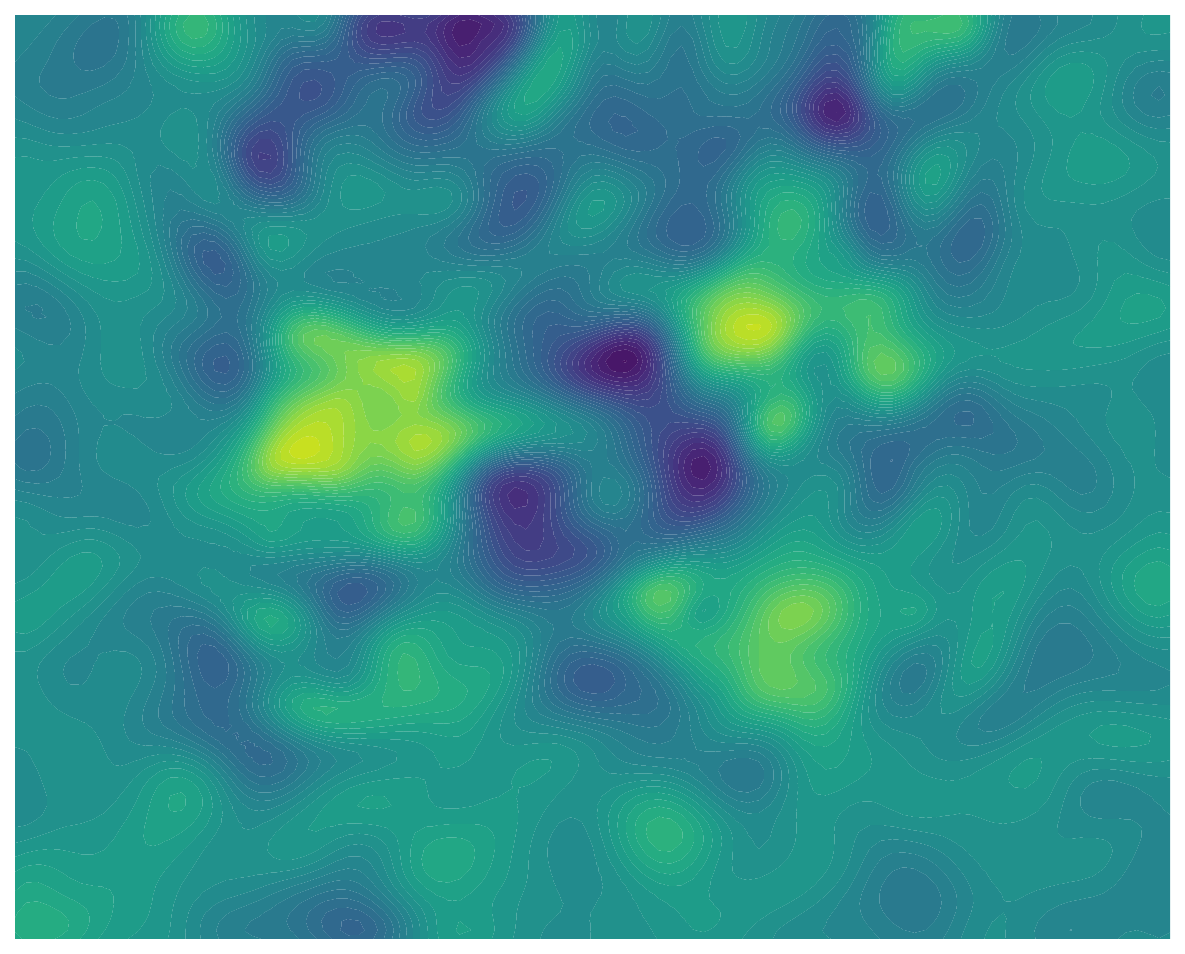} & 
         \includegraphics[width=0.04\linewidth]{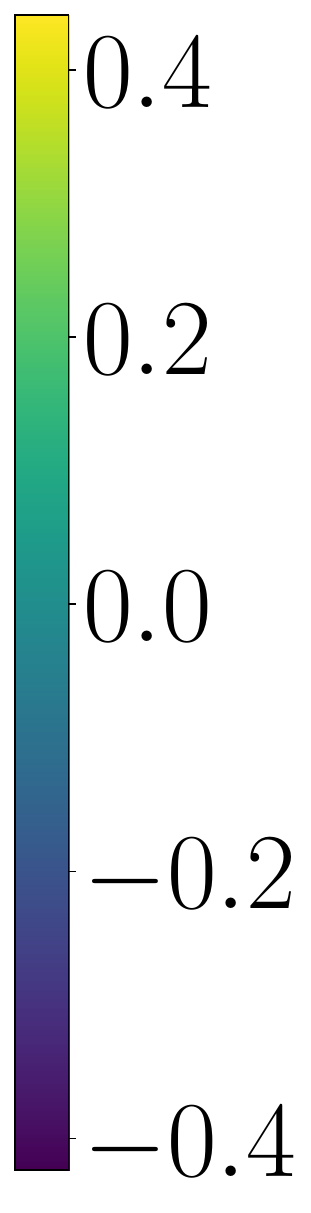}
    \end{tabular}
}
    \caption{Solutions of inverse problems based on the Helmholtz equation in \Cref{fig:helmholtz_param_state}. The left figure is the data-generating (true) wavenumber log-prefactor field. Then, from left to right, are inverse solutions obtained by PDE, the best-performing FNO, and the best-performing DIFNO.
    }
    \label{fig:helmholtz_map}
\end{figure}

\begin{figure}[htb]
\centering
\scalebox{0.9}{\renewcommand{\arraystretch}{1.5}
\begin{tabular}{
|>{\centering\arraybackslash} m{0.115\linewidth}||
>{\centering\arraybackslash} m{0.15\linewidth}|
>{\centering\arraybackslash} m{0.15\linewidth}|
>{\centering\arraybackslash} m{0.15\linewidth}|
>{\centering\arraybackslash} m{0.15\linewidth}|
>{\centering\arraybackslash} m{0.1\linewidth}|}\hline
    \multicolumn{6}{|c|}{\makecell{Errors in the Inverse Solutions for Helmholtz Equation}} \\\hline\hline
    &\multicolumn{4}{c|}{Number of Training Samples} & \multirow{2}{*}{\makecell{Pointwise\\Error}}\\\cline{1-5}
    & 512 & 1024 & 2048 & 4096 & \\\hline
    FNO& \raisebox{-0.13\linewidth}{\includegraphics[width=0.9\linewidth]{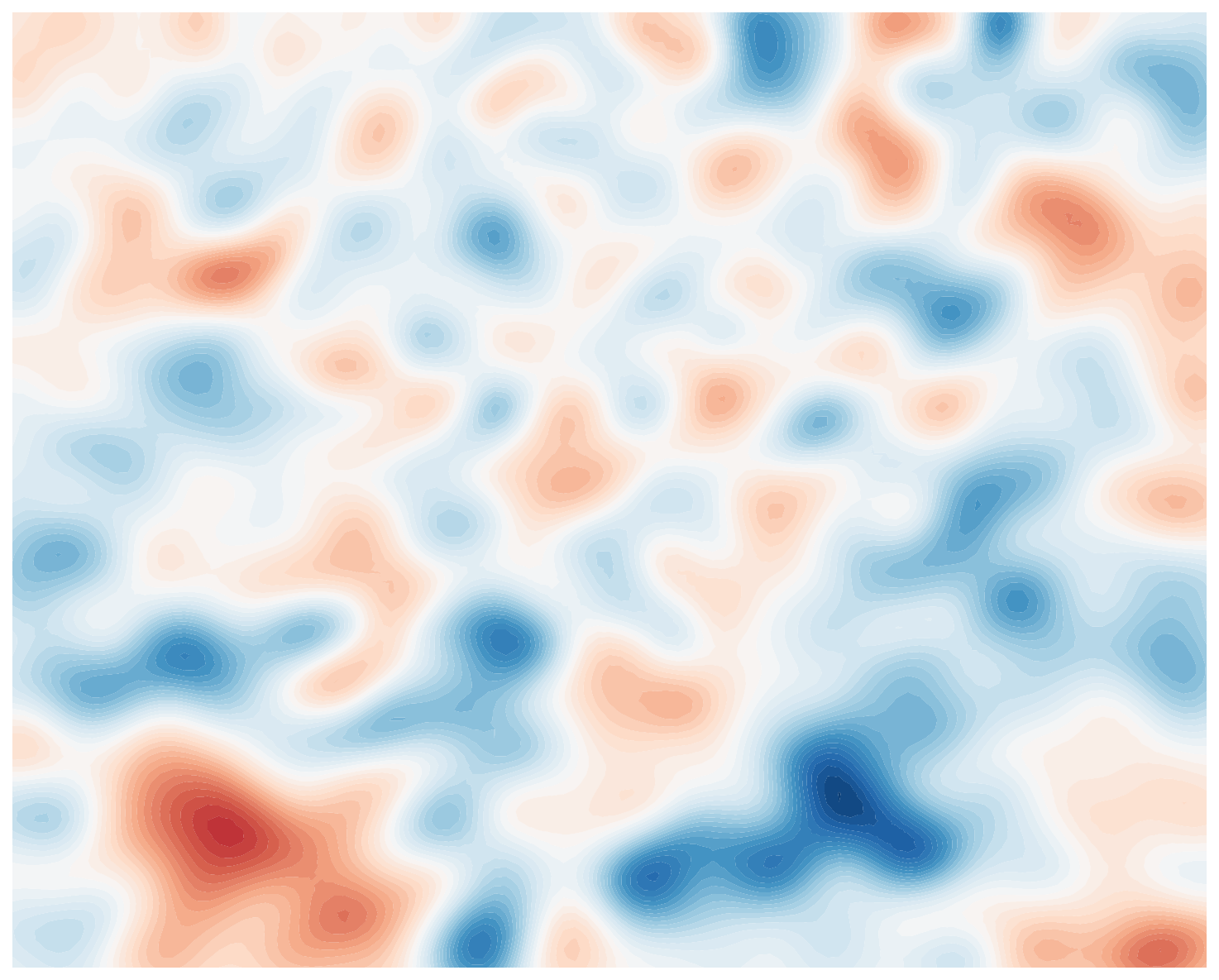}} & \raisebox{-0.13\linewidth}{\includegraphics[width=0.9\linewidth]{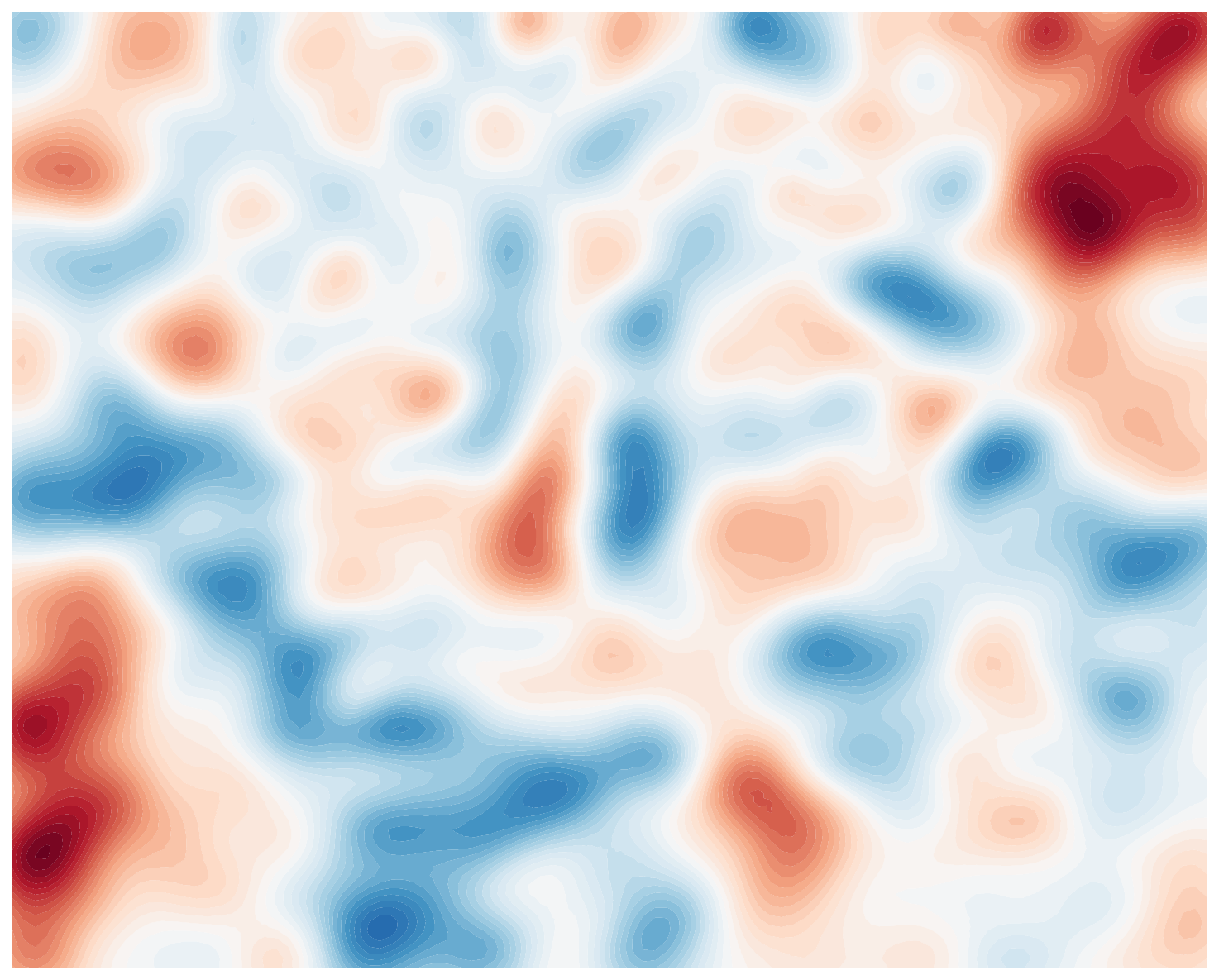}} & \raisebox{-0.13\linewidth}{\includegraphics[width=0.9\linewidth]{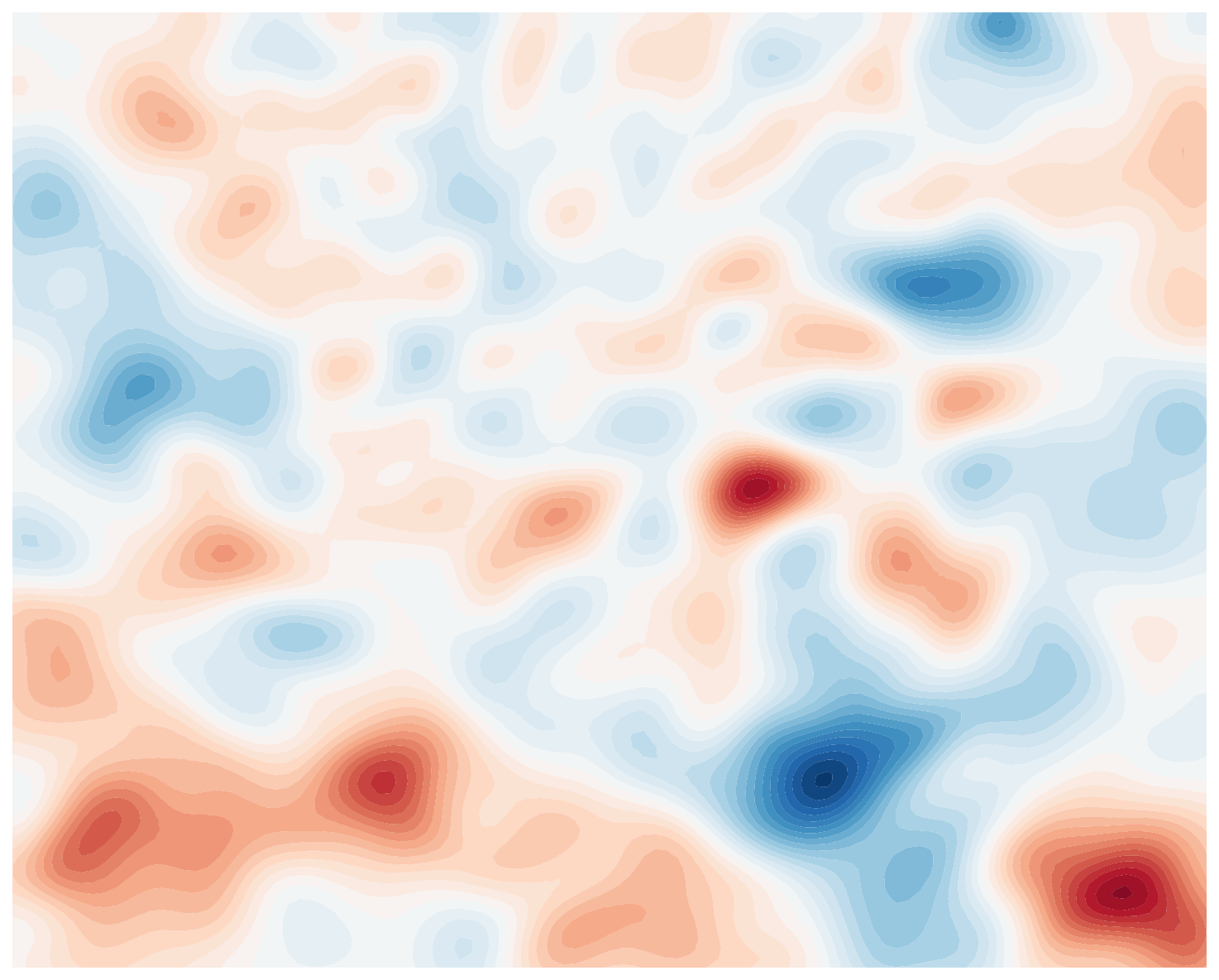}} & \raisebox{-0.13\linewidth}{\includegraphics[width=0.9\linewidth]{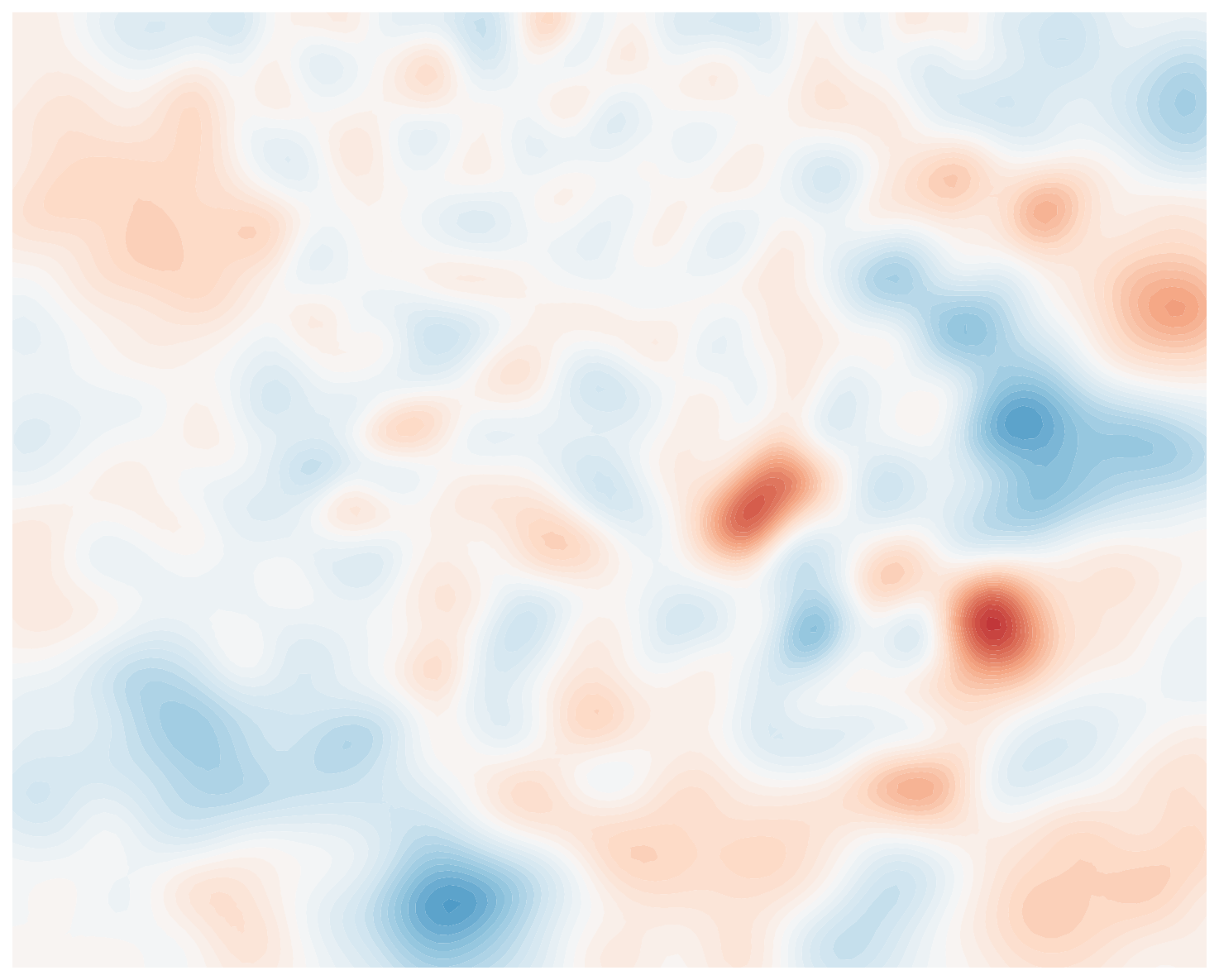}} & \multirow{5}{*}{\includegraphics[width=0.57\linewidth]{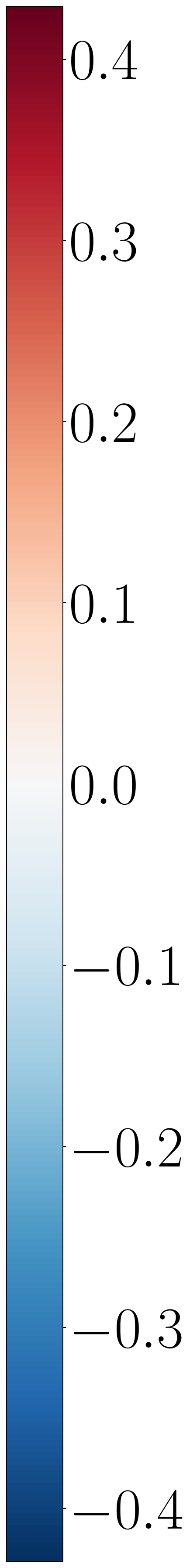}}\\\cline{1-5}
    \makecell{Relative\\Error}& \quad 163.07 \% & \quad 200.89 \% & \quad 151.51 \% & \quad 97.89 \% & \\\cline{1-5}
    \makecell{Red.-Basis\\DIFNO}& \raisebox{-0.11\linewidth}{\includegraphics[width=0.9\linewidth]{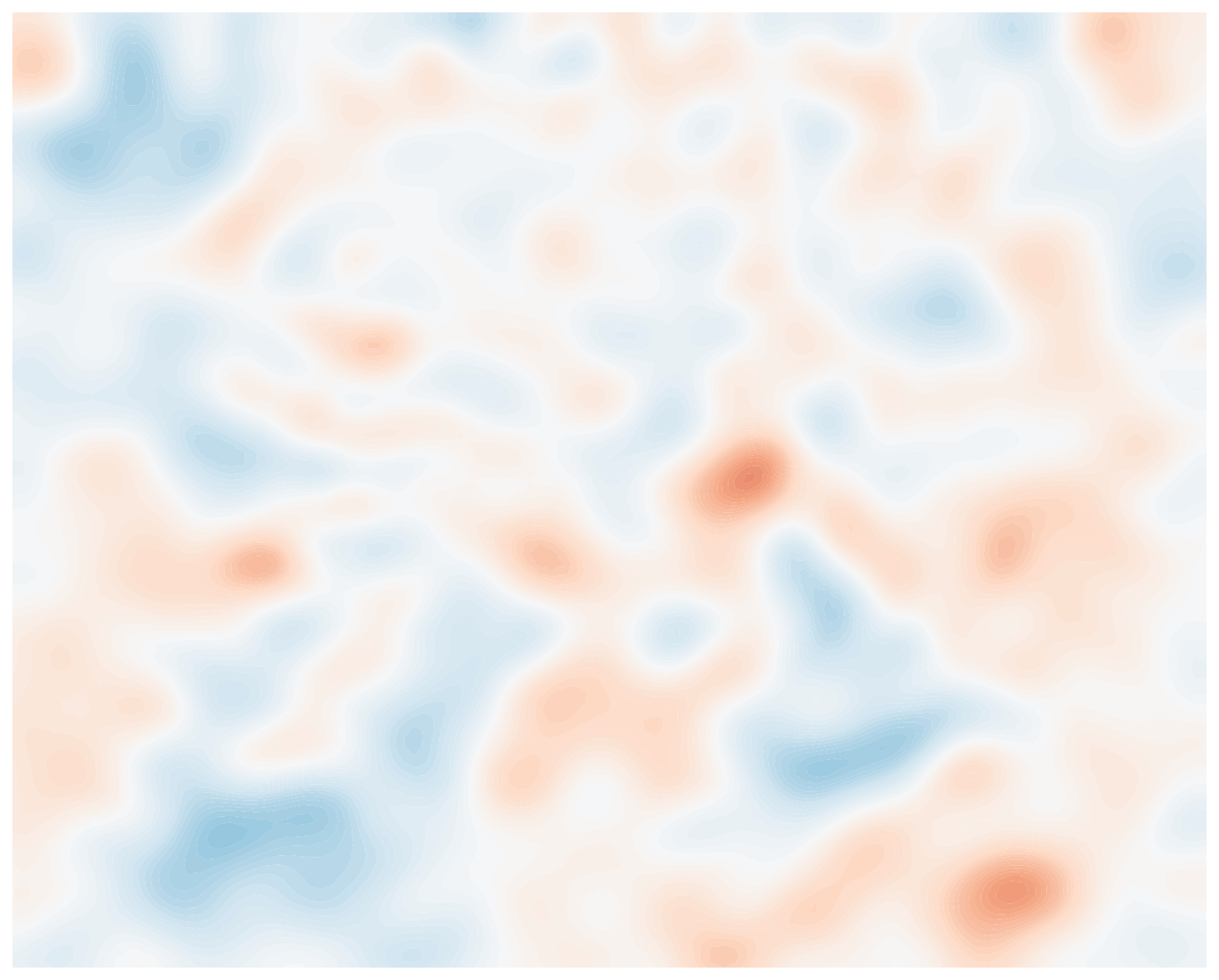}} & \raisebox{-0.11\linewidth}{\includegraphics[width=0.9\linewidth]{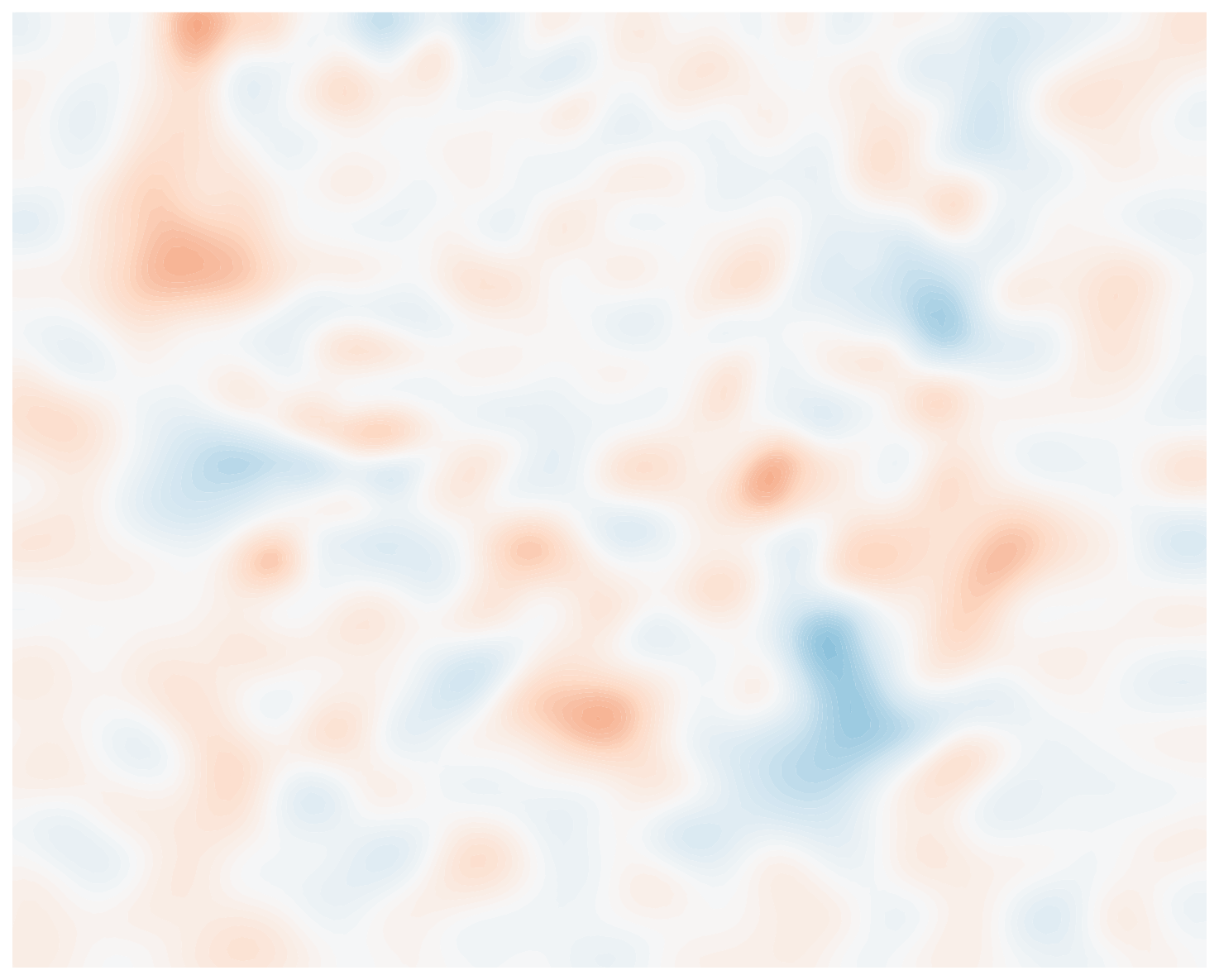}} & \raisebox{-0.11\linewidth}{\includegraphics[width=0.9\linewidth]{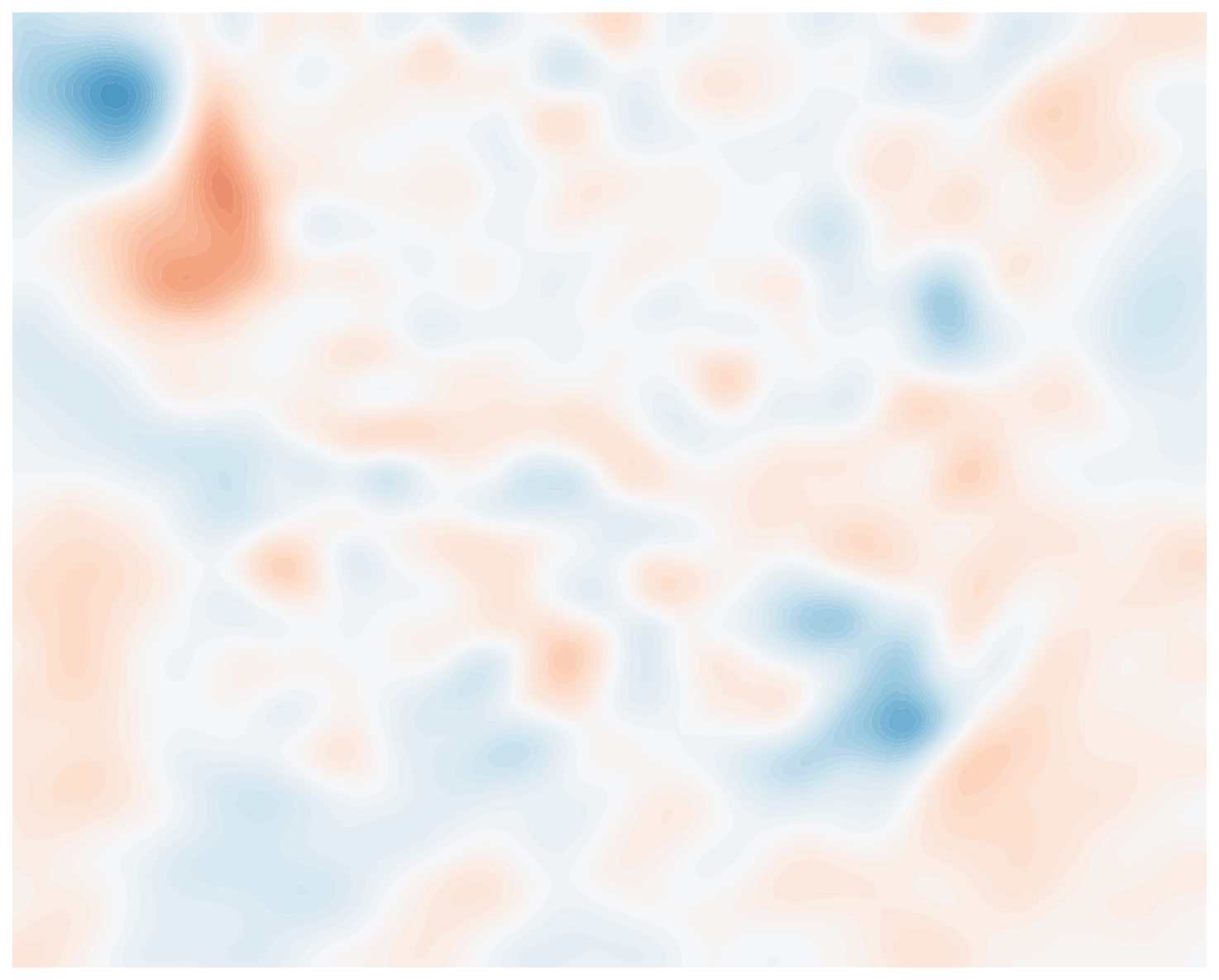}} & \raisebox{-0.11\linewidth}{\includegraphics[width=0.9\linewidth]{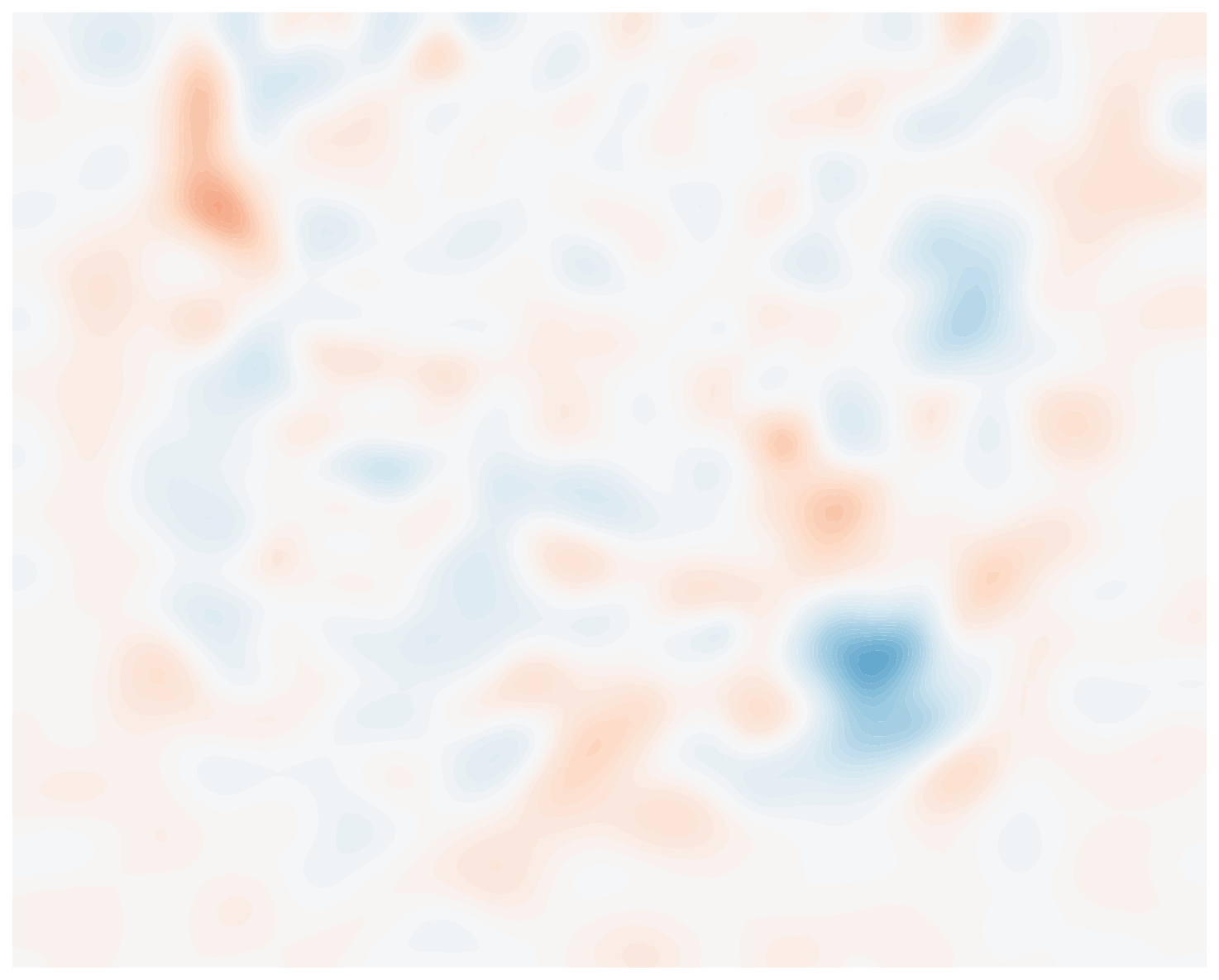}} & \\\cline{1-5}
    \makecell{Relative\\Error}& \quad \textbf{89.60 \%} & \quad \textbf{59.82 \%} & \quad \textbf{56.08 \%} & \quad \textbf{52.86 \%} &\\\cline{1-5}
    \makecell{Mixed-Res.\ \\DIFNO}& \raisebox{-0.11\linewidth}{\includegraphics[width=0.9\linewidth]{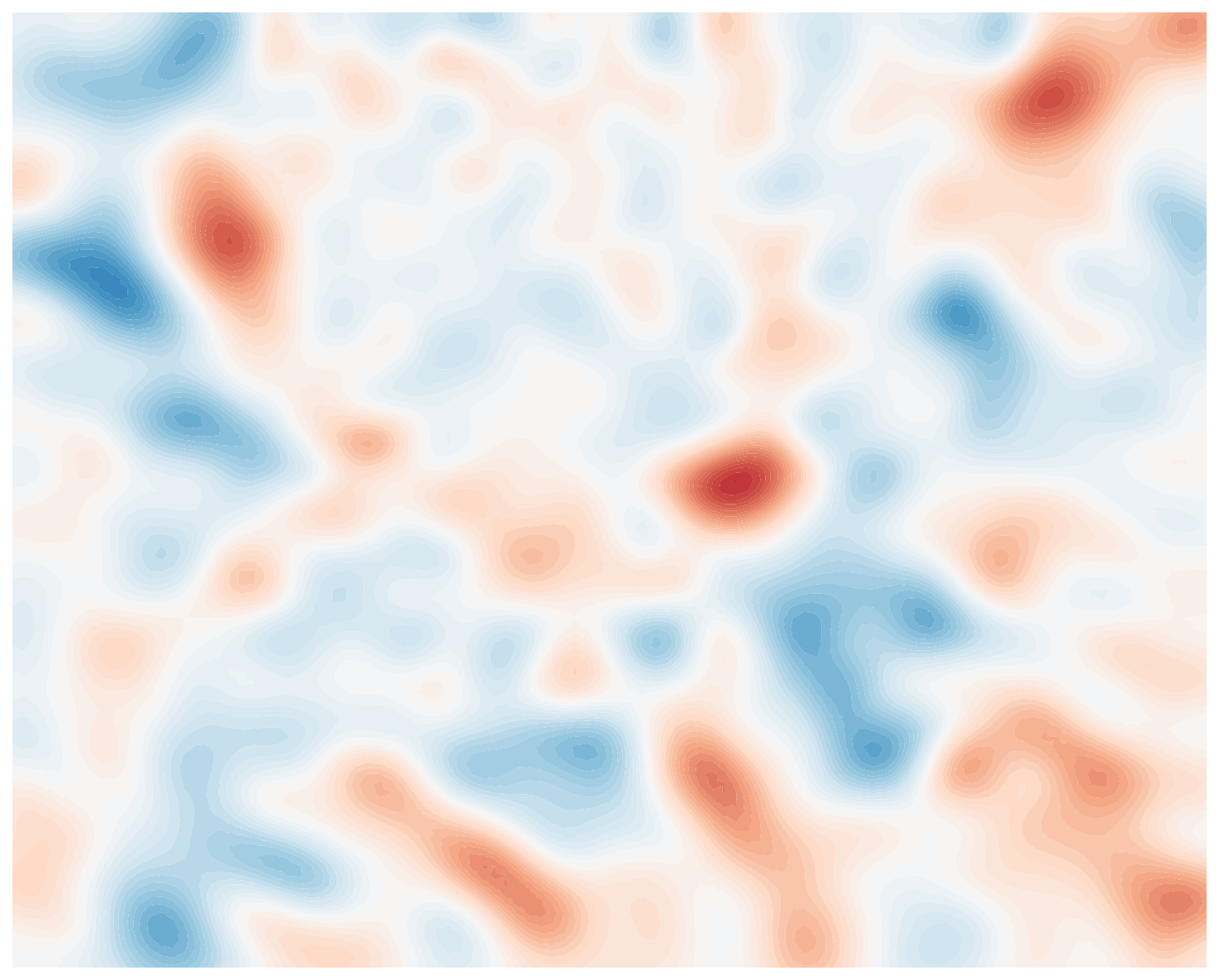}} & \raisebox{-0.11\linewidth}{\includegraphics[width=0.9\linewidth]{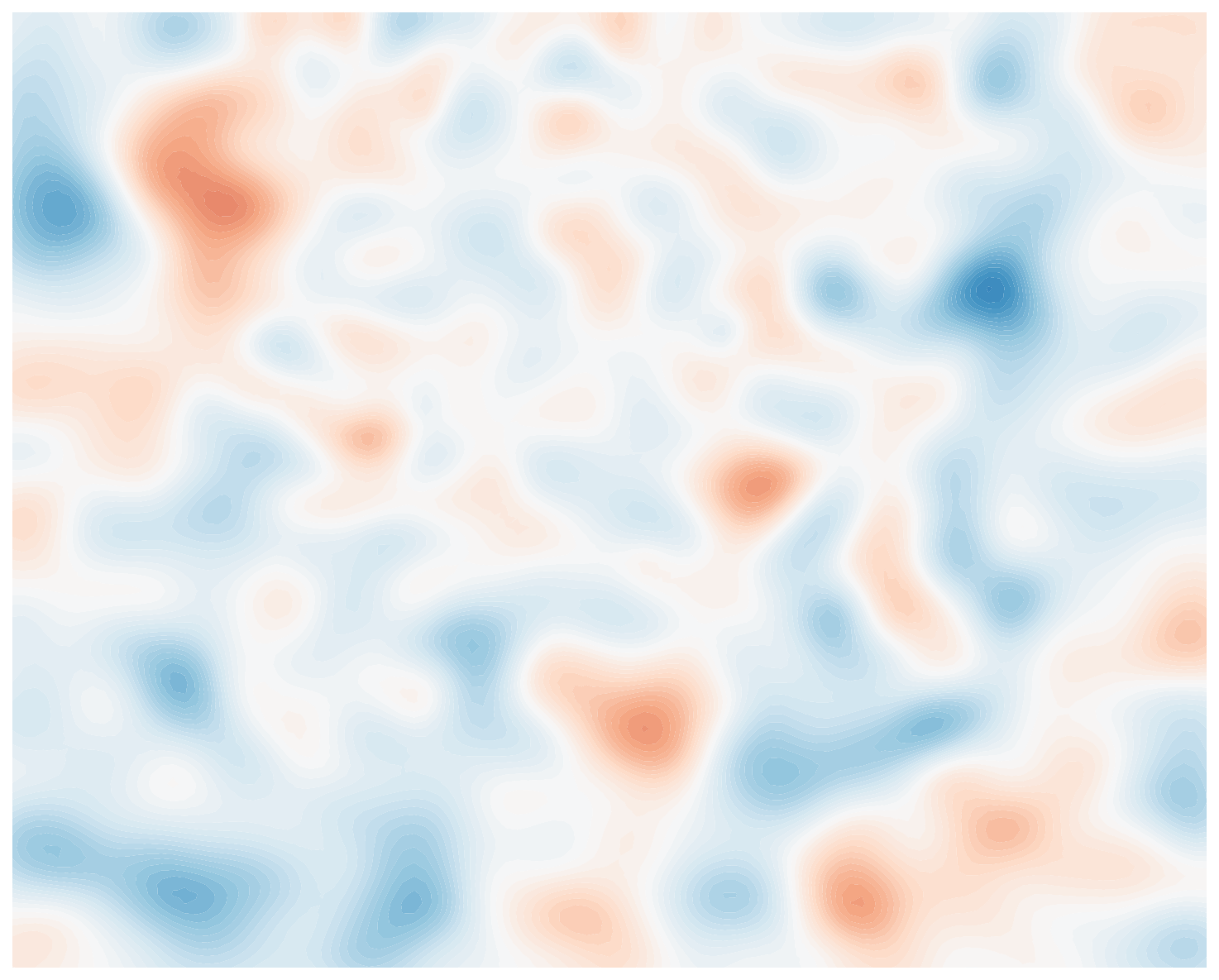}} & \raisebox{-0.11\linewidth}{\includegraphics[width=0.9\linewidth]{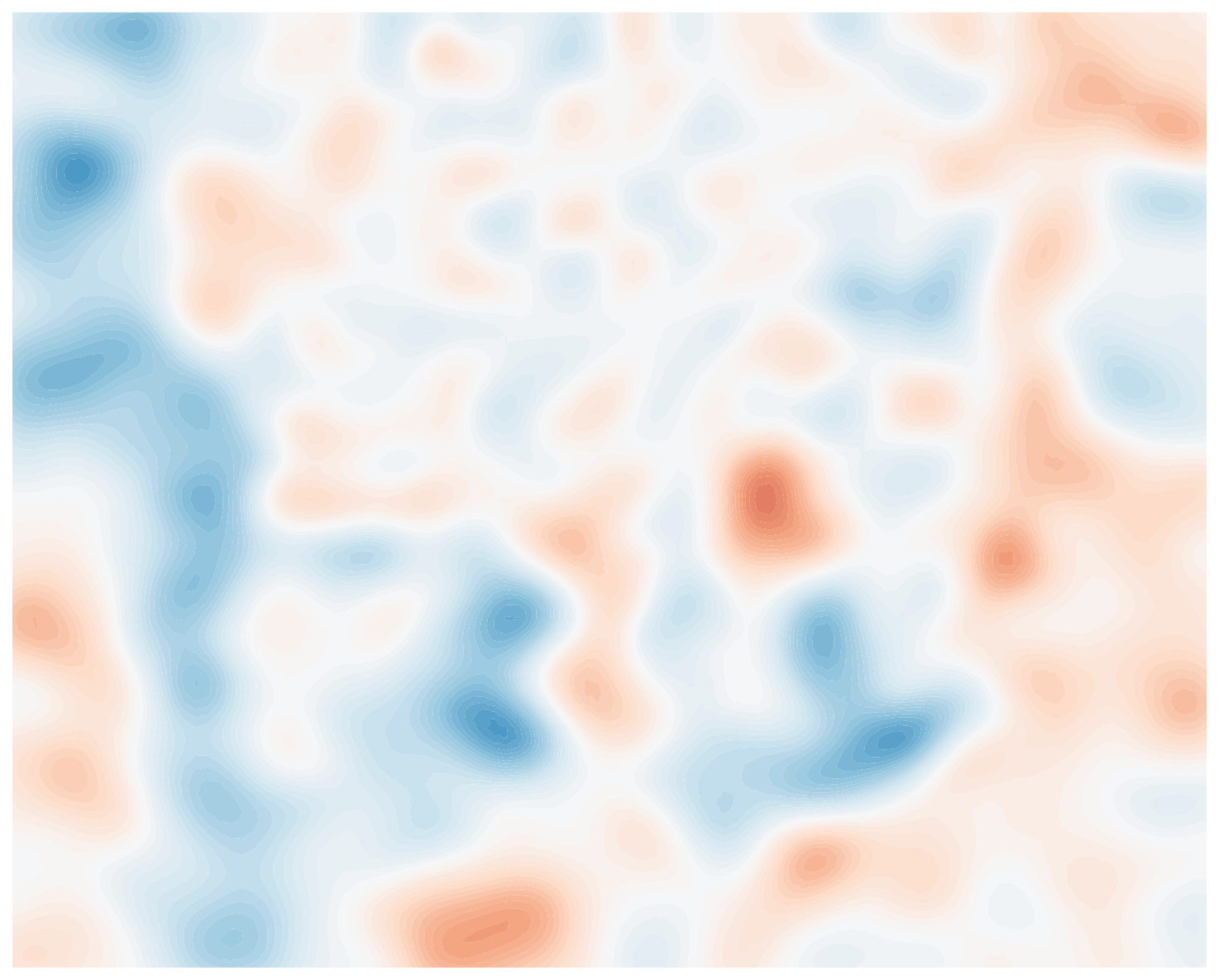}} & \raisebox{-0.11\linewidth}{\includegraphics[width=0.9\linewidth]{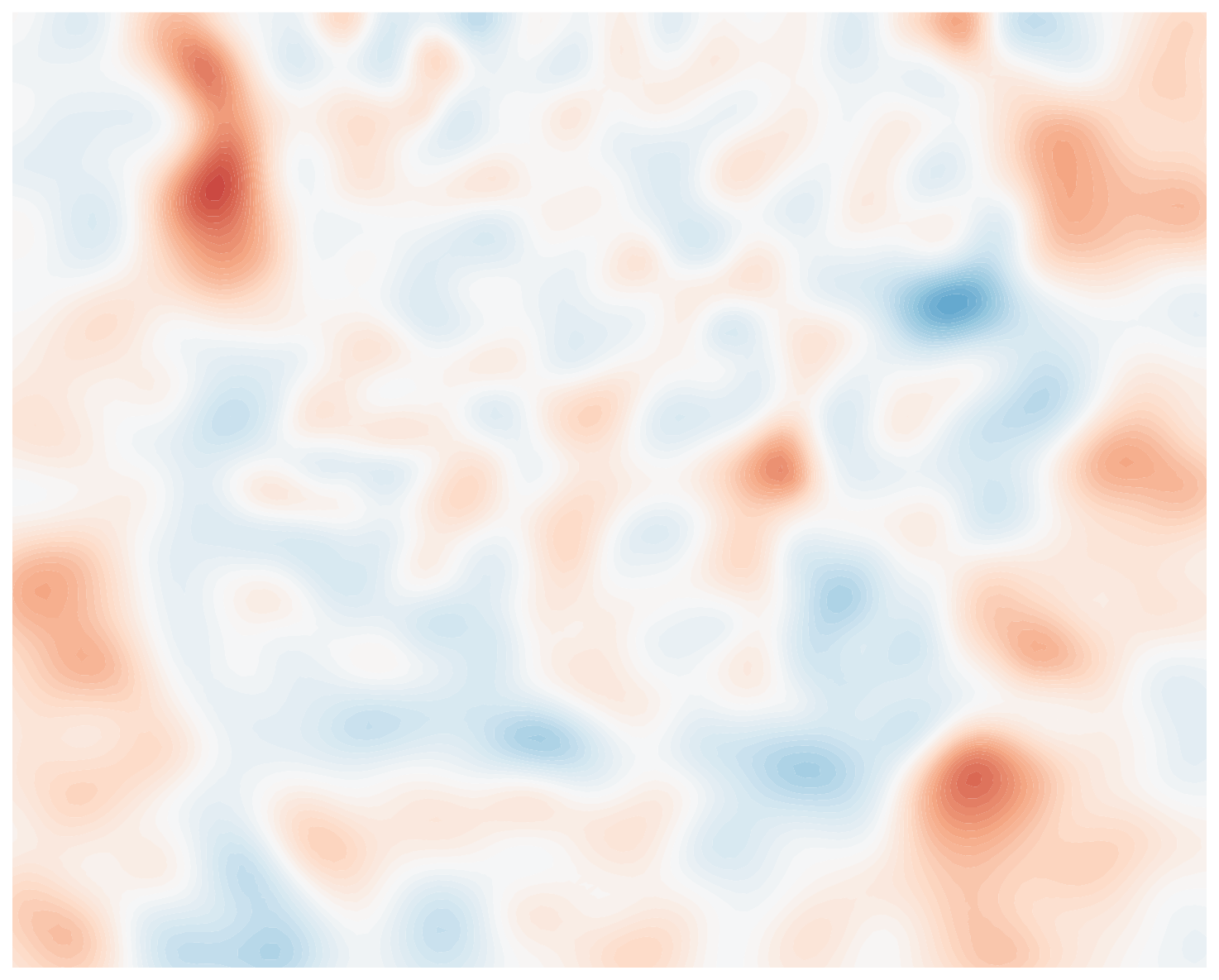}} & \\\cline{1-5}
    \makecell{Relative\\Error}& \quad 127.38 \% & \quad 107.46 \% & \quad 111.63 \% & \quad 96.60 \% & \\\hline

\end{tabular}
    \renewcommand{\arraystretch}{1}
}
    \caption{Spatial pointwise errors in the inverse solutions by FNOs and DIFNOs for the Helmholtz equation when compared to the reference solution obtained by the PDE model. The relative spatial $L^2$ errors in the inverse solutions are shown below each plot.}
    \label{fig:helmholtz_map_comparison}
\end{figure}

\subsection{Navier--Stokes}\label{subsec:navier--stokes}

\paragraph{Problem Setup}For our final numerical experiment, we consider the 2D stream function formulation of the Navier--Stokes equations defined on a periodic spatial domain $(0,1)^2$ identified with the unit torus $\bT^2$ for time $t\geq 0$:
\begin{equation}\label{eq:ns}
    \begin{aligned} 
        \partial_t \omega(x,t) &= - \cJ(\psi(\cdot, t),\omega(\cdot, t), x) +  \nu\Delta \omega(x,t) + f(x), && x\in \bT^2, t\geq 0,\\
        \omega(x,t) &= -\Delta\psi(x,t), && x\in \bT^2, t\geq 0,\\
        \omega(x,0) &= \omega_0(x), && x\in \bT^2\\
        \int_{\bT^2} \psi(x,t)\, \textrm{d}x &= 0, && t\geq 0\\
        \cJ(\psi(\cdot, t),\omega(\cdot, t), x) &\coloneqq \partial_{x_2} \psi(x,t)\partial_{x_1} \omega(x,t) - \partial_{x_1} \psi(x,t)\partial_{x_2} \omega(x,t), && x=(x_1, x_2)\\
    \end{aligned}
\end{equation}
where $\omega$ is the vorticity, and $f=\sqrt{2}\sin(2\pi(x_1+x_2) + \pi/4)/20$ is a forcing term, $\psi$ is the stream function, and $\nu=10^{-4}$ is the kinematic viscosity. The Reynolds number is estimated as 1,134 \cite{manfroi2002stability}. We consider learning the flow map $\cS_\tau$ of a normalized time interval of size $\tau=2$---the mapping from the vorticity field at time $t_0$ to $t_0+2$, i.e., the input is $a = \omega(\cdot, t_0)$ and the output is $u=\omega(\cdot, t_0+2) = \cS_2(a)$. The variation on the input $a$ originates from random vorticity initial conditions that follow a centered Gaussian measure with covariance $\cC_\cX = (10/3\cI - \Delta/30)^{-2}$. We set $t_0=23$ for the input vorticity field; thus, the data-generating distribution is the pushforward of the Gaussian measure by $\cS_{23}$. This operator learning problem is challenging as the mapping's sensitivities are difficult to learn without the derivative loss. The choice of $t_0=23$ poses additional challenges, as we find that the evolution of the vorticity field is more predictable at earlier times.

We solve the 2D Navier--Stokes equations using a pseudo-spectral method combined with a semi-implicit time-stepping scheme. Specifically, the nonlinear terms are treated with a two-step Adams--Bashforth method, while the linear terms employ Crank--Nicolson method. Both the input and output spaces are discretized on a $64 \times 64$ grid, and we define the output space $\mathcal{Y}$ using the $L^2(\mathbb{T}(0,1)^2)$ norm. 
The initial stream function $\psi_0$ is derived by solving the Poisson equation $\omega_0 = -\Delta \psi_0$. For reduced-basis DIFNOs, we use $r_{\mathcal{X}_{\delta}} = 256$ and $r_{\mathcal{Y}} = 256$. For mixed-resolution DIFNOs, we consider 4- and 2-times grid size reductions for the low-fidelity and intermediate resolutions, respectively. A representative sample of the input vorticity field and its corresponding output is visualized in Figure~\ref{fig:ns_param_state}.

\begin{figure}[htb]
    \centering
    {
    \small
    \begin{tabular}{c c}
        \hspace{-0.04\linewidth}\makecell{Vorticity Field \\
        at $t=23$ (Input)}&\hspace{-0.06\linewidth} \makecell{Vorticity Field\\ at $t=25$ (Output)\\ with Observations}\\
        \includegraphics[width=0.3\linewidth]{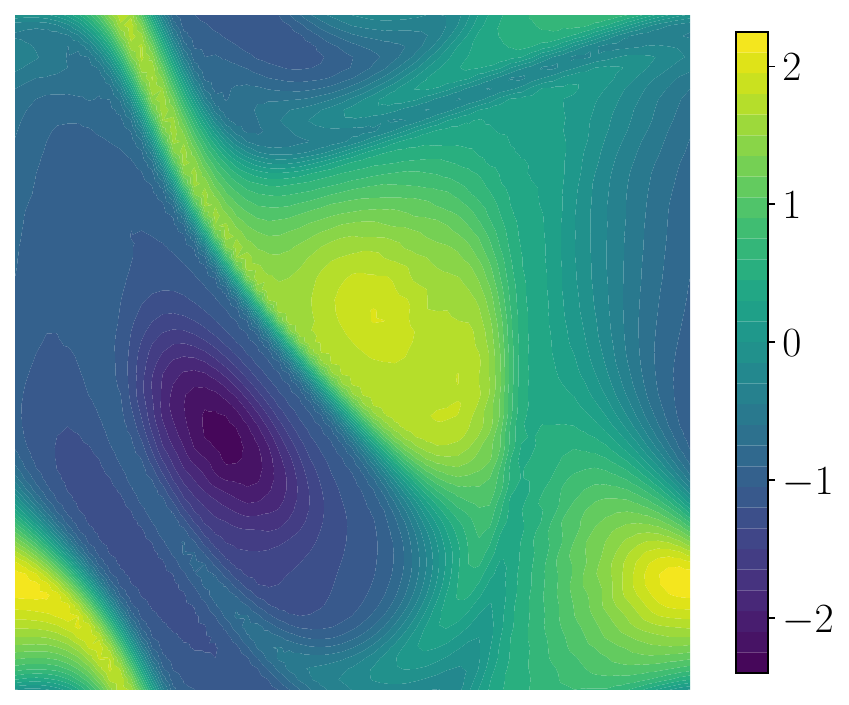}  & \includegraphics[width=0.3\linewidth]{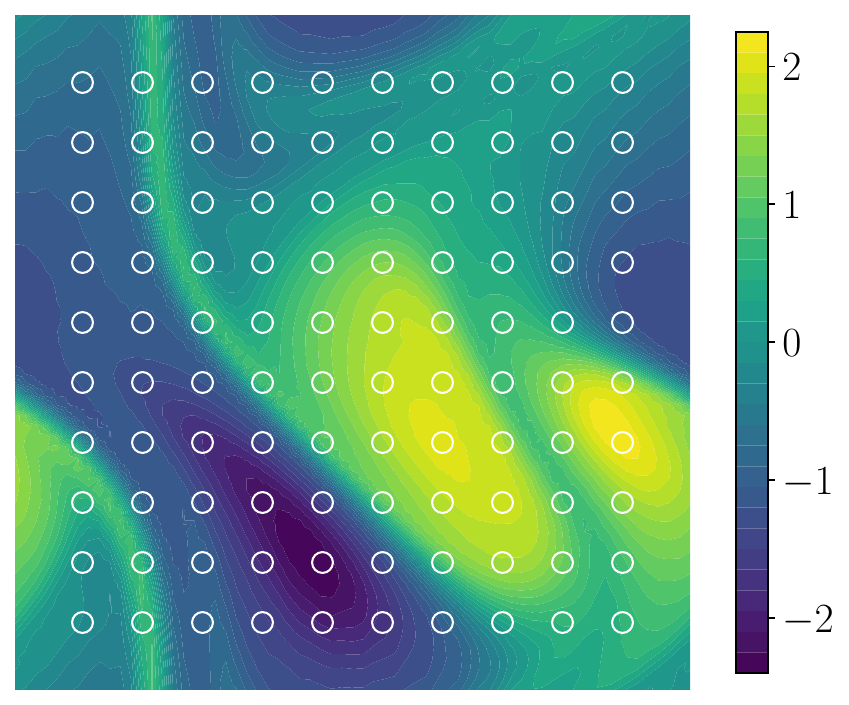}
    \end{tabular}
    }

    \caption{The step up of the numerical study on the Navier--Stokes equation. (\textit{Left}): An input sample $a^{(i)} \sim \mu$ of the vorticity field at $t=23$. (\textit{Right}) The output vorticity field at $t=25$ obtained by solving the equation starting from the input sample. The circles indicate the locations of the pointwise observations of the inverse problem output.}
    \label{fig:ns_param_state}
\end{figure}

\paragraph{Training Results}We train FNOs, reduced-basis DIFNOs, and mixed-resolution DIFNOs at varying training sample sizes, and their testing results are shown in \Cref{fig:ns_train}. The results show that the DIFNOs substantially outperform FNOs for both error measures. The advantages of DIFNOs are more significant when measured by the prediction error in the Fr\'echet derivative of this mapping. Furthermore, the empirical convergence rates of the Fr\'echet derivative as training sample size increases are substantially higher for the DIFNOs. Across the two DIFNO training techniques, we again observe that reduced-basis derivative learning is more accurate than mixed-resolution derivative learning, although the latter is less computationally expensive.

\paragraph{Inverse Problem Results} We study the solutions to the inverse problem obtained by the PDE, the trained FNOs, and the trained DIFNOs.
The PDE inverse solution here is obtained by L-BFGS optimizer implemented in PyTorch.
The data-generating (true) vorticity field input is chosen to be nine Gaussian vortices (left of \Cref{fig:ns_map}). The data is generated by solving the equation starting from this input, extracting pointwise observations as in \Cref{fig:ns_param_state}, and applying 1\% additive white noise. The inverse solutions and their spatial pointwise errors are shown in \Cref{fig:ns_map,fig:ns_map_comparison}. The solution obtained by DIFNOs matches the reference solution from the PDE more closely compares to the FNOs. Furthermore, the FNOs struggle to improve the inverse solution as the number of training samples increases, which can be attributed to the slow convergence rate of the Fr\'echet derivative approximation and support or analysis in Propositions \ref{prop:surrogate_optimization_error}, \ref{prop:surrogate_optimization_error_strongly_convex} and \Cref{theorem:inverse_problem_ua}.

\begin{figure}[htb]
    \centering
    \renewcommand{\arraystretch}{1.2} 
    \setlength{\tabcolsep}{5pt}       
\small
    \begin{tabular}{c c c}
        & \hspace{0.03\linewidth}\makecell{Navier-Stokes \\
        PDE Solution Prediction Error} & \hspace{0.03\linewidth}\makecell{Navier-Stokes \\
        Fr\'echet Derivative Prediction Error} \\
        
        \raisebox{3em}{\rotatebox{90}{Relative Error}} & 
        \includegraphics[width=0.42\linewidth]{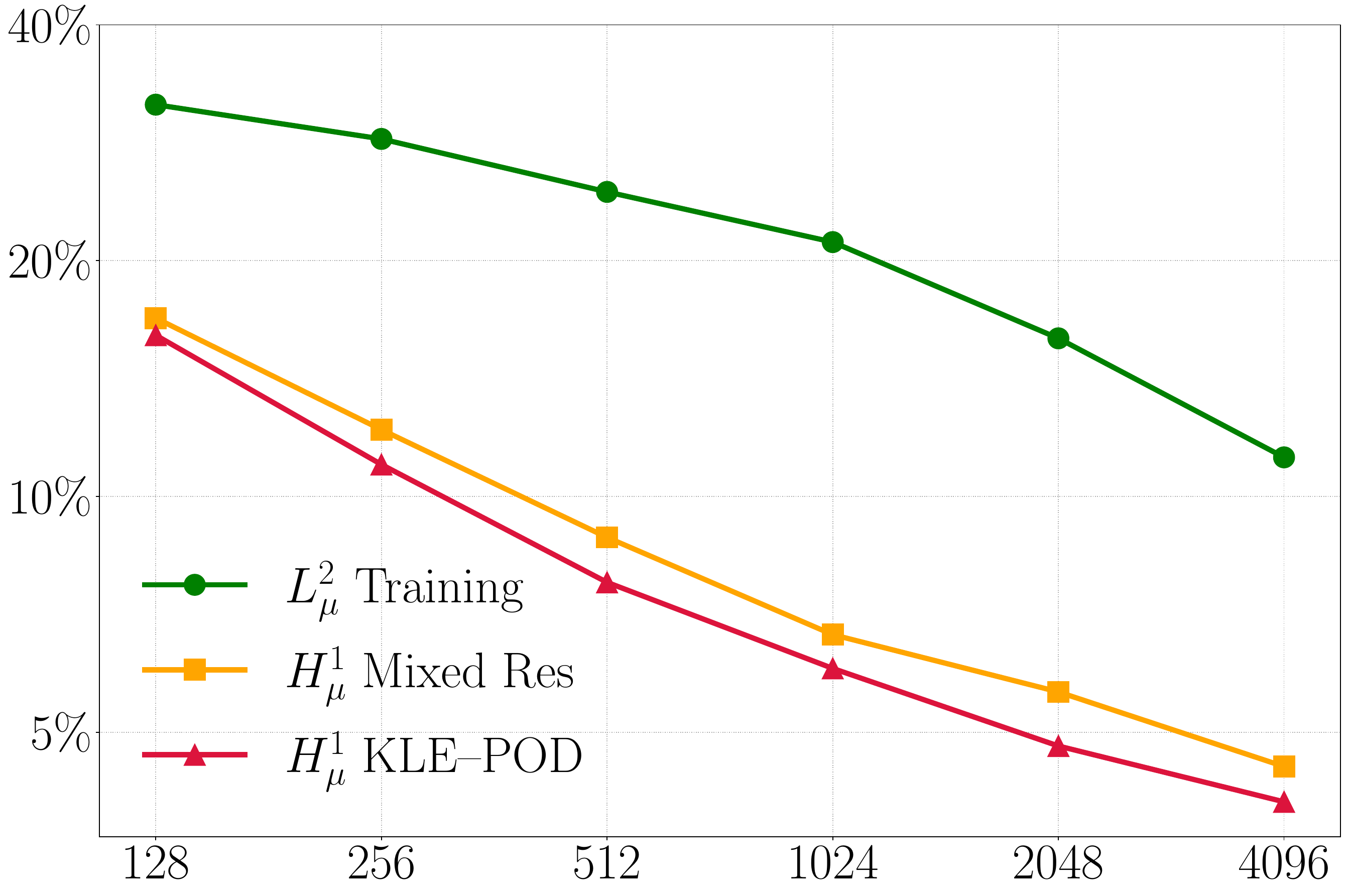} & 
        \includegraphics[width=0.42\linewidth]{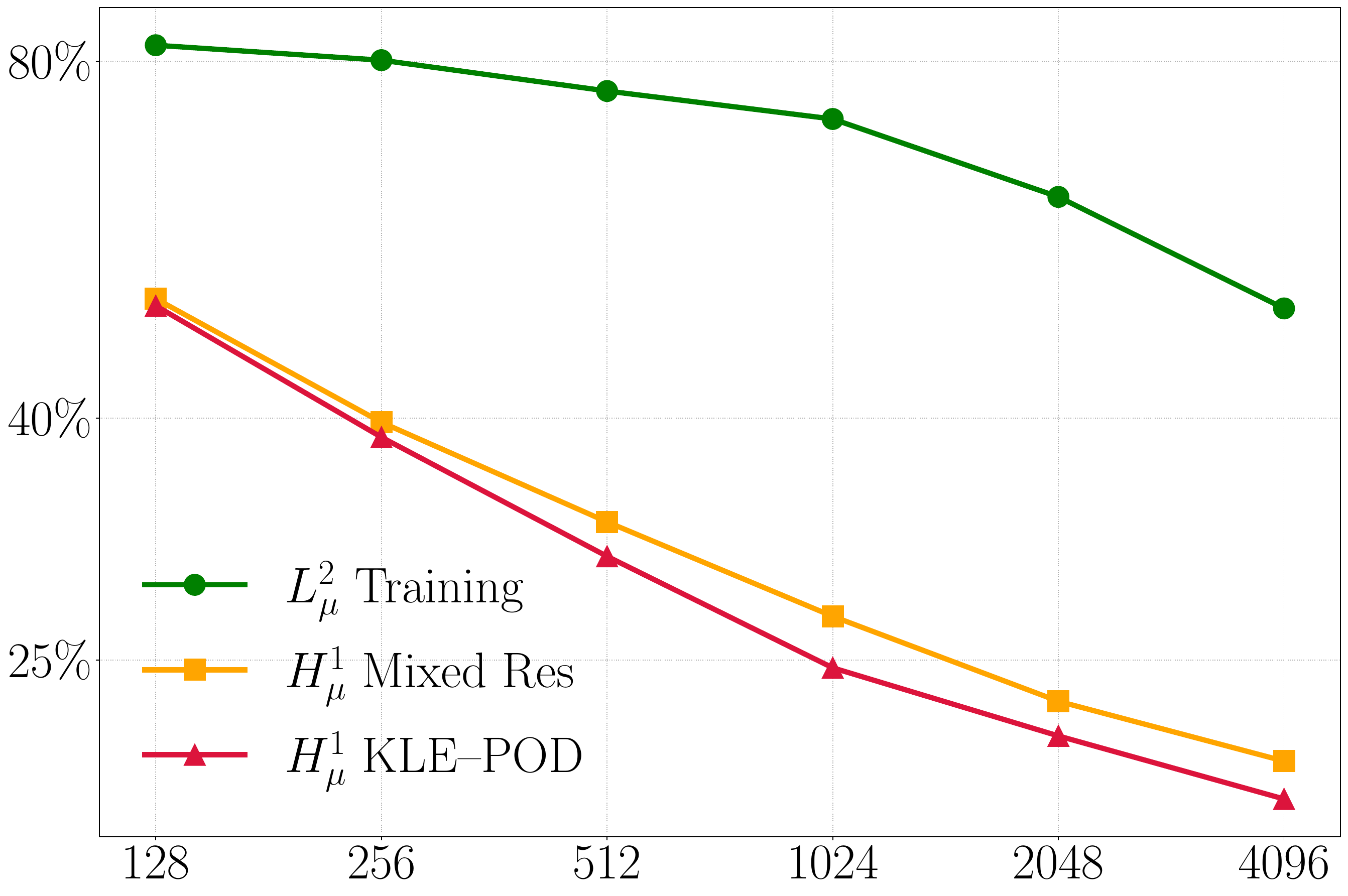} \\
        
        & \hspace{0.03\linewidth}Number of Training Samples & \hspace{0.03\linewidth}Number of Training Samples \\
    \end{tabular}

    \caption{The relative error in the PDE solutions and Fr\'echet derivative predictions for the Navier--Stokes equation. We train FNOs ($L^2_{\mu}$ Training), reduced-basis DIFNOs with KLE--POD dimension reduction ($H^1_{\mu}$ KLE--POD), and mixed-resolution DIFNOs ($H^1_{\mu}$ Mixed Res.) on a varying number of training samples. We use an additional 128 testing samples to evaluate the relative error.}
    \label{fig:ns_train}
\end{figure}

\begin{figure}[htb]
    \centering
    \small
    \begin{tabular}{c c c c c}
        \makecell{Data-Generating\\ Input Field (True)} \quad&
          \makecell{Inverse Solution \\ by PDE (Reference)} & \makecell{Inverse Solution \\ by FNO}&  \makecell{Inverse Solution \\ by DIFNO} &\\
         \includegraphics[width=0.21\linewidth]{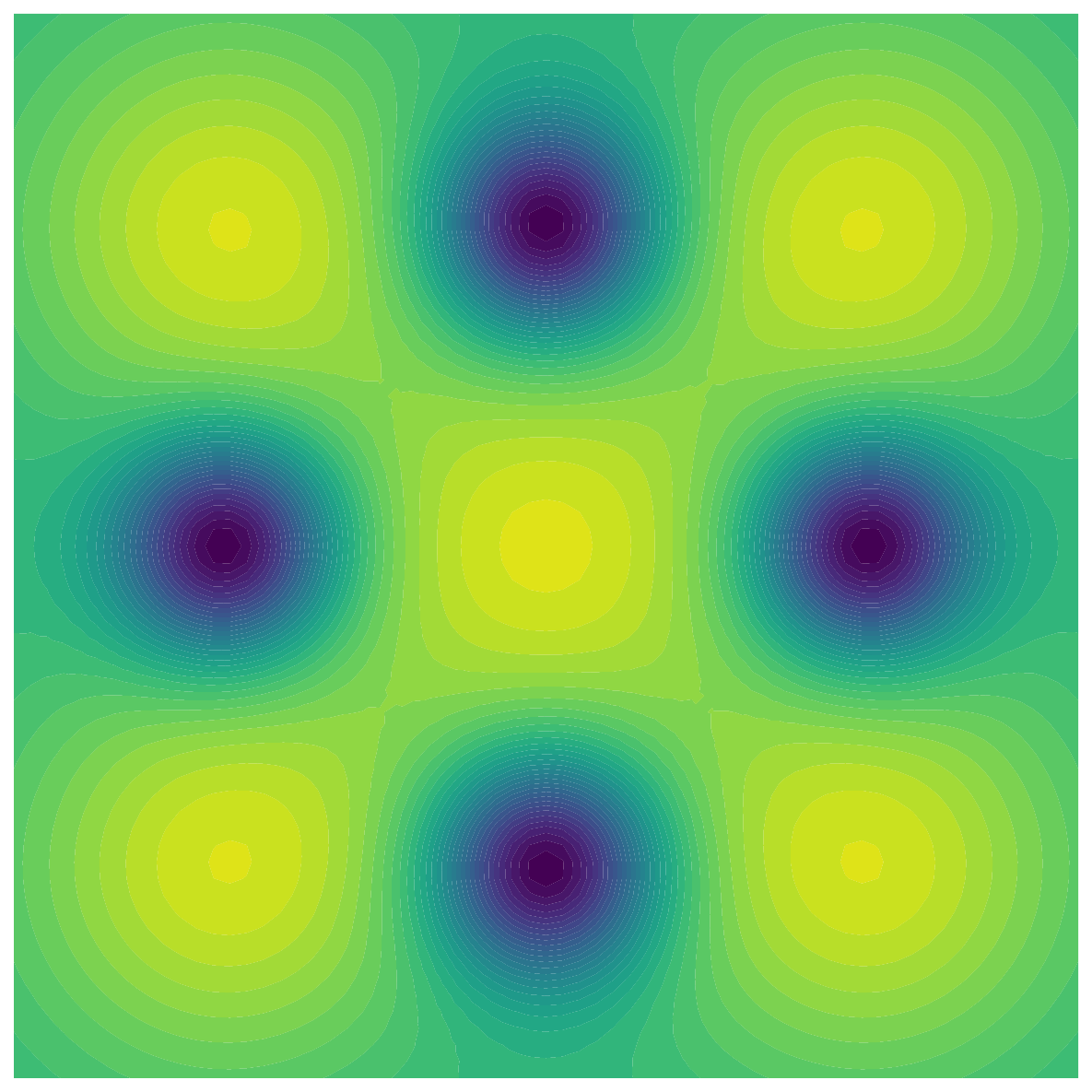} & 
         \includegraphics[width=0.21\linewidth]{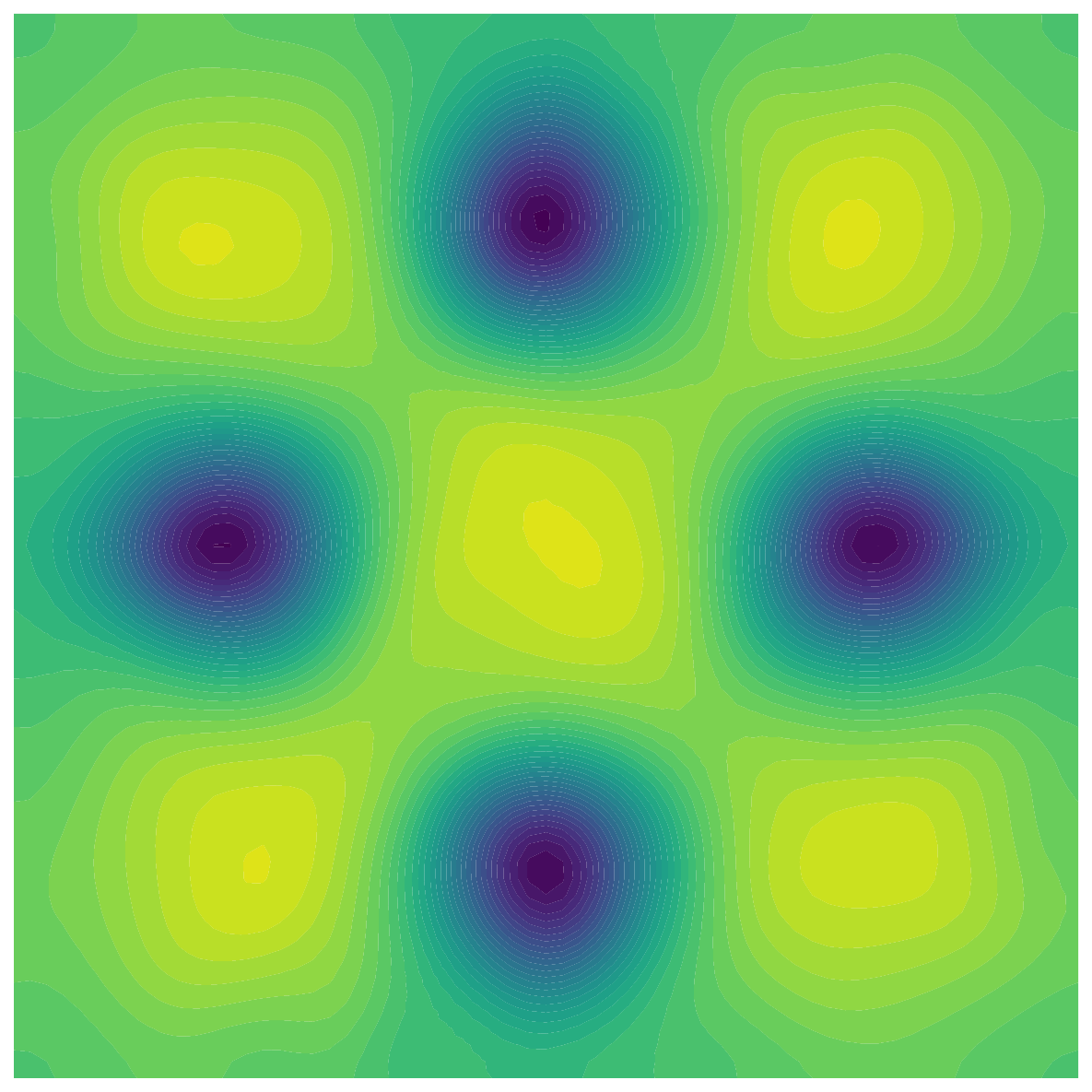} &
         \includegraphics[width=0.21\linewidth]{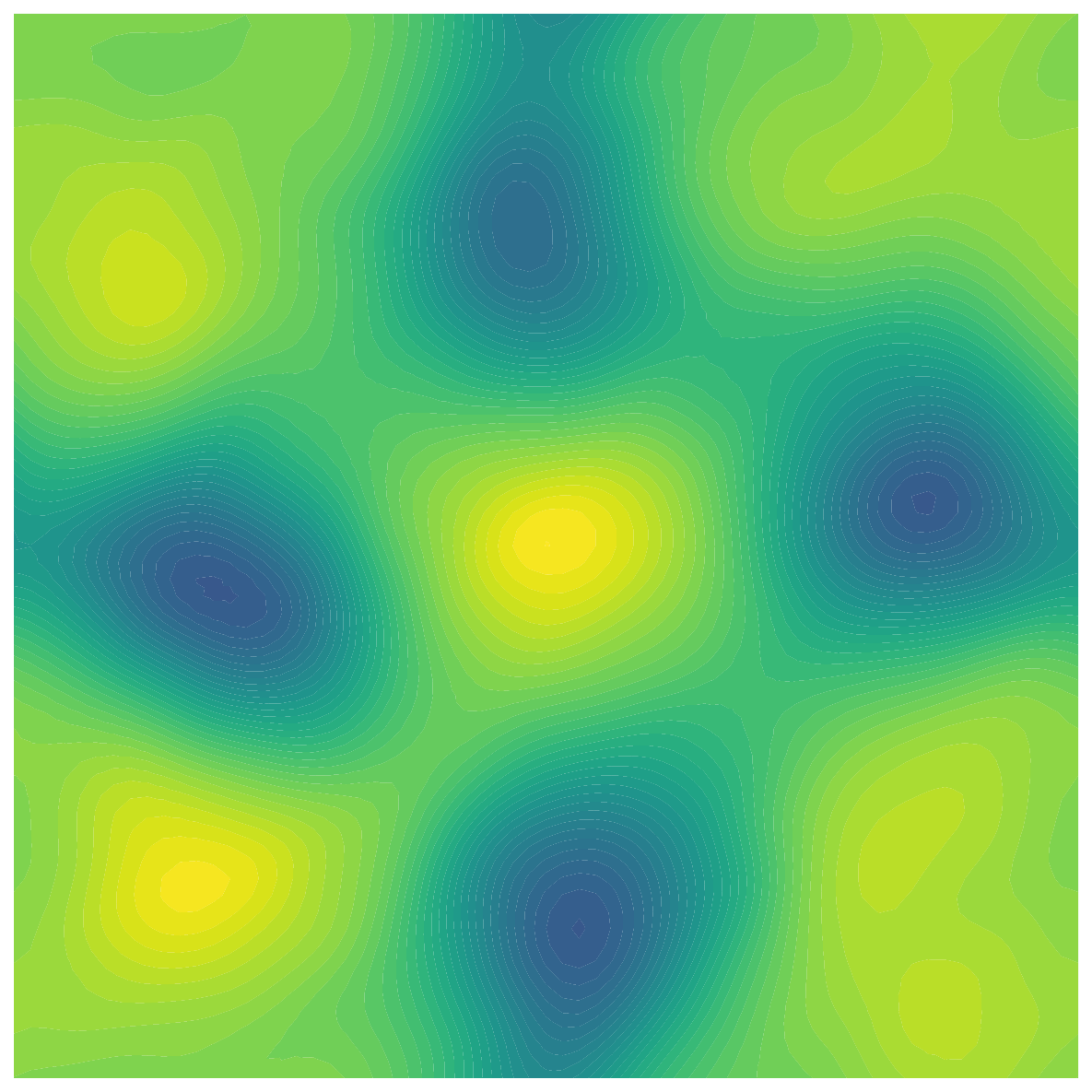} &
         \includegraphics[width=0.21\linewidth]{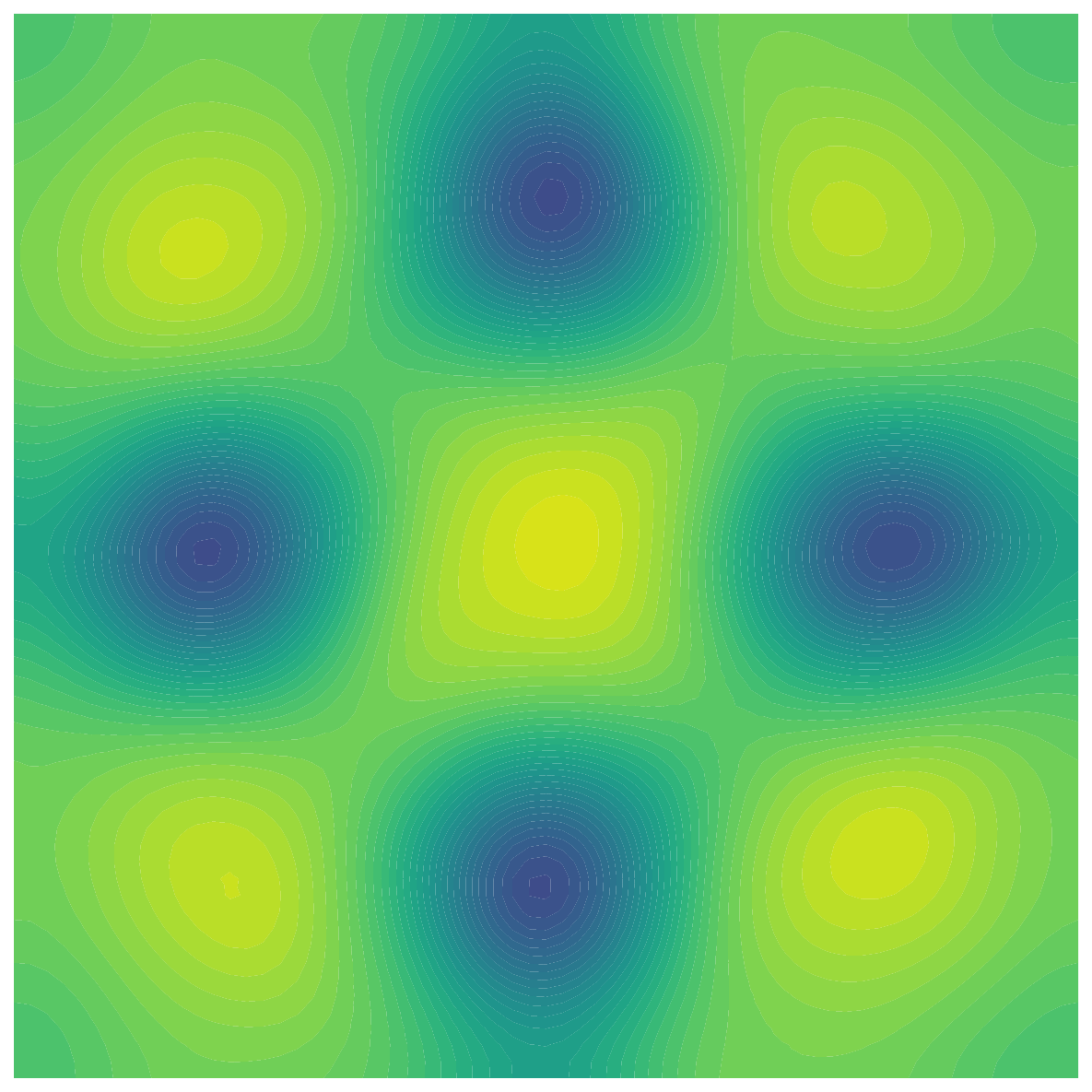} & 
         \includegraphics[width=0.04\linewidth]{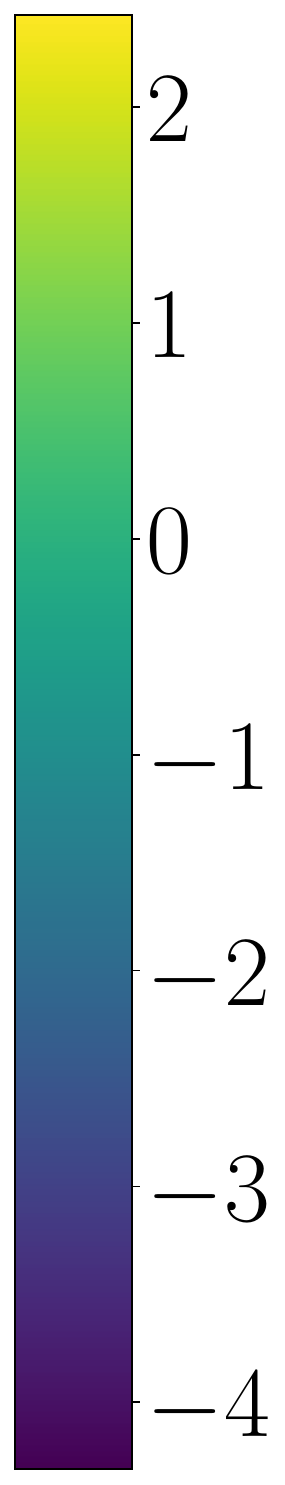}
    \end{tabular}
    \caption{Solutions of inverse problems based on the Navier--Stokes equation in \Cref{fig:ns_param_state}. The left figure is the data-generating (true) vorticity field input. Then, from left to right, are inverse solutions obtained by PDE, the best-performing FNO, and the best-performing DIFNO.
    }
    \label{fig:ns_map}
\end{figure}

\begin{figure}[htb]
\centering
\scalebox{0.9}{     \renewcommand{\arraystretch}{1.5}
\begin{tabular}{
|>{\centering\arraybackslash} m{0.115\linewidth}||
>{\centering\arraybackslash} m{0.13\linewidth}|
>{\centering\arraybackslash} m{0.13\linewidth}|
>{\centering\arraybackslash} m{0.13\linewidth}|
>{\centering\arraybackslash} m{0.13\linewidth}|
>{\centering\arraybackslash} m{0.1\linewidth}|}\hline
    \multicolumn{6}{|c|}{\makecell{Errors in the Inverse Solutions for Navier--Stokes Equation}} \\\hline\hline
    &\multicolumn{4}{c|}{Number of Training Samples} & \multirow{2}{*}{\makecell{Pointwise\\Error}}\\\cline{1-5}
    & 512 & 1024 & 2048 & 4096 & \\\hline
    FNO& \raisebox{-0.13\linewidth}{\includegraphics[width=0.9\linewidth]{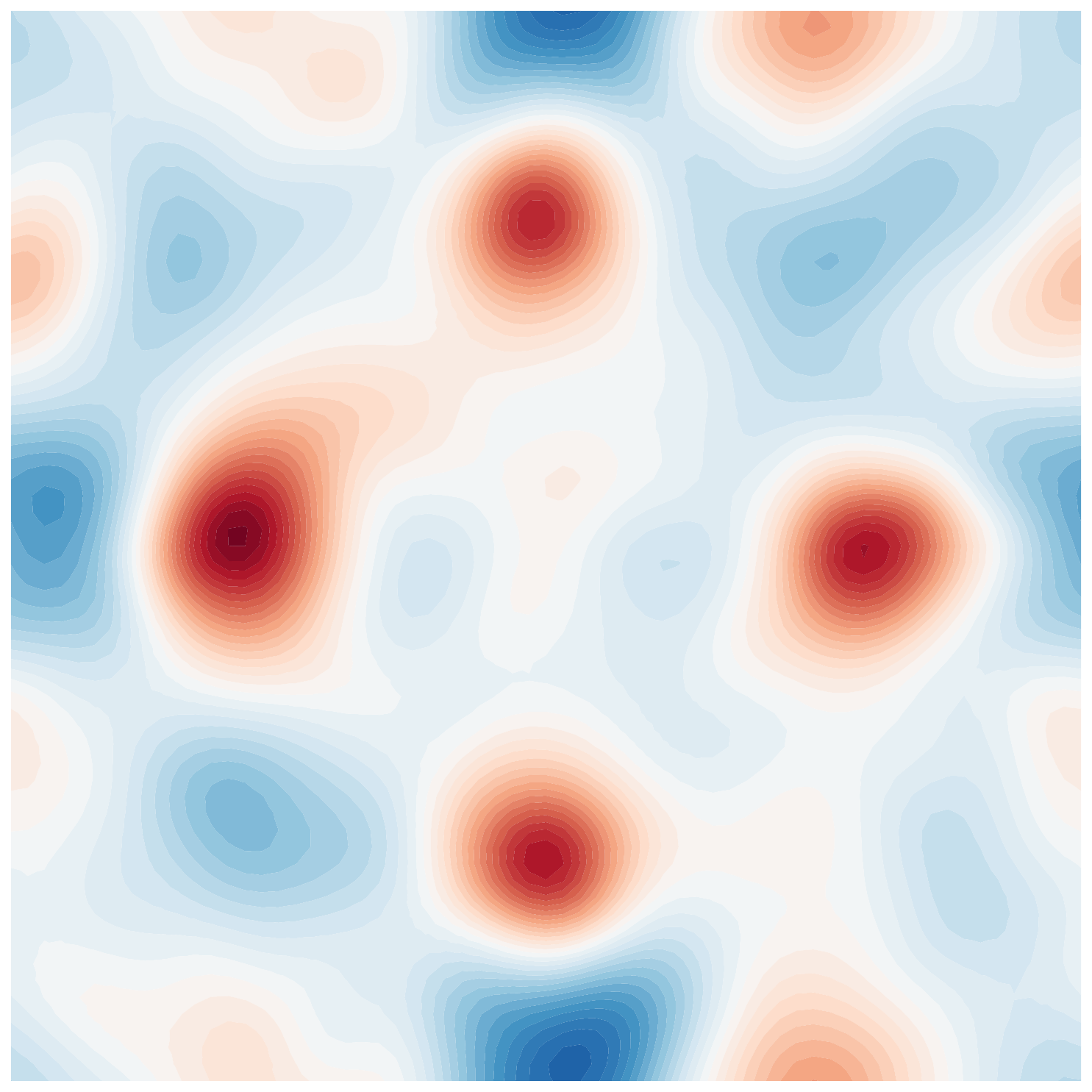}} & \raisebox{-0.13\linewidth}{\includegraphics[width=0.9\linewidth]{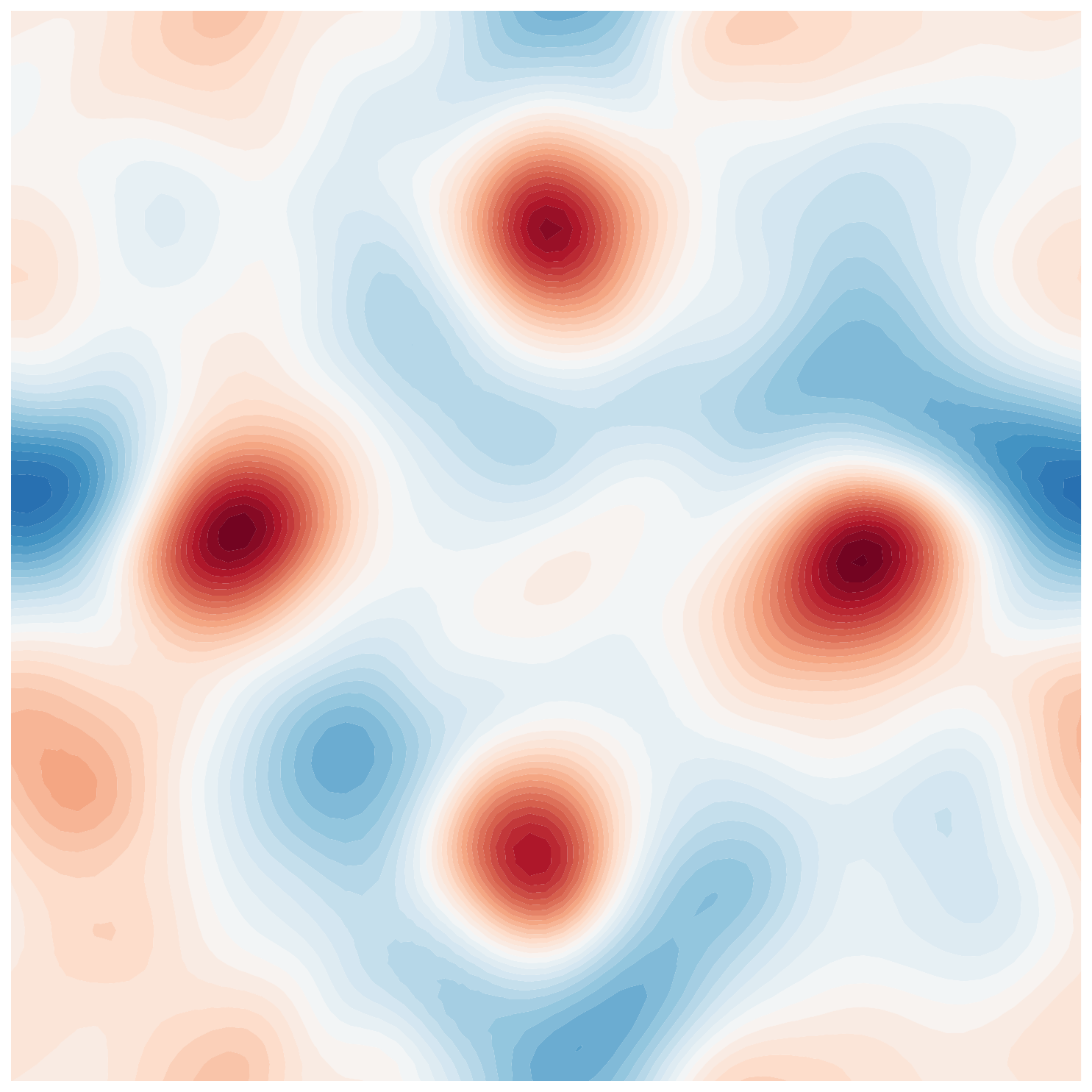}} & \raisebox{-0.13\linewidth}{\includegraphics[width=0.9\linewidth]{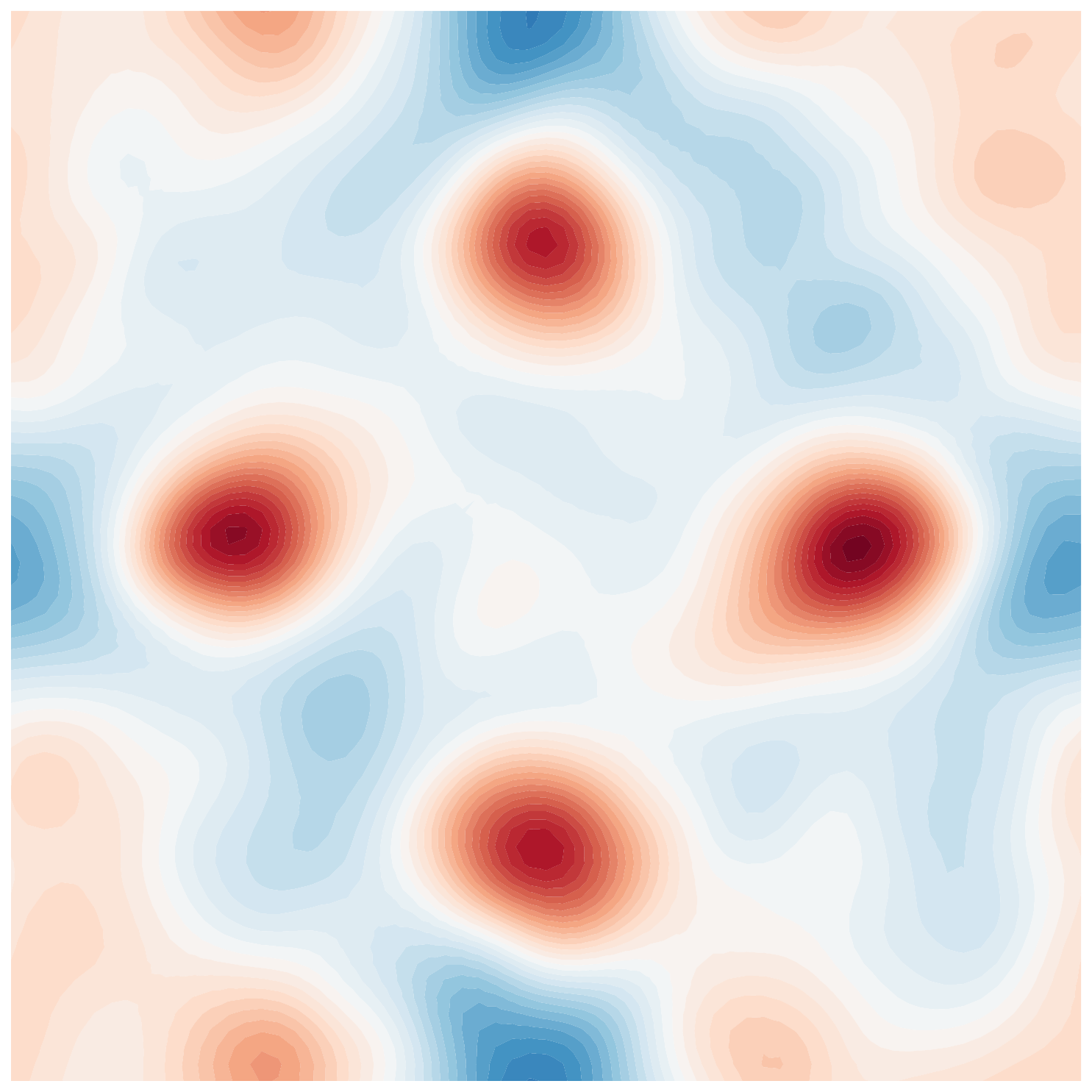}} & \raisebox{-0.13\linewidth}{\includegraphics[width=0.9\linewidth]{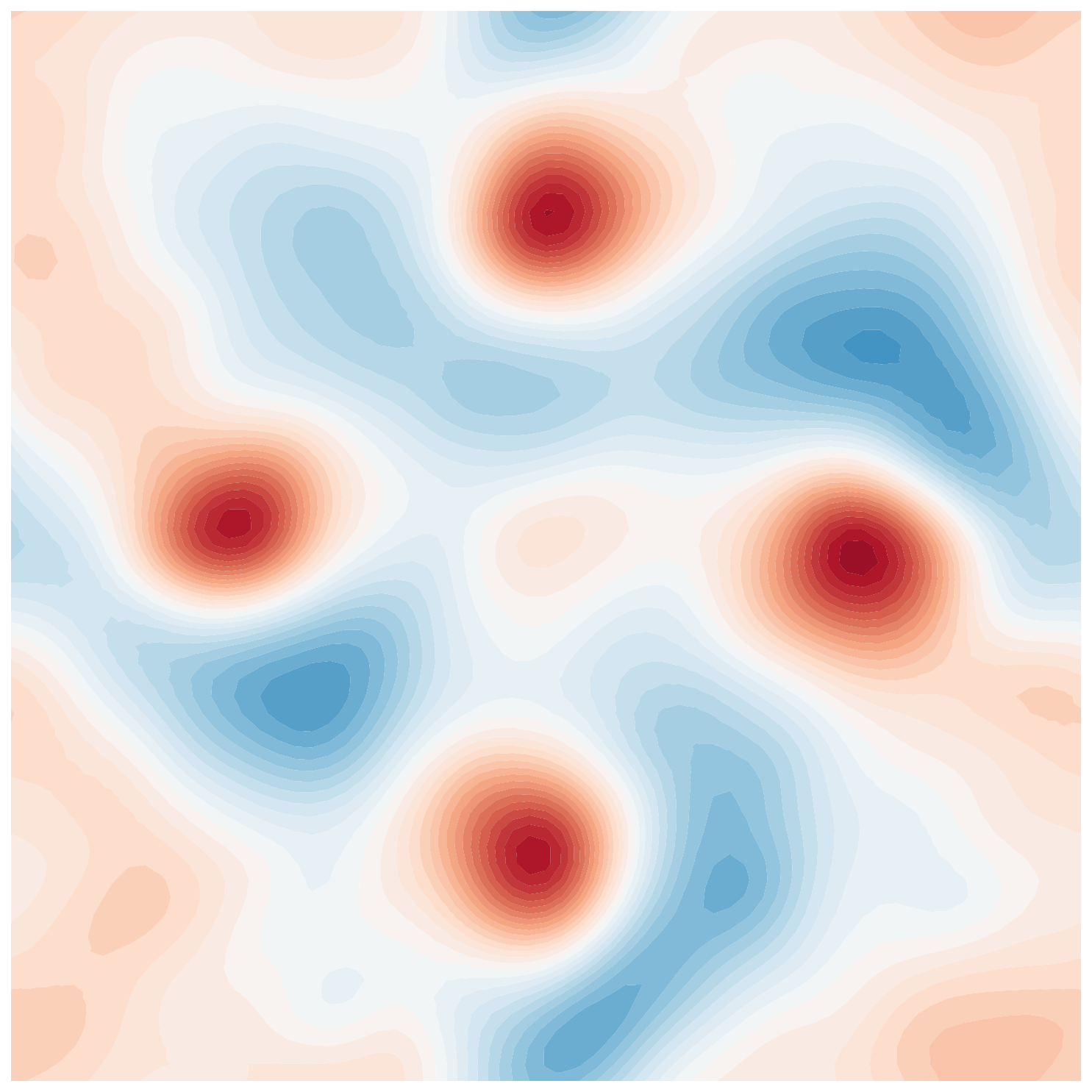}} & \multirow{5}{*}{\includegraphics[width=0.32\linewidth]{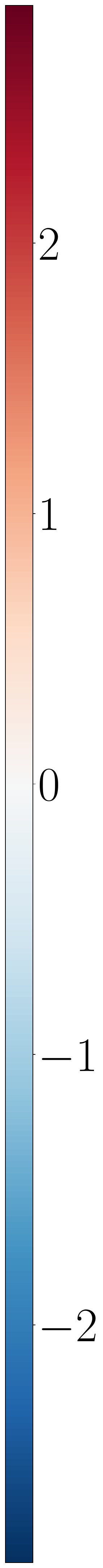}}\\\cline{1-5}
    \makecell{Relative\\Error}& \quad 52.17 \% & \quad 54.84 \% & \quad 49.65 \% & \quad 52.42 \% & \\\cline{1-5}
    \makecell{Red.-Basis\\DIFNO}& \raisebox{-0.13\linewidth}{\includegraphics[width=0.9\linewidth]{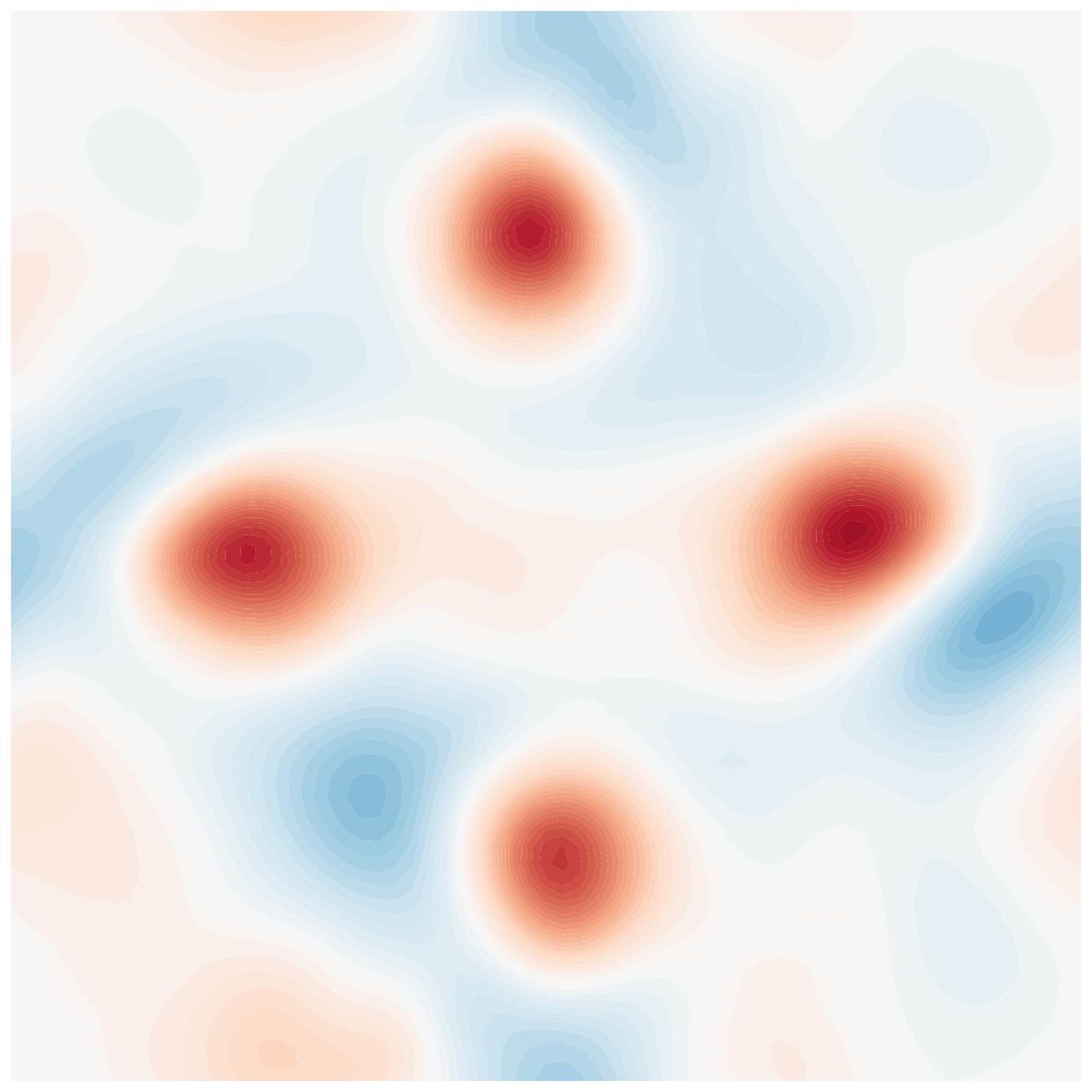}} & \raisebox{-0.13\linewidth}{\includegraphics[width=0.9\linewidth]{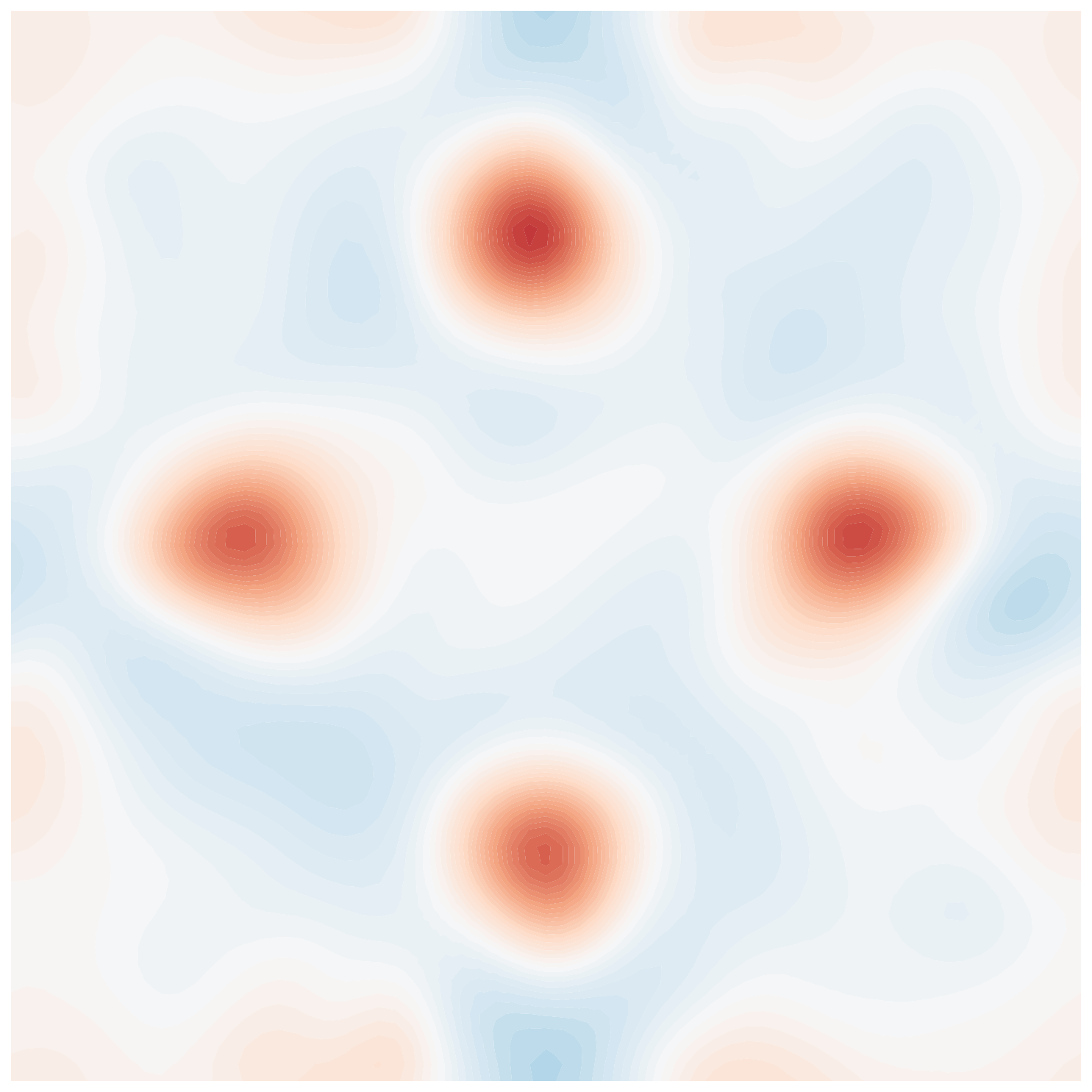}} & \raisebox{-0.13\linewidth}{\includegraphics[width=0.9\linewidth]{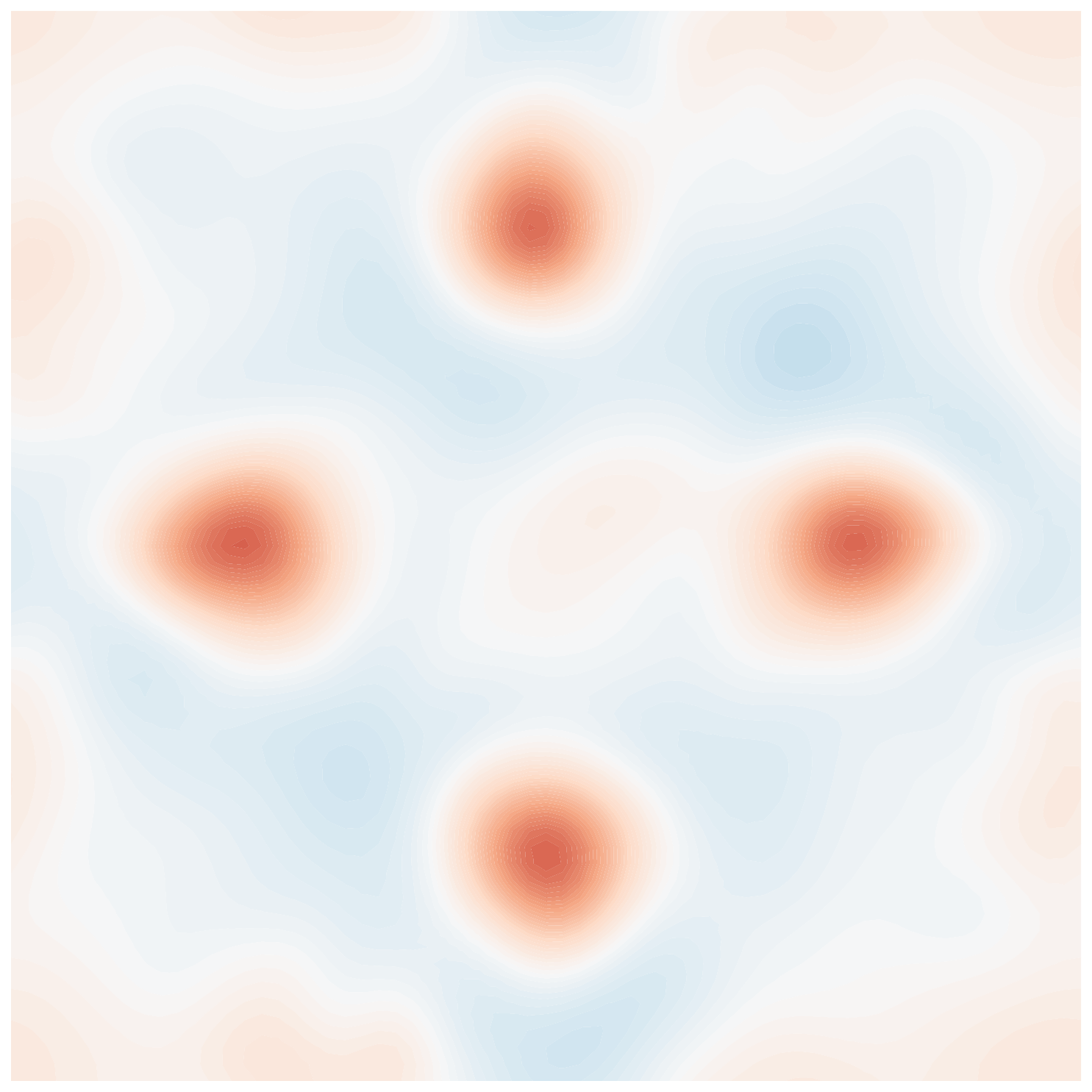}} & \raisebox{-0.13\linewidth}{\includegraphics[width=0.9\linewidth]{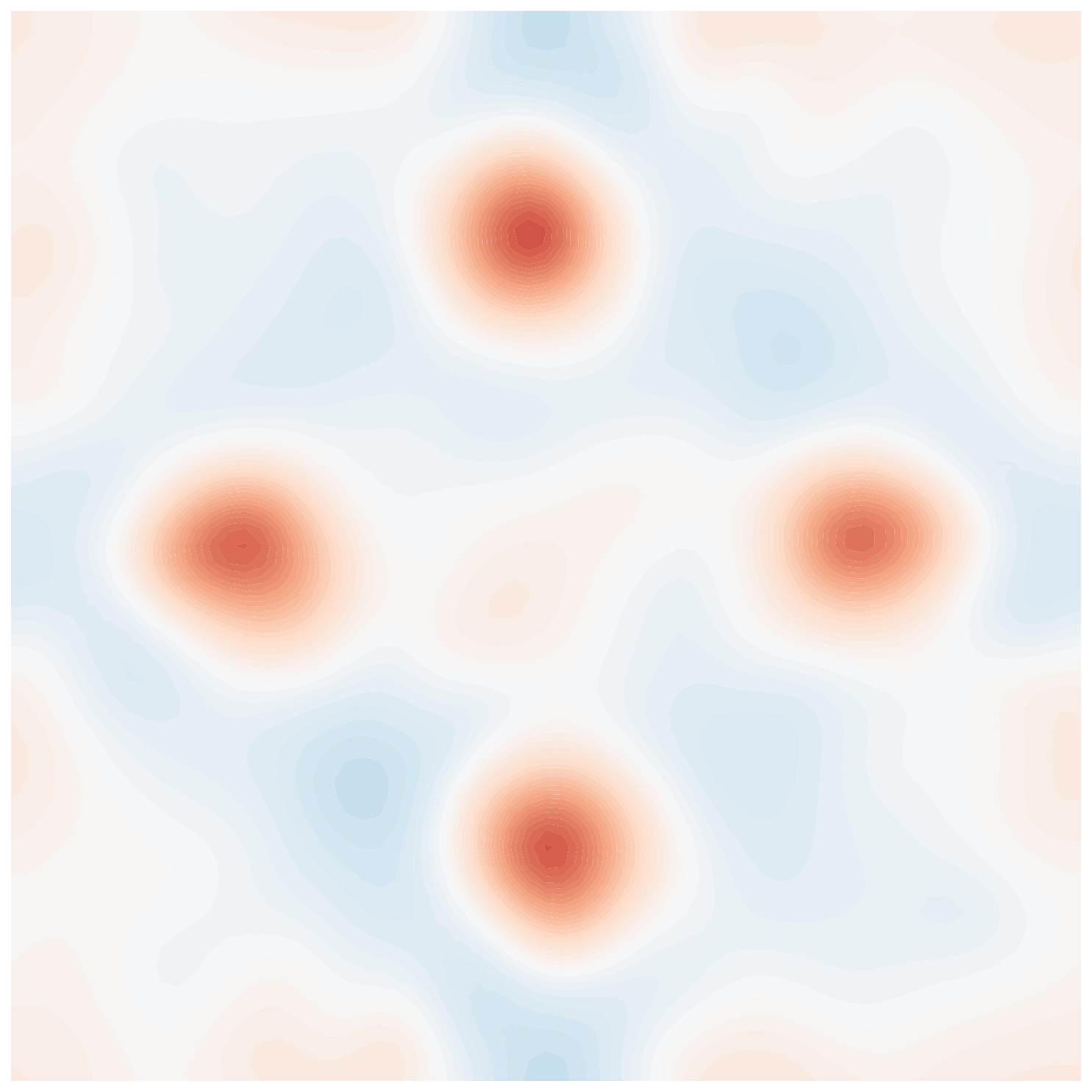}} & \\\cline{1-5}
    \makecell{Relative\\Error}& \quad 39.58 \% & \quad 29.71 \% & \quad \textbf{26.63 \%} & \quad 27.27 \% &\\\cline{1-5}
    \makecell{Mixed-Res.\ \\DIFNO}& \raisebox{-0.13\linewidth}{\includegraphics[width=0.9\linewidth]{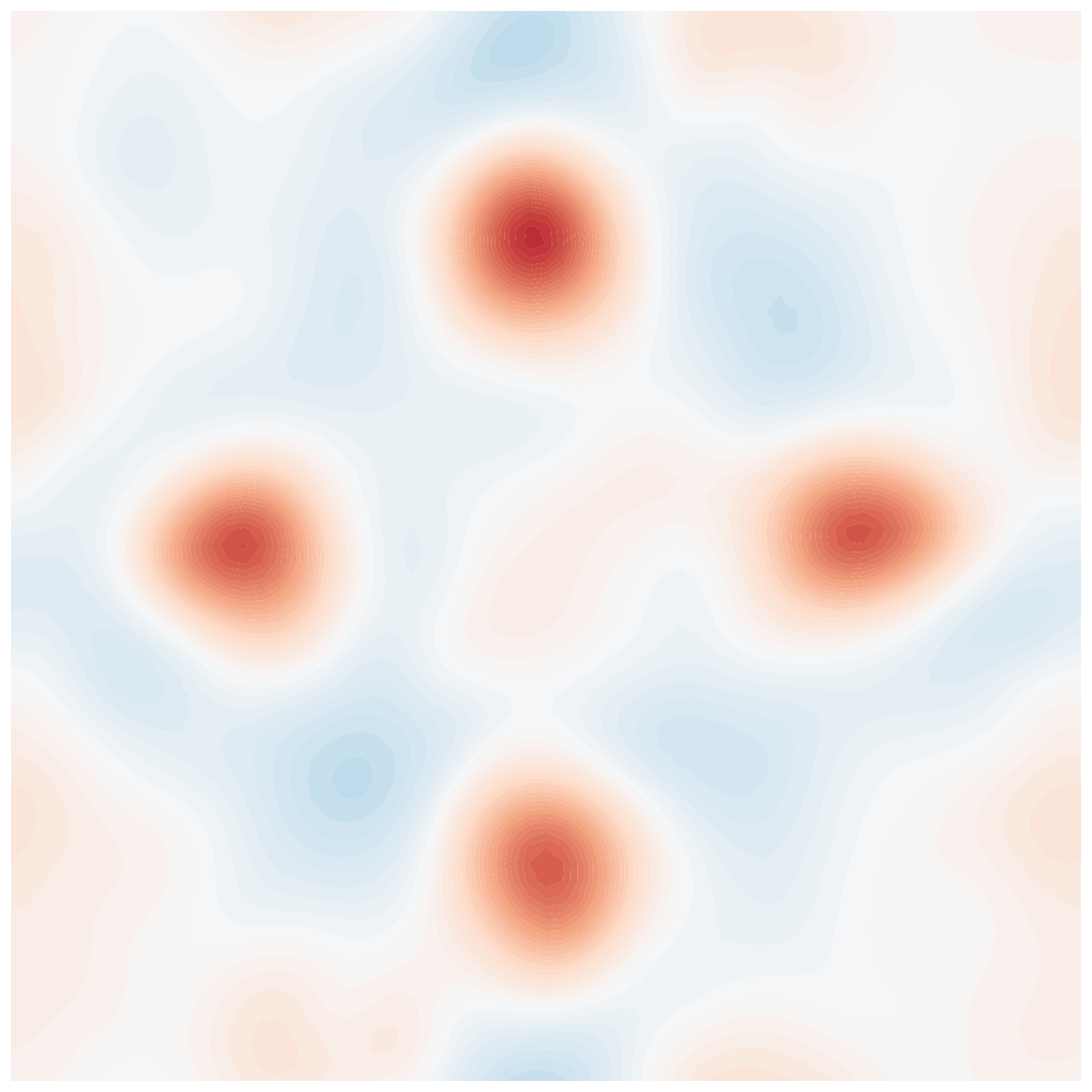}} & \raisebox{-0.13\linewidth}{\includegraphics[width=0.9\linewidth]{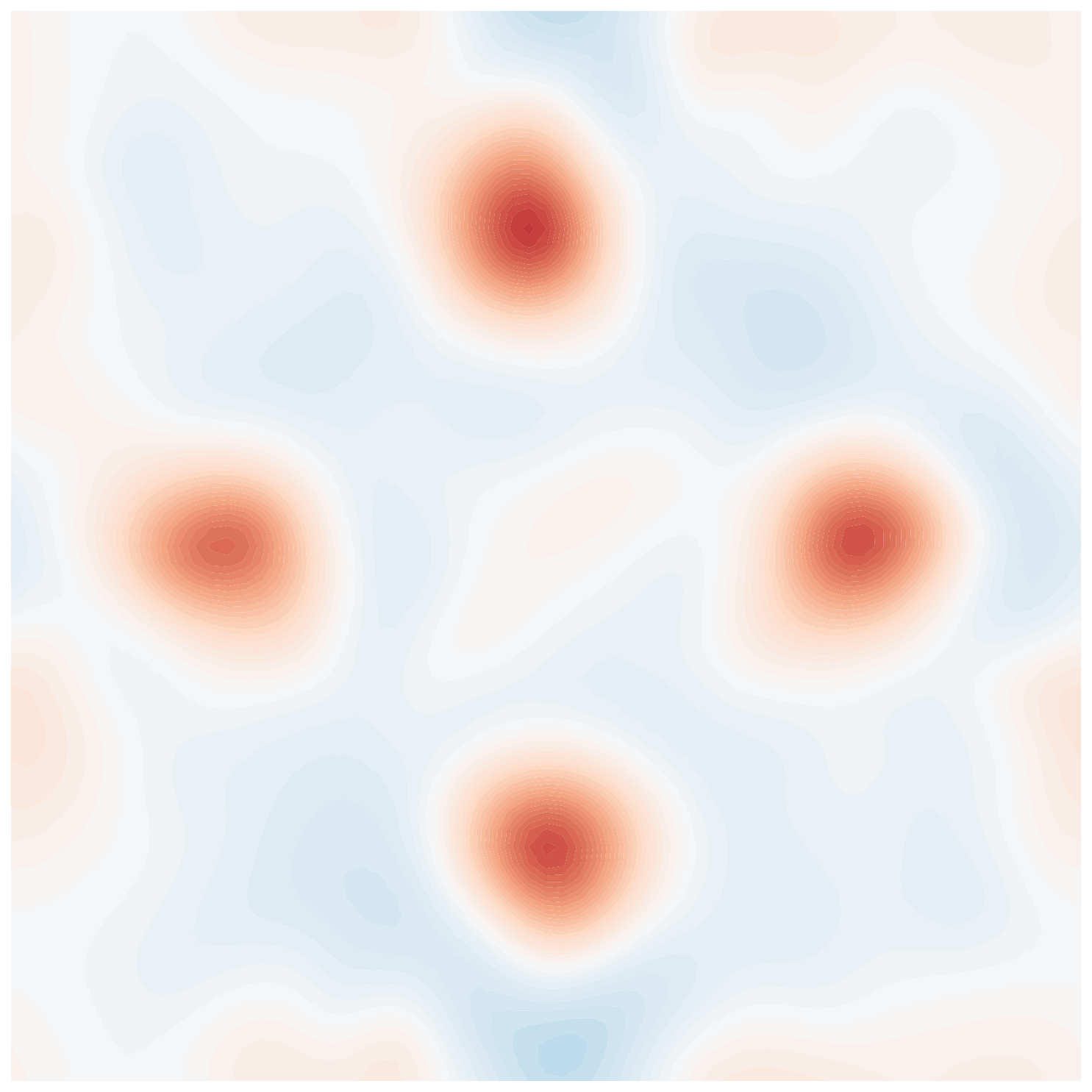}} & \raisebox{-0.13\linewidth}{\includegraphics[width=0.9\linewidth]{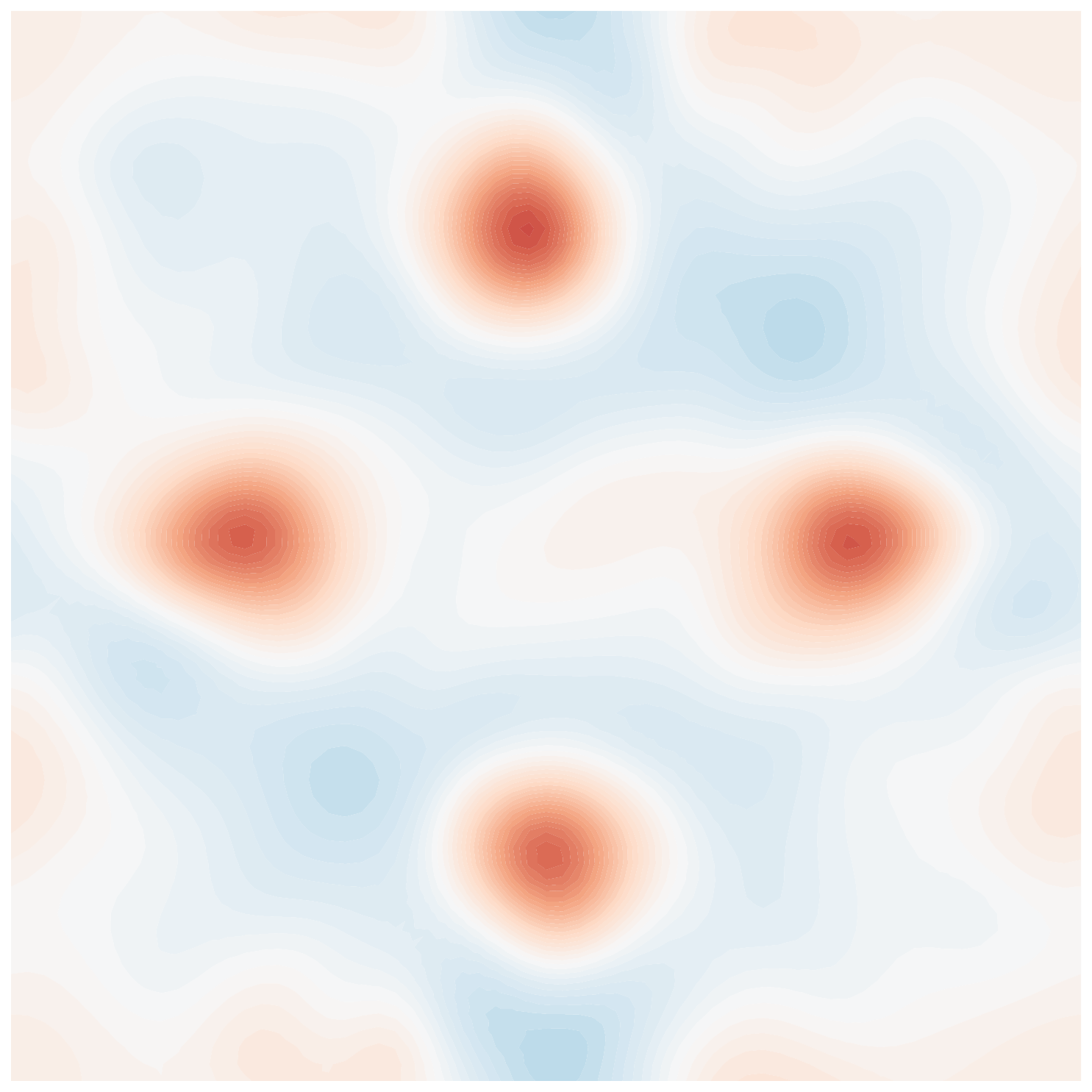}} & \raisebox{-0.13\linewidth}{\includegraphics[width=0.9\linewidth]{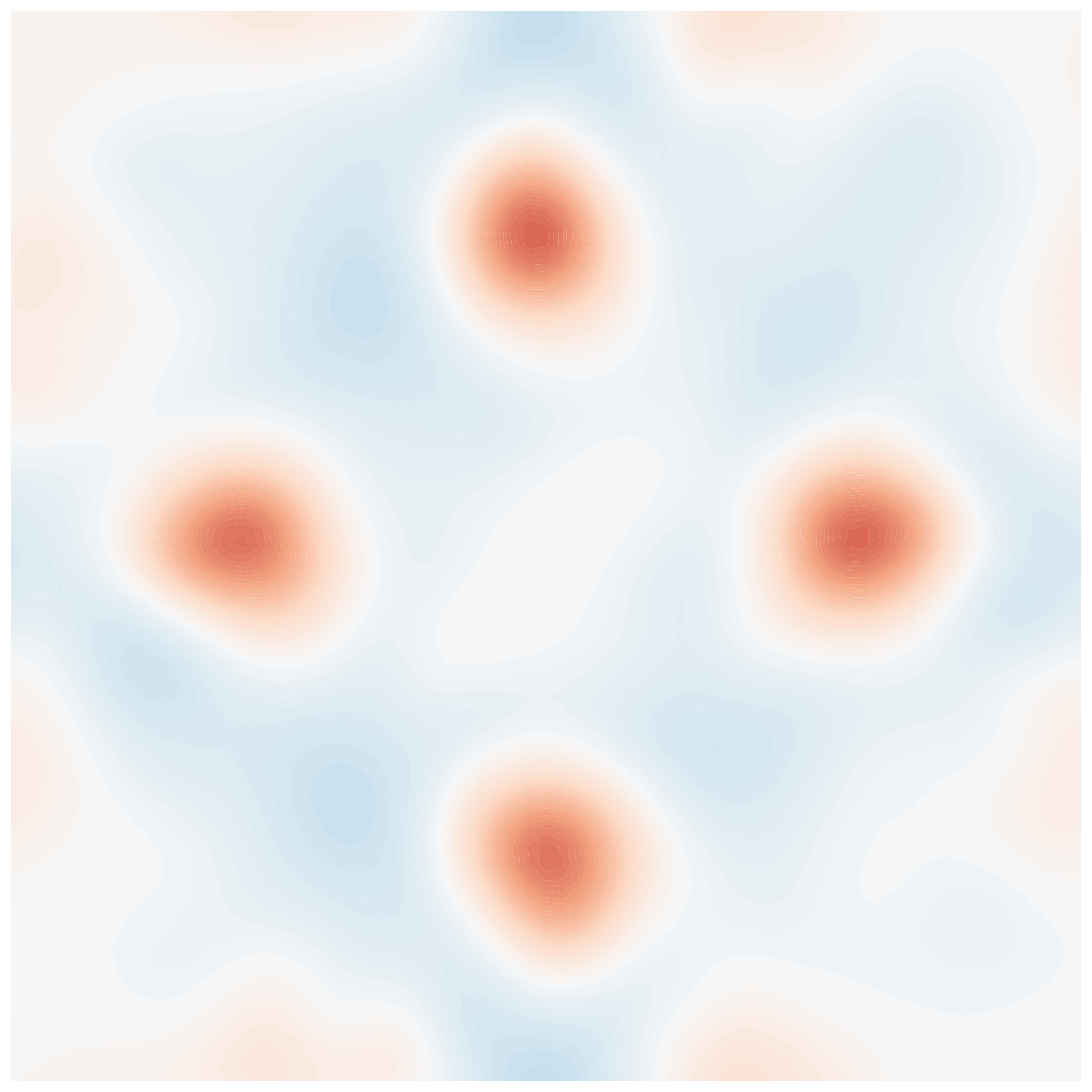}} & \\\cline{1-5}
    \makecell{Relative\\Error}& \quad \textbf{30.13 \%} & \quad \textbf{28.69 \%} & \quad 29.61 \% & \quad \textbf{26.77 \%} & \\\hline

\end{tabular}
    \renewcommand{\arraystretch}{1}
}
    \caption{Spatial pointwise errors in the inverse solutions by FNOs and DIFNOs for the Navier--Stokes equation when compared to the reference solution obtained by the PDE model. The relative spatial $L^2$ errors in the inverse solutions are shown below each plot.}
    \label{fig:ns_map_comparison}
\end{figure}

\subsection{Comparison of Intermediate Resolution Used in Mixed-Resolution Training}\label{sec:mixed_res_ablation}

As discussed in \Cref{sec:mixed_res}, the discrepancy between the high-resolution $\Lambda_H$ and low-resolution $\Lambda_L$ scales can introduce an irreducible gap in the derivative loss. To address this, we investigate the influence of an intermediate resolution, $\Lambda_M$, on mixed-resolution DINO performance. We conduct an ablation study using the nonlinear diffusion--reaction equation, maintaining the data-generation protocol and model architecture detailed in the previous section. \Cref{fig:mixed_res_ablation} presents the error metrics for models trained with intermediate resolutions of $17 \times 17$ and $33 \times 33$, alongside a comparison with the standard $L^2_\mu$ training formulation. Our results indicate that the resolution of $\Lambda_M$ correlates positively with model performance; when the gap between $\Lambda_M$ and $\Lambda_H$ is substantial, the empirical convergence rates for both error measures are significantly degraded.
\begin{figure}[htb]
    \centering
    \renewcommand{\arraystretch}{1.2} 
    \setlength{\tabcolsep}{5pt}       

    \begin{tabular}{c c c}
        & \hspace{0.03\linewidth}\makecell{Nonlinear Diffusion--Reaction \\
        PDE Solution Prediction Error} & \hspace{0.03\linewidth}\makecell{Nonlinear Diffusion--Reaction \\
        Fr\'echet Derivative Prediction Error} \\
        
        \raisebox{3em}{\rotatebox{90}{Relative Error}} & 
        \includegraphics[width=0.42\linewidth]{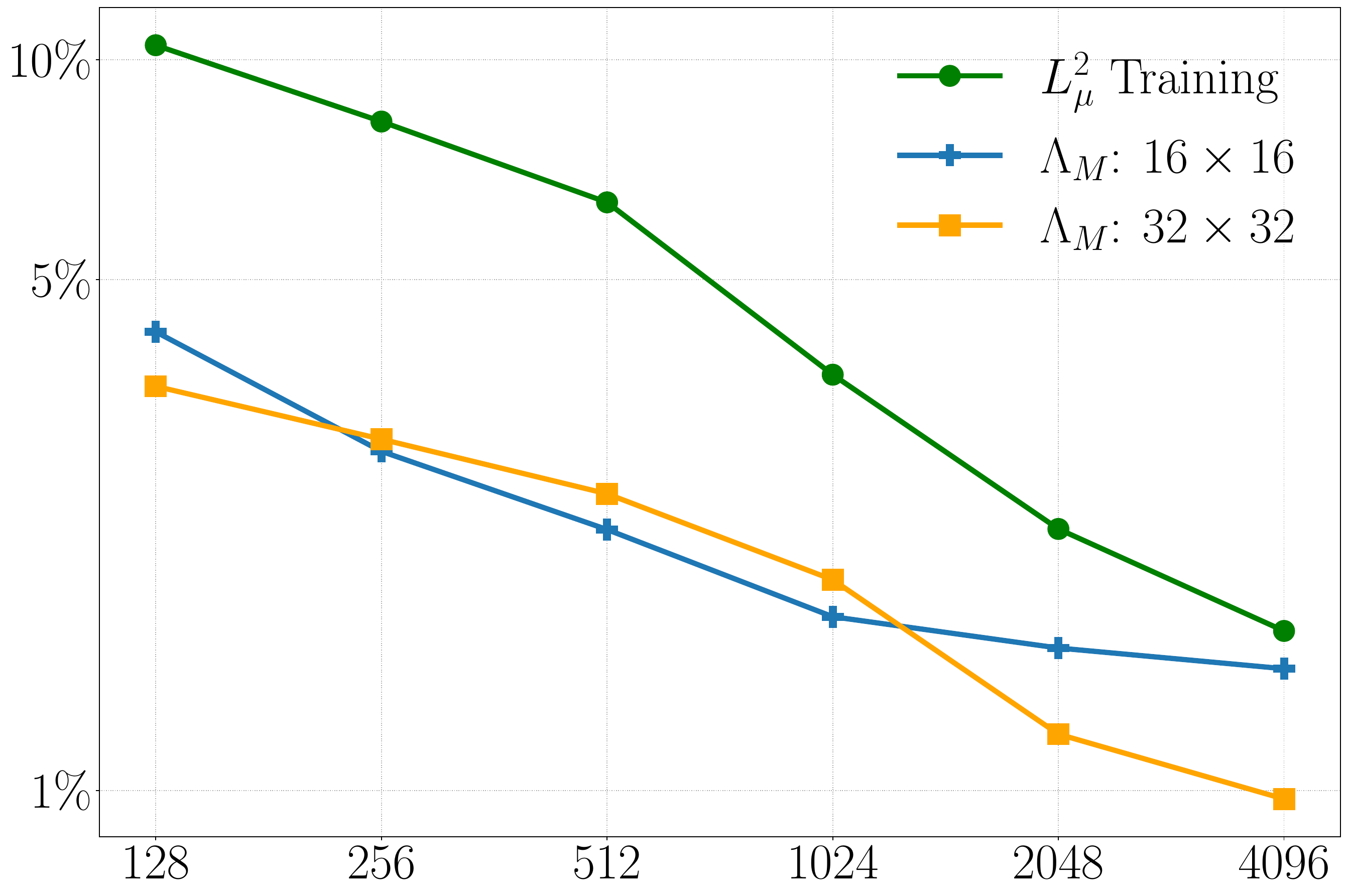} & 
        \includegraphics[width=0.42\linewidth]{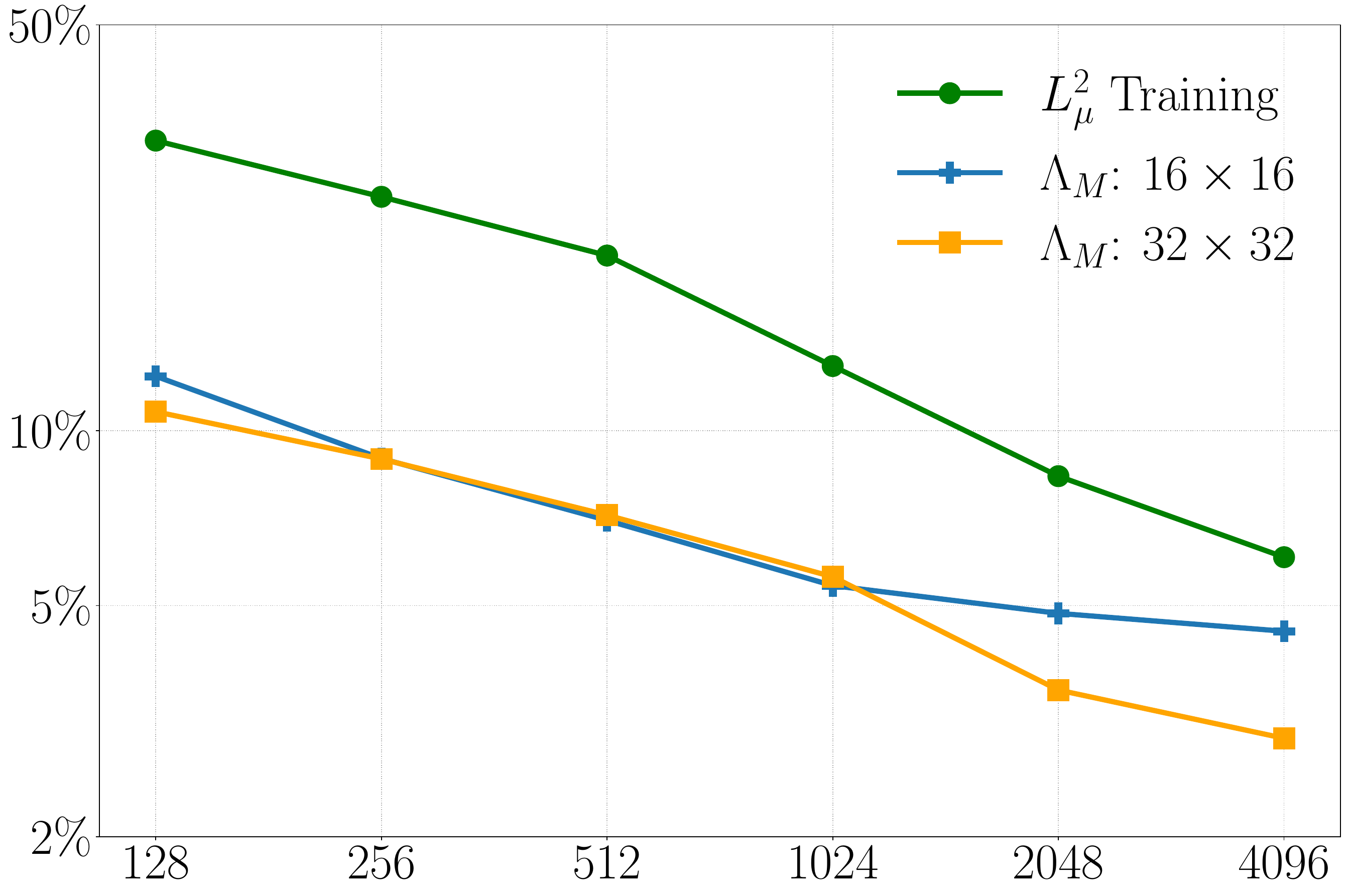} \\
        
        & \hspace{0.03\linewidth} Number of Training Samples & \hspace{0.03\linewidth} Number of Training Samples \\
    \end{tabular}

    \caption{
    Generalization error on the nonlinear diffusion--reaction problem with different intermediate resolution settings.
    The high-resolution $\Lambda_H$ is set to $64 \times 64$, and low-resolution $\Lambda_L$ is set to $16 \times 16$.
    We use two different intermediate-resolutios $\Lambda_M$: $16 \times 16$, $32 \times 32$.
    We can see that aggressive resolution reduction will create a significant performance deterioration.}
    \label{fig:mixed_res_ablation}
\end{figure}

\subsection{Comparison of Training Time}
We report the training times for a single batch corresponding to each training formulation in the benchmark problems in \Cref{tab:training_cost}, 
and the timing of the mixed-resolution ablation study in \Cref{tab:ablation_cost}.
Timing is carried out on the compute nodes of the Vista HPC system at TACC, where each compute node hosts one Nvidia GH200 Grace Hopper superchip.

\begin{table}[htb]
\centering
\caption{Training time for a single batch for different training formulations. 
Reported timings correspond to one training iteration over a batch of 32 samples.
The daggered time and the starred time is reported as the aggregated GPU time over 2 GPUs and 8 GPUs respectively due to memory constraints,
while all other times are obtained using a single GPU.
Note that the high resolution grids $\Lambda_H$ and intermediate resolution grids $\Lambda_M$ for mixed-resolution formulation are set differently for each problem.
}
\label{tab:training_cost}

\scalebox{0.95}{
\begin{tabular}{|c|c|c|c|}
\hline
& FNO & DIFNO (Mixed Res.) & DIFNO (Reduced Basis) \\
\hline
\makecell{Nonlinear Diffusion--Reaction\\$\Lambda_H: 65\times65 \quad \Lambda_M: 33 \times 33$} & 0.014s & 0.44s & $1.67\text{s}^\dagger$ \\
\hline
\makecell{Helmholtz\\$\Lambda_H: 105\times105 \quad \Lambda_M: 27\times27$} & 0.024s & 0.29s & $3.01\text{s}^*$ \\
\hline
\makecell{Navier--Stokes\\$\Lambda_H: 64 \times 64 \quad \Lambda_M: 32 \times 32$} & 0.014s & 0.51s & $1.59\text{s}^\dagger$ \\
\hline
\end{tabular}
}
\end{table}

\begin{table}[htb]
\centering
\caption{Training time for a single batch for the mixed-resolution ablation study in \Cref{sec:mixed_res_ablation}. 
The high resolution grid $\Lambda_H$ here has the resolution $65 \times 65$.}
\label{tab:ablation_cost}

\begin{tabular}{|c|c|c|}
\hline
FNO & Mixed Res. $\Lambda_M: 17 \times 17$ & Mixed Res. $\Lambda_M: 33 \times 33$ \\
\hline
0.014s & 0.19s & 0.44s  \\
\hline
\end{tabular}
\end{table}

While DIFNO generally incurs higher training costs, these costs are often justified in inverse problems due to the importance of derivative accuracy, as seen in our benchmark problems.
Moreover, the mixed-resolution strategy provide effective means to mitigate this cost.
In particular, the mixed-resolution approach introduces a cost-accuracy trade-off at the training stage by choosing the intermediate resolution $\Lambda_M$, thereby allowing one to interpolate between a high-resolution DIFNO and a low-resolution DIFNO in terms of training time and accuracy.
Finally, for more complex PDE systems, data generation typically dominates the overall surrogate construction cost, in which case the additional training expense associated with DIFNO becomes relatively minor. We further emphasize that this cost is incurred only during offline training and has no impact on the model's online cost once employed in PDE-constrained optimization problems.

\section{Conclusions}\label{section:conclusions}
In this work, we established the theoretical and computational foundations for derivative-informed neural operators (DIFNOs). This development is motivated by the need to accelerate the solution of PDE-constrained optimization problems (e.g., inverse problems, optimal design, and optimal control), in which repeated solutions of the PDE model and its sensitivities across different inputs are required. We consider using a cheap-to-evaluate surrogate of the PDE solution operator to replace the PDE model in optimization. We first show in \Cref{prop:surrogate_optimization_error,prop:surrogate_optimization_error_strongly_convex} that accurate surrogate-driven optimization solutions require accurate surrogate predictions of not only the PDE solutions but also their Fr\'echet derivatives with respect to the parametric inputs. These results provide strong support for derivative-informed operator learning over the conventional method, as the former simultaneously minimizes errors in surrogate predictions of the PDE solutions and Fr\'echet derivatives using samples. Additionally, it motivates our subsequent analysis on the approximation capability of FNOs for derivative-informed operator learning.

First, we established simultaneous universal approximation of continuously differentiable operators and their Fréchet derivatives by FNOs on compact sets (\Cref{theorem:main_ua}). Our analysis extends the strategies in \cite{kovachki2021universal} to also consider approximating the Fr\'echet derivative. This result certifies the approximation capability of FNOs for derivative-informed operator learning. This result also supplies the conditions to prove that the FNO is capable of solving inverse problems arbitrarily well (\Cref{theorem:inverse_problem_ua}).

Second, we established universal approximation results for FNOs in weighted Sobolev spaces (\Cref{theorem:main_ua_mu}). These results rely on a novel cutoff argument to extend the universal approximation on compact sets to the setting where the input is unbounded, in the sense that the input measures of the weighted Sobolev spaces have unbounded topological supports. This result holds for continuously differentiable FNOs with GELU-like activation functions (\Cref{def:activation}), provided that the input measures and the high-fidelity operators satisfy mild assumptions (\Cref{assumption:measurability}). These results provide a more practical approximation theory for DIFNOs, as we use the norm on the weighted Sobolev space as the loss function in derivative-informed operator learning.

Having laid out a solid foundation for DIFNO approximation theory, we propose dimension-reduction and multi-resolution techniques for efficient DIFNO training, whose computational and memory costs increase rapidly with grid size under a naive implementation. First, the dimension-reduction technique uses reduced bases in relevant subspaces of the input and output spaces, with grid-independent dimensionality. The reduced bases are used to compress the derivative loss, making it active only in the latent spaces, thereby alleviating the dependence of the data-generation and training costs of DIFNOs on the grid size. Furthermore, the multi-resolution technique generates compressed derivative samples at low resolution and evaluates the derivative loss at low resolution, thereby drastically reducing data generation and training costs.

Finally, we conduct numerical experiments that use DIFNOs and conventional FNOs to learn solution operators for PDEs and to solve inverse problems. The PDEs considered are nonlinear diffusion--reaction, Helmholtz, and Navier--Stokes equations. Several key observations and inductions are made from the numerical results:

\begin{enumerate}[(i)]
    \item The DIFNOs consistently achieve sample complexity more than an order of magnitude lower than that of the FNOs for learning to predict PDE solutions and Fr\'echet derivatives.
    \item The reduced-basis DIFNOs consistently achieve better sample complexity than the mixed-resolution DIFNOs, while the mixed-resolution DIFNOs are much cheaper to train. Moreover, the accuracy of the mixed-resolution DIFNO is closely related to the difference between the low and high resolutions.
    \item The DIFNOs are much more reliable than the FNOs when used to solve inverse problems, particularly at low training sample sizes.
    \item For the Navier--Stokes equations, the FNOs see improved predictions of PDE solutions as the training sample size increases, yet they struggle to improve their Fr\'echet derivative predictions compared to the DIFNOs. We observe that the inverse solutions by the FNOs also struggle to converge to the reference compared to those of DIFNOs. This result indicates that the fidelity of the surrogate Fr\'echet derivative is a critical factor in surrogate-driven solutions of inverse problems, which agrees with our theoretical results in \Cref{prop:surrogate_optimization_error,prop:surrogate_optimization_error_strongly_convex}.
\end{enumerate}

Several limitations of DIFNO remain to be studied. First, due to the structure of FNOs, evaluating their Fr\'echet derivatives remains intensive in computation and memory---even with the proposed efficient formulations, DIFNO training remains substantially more expensive than conventional FNOs. Second, many challenging operator-learning problems arise in multi-scale, multi-physics systems. While the Helmholtz and Navier-Stokes equations studied in this work are considered difficult for operator learning, it would be interesting to further examine the performance of DIFNOs in more sophisticated, realistic settings. Addressing these limitations constitutes the main direction for our future research. Additionally, the difficulty experienced by conventional FNOs in learning the Fr\'echet derivative in our examples of the Navier--Stokes equations is connected to active research on operator learning of oscillatory and chaotic systems \cite{molinaro2025generative, khodakarami2025mitigating, oommen2025integrating}. It is of interest to investigate how the derivative-informed learning formulation can address the shortfall of conventional learning in this context from both theoretical and numerical perspectives.

\section*{Acknowledgment}

Omar Ghattas, Dingcheng Luo, Thomas O'Leary-Roseberry and Boyuan Yao were supported by the National Science Foundation through awards 2324643, 2313033 and  2425922 the Deparment of Energy through award DE-SC0023171, and the Air Force Office of Scientific Research through award FA9550-24-1-0327.
Lianghao Cao was supported by a Department of
Defense Vannevar Bush Faculty Fellowship held by Andrew M. Stuart. Nikola B. Kovachki is grateful to the Nvidia Corporation for support through full-time employment.

\addcontentsline{toc}{section}{References}
\bibliographystyle{elsarticle-num-names}
\biboptions{sort,numbers,comma,compress}                 
\bibliography{main.bib}

\appendix
\renewcommand{\thesection}{\Alph{section}}
\makeatletter
\renewcommand\@seccntformat[1]{\appendixname\ \csname the#1\endcsname.\hspace{0.5em}}
\makeatother

\section{Definitions and Mathematical Preliminaries}\label{sec:notes_on_periodic_sobolev_spaces}

\subsection{Lebesgue and Sobolev Spaces}
In this work, we are primarily concerned with mappings between functions defined on the periodic torus $\bT^{d}$, which we identify with $[0, 2\pi]^d$. 
Note that functions on $\bT^d$ can be identified with $2\pi$-periodic functions on $\bR^d$.
We introduce the standard Lebesgue spaces, $L^p(\bT^d)$, whose norms are given by 
\[
    \| u \|_{L^p(\bT^d)} := \begin{cases}
        \left(\int_{\bT^d} |u(x)|^p dx \right)^{1/p} & 1 \leq p < \infty, \\
        \mathrm{esssup}_{x \in \bT^d} |u(x)| & p = \infty.
    \end{cases}
\]   

For integer values $m \in \bN$ and $u \in C^{m}(\bT^d)$, we can define the Sobolev norm
\begin{equation} \label{eq:integer_sobolev_norm}
    \|u\|_{H^m}^2 = \sum_{0 \leq |\alpha|_1 \leq m} \|\partial ^\alpha u\|_{L^2}^2,
\end{equation}
where $\alpha = (\alpha_1, \dots, \alpha_{d})$ is a multi-index, $|\alpha|_1 = \sum_{j=1}^{d}|\alpha_j|$, 
and $\partial^{\alpha}u = \partial_{x_1}^{\alpha_1} \dots \partial_{x_d}^{\alpha_d} u $.
The Sobolev spaces with integer powers, $H^{m}(\bT^d)$, are then defined as the completion of $C^{m}(\bT^d)$ under the $H^{m}$ norm. 

For non-integer powers $s \geq 0$, we can introduce the fractional Sobolev spaces $H^s(\bT^d)$ by Fourier transforms. 
Recall that for $u \in L^2(\bT^d)$, we have 
\[
    \hat{u}(k) := \cF(u)(k) = \frac{1}{(2\pi)^{d}} \int_{\bT^d} u(x) \exp(-i\langle k, x\rangle) dx, \quad k \in \bZ^d.
\]
We define the norm $\| \cdot \|_{H^s}$ as 
\[
    \| u \|_{H^s} = (2\pi)^d \sum_{k \in \bZ^d} (1 + |k|^2)^{s} |\hat{u}(k)|^2.
\]
The spaces $H^s(\bT^d)$ are then defined for $s \geq 0$ as
\begin{equation}\label{eq:real_sobolev}
    H^s(\bT^d) := \{ u \in L^2(\bT^d) : \|u \|_{H^s} < \infty \}.
\end{equation}
Note that this norm is equivalent to the previous definition for integer values $s = m$. 

In this work, we will consider mainly real-valued function spaces. Thus, we will use $L^p(\bT^d)$ and $H^s(\bT^d)$ to specifically denote $L^p(\bT^d; \bR)$ and $H^{s}(\bT^d; \bR)$.
For a real-valued function $u$, the Fourier coefficients exhibit the symmetry $\hat{u}(k) = \overline{\hat{u}(-k)}$ for all $k \in \bZ^{d}$.

\subsection{Sobolev Spaces as a Hilbert Scale}
In the real-valued setting, the Fourier expansion for $L^2(\bT^d)$ can instead be written in terms of the 
sinusoids $\cos(\langle k, x \rangle)$ and $\sin(\langle k, x \rangle)$.
That is, we consider the functions 
\begin{align}
    \psi_0(x) & := \sqrt{\frac{1}{(2\pi)^d}}, 
    \\
    \psi_{k}^{\cos}(x) & := \sqrt{\frac{2}{(2\pi)^d}} \cos(\langle k, x \rangle),
    \qquad k \in \bZ^{d}, \\
    \psi_{k}^{\sin}(x) & := \sqrt{\frac{2}{(2\pi)^d}} \sin(\langle k, x \rangle),
    \qquad k \in \bZ^{d},
\end{align}
from which take one from each pair of $(k, -k)$ to form an orthonormal basis of $L^2(\bT^d)$.
For example, we can use the index set
\[
    \zlex := \{ k \in \bZ^{d} : \exists j, \,  1 \leq j \leq d, \, \text{ such that } k_{j} > 0 \text{ and } k_i = 0, \, 1 \leq i \leq j \},
\]
such that $\zlex \cup -\zlex \cup \{0\} = \mathbb{Z}^{d}$.
In this case, the Sobolev norm can be written as 
\begin{equation}
    \|u\|_{H^{s}}^2 
    = 
    \langle u, \psi_0 \rangle_{L^2}^2 
    + 
    \sum_{k \in \zlex} (1 + |k|^2)^{s} \langle u, \psi_k^{\cos} \rangle_{L^2}^2
    + 
    \sum_{k \in \zlex} (1 + |k|^2)^{s} \langle u, \psi_k^{\sin} \rangle_{L^2}^2.
\end{equation}
It can also be verified that $\psi_0$ along with 
\begin{align}
    \psi_{s,k}^{\cos} &:= (1+|k|^2)^{-\frac{s}{2}} \psi_k^{\cos},  \qquad k \in \zlex, \\ 
    \psi_{s,k}^{\sin} &:= (1+|k|^2)^{-\frac{s}{2}} \psi_k^{\sin},  \qquad k \in \zlex,
\end{align}
form an orthonormal basis for $H^s(\bT^d)$.
This basis can be used to evaluate the Hilbert--Schmidt norm of an operator, $\cA : \cX = H^s(\bT^d) \rightarrow \cY$, i.e.,
\begin{equation}
    \|\cA\|_{\HS(\cX, \cY)}^2 = 
    \|\cA \psi_0 \|_{\cY}^2 + \sum_{k \in \zlex} 
        \|\cA \psi_{s,k}^{\cos} \|_{\cY}^2  + 
        \|\cA \psi_{s,k}^{\sin} \|_{\cY}^2.
\end{equation}

Moreover, the sinusoidal basis functions can be enumerated as $\{ \psi_j \}_{j \in \bN}$ 
such that the wavenumbers for the $j$th basis function, $k^{(j)}$, satisfy 
\begin{equation}\label{eq:weight_decay_rate}
    w_j := (1 + |k^{(j)}|^2)^{\frac{1}{2}} \sim j^{\frac{1}{d}}, \qquad j \in \bN.
\end{equation}
We note that $\{(w_j^{2}, \psi_j)\}_{j \in \bN}$ are the eigenpairs of the elliptic operator $\cA = \cI - \Delta$ defined on the domain $H^2(\bT^d)$, i.e., $\cA \psi_j = w_j^2 \psi_j$.
Thus, the family of $H^{s}(\bT^d)$ with $s \geq 0$ forms a Hilbert scale such that $H^{s}(\bT^d) \hookrightarrow H^{s'}(\bT^d)$ whenever $s' \leq s$,
and  
\[
   \|u\|_{H^s}^2 = \| \cA^\frac{s}{2} u \|_{L^2}^2 = \sum_{j \in \bN} w_j^{2s}  | \langle u, \psi_j \rangle_{L^2}|^2. 
\]
By \eqref{eq:weight_decay_rate}, we also have an equivalent definition for $H^{s}$ norm, which we write as $\cH^{s}$, given by 
\begin{equation}\label{eq:hilbert_scale_norm}
    \|u\|_{\cH^s}^2 := \sum_{j=1}^{\infty}  j^{\frac{2s}{d}} |u_j|^2 \sim \|u\|_{H^s}^2.
\end{equation}
This is generated by the equivalent inner product 
\begin{equation}\label{eq:hilbert_scale_inner_product}
    \langle u, v \rangle_{H^{s}(\bT^d)} \sim \langle u, v \rangle_{\cH^s} := \sum_{j=1}^{\infty}  j^{\frac{2s}{d}} u_j {v_j},
\end{equation}
where $u_j = \langle u, \psi_j \rangle_{L^2(\bT^d)}$
and $v_j = \langle v, \psi_j \rangle_{L^2(\bT^d)}$.

Using this equivalent definition of $H^{s}(\bT^d)$, we can show that the inclusion operator from $H^s(\bT^d)$ to $H^{s'}(\bT^d)$
is additionally Hilbert--Schmidt provided $s$ is sufficiently greater than $s'$.
\begin{proposition} \label{prop:inclusion_operator_is_hilbert_schmidt}
Let $s \geq 0$.
The inclusion operator $\cI : H^{s + \delta}(\bT^d) \rightarrow H^{s}(\bT^d)$ 
is Hilbert--Schmidt for all $\delta > d/2$.
\end{proposition}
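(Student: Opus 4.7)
The plan is to compute the Hilbert--Schmidt norm of $\cI$ directly using the orthonormal basis of $H^{s+\delta}(\bT^d)$ constructed from the sinusoidal eigenfunctions of $\cA = \cI - \Delta$. Recall from the preliminaries that the basis $\{\psi_j\}_{j \in \bN}$ (obtained by enumerating $\psi_0$, $\psi_k^{\cos}$, and $\psi_k^{\sin}$ for $k \in \zlex$) is orthonormal in $L^2(\bT^d)$ and diagonalizes $\cA$ with eigenvalues $w_j^2 = 1 + |k^{(j)}|^2$ satisfying $w_j \sim j^{1/d}$. Rescaling yields the orthonormal basis $\{\psi_{s+\delta,j}\}_{j \in \bN}$ of $H^{s+\delta}(\bT^d)$, where $\psi_{s+\delta,j} := w_j^{-(s+\delta)} \psi_j$.

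First, I would evaluate $\|\cI \psi_{s+\delta,j}\|_{H^s}^2$ for each $j$. Since $\cI \psi_{s+\delta,j} = w_j^{-(s+\delta)} \psi_j$ and $\|\psi_j\|_{H^s}^2 = w_j^{2s}$ by construction of the Hilbert scale, this gives
\begin{equation*}
    \|\cI \psi_{s+\delta,j}\|_{H^s}^2 = w_j^{-2(s+\delta)} \cdot w_j^{2s} = w_j^{-2\delta}.
\end{equation*}
Summing over $j$, the Hilbert--Schmidt norm of the inclusion is
\begin{equation*}
    \|\cI\|_{\HS(H^{s+\delta}, H^s)}^2 = \sum_{j=1}^\infty w_j^{-2\delta}.
\end{equation*}

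By the equivalence $w_j \sim j^{1/d}$ from \eqref{eq:weight_decay_rate}, this series is comparable to $\sum_{j=1}^\infty j^{-2\delta/d}$, which converges if and only if $2\delta/d > 1$, i.e., $\delta > d/2$. This gives the claimed result. There is no real obstacle here: the argument is essentially a bookkeeping exercise once the equivalent Hilbert-scale norm \eqref{eq:hilbert_scale_norm} and the rescaled basis are in hand. The only subtlety worth flagging is the treatment of the constant mode $\psi_0$ (with $w_0 = 1$), which contributes a single finite term and is harmless; and the fact that the $\sim$ in $w_j \sim j^{1/d}$ only gives convergence of the rescaled series up to multiplicative constants, which is sufficient for the Hilbert--Schmidt conclusion since one can work with either the $H^s$ norm or the equivalent $\cH^s$ norm.
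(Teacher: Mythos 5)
Your proof is correct and follows essentially the same route as the paper's: both compute the Hilbert--Schmidt norm of $\cI$ diagonally over the rescaled sinusoidal eigenbasis of $\cI-\Delta$ and reduce convergence to the series $\sum_j j^{-2\delta/d}$, the only cosmetic difference being that you use the exact weights $w_j$ together with $w_j \sim j^{1/d}$, while the paper works directly in the equivalent scale $\cH^s$ with weights $j^{s/d}$. Your remark that norm equivalence suffices for the Hilbert--Schmidt conclusion is a fine (and correct) point that the paper leaves implicit.
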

\begin{proof}
     Note that under the inner product of $\cH^s$, 
     $\{\psi_j \}_{j=1}^{\infty}$ is an orthonormal basis when $s = 0$
     while $\{j^{-s/d} \psi_j\}_{j=1}^{\infty}$ is an orthonormal basis when $s > 0$.
     Consider now the Hilbert--Schmidt norm of the inclusion $\cI : \cH^{s+\delta} \rightarrow \cH^{s}$. 
     This can be evaluated using the basis functions $\{j^{-(s+\delta)/d} \psi_j\}_{j=1}^{\infty}$, 
     \[
        \| \cI \|_{\HS(\cH^{s+\delta}, \cH^{s})}^2
        = \sum_{j=1}^{\infty} \|j^{-(s+\delta)/d}\psi_j\|_{\cH^s}^2
        = \sum_{j=1}^{\infty} j^{2s/d} \cdot j^{-2(s+\delta)/d}
        = \sum_{j=1}^{\infty} j^{-2 \delta/d},
     \]
     which is finite when $\delta > d/2$.
\end{proof}

\subsection{Embedding Theorems}
Additionally, we will make use of two standard results from the theory of Sobolev spaces. The first is the Sobolev embedding theorem.
\begin{theorem}[Sobolev embedding theorem]
    Let $d \in \bN$ and $s > \frac{d}{2}$. Then, 
    the embedding from $H^{s+m}(\bT^d)$ to $\cont^m(\bT^d)$ is continuous for every integer $m \geq 0$.
\end{theorem}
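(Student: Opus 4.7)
The plan is to prove the $m=0$ case directly by showing that the Fourier series of $u \in H^{s}(\bT^d)$ converges absolutely and uniformly whenever $s > d/2$, and then bootstrap to general $m$ by applying the $m=0$ result to every weak derivative $\partial^{\alpha}u$ of order $|\alpha| \le m$.

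For the base case, starting from $u(x) = \sum_{k \in \bZ^d} \hat{u}(k) \exp(i\langle k, x\rangle)$, I would insert the factorization $|\hat{u}(k)| = (1+|k|^2)^{-s/2} \cdot (1+|k|^2)^{s/2} |\hat{u}(k)|$ and apply Cauchy--Schwarz to obtain the pointwise bound
\[
    |u(x)| \le \left( \sum_{k \in \bZ^d} (1+|k|^2)^{-s} \right)^{1/2} \left( \sum_{k \in \bZ^d} (1+|k|^2)^{s} |\hat{u}(k)|^2 \right)^{1/2},
\]
whose second factor is controlled by $\|u\|_{H^s}$ up to the $(2\pi)^d$ factor in the norm. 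The same estimate applied to tails of the Fourier sum shows that the partial sums form a Cauchy sequence in $\cont(\bT^d)$ equipped with the sup norm, so their limit coincides with $u$ almost everywhere and is continuous, giving $\|u\|_{\cont(\bT^d)} \lesssim \|u\|_{H^s(\bT^d)}$.

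For $m \geq 1$, I would use that if $u \in H^{s+m}(\bT^d)$, then $\widehat{\partial^{\alpha} u}(k) = (ik)^{\alpha} \hat{u}(k)$ for every multi-index $\alpha$ with $|\alpha| \le m$, so
\[
    \| \partial^{\alpha} u \|_{H^s}^{2} = (2\pi)^d \sum_{k} (1+|k|^2)^{s} |k^{\alpha}|^{2} |\hat{u}(k)|^{2} \le \| u \|_{H^{s+m}}^{2}.
\]
Applying the base case to each $\partial^{\alpha} u$ places all derivatives of order up to $m$ in $\cont(\bT^d)$, yielding $u \in \cont^{m}(\bT^d)$ together with the continuous estimate $\|u\|_{\cont^{m}(\bT^d)} \lesssim \|u\|_{H^{s+m}(\bT^d)}$.

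The only genuinely analytic step, and thus the main (though mild) obstacle, is confirming convergence of $\sum_{k \in \bZ^d} (1+|k|^2)^{-s}$ exactly when $s > d/2$. I would dispatch this by integral comparison: grouping the lattice points by $|k|_{\infty} = n$ produces $O(n^{d-1})$ terms of size $\sim (1+n^2)^{-s}$, and comparison with $\int_{1}^{\infty} r^{d-1}(1+r^2)^{-s}\, dr$ gives convergence precisely when $2s > d$. Everything else reduces to routine Hilbert-space manipulation on the Fourier side, so no additional technical obstruction arises.
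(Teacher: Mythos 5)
The paper does not actually prove this statement: it is quoted in \Cref{sec:notes_on_periodic_sobolev_spaces} as a standard fact, so there is no in-paper argument to compare against. Your proof is the classical Fourier-series route and it is correct: Cauchy--Schwarz against the weight $(1+|k|^2)^{s}$ gives absolute and uniform convergence of the Fourier series once $\sum_{k}(1+|k|^2)^{-s}<\infty$, the lattice/integral comparison pins down that this holds exactly for $s>d/2$, and the multiplier identity $\widehat{\partial^{\alpha}u}(k)=(ik)^{\alpha}\hat{u}(k)$ with $|k^{\alpha}|^2\le(1+|k|^2)^{m}$ reduces the case $m\ge 1$ to the base case. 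The one step worth making explicit is the passage from ``each weak derivative $\partial^{\alpha}u$, $|\alpha|\le m$, has a continuous representative'' to ``$u\in\cont^{m}(\bT^d)$ with classical derivatives equal to the weak ones'': this follows from your own tail estimate, since the trigonometric partial sums $S_N u$ are smooth, $\partial^{\alpha}S_N u = S_N(\partial^{\alpha}u)$ converges uniformly for every $|\alpha|\le m$, and uniform convergence of a sequence together with all its derivatives up to order $m$ forces the limit to be $\cont^{m}$ with matching derivatives; one sentence to that effect closes the argument, and the norm bound $\|u\|_{\cont^{m}}\lesssim\|u\|_{H^{s+m}}$ then gives continuity of the embedding as claimed.
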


An implication of the Sobolev embedding theorem is the fact that $H^s(\bT^d)$ forms a Banach algebra whenever the embedding into $\cont^0(\bT^d)$ holds. That is, $H^s(\bT^d)$ is closed under multiplication.
\begin{theorem}[$H^s(\bT^d)$ is a Banach algebra]\label{theorem:banach_algebra}
    Let $d \in \bN$ and $s > \frac{d}{2}$. Then, 
    there exists constant $\const_s \geq 0$ such that 
    \[ 
        \|u v\|_{H^s(\bT^d)} \leq \const_s \|u\|_{H^s(\bT^d)} \|v\|_{H^s(\bT^d)}
    \]
    for all $u, v \in H^s(\bT^d)$.
\end{theorem}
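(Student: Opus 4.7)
}
The plan is to work on the Fourier side and estimate the product through a weighted convolution. Let $u, v \in H^{s}(\bT^d)$ with $s > d/2$. By the Sobolev embedding, both are continuous, so $uv$ is well defined as a bounded measurable function on $\bT^d$. A standard density argument (e.g.~truncating to trigonometric polynomials) justifies the identity
\begin{equation*}
    \widehat{uv}(k) \;=\; \sum_{j \in \bZ^d} \hat u(j)\, \hat v(k-j), \qquad k \in \bZ^d.
\end{equation*}
Setting $\langle k \rangle := (1+|k|^2)^{1/2}$, the goal is to bound the weighted $\ell^2$-norm of this convolution in terms of $\|u\|_{H^s}\|v\|_{H^s}$.

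The crux is a Peetre-type inequality splitting the weight across the two factors. From $|k| \leq |j| + |k-j|$ and a routine estimate on $(a+b)^{s}$ one obtains, for some constant $c_s > 0$,
\begin{equation*}
    \langle k \rangle^{s} \;\leq\; c_s\bigl( \langle j \rangle^{s} + \langle k-j \rangle^{s}\bigr) \qquad \forall\, j, k \in \bZ^d.
\end{equation*}
Inserting this into $\langle k \rangle^{s} |\widehat{uv}(k)|$ and writing $U(j) := \langle j \rangle^{s}|\hat u(j)|$, $V(j):=\langle j\rangle^{s}|\hat v(j)|$, I obtain the pointwise bound
\begin{equation*}
    \langle k \rangle^{s} |\widehat{uv}(k)| \;\leq\; c_s \bigl( (U * |\hat v|)(k) + (|\hat u| * V)(k) \bigr).
\end{equation*}

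The next step is to take $\ell^{2}(\bZ^d)$-norms and apply Young's convolution inequality, which gives $\|f*g\|_{\ell^2} \leq \|f\|_{\ell^1}\|g\|_{\ell^2}$. Noting that $\|U\|_{\ell^2}$ and $\|V\|_{\ell^2}$ are (up to the $(2\pi)^d$ factor) precisely $\|u\|_{H^s}$ and $\|v\|_{H^s}$, this yields
\begin{equation*}
    \|uv\|_{H^s} \;\lesssim\; \|u\|_{H^s}\,\|\hat v\|_{\ell^1} \;+\; \|v\|_{H^s}\,\|\hat u\|_{\ell^1}.
\end{equation*}
It remains to control $\|\hat u\|_{\ell^1}$ by $\|u\|_{H^s}$, which is exactly where the dimension condition $s > d/2$ enters. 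By Cauchy--Schwarz applied to the splitting $|\hat u(k)| = \langle k\rangle^{-s}\cdot\langle k\rangle^{s}|\hat u(k)|$,
\begin{equation*}
    \|\hat u\|_{\ell^1} \;\leq\; \Bigl(\sum_{k\in\bZ^d} \langle k\rangle^{-2s}\Bigr)^{1/2} \|u\|_{H^s},
\end{equation*}
and the sum converges precisely when $2s > d$. Combining these inequalities and absorbing numerical constants into a single $\const_s$ yields $\|uv\|_{H^s} \leq \const_s \|u\|_{H^s}\|v\|_{H^s}$.

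There is no genuine obstacle here; the only subtlety is the Peetre-type weight splitting, which is what makes the argument work for all $s \geq 0$ rather than only for integer $s$ (where one could alternatively use Leibniz's rule together with Sobolev embedding). A very similar proof pattern, also relying on $s > d/2$ to control the low-frequency part through $\ell^1$ summability, will in fact reappear when one needs product estimates in the cutoff constructions of Section~\ref{sec:unbounded_domains}.
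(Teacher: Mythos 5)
Your proof is correct. Note, however, that the paper does not actually prove \Cref{theorem:banach_algebra}: it is stated as standard background, with only the remark that it is ``an implication of the Sobolev embedding theorem,'' which hints at the classical route of Leibniz's rule plus embedding/interpolation estimates (natural for integer $s$). Your argument is a complete, self-contained alternative that works directly on the Fourier side: the Peetre-type splitting $\langle k\rangle^{s}\leq c_s(\langle j\rangle^{s}+\langle k-j\rangle^{s})$, Young's inequality $\|f*g\|_{\ell^2}\leq\|f\|_{\ell^1}\|g\|_{\ell^2}$, and the Cauchy--Schwarz bound $\|\hat u\|_{\ell^1}\lesssim\|u\|_{H^s}$ (convergent precisely because $2s>d$) are each applied correctly, and the convolution identity for $\widehat{uv}$ is legitimate since $\hat u,\hat v\in\ell^1(\bZ^d)$ makes the sum absolutely convergent. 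A practical advantage of your route over the one the paper gestures at is that it handles fractional $s>d/2$ with no extra work, which matters here since the paper invokes the algebra property for general Sobolev indices. The only cosmetic caveat is normalization: the paper's $\|\cdot\|_{H^s}$ carries a $(2\pi)^d$ factor (and is written as the square of the norm, apparently a typo), so your constants should absorb this, as you indicate.
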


\subsection{Superposition Operators}
A superposition operator (or Nemytskii operator) is a mapping defined by composing a function $u: \bT^d \rightarrow \bR$ with another function $\sigma : \bT^d \times \bR \rightarrow \bR$. 
That is, the superposition operator $\cS$ is defined as
\[
    \cS : u \mapsto \cS(u), \quad \cS(u)(x) = \sigma(x, u(x)).
\]
We say $\cS$ is autonomous if  $\sigma(x, y) = \sigma(y)$, i.e.~$\cS(u)(x) = \sigma(u(x))$.
We encounter autonomous superposition operators on Lebesgue and Sobolev spaces when applying the activation function $v \mapsto \sigma(v)$ on functions $v \in L^p$ or $H^s$ within the FNO architecture.
Here, we will summarize the continuity and Fr\'echet differentiability properties of such operators. 
In particular, we will introduce settings sufficient for analyzing FNOs.

The simplest case is when mapping from $L^\infty$ to $L^\infty$.
We state the result for scalar-valued outputs, but note that generalization to the vector-valued case is straightforward.
The proof essentially follows that of \cite[Lemma 4.12]{troltzsch2010optimal}.
\begin{lemma}[Differentiability of superposition operators on $L^{\infty}$]\label{lemma:superposition_Linfty}
Let $\sigma \in C^2(\bR^{d_v}, \bR)$. 
Then, the superposition operator
\[
    \cS(u)(x) =  \sigma(u(x))
\]
is a continuously differentiable mapping from $L^{\infty}(\bT^d; \bR^{d_v}) \rightarrow L^{\infty}(\bT^d; \bR)$. 
Moreover, its derivative is given by 
\[
    (D\cS(u)w)(x) = \sum_{j=1}^{d_v}\partial_j \sigma(u(x)) w_j(x),
\]
where $\partial_j \sigma$ denotes the partial derivative of $\sigma$ with respect to its $j$th argument.
\end{lemma}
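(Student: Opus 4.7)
The plan is to verify three claims in order: (i) $\cS$ is a well-defined map $L^{\infty}(\bT^d; \bR^{d_v}) \to L^{\infty}(\bT^d; \bR)$; (ii) at every $u$, $\cS$ is Fr\'echet differentiable with derivative $D\cS(u)$ given by the stated pointwise multiplication formula; (iii) the map $u \mapsto D\cS(u)$ is norm-continuous into $\Op(L^{\infty}(\bT^d; \bR^{d_v}), L^{\infty}(\bT^d; \bR))$. The unifying idea, as in \cite[Lemma 4.12]{troltzsch2010optimal}, is that essentially bounded inputs keep the argument of $\sigma$ (and of its derivatives) confined to a compact set, on which the $C^2$ regularity of $\sigma$ can be leveraged uniformly.

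For (i), if $u \in L^{\infty}(\bT^d;\bR^{d_v})$ then its essential range lies in a closed ball $B \subset \bR^{d_v}$, and since $\sigma$ is continuous it is bounded on $B$, giving $\cS(u) \in L^{\infty}(\bT^d; \bR)$. For (ii), I would first check that the candidate $D\cS(u): w \mapsto \sum_{j=1}^{d_v} \partial_j \sigma(u(\cdot)) w_j(\cdot)$ is a bounded linear operator, with norm controlled by $\sum_j \|\partial_j \sigma(u(\cdot))\|_{L^\infty}$, which is finite by the same bounded-range argument applied to $\partial_j \sigma \in C^1$. The Fr\'echet differentiability is then obtained by the second-order Taylor expansion applied pointwise:
\[
    \sigma(u(x)+w(x)) - \sigma(u(x)) - \sum_{j=1}^{d_v} \partial_j \sigma(u(x)) w_j(x)
    = \int_0^1 (1-t) \sum_{j,k} \partial_{jk}\sigma(u(x)+tw(x))\, w_j(x) w_k(x)\, dt.
\]
For $\|w\|_{L^\infty} \leq 1$, the arguments $u(x)+tw(x)$ lie in the ball $B'$ of radius $\|u\|_{L^\infty} + 1$, on which the continuous functions $\partial_{jk}\sigma$ are uniformly bounded by some $M = M(u)$. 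Taking essential suprema yields
\[
    \|\cS(u+w) - \cS(u) - D\cS(u)w\|_{L^{\infty}} \leq \tfrac{1}{2} d_v^2 M \|w\|_{L^\infty}^2 = o(\|w\|_{L^\infty}),
\]
which is the definition of Fr\'echet differentiability.

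For (iii), I would bound the operator norm of the difference $D\cS(u) - D\cS(v)$ by
\[
    \|D\cS(u) - D\cS(v)\|_{\Op(L^\infty, L^\infty)} \leq \sum_{j=1}^{d_v} \|\partial_j \sigma(u(\cdot)) - \partial_j \sigma(v(\cdot))\|_{L^\infty},
\]
and then use that each $\partial_j \sigma$ is $C^1$, hence Lipschitz on any fixed compact set. Choosing a ball containing the ranges of both $u$ and nearby $v$ yields a Lipschitz constant $L$ for which $\|\partial_j \sigma(u) - \partial_j \sigma(v)\|_{L^\infty} \leq L\|u-v\|_{L^\infty}$, proving the continuity. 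The only subtlety, which is the closest thing to a genuine obstacle, is making sure the constants $M$ and $L$ are chosen uniformly over a neighborhood of the base point $u$; this is handled by restricting to $\|w\|_{L^\infty} \leq 1$ (resp.\ $\|u-v\|_{L^\infty} \leq 1$), so that everything happens inside a fixed compact set $B'$, on which $C^2$ regularity of $\sigma$ delivers uniform bounds.
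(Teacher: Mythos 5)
Your proposal is correct and follows essentially the same route as the paper, which simply defers to the standard Nemytskii-operator argument of Tr\"oltzsch's Lemma 4.12: exploit that $L^\infty$-bounded inputs confine the arguments of $\sigma$, $\partial_j\sigma$, $\partial_{jk}\sigma$ to a fixed compact ball, then use Taylor expansion for differentiability and local Lipschitz continuity of $\partial_j\sigma$ for continuity of the derivative. Your second-order Taylor remainder (versus the first-order mean-value estimate with a Lipschitz gradient used in the cited reference) is only a cosmetic variation, and all steps check out.
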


Stronger requirements are needed when considering compositions with functions in $H^s(\bT^d)$. In particular, sufficient regularity of the function $\sigma$ is needed to preserve the regularity of $u \in H^s$. 
The following result is essentially a special case of \cite[Theorem 4.1]{Valent88}.

\begin{lemma}[Differentiability of superposition operators on $H^m$]\label{lemma:superposition_sobolev}
    Let $m > d/2$ be an integer and let $\sigma \in C^{m+1}(\bR^{d_v}, \bR)$.
    Then, the superposition operator 
    \[ 
        \cS(u)(x) = \sigma(u(x)) 
    \]
    is a continuously differentiable mapping from $H^{m}(\bT^d; \bR^{d_v}) \rightarrow H^{m}(\bT^d; \bR)$.
    Moreover, its derivative 
    is given by
    \[
        (D\cS(u)w)(x) = \sum_{j=1}^{d_v} \partial_j \sigma(u(x)) w_j(x).
    \]
\end{lemma}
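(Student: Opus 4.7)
My plan is to proceed in three phases: (i) show $\mathcal{S}$ maps $H^m(\bT^d; \bR^{d_v})$ into $H^m(\bT^d; \bR)$ with a quantitative bound, (ii) verify that the stated candidate $D\mathcal{S}(u)w = \sum_j \partial_j \sigma(u) w_j$ is a bounded linear operator on $H^m$ and is indeed the Fr\'echet derivative, and (iii) show continuity of $u \mapsto D\mathcal{S}(u)$ from $H^m$ into $\Op(H^m; H^m)$. The two workhorses throughout are the Sobolev embedding $H^m(\bT^d) \hookrightarrow \cont^0(\bT^d)$ (valid since $m > d/2$), which gives a pointwise meaning to $\sigma(u(x))$ and supplies $L^\infty$-control on $u$, and the Banach algebra property $\|fg\|_{H^m} \leq \const_m \|f\|_{H^m}\|g\|_{H^m}$ from \Cref{theorem:banach_algebra}.

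For phase (i), I would apply Fa\`a di Bruno's formula to express a mixed derivative $\partial^\alpha (\sigma \circ u)$ for $|\alpha|_1 \leq m$ as a finite sum of terms of the form $(\partial^\beta \sigma)(u) \cdot \prod_{B} \partial^{\alpha_B} u_{j_B}$, where $\beta$ is a multi-index of order $|\pi|$ associated with a partition $\pi$ of $\alpha$. Since $\sigma \in \cont^{m+1}$, every factor $(\partial^\beta \sigma)(u)$ is bounded on any bounded set of $L^\infty$, with a bound depending continuously on $\|u\|_{L^\infty}$. The interior product of derivatives $\prod_B \partial^{\alpha_B} u_{j_B}$ is handled by a standard tame estimate: one factor is placed in $H^m$ and the remaining factors, being lower-order derivatives, are placed in $L^\infty$ or in intermediate Sobolev spaces, and everything is bounded via a Gagliardo--Nirenberg-style interpolation combined with the Banach algebra bound for the $H^m$ factor. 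This gives $\|\mathcal{S}(u)\|_{H^m} \leq C(\|u\|_{L^\infty})(1 + \|u\|_{H^m}^k)$ for some finite $k$, in particular finiteness of $\mathcal{S}(u)$ in $H^m$.

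For phase (ii), I apply Taylor's theorem with integral remainder to write
\begin{equation*}
    \sigma(u(x) + h(x)) - \sigma(u(x)) - \sum_{j=1}^{d_v} \partial_j \sigma(u(x)) h_j(x) = \sum_{j,k} h_j(x) h_k(x) \int_0^1 (1-t)\, \partial_j \partial_k \sigma(u(x) + th(x)) \, dt.
\end{equation*}
The candidate linear operator $w \mapsto \sum_j \partial_j \sigma(u) w_j$ is bounded on $H^m$ by phase (i) applied to $\partial_j \sigma$ (which is in $\cont^{m}$) together with the Banach algebra property. For the remainder, I would bound its $H^m$ norm by pulling the $t$-integral outside the norm (using continuity in $t$) and applying the Banach algebra bound to the product $h_j \cdot h_k \cdot [\partial_j \partial_k \sigma(u + th)]$; the first two factors contribute $\|h\|_{H^m}^2$, and the last factor is controlled uniformly in $t \in [0,1]$ for $\|h\|_{H^m}$ small, via phase (i) applied to $\partial_j \partial_k \sigma \in \cont^{m-1}$ at the argument $u + th$. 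This gives $o(\|h\|_{H^m})$, establishing Fr\'echet differentiability with the claimed derivative.

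For phase (iii), I note that $D\mathcal{S}(u) - D\mathcal{S}(u')$ acts as $w \mapsto \sum_j [\partial_j \sigma(u) - \partial_j \sigma(u')] w_j$, and its operator norm is bounded by $\const_m \sum_j \|\partial_j \sigma(u) - \partial_j \sigma(u')\|_{H^m}$ via the Banach algebra bound. Continuity of $u \mapsto \partial_j \sigma(u)$ as a map $H^m \to H^m$ then follows from phase (i) applied to the first-order Taylor expansion of $\partial_j \sigma$ (which is in $\cont^m$, hence admits the same analysis), giving the required continuity. The principal obstacle I anticipate is the bookkeeping in phase (i): verifying that every term produced by Fa\`a di Bruno can be placed in $L^2$ with a bound that is continuous in $\|u\|_{H^m}$ requires a careful distribution of derivatives across the factors using the embedding $H^{m-\ell} \hookrightarrow L^{p_\ell}$ for appropriate $p_\ell$, and keeping track of how the regularity of $\sigma$ (one order beyond $m$) is consumed at each step.
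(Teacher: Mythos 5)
The paper does not actually prove this lemma: it is stated as a special case of Theorem~4.1 of the cited work of Valent, so your proposal should be judged against the standard proof of that result. Your phase~(i) (Fa\`a di Bruno plus Moser-type product estimates, using $H^m(\bT^d)\hookrightarrow \cont^0(\bT^d)$ and \Cref{theorem:banach_algebra}) is the standard argument and is fine. The genuine gap is in phases~(ii) and~(iii), where your estimates silently consume one more derivative of $\sigma$ than the hypothesis provides. In phase~(ii) you bound the $H^m$ norm of the second-order Taylor remainder by the Banach algebra inequality applied to the product $h_j\,h_k\,\bigl[\partial_j\partial_k\sigma(u+th)\bigr]$, which requires $\sup_{t\in[0,1]}\|\partial_j\partial_k\sigma(u+th)\|_{H^m}<\infty$. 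But $\partial_j\partial_k\sigma\in \cont^{m-1}$ only, and the composition of a $\cont^{m-1}$ function with an $H^m\cap L^\infty$ argument lies in general only in $H^{m-1}$: the Fa\`a di Bruno term in which all $m$ derivatives fall on the outer function involves an $m$-th derivative of $\partial_j\partial_k\sigma$ that need not exist. So the quantity you propose to use as a bound can be infinite, and ``phase~(i) applied to $\partial_j\partial_k\sigma$'' only yields $H^{m-1}$ control, which the $H^m$ Banach algebra step cannot absorb. The same accounting problem recurs in phase~(iii), where the Lipschitz-type bound for $\|\partial_j\sigma(u)-\partial_j\sigma(u')\|_{H^m}$ via a Taylor expansion of $\partial_j\sigma$ again needs the composition with $\partial_k\partial_j\sigma$ to land in $H^m$. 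As written, your argument proves the lemma for $\sigma\in \cont^{m+2}$, not $\cont^{m+1}$.

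The fix at the stated regularity is to spend the extra derivative only once: write the remainder with the \emph{first-order} integral form, $R(h)=\int_0^1\bigl[D\sigma(u+th)-D\sigma(u)\bigr]h\,dt$, so that the Banach algebra bound gives $\|R(h)\|_{H^m}\le \const_m\,\|h\|_{H^m}\sup_{t}\|D\sigma(u+th)-D\sigma(u)\|_{H^m}$, and then prove \emph{continuity} (not local Lipschitz continuity) of $v\mapsto\partial_j\sigma(v)$ from $H^m$ to $H^m$ for $\partial_j\sigma\in\cont^{m}$. That continuity is true but is exactly the delicate step: it cannot be obtained from a Taylor/Banach-algebra estimate at this regularity and instead requires, e.g., approximating $\partial_j\sigma$ in $\cont^{m}$ on the compact range of $u$ by smooth functions and combining the uniform phase-(i) bound with a three-epsilon argument, or a term-by-term dominated convergence argument in the Fa\`a di Bruno expansion. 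This continuity statement also disposes of your phase~(iii) directly. Handling precisely this step is what the cited theorem of Valent does, which is presumably why the paper invokes it rather than reproving the lemma.
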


\section{Surrogate-based Optimization Results}
\subsection{Residual Errors for General Objectives}\label{sec:proof_optimization_error}
We begin with the proof of the residual error at approximate minimizers in the general case.
\begin{proof}(\Cref{prop:surrogate_optimization_error})
    At any $a \in \cX$, the error in the derivative of $f$ is bounded by 
    \begin{align*}
        & \|D f(a) - D f_{w}(a)\|_{\cX'}
        \\
        & \qquad = \|\partial_u F(\cG(a), a)D \cG(a) - \partial_u F(\widetilde{\cG}_w(a), a) D\widetilde{\cG}_w(a) 
            + \partial_a F(\cG(a), a) - \partial_a F(\widetilde{\cG}_w(a), a) \|_{\cX'} \\
        & \qquad 
        \leq \|\partial_u F(\cG(a), a)(D\cG(a)- D\widetilde{\cG}_w(a))\|_{\cX'} \\ 
        & \qquad \qquad + \| (\partial_u F(\cG(a), a) -  \partial_u F(\widetilde{\cG}_w(a), a)) D\widetilde{\cG}_w(a)\|_{\cX'}\\
        & \qquad \qquad +\|\partial_a F(\cG(a), a) - \partial_a F(\widetilde{\cG}_w(a), a)\|_{\cX'} \\
        & \qquad 
            \leq \| \partial_u F(\cG(a), a)\|_{\cY'} \| D \cG(a) - D \widetilde{\cG}_w (a) \|_{\Op(\cX,\cY)} \\
        & \qquad \qquad + \| \partial_u F(\cG(a), a) - \partial_u F(\widetilde{\cG}_w(a), a)\|_{\cY'} \| D \widetilde{\cG}_w(a)\|_{\Op(\cX,\cY)} \\ 
        & \qquad \qquad + \| \partial_a F(\cG(a), a) - \partial_a F(\widetilde{\cG}_w(a), a)\|_{\cX'}.
    \end{align*}
    Since $DF$ is Lipschitz, there exists a Lipschitz constant $L$ such that 
    \[ 
        \|\partial_u F(u_2,a_2) - \partial_u F(u_1, a_1)\|_{\cY'}
        + \|\partial_a F(u_2,a_2) - \partial_a F(u_1, a_1)\|_{\cX'}
        \leq L(\|u_2 - u_1\|_{\cY} + \|a_2 - a_2\|_{\cX}).
        \]
    Using this fact, we have the following
    \begin{align*}
        \| \partial_u F(\cG(a), a) \|_{\cY'} &\leq L ( \|\cG(a)\|_{\cY} + \|a\|_{\cX}) + \|\partial_u F(0, 0) \|_{\cY'}, \\
        \| \partial_a F(\cG(a), a) - \partial_a F(\widetilde{\cG}_w(a), a) \|_{\cY'} &\leq L \|\cG(a) - \widetilde{\cG}_w(a)\|_{\cY}, \\
        \| \partial_u F(\cG(a), a) - \partial_u F(\widetilde{\cG}_w(a), a) \|_{\cY'} &\leq L \|\cG(a) - \widetilde{\cG}_w(a)\|_{\cY}.
    \end{align*}
    
    Moreover, we have 
    \[ 
    \|D \widetilde{\cG}_w(a)\|_{\Op(\cX,\cY)} \leq \|D \cG(a)\|_{\Op(\cX,\cY)} + \|D \cG(a) - D\widetilde{\cG}_w(a)\|_{\Op(\cX,\cY)}.
    \]
    Combining the inequalities, we arrive at 
    \begin{align*}
        \|D f(a) - D f_{w}(a)\|_{\cX'} 
        & \leq L( (\|\cG(a)\|_{\cY} + \|a\|_{\cX}) + \| \partial_u F(0,0)\|_{\cX'})\, \mathscr{E}_1(a)  \\
        & 
        \quad 
        + L ( \|D \cG(a)\|_{\cY} + \mathscr{E}_1(a)) \mathscr{E}_0(a)
        + L \mathscr{E}_0(a) 
        .
    \end{align*}
    which, after collecting the like terms, yields
    \begin{align*}
        & \|D f(a) - D f_{w}(a)\|_{\cX'}   \\
        & \qquad \lesssim \left( \|\cG(a)\|_{\cY} + \|D \cG(a)\|_{\Op(\cX,\cY)} + \|a\|_{\cX} + 1 \right) 
        \left( \mathscr{E}_0(a) + \mathscr{E}_1(a) + \mathscr{E}_0(a) \cdot \mathscr{E}_1(a) \right).
    \end{align*}
    It remains to recognize that at $a^{\dagger}$, $D f_w(a^\dagger) = 0$.
    
\end{proof}

\subsection{Solution Errors for Strongly Convex Objectives}\label{sec:proof_optimization_error_strongly_convex}
The proof for the case of strongly convex $f$ in \Cref{prop:surrogate_optimization_error_strongly_convex} then follows.
\begin{proof}(of \Cref{prop:surrogate_optimization_error_strongly_convex})

     \textit{Part (i).} 
     We note that the boundedness of $\|\cG(a)\|_{\cY}$, $\|D\cG(a)\|_{\Op(\cX,\cY)}$, and $\|a\|_{\cX}$ for $a \in B_R(a^{\star})$ implies that 
    \[ 
        \sup_{a \in B_R(a^{\star})} \left(\|\cG(a)\|_{\cY} + \|D \cG(a)\|_{\Op(\cX,\cY)} + \|a\|_{\cX}\right) \leq 2M + R + \|a^{\star}\|_{\cX},
    \]
    which is a constant depending only on the operator $\cG$ and the radius $R$. 
    Thus, \Cref{prop:surrogate_optimization_error} yields 
    \begin{equation} \label{eq:bounded_set_residual_error}
        \| D f(a^{\dagger}) \|_{\cX'} \lesssim \mathscr{E}_0(a^{\dagger}) + \mathscr{E}_1(a^{\dagger}) + \mathscr{E}_0(a^{\dagger}) \cdot \mathscr{E}_1(a^{\dagger})
    \end{equation}
    for any stationary point $a^{\dagger} \in B_R(a^{\star})$.
    To translate this to a bound on the error $\|a^{\dagger} - a^{\star}\|_{\cX}$, we make use of the strong convexity of $f$.
    In particular, for $D^2f(a)(h,h) \geq \lambda \|h\|_{\cX}^2$, we have
    \[
        \|D f(a^{\dagger})\|_{\cX'} \|a^{\dagger} - a^{\star}\|_{\cX} \geq  D f(a^{\dagger})(a^{\dagger} - a^{\star})  \geq \lambda \|a^{\dagger} - a^{\star}\|_{\cX}^2,
    \]
    which, along with \eqref{eq:bounded_set_residual_error}, implies that 
    \[
        \|a^{\dagger} - a^{\star}\|_{\cX} \leq \lambda^{-1} \|Df(a^{\dagger})\|_{\cX'} 
        \lesssim
            \mathscr{E}_0(a^{\dagger}) + \mathscr{E}_1(a^{\dagger}) + \mathscr{E}_0(a^{\dagger}) \cdot \mathscr{E}_1(a^{\dagger}).
    \]

    \textit{Part (ii).} 
    Consider first the constrained minimization problem
    \begin{equation}\label{eq:constrained_ball_minimization}
        \min_{a \in B_R(a^{\star})} f_w(a).
    \end{equation}
    Since $B_R(a^{\star})$ is a closed and convex set and $f_w$ is weakly lower semicontinuous and differentiable, results from standard variational analysis imply the existence of a solution $a^{\dagger}$ to this problem with the optimality condition
    \[
        \langle D f_w(a^{\dagger}), a - a^{\dagger} \rangle_{\cX} \geq 0 \qquad \forall a \in B_R(a^{\star}).
    \]
    We can then show that, provided the surrogate error is small, $a^{\dagger}$ must lie in the interior of $B_R(a^{\star})$.
    Strong convexity of $f$ implies that for any $a$,
    \[
        f(a) \geq f(a^{\star}) + \frac{\lambda}{2} \|a - a^{\star} \|^2.
    \]
    Therefore, surrogate objective function at any point $a$ on the boundary of $B_R(a^{\star})$ satisfies
    \[
        f_w(a) \geq f_w(a^{\star}) + \frac{\lambda}{2} R^2 - |f(a) - f_w(a)| - |f (a^{\star}) - f_w(a^{\star})|.
    \]
    From this, we see that the surrogate objective function at $a$ on the boundary is strictly greater than its value at $a^{\star}$ provided that 
    \begin{equation}\label{eq:requirement_for_local_minimizer}
        |f(a) - f_w(a)| + |f(a^{\star}) - f_w(a^{\star})| < \frac{\lambda}{4}R^2.
    \end{equation}
    
    We now show how the operator surrogate error $\mathscr{E}_0(a)$ translates to the objective function error. 
    By the mean value theorem, we have for any $a$, 
    \begin{equation}
        |f(a) - f_w(a)| = |F(\cG(a), a) - F(\widetilde{\cG}_w(a), a)| \leq \|\partial_u F(\xi, a)\|_{\cY'} \|\cG(a) - \widetilde{\cG}_w(a)\|_{\cY},
    \end{equation}
    where $\xi = t \cG(a) + (1-t) \widetilde{\cG}_w(a)$ for some $t \in [0,1]$.
    Then, by the Lipschitz continuity of $DF$, we have 
    \[
        \| \partial_u F(\xi, a)\|_{\cY'} \leq L \left( \|\xi\|_{\cY} + \|a\|_{\cX} \right) + \|\partial_u F(0,0)\|_{\cY'}.
    \]
    Moreover, $\|\xi\|_{\cY} \leq \|\cG(a) - \widetilde{\cG}_w(a)\|_{\cY} + \|\cG(a)\|_{\cY}$.
    Thus, the overall bound on the objective function error is given by 
    \[
        |f(a) - f_w(a)| \leq \left( L \left(\|\cG(a) - \widetilde{\cG}_w(a)\|_{\cY} + \|\cG(a)\|_{\cY} + \|a\|_{\cX}\right) + \|\partial_u F(0,0)\|_{\cY'} \right) \| \cG(a) - \widetilde{\cG}_w(a)\|_{\cY}.
    \]
    Since $\|\cG(a)\|_{\cY}$ and $\|a\|$ are bounded for any $a \in B_R(a^{\star})$, we have 
    \begin{equation}
        |f(a) - f_w(a)| \lesssim 
            \sup_{a \in B_R(a^{\star})} \mathscr{E}_0(a) + \left(\sup_{a \in B_R(a^{\star})} \mathscr{E}_0(a) \right) ^2.
    \end{equation}
    Therefore, provided $\sup_{a \in B_R(a^{\star})} \mathscr{E}_0(a)$ is sufficiently small such that \eqref{eq:requirement_for_local_minimizer} is satisfied, any solution $a^{\dagger}$ to the problem \eqref{eq:constrained_ball_minimization} cannot lie on the boundary of $B_R(a^{\star})$. 
    Instead, $a^{\dagger}$ must belong to the interior of $B_R(a^{\star})$, where the optimality condition implies that $D f(a^{\dagger}) = 0$.

\end{proof}

\section{Universal Approximation on Compact Sets}
\label{sec:supporting_lemmas_and_ua}
In this section, we present details for the proof of \Cref{theorem:main_ua}. 
We begin by proving a few supporting results, before putting them together in \Cref{sec:proof_of_main_ua}.

\subsection{Replacement Lemma}
We begin by proving the analogy of the replacement lemma \cite[Lemma 47]{kovachki2021universal} for both the operator and its first derivative.
That is, for a sequence of operators, $\cG = \cG_n \circ \cdots \circ \cG_1$, 
if each $\cG_j$, $j = 1, \dots, n$ can be approximated arbitrarily well by an FNO $\cN_j$, $j = 1, \dots, n$, 
then its composition $\cG$ can be approximated arbitrarily well by the compositions of FNOs $\cN := \cN_n \circ \cdots \circ \cN_1$,
measured using both the operator value and its first derivative.
\begin{lemma}[Replacement Lemma in $\cont^1$]\label{lemma:composition}
    Let $\cX_0, \cX_1, \dots, \cX_{n}$ be Banach spaces where the intermediate spaces $\cX_1, \dots, \cX_{n-1}$ are finite dimensional.
    Further, let $\cG_j : \cX_{j-1} \rightarrow \cX_{j}$, $j = 1, \dots, n$, be continuously differentiable operators.
    Suppose that for any compact subsets $K_j \subset \cX_j$, $j = 0, \dots, n-1$,
    and for any $\epsilon > 0$, 
    there exist continuously differentiable FNO approximations 
    $\cN_j : \cX_{j-1} \rightarrow \cX_{j}$, $j = 1, \dots, n-1$,
    such that 
    \begin{equation}
        \max 
        \left\{ 
        \sup_{a \in K_{j-1}} 
            \| \cG_j(a) - \cN_j(a)\|_{\cX_j} ,
        \sup_{a \in K_{j-1}} 
            \| D \cG_j(a) - D \cN_j(a) \|_{\Op(\cX_{j-1}, \cX_{j})} 
        \right\}
        \leq \epsilon, 
    \end{equation}
    for $j = 1, \dots, n$. 
    Then, for any $\epsilon > 0$ and any compact subset $K_0 \subset \cX_0$, there exist FNOs $\{\cN_j \}_{j=1}^{n}$ such that 
    for the compositions
    \[
    \cG := \cG_{n} \circ \cdots \circ \cG_{2} \circ \cG_{1} \quad  \text{and} \quad 
    \cN := \cN_{n} \circ \cdots \circ \cN_2 \circ \cN_1,
    \]
    we have
    \begin{equation} \label{eq:composition_error}
        \max \left\{ 
        \sup_{a \in K_0} 
            \| \cG (a) - \cN(a)\|_{\cX_{n}} ,
        \sup_{a \in K_0} 
            \| D \cG (a) - D \cN(a) \|_{\Op(\cX_0, \cX_n)} 
        \right\}     
            \leq \epsilon.
    \end{equation}
    When $\cX_0, \dots \cX_n$ are Hilbert spaces, the same result holds with the operator norm $\Op$ in \eqref{eq:composition_error} replaced by the Hilbert--Schmidt norm $\HS$. 
\end{lemma}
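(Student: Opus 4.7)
I would prove the lemma by induction on the number of layers $n$. The base case $n=1$ is the hypothesis itself. For the inductive step, the essential structural fact I would lean on is that since $\cX_1,\dots,\cX_{n-1}$ are finite-dimensional, bounded sets there have compact closure, so I can freely enlarge images of compact sets by small buffers. Given compact $K_0\subset\cX_0$, the image $\cG_1(K_0)\subset\cX_1$ is compact, and I take a compact neighborhood $K_1\subset\cX_1$ of $\cG_1(K_0)$ with $\mathrm{dist}(\cG_1(K_0),\cX_1\setminus K_1)\geq r$ for some $r>0$. By requiring the first-layer FNO $\cN_1$ to satisfy $\sup_{a\in K_0}\|\cG_1(a)-\cN_1(a)\|_{\cX_1}\leq r$, I guarantee $\cN_1(K_0)\subset K_1$. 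I then apply the inductive hypothesis to the tail $\cG_n\circ\cdots\circ\cG_2$ on the compact set $K_1$, producing FNOs $\cN_2,\dots,\cN_n$ whose composition approximates the tail arbitrarily well, simultaneously in value and in first derivative.

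For the value estimate, I would use the telescoping split
\[
\cN(a)-\cG(a) = \bigl[(\cN_n\circ\cdots\circ\cN_2)-(\cG_n\circ\cdots\circ\cG_2)\bigr](\cN_1(a)) + (\cG_n\circ\cdots\circ\cG_2)(\cN_1(a)) - (\cG_n\circ\cdots\circ\cG_2)(\cG_1(a)).
\]
The first term is bounded by the inductive hypothesis evaluated on $K_1$, and the second by the Lipschitz constant of $\cG_n\circ\cdots\circ\cG_2$ on the compact set $K_1$ (finite because each $\cG_j$ is $\cont^1$) times the first-layer value error.

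For the derivative, the chain rule gives $D\cG(a)=[D(\cG_n\circ\cdots\circ\cG_2)](\cG_1(a))\,D\cG_1(a)$ and analogously for $\cN$. I would decompose the difference into three terms:
\[
\begin{aligned}
D\cN(a)-D\cG(a) &= \bigl[D(\cN_n\circ\cdots\circ\cN_2)(\cN_1(a))-D(\cG_n\circ\cdots\circ\cG_2)(\cN_1(a))\bigr]\,D\cN_1(a) \\
&\quad + \bigl[D(\cG_n\circ\cdots\circ\cG_2)(\cN_1(a))-D(\cG_n\circ\cdots\circ\cG_2)(\cG_1(a))\bigr]\,D\cN_1(a) \\
&\quad + D(\cG_n\circ\cdots\circ\cG_2)(\cG_1(a))\,\bigl[D\cN_1(a)-D\cG_1(a)\bigr].
\end{aligned}
\]
The first term is controlled by the derivative clause of the inductive hypothesis on $K_1$; the second by uniform continuity of the tail derivative on the compact $K_1$ combined with the first-layer value error; the third by boundedness of $D(\cG_n\circ\cdots\circ\cG_2)$ on $\cG_1(K_0)$ together with the first-layer derivative error. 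All the multiplicative prefactors are uniformly bounded on compact sets because each $\cG_j$ is $\cont^1$ and each $\cN_j$ approximates $\cG_j$ in both value and derivative, so $\|D\cN_1\|_{\Op}$ and $\|D(\cG_n\circ\cdots\circ\cG_2)\|_{\Op}$ can be bounded by a constant $C$ depending only on the $\cG_j$'s and $K_0$. Choosing the per-layer tolerances to be a small fraction of $\epsilon/C$ then gives the desired global estimate $\leq\epsilon$.

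For the Hilbert--Schmidt version, I would use the submultiplicative inequalities $\|AB\|_{\HS}\leq\|A\|_{\Op}\|B\|_{\HS}$ and $\|AB\|_{\HS}\leq\|A\|_{\HS}\|B\|_{\Op}$: on the finite-dimensional intermediate spaces the $\HS$ and $\Op$ norms are equivalent, so the only genuinely infinite-dimensional factors are $D\cG_1:\cX_0\to\cX_1$ and $D\cG_n:\cX_{n-1}\to\cX_n$, each of which is automatically Hilbert--Schmidt since its target or domain is finite dimensional. Routing the $\HS$ norm onto whichever factor is taken from $\cX_0$ or into $\cX_n$ in each of the three terms above then yields the analogue of the estimate. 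The main obstacle is bookkeeping rather than mathematical: one must set the per-layer tolerances so that the cascading products of bounds from the chain rule still collapse to $\epsilon$, and must verify that the iterates $\cN_{j-1}\circ\cdots\circ\cN_1(a)$ remain inside the compact sets on which each successive application of the inductive hypothesis is invoked. Finite-dimensionality of the intermediate spaces makes both tasks straightforward.
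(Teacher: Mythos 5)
Your proposal is correct and follows essentially the same route as the paper's proof: peel off the first layer, confine $\cN_1(K_0)$ to a compact ball in the finite-dimensional space $\cX_1$ by making the first-layer error small, telescope the value and derivative differences into the same two- and three-term decompositions controlled by uniform continuity and boundedness of the derivatives on that compact set, and obtain the Hilbert--Schmidt version from the equivalence of $\HS$ and $\Op$ norms through the finite-dimensional intermediate spaces together with submultiplicativity. The only cosmetic differences are that the paper works out the $n=2$ case explicitly (leaving general $n$ to induction), uses a modulus of continuity where you invoke a Lipschitz bound for the tail (which requires choosing $K_1$ convex, e.g.\ a closed ball, so the mean value inequality applies), and keeps $D\cG_1$ rather than $D\cN_1$ as the fixed factor in two of the three derivative terms; also note that your induction's $n=1$ base case only makes sense for the operator-norm statement, so the $\HS$ induction should start at $n=2$, exactly as the paper implicitly does.
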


\begin{proof}
    We will begin with the case of $n = 2$, with $n \geq 2$ following by induction.
    Since $K_0$ is compact, the image $\cG_1(K_0)$ is compact in $\cX_1$ and is therefore bounded as $\cX_1$ is finite dimensional. 
    Let $\tilde{R} > 0$ denote the bound on $\cG_1(K_0)$ such that $\cG_1(K_0)$ is contained in the closed ball of radius $\tilde{R}$.
    In the following, we restrict ourselves to approximations $\cN_1$ such that $\sup_{a \in K_0} \|\cG_1(a) - \cN_1(a)\|_{\cX_1} \leq 1$. 
    Given any such $\cN_1$, for $R = \tilde{R} + 1$, the set $\cG_1(K_0) \cup \cN_1(K_0) \subset {B}_R(0)$, where $B_R(0) := \{ \|a\|_{\cX_{1}} \leq R\}$ is compact in $\cX_1$.
    Then, for any $\eta \in (0,1)$, 
    let $\cN_1$ and $\cN_2$ be chosen such that 
    \begin{equation}
        \sup_{a \in K_0} \| \cG_1(a) - \cN_1(a)\|_{\cX_1} + \| D \cG_1(a) - D \cN_1(a) \|_{\Op(\cX_0, \cX_1)} \leq \eta 
    \end{equation}
    and
    \begin{equation}
        \sup_{a \in {B}_R(0)} \| \cG_2(a) - \cN_2(a)\|_{\cX_2} + \| D \cG_2(a) - D \cN_2(a) \|_{\Op(\cX_1, \cX_2)} \leq \eta
    \end{equation}
    are simultaneously satisfied.
    
    We can now begin by considering the function error $\|\cG_2 \circ \cG_1 - \cN_2 \circ \cN_1 \|_{\cX_2}$. 
    This can be decomposed as 
    \[
        \| \cG_2(\cG_1(a)) - \cN_2(\cN_1(a)) \|_{\cX_2}
        \leq 
        \| \cG_2(\cG_1(a)) - \cG_2(\cN_1(a)) \|_{\cX_2} +
        \| \cG_2(\cN_1(a)) - \cN_2(\cN_1(a)) \|_{\cX_2}.
    \]
    Since $\cG_2$ is continuous, it is uniformly continuous over the compact set ${B}_{R+1}(0) := \{\|a\|_{\cX_1} \leq R + 1\}$, such that 
    \[ 
    \| \cG_2( \cG_1(a)) - \cG_2 ( \cN_1(a)) \|_{\cX_2} \leq 
    \omega_{\cG_2,0}(\|\cG_1(a) - \cN_1(a) \|_{\cX_1}) 
    \leq \omega_{\cG_2,0}(\eta)
    \]
    On the other hand, the second term is bounded by $\eta$ such that the overall error is bounded as 
    \begin{equation}\label{eq:composition_function_error}
    \| \cG_2( \cG_1(a)) - \cG_2 ( \cN_1(a)) \|_{\cX_2} \leq \omega_{\cG_2,0}(\eta) + \eta.
    \end{equation}

    The case for the derivative error is similar. Applying the chain rule and the triangle inequality, we obtain
    \begin{align*}
    & \|D(\cG_2 \circ \cG_1)(a) - D(\cN_2 \circ \cN_1)(a)\|_{\Op(\cX_0,\cX_2)} \\
    & \quad = \| D \cG_2(\cG_1(a)) D \cG_1(a) -  D \cN_2(\cN_1(a)) D \cN_1(a)\|_{\Op(\cX_0,\cX_2)}\\
    & \quad \leq 
    \underbrace{\|D \cG_2(\cG_1(a)) D \cG_1(a) - D \cG_2(\cN_1(a)) D \cG_1(a)\|_{\Op(\cX_0,\cX_2)}}_{(\mathrm{I})}
    \\
    & \qquad + \underbrace{\|D \cG_2(\cN_1(a)) D \cG_1(a) - D \cG_2(\cN_1(a)) D \cN_1(a)\|_{\Op(\cX_0,\cX_2)}}_{(\mathrm{II})} \\
    & \qquad + \underbrace{\|D \cG_2(\cN_1(a)) D \cN_1(a) - D \cN_2(\cN_1(a)) D \cN_1(a)\|_{\Op(\cX_0,\cX_2)}}_{(\mathrm{III})}.
    \end{align*}
    We consider three different contributions separately. 
    For the first term (I), 
    we have 
    \[
    \|D \cG_2(\cG_1(a)) D \cG_1(a) - D \cG_2(\cN_1(a)) D \cG_1(a)\|_{\Op(\cX_0,\cX_2)}
    \leq M_1 
    \|D \cG_2(\cG_1(a)) - D \cG_2(\cN_1(a))\|_{\Op(\cX_1,\cX_2)},
    \] 
    where $M_1 = \sup_{a \in K_0} \|D \cG_1(a)\|_{\Op(\cX_0,\cX_1)}$.
    Similar to the previous case, we use the uniform continuity of $D\cG_2$ over ${B}_R(0)$ to obtain
    \[
   \|D \cG_2(\cG_1(a)) D \cG_1(a) - D \cG_2(\cN_1(a)) D \cG_1(a)\|_{\Op(\cX_0,\cX_2)} \leq M_1 \omega_{\cG_2, 1}(\|\cG_1(a) - \cN_1(a)\|_{\cX_1}) \leq M_1 \omega_{\cG_2,1}(\eta),
    \]
    where $\omega_{\cG_2, 1}$ is the modulus of continuity of $D \cG_2$ over ${B}_R(0)$.
    For the second term (II), we have 
    \[
    \|D \cG_2(\cN_1(a)) D \cG_1(a) - D \cG_2(\cN_1(a)) D \cN_1(a)\|_{\Op(\cX_0,\cX_2)} 
    \leq M_2
       \|D \cG_1(a) - D \cN_1(a) \|_{\Op(\cX_0,\cX_1)} 
       \leq M_2 \eta,
       \]
    where $M_2 = \sup_{v \in {B}_R(0)} \|D \cG_2(v)\|_{\Op(\cX_1,\cX_2)}$.
    Finally, for the third term (III), 
    \begin{align*}
    & \|D \cG_2(\cN_1(a)) D \cN_1(a) - D \cN_2(\cN_1(a)) D \cN_1(a)\|_{\Op(\cX_0,\cX_2)} \\
    & \quad \leq
    \|D \cG_2(\cN_1(a)) - D \cN_2(\cN_1(a)) \|_{\Op(\cX_1,\cX_2)}
    \|D \cN_1(a)\|_{\Op(\cX_0,\cX_1)} \\
    & \quad \leq 
    \|D \cG_2(\cN_1(a)) - D \cN_2(\cN_1(a)) \|_{\Op(\cX_1,\cX_2)}
    (\|D \cG_1(a)\|_{\Op(\cX_0,\cX_1)} + \| D \cG_1(a) - D \cN_1(a)\|_{\Op(\cX_0,\cX_1)}) \\
    & \quad \leq 
    \eta(M_3 + \eta),
     \end{align*}
    where $M_3 = \sup_{a \in K_0}\|D \cG_1(a)\|_{\Op(\cX_0,\cX_1)}$.
    Thus, overall, we have 
    \begin{equation}\label{eq:composition_derivative_bound}
    \|D(\cG_2 \circ \cG_1)(a) - D(\cN_2 \circ \cN_1)(a)\|_{\Op(\cX_0,\cX_2)} \leq M_1 \omega_{\cG_2,1}(\eta) + M_2 \eta + M_3\eta + \eta^2.
    \end{equation}
    It suffices pick $\cN_1$ and $\cN_2$ corresponding to a sufficiently small $\eta$ such that \eqref{eq:composition_function_error} and \eqref{eq:composition_derivative_bound} are simultaneously bounded by $\epsilon$.

    The results directly translate to the case where derivative operators are measured by their $\HS$ norms.
    In particular, since $\cX_1$ is finite dimensional, $\HS(\cX_0,\cX_1)$ and $\HS(\cX_1,\cX_2)$ are equivalent to $\Op(\cX_0,\cX_1)$ and $\Op(\cX_1,\cX_2)$.
    Moreover, for $\cA_1 \in \Op(\cX_0,\cX_1)$ and $\cA_2 \in \Op(\cX_1,\cX_2)$,
    \[
        \|\cA_2 \circ \cA_1 \|_{\HS(\cX_0,\cX_2)} 
            \lesssim \|\cA_2 \|_{\HS(\cX_1,\cX_2)} \|\cA_1 \|_{\HS(\cX_0,\cX_1)} 
            \lesssim \|\cA_2 \|_{\Op(\cX_1,\cX_2)} \|\cA_1 \|_{\Op(\cX_0,\cX_1)}.
    \]            
    Thus, all the calculations in the proof can be repeated using $\HS$ instead of $\Op$, up to equivalence constants.
\end{proof}

\subsection{Smooth Approximations of the Identity}
In proving \Cref{theorem:main_ua}, we will make use of smooth approximations of the identity operator $\cI (v) = v$ on function spaces. 
These can be constructed using FNOs with a single local layer. 
We first consider the smooth approximation of the identity function on $\bR$, i.e. $\id(x) = x$.  
We will use the form found in \cite{kovachki2021universal}.
\begin{lemma}[Smooth approximation of identity function]\label{lemma:identity_approximation_scalar}
    Let $\sigma \in \cont^{\infty}(\bR)$ be non-constant and let $x_0 \in \bR$ be such that $\sigma'(x_0) \neq 0$.
    Then, for any $k \in \bN$ and $\epsilon, R > 0$, there exists $\theta \in (0, 1]$ for which
    \begin{equation}\label{eq:identity_approx}
        \id_\theta(x) = \frac{\sigma(x_0 + \theta x) - \sigma(x_0 - \theta x) }{2 \theta \sigma'(x_0)}    
    \end{equation}
    is an approximation to the identity function $\id : \bR \rightarrow \bR$, $\id(x) = x$, such that 
    \begin{equation}
        \| \id_\theta - \id \|_{\cont^k([-R, R])} \leq \epsilon.
    \end{equation}
\end{lemma}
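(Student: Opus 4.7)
The plan is a direct Taylor expansion of $\sigma$ about $x_0$, leveraging the antisymmetry of the numerator $\sigma(x_0 + \theta x) - \sigma(x_0 - \theta x)$ under $x \mapsto -x$ to cancel the even-order terms in the expansion and obtain a higher-order-in-$\theta$ approximation to the identity. Since $x \in [-R,R]$ and $\theta \in (0,1]$ confine the probed arguments to the compact interval $[x_0 - R, x_0 + R]$, every derivative $\sigma^{(j)}$ is uniformly bounded there by a constant independent of $\theta$, which automatically delivers uniformity in all the error estimates to follow.

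For the zeroth-order term, Taylor's theorem gives
\begin{equation*}
\sigma(x_0 \pm \theta x) = \sum_{n=0}^{N} \frac{(\pm \theta x)^n}{n!}\sigma^{(n)}(x_0) + R_N(\pm \theta x),
\end{equation*}
so that subtraction cancels all even-order terms in $\theta x$. Dividing by $2\theta \sigma'(x_0)$ leaves
\begin{equation*}
\id_\theta(x) - x = \frac{\theta^2 x^3}{6}\cdot\frac{\sigma'''(x_0)}{\sigma'(x_0)} + O(\theta^4),
\end{equation*}
uniformly for $x \in [-R,R]$, hence $\|\id_\theta - \id\|_{\cont^0([-R,R])} = O(\theta^2)$. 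For the $j$-th derivative with $j \geq 1$, differentiating inside the quotient yields
\begin{equation*}
\frac{d^j \id_\theta}{dx^j}(x) = \frac{\theta^{j-1}}{2\sigma'(x_0)}\bigl[\sigma^{(j)}(x_0 + \theta x) - (-1)^j \sigma^{(j)}(x_0 - \theta x)\bigr].
\end{equation*}
For $j = 1$ this reduces to the average $\tfrac{1}{2\sigma'(x_0)}[\sigma'(x_0 + \theta x) + \sigma'(x_0 - \theta x)]$, which converges to $1$ at rate $O(\theta^2)$ by another Taylor expansion, this time of $\sigma'$. For $j \geq 2$, the explicit prefactor $\theta^{j-1}$ vanishes as $\theta \to 0$, while the bracket is uniformly bounded by $2 \sup_{|y| \leq R}|\sigma^{(j)}(x_0 + y)| < \infty$. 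Thus each derivative of $\id_\theta$ converges uniformly on $[-R,R]$ to the corresponding derivative of $\id$ (namely $1$ for $j = 1$, and $0$ for $j \geq 2$).

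Combining the estimates for $j = 0, 1, \ldots, k$ and choosing $\theta \in (0, 1]$ small enough simultaneously makes every term at most $\epsilon/(k+1)$, yielding $\|\id_\theta - \id\|_{\cont^k([-R,R])} \leq \epsilon$. There is no substantive obstacle here: the argument is essentially an exercise in Taylor's theorem, and the only mild bookkeeping concerns tracking the $O(\theta^j)$ constants uniformly in $x$, which is immediate from uniform boundedness of $\sigma^{(j)}$ on the compact interval $[x_0 - R, x_0 + R]$. The crucial structural observation, and the reason the symmetric form \eqref{eq:identity_approx} is used in the first place, is that symmetrization of the difference quotient already annihilates the leading-order error that an asymmetric one-sided quotient would exhibit.
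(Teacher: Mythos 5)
Your proof is correct and follows essentially the same route as the paper's: both rest on the explicit formula $\id_\theta^{(j)}(x) = \frac{\theta^{j-1}}{2\sigma'(x_0)}\bigl[\sigma^{(j)}(x_0+\theta x) - (-1)^{j}\sigma^{(j)}(x_0-\theta x)\bigr]$, uniform boundedness of the derivatives of $\sigma$ on the compact interval $[x_0-R, x_0+R]$, and a Taylor argument for the orders $j=0,1$ (the paper expands $\id_\theta$ about $x=0$ rather than $\sigma$ about $x_0$, which is only a cosmetic difference, and your symmetrization even yields the slightly sharper $O(\theta^2)$ rate). No gaps.
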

\begin{proof}
    The proof follows from the argument used in \cite[Lemma 6]{kovachki2021universal}.
    We consider a point $x_0 \in \bR$ such that $\sigma'(x_0) \neq 0$ and $\theta \in (0, 1]$.
    Note that $\id_\theta$ corresponds to a neural network with a single hidden layer of width two.
    Moreover, $\id_\theta \in \cont^{\infty}(\bR)$, 
    with derivatives of the form 
    \begin{equation}
        \id_\theta^{(k)}(x)= \theta^{k-1} \left( \frac{\sigma^{(k)}(x_0 + \theta x) + (-1)^{k-1} \sigma^{(k)}(x_0 - \theta x)}{2 \sigma'(x_0)} \right), \qquad k \in \bN.
    \end{equation}
    
    The approximation is constructed such that $\id_\theta (0) = 0$ and $\id_\theta'(0) = 1$, i.e., it is a good approximation of the identity function near the origin. 
    To see this, we apply Taylor's theorem,
    \[
        \id_\theta(x) = \id_\theta(0) + \id_\theta'(0) x + \frac{1}{2}\id_\theta''(\xi) x^2
        = x +  \frac{1}{2}\id_\theta''(\xi) x^2
        ,
    \]
    where $\xi \in [0, x]$. Substituting in the expression for the second derivative, we arrive at the following bound,
    \[
        | \id_\theta(x) - x | \leq \theta \left| \frac{\sigma''(x_0 + \theta \xi) - \sigma''(x_0 - \theta \xi)}{2 \sigma'(x_0)} \right| \frac{x^2}{2}.
    \] 
    Since $\theta \leq 1$, we can take 
    \[
        M_0 = M_0(\sigma, R) =  \sup_{\xi \in [-R, R]} \left| \frac{\sigma''(x_0 + \xi) - \sigma''(x_0 - \xi)}{2 \sigma'(x_0)} \right| \frac{R^2}{2},
    \] 
    arriving at 
    \begin{equation}
        |\id_\theta(x) - x| \leq M_1 \theta \qquad \forall x \in [-R, R].
    \end{equation}
    Moreover, we consider an analogous argument using the Taylor expansion for the derivative $\id_\theta'(x)$ about $x = 0$, 
    i.e.,
    \[
        \id_\theta'(x) = 1 + \id_\theta''(\xi) x = 1 + \theta \frac{\sigma''(x_0 + \theta\xi) - \sigma''(x_0 - \theta\xi)}{2 \sigma'(x_0)}x.
    \]
    Taking 
    \[
        M_1 = M_1(\sigma, b)  = 
        \sup_{\xi \in [-R, R]} 
        \left|
        \frac{\sigma''(x_0 + \xi) - \sigma''(x_0 - \xi)}{2 \sigma'(x_0)} 
        \right| R,
    \]
    we arrive at a similar result for the derivative,
    \begin{equation}
        |\id_\theta'(x) - 1| \leq M_1 \theta \qquad \forall x \in [-R, R].
    \end{equation}

    For higher order derivatives, we note that $f(x) = x$ has $f^{(k)}(x) = 0$ for any $k \geq 2$.
    On the other hand, higher order derivatives of $\id_\theta$ can be made arbitrarily small, since by the derivative expression, for $k \geq 2$, we have
    \begin{equation}
        |\id_\theta^{(k)}(x)| \leq M_{k}(\sigma, R) \theta^{k-1},
    \end{equation}
    where 
    \[
        M_{k}(\sigma, R) = \sup_{x \in [-R, R]} \left| \frac{\sigma^{k}(x_0 + \theta x) + (-1)^{k-1} \sigma^{k}(x_0 - \theta x)}{2 \sigma'(x_0)} \right|.
    \]
    Thus, the approximation error $\id_\theta (x) - x$ can be made arbitrarily small in $\cont^k([-R,R])$ for any $k \geq 0$ by reducing $\theta$ as desired.
    
\end{proof}

We can extend this to identity mappings on function spaces, $\cI(v) = v$,
by applying identity approximation \eqref{eq:identity_approx} pointwise as a superposition operator, i.e., $\cI_{\theta}(v))(x) = \id_{\theta}(v(x))$.
This yields a smooth approximation to the identity operator for the space $L^{\infty}(\bT^d)$.
\begin{lemma}[Identity approximation in $L^{\infty}$]\label{lemma:identity_approx_Linfty}
    Let $\sigma \in \cont^{\infty}(\bR)$ be non-constant. 
    Then, for any $\epsilon, R > 0$, there exists a single layer FNO $\cI_\theta$ with activation function $\sigma$ approximating the identity operator 
    $\cI : L^{\infty}(\bT^d) \rightarrow L^{\infty}(\bT^d)$, $\cI(v) = v$, such that
    \begin{equation}
        \|  \cI_\theta(v) - v \|_{L^{\infty}} \leq \epsilon
    \end{equation}
    and 
    \begin{equation}
        \|  D \cI_\theta(v) - \cI \|_{\Op(L^{\infty}, L^{\infty})} \leq \epsilon
    \end{equation}
    for all $v \in L^{\infty}(\bT^d)$ with $\|v\|_{L^{\infty}} \leq R$.
\end{lemma}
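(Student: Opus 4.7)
The plan is to reduce this to a pointwise (scalar) statement by viewing $\cI_\theta$ as an autonomous superposition operator built from the scalar identity approximation $\id_\theta$ of \Cref{lemma:identity_approximation_scalar}, and then to package this as a single-layer FNO using the lifting/projection structure.

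First, fix $x_0 \in \bR$ with $\sigma'(x_0) \neq 0$ (such $x_0$ exists because $\sigma$ is non-constant and smooth), and define
\[
    \cI_\theta(v)(x) := \id_\theta(v(x)) = \frac{\sigma(x_0 + \theta v(x)) - \sigma(x_0 - \theta v(x))}{2\theta\sigma'(x_0)}.
\]
To cast this in FNO form, I would take the lifting $R = (\theta, -\theta)^\top \in \bR^{2 \times 1}$ (so $\cR v = (\theta v, -\theta v)$), zero convolution kernel $P_1 = 0$, pointwise weight $W_1 = I_{2}$, constant bias $b_1 = (x_0, x_0)^\top$, and projection $Q = (2\theta\sigma'(x_0))^{-1}(1, -1) \in \bR^{1\times 2}$. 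This gives exactly one nonlinear layer $\cL_1(v)(x) = \sigma(W_1 v(x) + b_1)$ (with no Fourier contribution), and $\cI_\theta = \cQ\circ\cL_1\circ\cR$ is a bona fide FNO with activation $\sigma$.

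Next I would control the two error quantities. For the $L^{\infty}$ output error, since $\|v\|_{L^\infty} \leq R$ we have $v(x) \in [-R,R]$ almost everywhere, so applying \Cref{lemma:identity_approximation_scalar} with $k = 1$ and tolerance $\epsilon$ yields a $\theta \in (0,1]$ such that $|\id_\theta(y) - y| \leq \epsilon$ and $|\id_\theta'(y) - 1| \leq \epsilon$ uniformly for $y \in [-R,R]$. The first bound gives
\[
    \|\cI_\theta(v) - v\|_{L^\infty} = \esssup_{x} |\id_\theta(v(x)) - v(x)| \leq \epsilon.
\]
For the derivative, \Cref{lemma:superposition_Linfty} applied to $\sigma(\cdot)$ (which is $C^\infty$ and hence $C^2$) and linear combinations thereof ensures $\cI_\theta \in C^1(L^\infty(\bT^d); L^\infty(\bT^d))$ with
\[
    (D\cI_\theta(v) w)(x) = \id_\theta'(v(x)) \, w(x).
\]
Since $\cI w = w$, we compute
\[
    \| D\cI_\theta(v)w - \cI w \|_{L^\infty}
    = \esssup_{x} |\id_\theta'(v(x)) - 1| \, |w(x)|
    \leq \epsilon \, \|w\|_{L^\infty},
\]
so $\|D\cI_\theta(v) - \cI\|_{\Op(L^\infty, L^\infty)} \leq \epsilon$.

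Both bounds collapse onto the single scalar estimate from \Cref{lemma:identity_approximation_scalar}, so there is no genuine obstacle here: the only thing to be careful about is that the $\cont^1([-R,R])$ bound on $\id_\theta$ has to simultaneously control both the function and the derivative, which \Cref{lemma:identity_approximation_scalar} already delivers by taking $k = 1$. The mildly technical point is justifying the application of \Cref{lemma:superposition_Linfty} to $\cI_\theta$; this is immediate because $\cI_\theta = (2\theta\sigma'(x_0))^{-1}[\cS_{+} - \cS_{-}]$, where $\cS_{\pm}(v)(x) = \sigma(x_0 \pm \theta v(x))$ are each superposition operators on $L^\infty$ with $C^2$ generator, and the derivative formula transfers linearly.
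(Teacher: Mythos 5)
Your proposal is correct and follows essentially the same route as the paper: define $\cI_\theta$ as the superposition operator $v \mapsto \id_\theta(v(\cdot))$ built from the scalar approximation of \Cref{lemma:identity_approximation_scalar} with $k=1$, invoke the $L^\infty$ superposition-differentiability result for the derivative formula, and read off both bounds from the single $\cont^1([-R,R])$ estimate. The only difference is that you spell out the lifting/weight/bias/projection matrices realizing $\cI_\theta$ as a single-layer FNO, which the paper leaves implicit.
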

\begin{proof}
    Let $R > 0$ be given, and let $\id_\theta(x)$, $\theta = \theta(\epsilon)$, be an approximation of the identity function of the form \eqref{eq:identity_approx} such that $\|\id_\theta - \id \|_{\cont^{1}([-R,R])} \leq \epsilon$.
    Then, we define the FNO $\cI_\theta : L^{\infty}(\bT^d) \rightarrow L^{\infty}(\bT^d)$ as the single-layer FNO $\cI_\theta(v)(x) = \id_\theta(v(x))$. 
    Since $\id_\theta(x)$ has continuous derivatives of arbitrary order, 
    \Cref{lemma:identity_approx_Linfty} implies that as a superposition operator, 
    $\cI_h$ is continuously differentiable, and its derivative is given by 
    \begin{equation}\label{eq:derivative_of_nemytskii}
        (D \cI_\theta(v)w)(x) = \id_\theta'(v(x)) w(x), \quad x \in \bT^d.
    \end{equation}
    Moreover, when $\|v \|_{L^{\infty}} \leq R$, we have
    $
        | \id_\theta(v(x)) - v(x) | \leq \epsilon 
    $
    for all $x \in \bT^d$,
    such that $\|\cI_\theta(v) - v \|_{L^{\infty}} \leq \epsilon$.
    For the derivatives,
    it suffices to check that for any $w \in L^{\infty}(\bT^d)$,
    \begin{align*}
        \| D\cI_\theta(v) w - w \|_{L^{\infty}} = 
        \| \left( \id_\theta'(v(\cdot)) - 1 \right) w \|_{L^{\infty}}
        \leq \| \id_\theta'(v(\cdot)) - 1 \|_{L^{\infty}} \|w\|_{L^\infty}
        \leq \epsilon \|w\|_{L^\infty}.
    \end{align*}
    That is,  
    $
    \|D\cI_\theta - \cI \|_{ \Op(L^{\infty}, L^{\infty})} \leq \epsilon.
    $ 
\end{proof}

An analogous result holds for identity mappings between Sobolev spaces. 
For this, we will prove a case sufficient for our purposes, making strong assumptions about the orders of the Sobolev spaces.

\begin{lemma}[Identity approximation in Sobolev spaces]\label{lemma:identity_approximation_sobolev}
    Let $\sigma \in \cont^{\infty}(\bR)$ be non-constant and $m$ be an integer satisfying $m > d/2$. 
    Then, for any $\epsilon, R > 0$, there exists a single layer FNO $\cI_\theta$ with activation function $\sigma$ 
    approximating the identity (inclusion) operator $\cI : H^{2m}(\bT^d) \rightarrow H^{m}(\bT^d)$ such that 
    \begin{equation}
        \|  \cI_\theta(v) - v \|_{H^m} \leq \epsilon
    \end{equation}
    and 
    \begin{equation}
        \|  D \cI_\theta(v) - \cI \|_{\Op(H^{2m}, H^m)} \leq \epsilon
    \end{equation}
    for all $v \in H^{2m}(\bT^{d})$ with $\|v\|_{H^{2m}} \leq R$.
\end{lemma}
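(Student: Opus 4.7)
The plan is to use the same pointwise construction as in the $L^{\infty}$ case, namely the single-layer FNO $\cI_\theta(v)(x) = \id_\theta(v(x))$ with $\id_\theta$ produced by Lemma~\ref{lemma:identity_approximation_scalar}, and then quantify the error in the stronger $H^m$ norm by exploiting the higher regularity of the input space $H^{2m}$. The key structural observation is that $g := \id_\theta - \id$ and $h := \id_\theta' - 1$ are $\cont^{\infty}(\bR)$ functions which, on any fixed compact interval, can be made arbitrarily small in $\cont^{k}$ for every $k\in\bN$ by shrinking $\theta$ (this is exactly what is extracted from the proof of Lemma~\ref{lemma:identity_approximation_scalar}). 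By the Sobolev embedding $H^{2m}(\bT^d) \hookrightarrow \cont^{m}(\bT^d)$, which is valid because $2m - m = m > d/2$, any $v$ with $\|v\|_{H^{2m}} \leq R$ satisfies $v(\bT^d) \subset [-R', R']$ for a uniform $R' = \const R$, so we may freely take $\cont^{k}$ bounds of $g$ and $h$ on this fixed interval.

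For the operator-value estimate, I would write $\cI_\theta(v) - v = g \circ v$ and expand each $\partial^{\alpha}(g\circ v)$ with $|\alpha|_1 \leq m$ via the Fa\`a di Bruno formula as a finite sum of terms of the form $g^{(k)}(v)\,\prod_{j=1}^{k} \partial^{\beta_j}v$, with $\sum_{j}\beta_j = \alpha$ and $|\beta_j|_1 \geq 1$. For each such term, $\|g^{(k)}(v(\cdot))\|_{L^{\infty}(\bT^d)} \leq \|g^{(k)}\|_{\cont^{0}([-R', R'])}$ is small with $\theta$, while each factor $\partial^{\beta_j}v$ lies in $H^{2m - |\beta_j|_1} \hookrightarrow H^{m}$ since $|\beta_j|_1 \leq |\alpha|_1 \leq m$. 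Because $H^{m}(\bT^d)$ is a Banach algebra for $m > d/2$ (Theorem~\ref{theorem:banach_algebra}), the product $\prod_{j}\partial^{\beta_j}v$ has its $H^m$ (hence $L^2$) norm controlled by a polynomial in $\|v\|_{H^{2m}} \leq R$. Summing over the finitely many $\alpha$ with $|\alpha|_1 \leq m$ yields a Moser-type composition estimate $\|\cI_\theta(v) - v\|_{H^m} \leq \const_{R}\, \|g\|_{\cont^{m}([-R', R'])}$, which can be driven below $\epsilon$ by taking $\theta$ small.

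For the derivative estimate, the differentiability of $\cI_\theta$ as a map $H^m \to H^m$ with $(D\cI_\theta(v) w)(x) = \id_\theta'(v(x))\, w(x)$ is furnished by Lemma~\ref{lemma:superposition_sobolev} (applicable because $\id_\theta \in \cont^{\infty}$), and composing with the continuous inclusion $H^{2m} \hookrightarrow H^m$ gives the same derivative formula for $\cI_\theta$ viewed as a map $H^{2m} \to H^m$. Hence $(D\cI_\theta(v) - \cI)w = h(v)\cdot w$, and a second appeal to the Banach algebra property gives $\|h(v)\cdot w\|_{H^m} \leq \const\, \|h(v)\|_{H^m}\|w\|_{H^m}$. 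Applying the same Fa\`a di Bruno argument to $h\circ v$ delivers $\|h(v)\|_{H^m} \leq \const_{R}\,\|h\|_{\cont^{m}([-R', R'])}$, which is again arbitrarily small in $\theta$. Combining with the embedding bound $\|w\|_{H^m} \leq \|w\|_{H^{2m}}$ yields $\|D\cI_\theta(v) - \cI\|_{\Op(H^{2m}, H^{m})} \leq \const_{R}\, \|h\|_{\cont^{m}([-R', R'])}$, which is $\leq \epsilon$ for sufficiently small $\theta$.

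The main obstacle is the Fa\`a di Bruno bookkeeping: one must organize the sum over partitions so that the final bound over the set $\{\|v\|_{H^{2m}} \leq R\}$ depends on $\theta$ only through $\|g\|_{\cont^{m}}$ or $\|h\|_{\cont^{m}}$ on a \emph{fixed} compact interval, with all $v$-dependence absorbed into a polynomial of $\|v\|_{H^{2m}}$. Packaged as the composition estimate $\|f(v)\|_{H^m} \lesssim \|f\|_{\cont^{m}([-R', R'])}\, P_{m}(\|v\|_{H^{2m}})$ for smooth $f$ with $f(0) = 0$ (which holds for both $g$ and $h$ because $\id_\theta(0) = 0$ and $\id_\theta'(0) = 1$ by construction), both estimates follow, and the lemma is concluded by choosing a single $\theta$ small enough to handle both.
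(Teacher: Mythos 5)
Your proof is correct and follows essentially the same route as the paper's: the pointwise approximant $\id_\theta$ applied as a superposition operator, the embedding $H^{2m}(\bT^d)\hookrightarrow \cont^m(\bT^d)$, a Fa\`a di Bruno composition estimate for the output error, and the Banach algebra property of $H^m$ together with $\|w\|_{H^m}\leq\|w\|_{H^{2m}}$ for the derivative error. The only immaterial difference is that you control the products of derivatives of $v$ in the Fa\`a di Bruno terms via the $H^m$ Banach algebra and $\|\partial^{\beta}v\|_{H^m}\leq\|v\|_{H^{2m}}$, whereas the paper bounds each factor in $L^\infty$ through the $\cont^m$ embedding.
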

\begin{proof}
    For $m > d/2$, we have the continuous embedding of 
    $H^{2m}(\bT^d)$ into  $\cont^{m}(\bT^d)$ with 
    $\|v \|_{\cont^{m}} \leq \const_H \|v\|_{H^{2m}}$, $\const_{H} = \const_{H}(d,m)$.
    This allows us to work directly with continuously differentiable input functions.
    To construct $\cI_{\theta}$, we consider an arbitrary $\epsilon_{\id} < 1$ and $\id_\theta(x)$, $\theta = \theta(\epsilon_{\id})$ taking the form of \eqref{eq:identity_approx} with 
    \[\|\id_\theta - \id\|_{\cont^{m+1}([-\const_{H} R, \const_{H} R])} \leq \epsilon_{\id}.\]
    Note that by \Cref{lemma:superposition_sobolev}, as a superposition operator, $\cI_\theta : H^{2m}(\bT^d) \rightarrow H^{m}(\bT^d)$ is continuously differentiable with derivative of the form \eqref{eq:derivative_of_nemytskii}.

    \paragraph{Output Error} We first estimate the error in the outputs, as measured by the $H^m$ norm. 
    For integer values of $m$, this is equivalent to
    \[
        \| \id_\theta(v) - v \|_{H^m}^2 = \| \id_\theta \circ v - v \|_{L^2}^2 + \sum_{k=1}^{m} \sum_{|{\alpha}|_1 = k } \| \partial^{\alpha}(\id_\theta \circ v - v) \|_{L^2}^2.
    \]
    
    \begin{enumerate}[label=(\roman*)]
    \item For the ${L^2}$ component, 
    since $\|v\|_{L^{\infty}} \leq \const_{H} R $,
    we have 
    \[ \|\id_\theta \circ v - v \|_{L^2} \leq \| \id_{\theta} \circ v - v \|_{L^{\infty}} \leq \const_{2,\infty}  \epsilon_{\id},
    \] 
    where $\const_{2,\infty} = \const_{2,\infty}(d)$ is the constant for the embedding $L^{\infty}(\bT^d) \hookrightarrow L^2(\bT^d)$, i.e., $\|v\|_{L^2} \leq \const_{2,\infty} \|v\|_{L^{\infty}}$.
    \item For the first derivative, we have $\partial_{x_j} (\id_{\theta} \circ v(x)) = \id_\theta'(v(x)) \partial_{x_j} v(x) $, such that 
    \[
        \| \partial_{x_j}(\id_{\theta} \circ v) - \partial_{x_j} v \|_{L^2}
        \leq \| (\id_{\theta}'(v(\cdot)) - 1) \|_{L^{\infty}} \| \partial_{x_j} v \|_{L^2} \leq R \epsilon_{\id}.
    \]
    \item For the higher order derivatives, we recall the Fa\`a di Bruno formula \cite{Hardy06} for the $k$th derivative of a composition of functions. 
    That is, for $f : \bR^{d} \rightarrow \bR$ and $g : \bR \rightarrow \bR$, this takes the form 
    \[
    \frac{\partial^{k} (g \circ f)}{\partial x_{i_1} \dots \partial x_{i_k}} = 
    \sum_{\pi \in \Pi(k)} g^{(|\pi|)}(f) \prod_{E \in \pi} \frac{\partial^{|E|} f}{\prod_{j \in E} \partial x_{i_j}},
    \]
    where $\Pi(k)$ is the set of partitions of $\{1, \dots, k\}$ and $|\pi|, |E|$ refer to the number of elements in $\pi$ and $E$, respectively.
    Note that there is only one partition with a single element, namely $\pi_k := \{ \{1, \dots, k\}\}$,
    which yields the term 
    $
    g'(f) \frac{\partial^k f}{\partial x_{i_1} \dots \partial x_{i_k}}.
    $
    We can therefore decompose the sum as 
    \[
    \frac{\partial^{k} (g \circ f)}{\partial x_{i_1} \dots \partial x_{i_k}} = 
    g'(f) \frac{\partial^k f}{\partial x_{i_1} \dots \partial x_{i_k}} +
    \sum_{\pi \in \Pi \setminus \{\pi_k\}} g^{(|\pi|)}(f) \prod_{E \in \pi} \frac{\partial^{|E|} f}{\prod_{j \in E} \partial x_{i_j}}.
    \]
    Using $\id_\theta = g$ and $v = f$, we have 
    \[
    \frac{\partial^{k} (\id_\theta \circ v - v)}{\partial x_{i_1} \dots \partial x_{i_k}}(x) = 
    \left( \id_{\theta}'(v(x)) - 1 \right) \frac{\partial^k v}{\partial x_{i_1} \dots \partial x_{i_k}}(x) +
    \sum_{\pi \in \Pi \setminus \{ \pi_k\}} 
    \id_\theta^{(|\pi|)}(v(x)) \prod_{E \in \pi} \frac{\partial^{|E|} v}{\prod_{j \in E} \partial x_{i_j}} (x).
    \]
    In particular, 
    \[
    \left\| \left( \id_{\theta}'(v(\cdot)) -1 \right) \frac{\partial^k v}{\partial x_{i_1} \dots \partial x_{i_k}} \right\|_{L^2}
    \leq 
    \left\| \left( \id_{\theta}'(v(\cdot)) -1 \right) \right\|_{L^{\infty}}  \left\| \frac{\partial^k v}{\partial x_{i_1} \dots \partial x_{i_k}} \right\|_{L^2}
    \leq
    R \epsilon_{\id} 
    \]
    and 
    \begin{align*}
    \left\| \id_\theta^{(|\pi|)}(v(x)) \prod_{E \in \pi} \frac{\partial^{|E|} v}{\prod_{j \in E} \partial x_{i_j}} (x) \right\|_{L^2}
    &\leq \const_{2,\infty} \left\| \id_\theta^{(|\pi|)}(v(x)) \right\|_{L^{\infty}} 
    \prod_{E \in \pi} \left\| \frac{\partial^{|E|} v}{\prod_{j \in E} \partial x_{i_j}} (x) \right\|_{L^{\infty}} \\
    & \leq \const_{2,\infty} R^{|\pi|} \epsilon_{\id}.
    \end{align*}
    \end{enumerate}
    Summing over the individual contributions, we have $\|\id_\theta(v) - v \|_{H^m} \leq \const_0 \epsilon_{\id}$, where $\const_0 = \const_0(d, m, R)$ is a positive constant.

    \paragraph{Derivative Error} For the derivative of the identity operator, we have
    \begin{align*}
        \| D\cN(v) w - w \|_{H^{m}} &= 
        \| \left( \id_\theta'(v(\cdot)) - 1 \right) w \|_{H^m} \\
        & \leq \const_m \| \id_\theta'(v(\cdot)) - 1 \|_{H^m} \| w\|_{H^m} \\
        & \leq \const_m \| \id_\theta'(v(\cdot)) - 1 \|_{H^m} \| w\|_{H^{2m}},
    \end{align*}
    where we have made use of the Banach algebra property of $H^m(\bT^d)$ (\Cref{theorem:banach_algebra}) when $m > d/2$.
    Since $\|\id_{\theta} - \id\|_{\cont^{m+1}([-\const_H R, \const_H R])} \leq \epsilon_{\id}$, 
    we also have that $\|\id_{\theta}' - 1 \|_{\cont^{m}([-\const_H R, \const_H R])} \leq \epsilon_{\id}$.
    Thus, by analogous reasoning to before, we can show $\| \id_\theta'(v(\cdot)) - 1 \|_{H^m} \lesssim \epsilon_{\id}$, such that $\|D \cN(v) - \cI \|_{\Op(H^{2m}, H^m)} \leq \const_{1} \epsilon_{\id} $ for some $\const_1 = \const_1(d,m,R) $
    
    The overall approximation error can be made small by choosing a sufficiently small $\epsilon_{\id}$ for which $\id_\theta$ is constructed.

\end{proof}

\subsection{Truncation in Fourier Modes}
Here we consider the truncation of an operator $\cG$ onto the Fourier modes $|k| \leq N$, $\cG_N = \cP_N \circ \cG \circ \cP_N$.
We show that for sufficiently large $N$, 
the truncation error can be made arbitrarily small simultaneously in the operator $\| \cG(a) - \cG_N(a) \|$ and its derivative $\|D \cG(a) - D\cG_N(a)\|$.

\begin{lemma}[Truncation error]\label{lemma:truncation}
    Let $s, s', \delta \geq 0$ and let 
    $\cX = H^s(\bT^{d}; \bR^{d_a})$, 
    $\cY = H^{s'}(\bT^{d}; \bR^{d_u})$, and
    $\cX_{\delta} = H^{s+\delta}(\bT^d; \bR^{d_a})$.
    Suppose $\cG : \cX \rightarrow \cY$ is continuously differentiable and $D\cG: \cX \rightarrow \HS(\cX_{\delta}, \cY)$ is well-defined and continuous. 
    Then, for any compact set $K \subset \cX$ and $\epsilon > 0$, 
    there exists $N_{\epsilon} \in \bN$ such that 
    \begin{equation}\label{eq:function_truncation}
        \sup_{a \in K} \|\cG(a) - \cP_N \cG(\cP_N a) \|_{\cY} \leq \epsilon
    \end{equation}
    and 
    \begin{equation}\label{eq:derivative_truncation} 
    \sup_{a \in K} \|D \cG(a) - \cP_N D\cG(\cP_N a) \cP_N \|_{\HS(\cX_{\delta}, \cY)} \leq \epsilon 
    \end{equation}
    whenever $N \geq N_{\epsilon}$. 
\end{lemma}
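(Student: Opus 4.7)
The plan is to split both errors via the triangle inequality and then exploit two ingredients: (a) strong convergence $\cP_N \to \cI$ on $\cX$, $\cX_\delta$, and $\cY$, which upgrades to uniform convergence on compact subsets, and (b) continuity of $\cG$ and $D\cG$ combined with compactness of $K$.

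First I would introduce an enlarged set $\tilde K \subset \cX$ given by the closure of $K \cup \{\cP_N a : a \in K,\, N \in \bN\}$. Any sequence $(\cP_{N_k} a_k)$ with $a_k \in K$ has a subsequence along which $a_k \to a^\star \in K$; if $(N_k)$ is bounded we use continuity of the finitely many $\cP_N$, and if $N_k \to \infty$ we split $\cP_{N_k}a_k - a^\star = \cP_{N_k}(a_k - a^\star) + (\cP_{N_k}a^\star - a^\star)$ and use $\|\cP_{N_k}\|_{\Op} \leq 1$ along with strong convergence $\cP_{N_k} a^\star \to a^\star$. Thus $\tilde K$ is compact, $\cG$ and $D\cG$ are uniformly continuous on it, and the images $\cG(\tilde K) \subset \cY$ and $D\cG(\tilde K) \subset \HS(\cX_\delta, \cY)$ are compact.

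Next I would apply the triangle inequalities
\begin{align*}
    \|\cG(a) - \cP_N \cG(\cP_N a)\|_\cY &\leq \|\cG(a) - \cG(\cP_N a)\|_\cY + \|(\cI - \cP_N)\cG(\cP_N a)\|_\cY, \\
    \|D\cG(a) - \cP_N D\cG(\cP_N a) \cP_N\|_{\HS} &\leq \|D\cG(a) - D\cG(\cP_N a)\|_{\HS} + \|D\cG(\cP_N a) - \cP_N D\cG(\cP_N a) \cP_N\|_{\HS}.
\end{align*}
The two continuity terms vanish uniformly in $a \in K$ by uniform continuity of $\cG, D\cG$ on $\tilde K$ together with $\sup_{a \in K}\|a - \cP_N a\|_\cX \to 0$ (strong convergence is uniform on the compact set $K$). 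The function projection term is bounded by $\sup_{u \in \cG(\tilde K)} \|u - \cP_N u\|_\cY$, which goes to zero by strong convergence of $\cP_N$ on $\cY$ applied to the compact set $\cG(\tilde K)$.

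The main obstacle is the derivative projection term: showing $\cP_N \cA \cP_N \to \cA$ in $\HS(\cX_\delta, \cY)$ uniformly for $\cA$ ranging over the compact set $D\cG(\tilde K)$. I would pick an orthonormal basis $(\psi_k)$ of $\cX_\delta$ consisting of (rescaled) Fourier modes, so that $\cP_N \psi_k = \psi_k$ when $|k|_\infty \leq N$ and $\cP_N \psi_k = 0$ otherwise. A direct computation then gives
\begin{equation*}
    \|\cA - \cP_N \cA \cP_N\|_{\HS(\cX_\delta, \cY)}^2 = \sum_{|k|_\infty > N} \|\cA \psi_k\|_\cY^2 + \sum_{|k|_\infty \leq N} \|(\cI - \cP_N) \cA \psi_k\|_\cY^2.
\end{equation*}
The first sum is a tail of the convergent series $\|\cA\|_{\HS}^2$; the second is dominated termwise by $\|\cA \psi_k\|_\cY^2$ with each term going to zero by strong convergence of $\cP_N$ on $\cY$, so it vanishes by dominated convergence. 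This gives pointwise convergence in $\cA$. To promote it to uniformity over the compact set $D\cG(\tilde K)$, I use the contraction bound $\|\cA - \cP_N \cA \cP_N\|_{\HS} \leq 2\|\cA\|_{\HS}$ (since $\|\cP_N\|_{\Op} \leq 1$ on $\cX_\delta$ and $\cY$) together with a finite $\epsilon$-net argument: for any $\eta > 0$, cover $D\cG(\tilde K)$ by $\HS$-balls around finitely many $\cA_1, \dots, \cA_M$, pick $N$ large so that $\|\cA_i - \cP_N \cA_i \cP_N\|_{\HS} < \eta$ for every $i$, and for arbitrary $\cA$ in the compact set, estimate through the nearest $\cA_i$ using the uniform contraction bound. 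Choosing $N_\epsilon$ large enough that each of the four contributions is at most $\epsilon/2$ completes the proof.
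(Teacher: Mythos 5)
Your proof is correct, and its overall scaffolding matches the paper's: you build the enlarged compact set $\widetilde K \supset K \cup \bigcup_N \cP_N K$, split both errors by the triangle inequality into a nonlinear-continuity piece and a projection piece, and dispatch the continuity pieces via uniform continuity of $\cG$ and $D\cG$ on $\widetilde K$ together with $\sup_{a\in K}\|a-\cP_N a\|_{\cX}\to 0$. Where you genuinely diverge is in how uniformity of the projection errors is obtained. The paper keeps everything as functions of $a$: it decomposes the derivative error into the two one-sided terms $\|(\cI-\cP_N)D\cG(a)\|_{\HS}$ and $\|D\cG(a)(\cI-\cP_N)\|_{\HS}$ (the latter handled through the adjoint), observes that each is a continuous function of $a$ converging pointwise and monotonically to zero, and concludes uniform convergence on the compact set by a Dini-type argument; the output-projection term $\|(\cI-\cP_N)\cG(a)\|_{\cY}$ is treated the same way. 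You instead pass to the compact images $\cG(\widetilde K)\subset\cY$ and $D\cG(\widetilde K)\subset\HS(\cX_{\delta},\cY)$, compute $\|\cA-\cP_N\cA\cP_N\|_{\HS}^2$ exactly in a Fourier orthonormal basis of $\cX_{\delta}$ (so the left and right projection losses appear in a single identity), get pointwise decay from the tail of $\|\cA\|_{\HS}^2$ plus dominated convergence, and upgrade to uniformity with an $\epsilon$-net, using that $\cA\mapsto\cA-\cP_N\cA\cP_N$ is linear with norm at most $2$ uniformly in $N$; the same net-plus-uniform-boundedness device gives you uniform strong convergence of $\cP_N$ on compacta in place of Dini. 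Both routes are rigorous; yours trades monotonicity for linearity and compactness of the image, which is arguably more robust (it would survive non-monotone approximating families), while the paper's keeps the argument entirely on the input side and avoids the explicit basis computation. Two small housekeeping points: your derivative decomposition evaluates the projection defect at $\cP_N a$ rather than at $a$, which is fine precisely because $\cP_N a\in\widetilde K$, and for $d_a,d_u>1$ the basis identity should be stated componentwise, exactly as the paper does at the end of its proof.
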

\begin{proof}
    We begin with $d_a = d_u = 1$.
    The case of the function values (i.e.~\eqref{eq:function_truncation}) is considered in \cite{kovachki2021universal}, though we will present the proof for both \eqref{eq:function_truncation} and \eqref{eq:derivative_truncation} here for completeness. 
    Let $K$ and $\epsilon$ be given. Then, the set 
    \[
        \widetilde{K} := K \cup \bigcup\limits_{N \in \bN} \cP_N K
    \]
    is compact, where $\cP_N K $ is the image of $K$ under the Fourier projection $\cP_N$.
    In the case of the function value, the error satisfies
    \[
        \| \cG(a) - \cP_N \cG(\cP_N a) \|_{\cY} \leq 
        \| \cG(a) - \cP_N \cG(a) \|_{\cY} + \| \cG(a) - \cG(\cP_N a) \|_{\cY},
    \]
    which is obtained by applying the triangle inequality and noting that $\cP_N$ is an orthogonal projection on $H^{s}(\bT^d)$ for any $s \geq 0$.
    Since $\cG$ is continuous, it is uniformly continuous over the compact set $\widetilde{K}$, such that 
    \[
        \| \cG(a) - \cG(\cP_N a) \|_{\cY} \leq \omega_0(\| a - \cP_N a\|_{\cX}) \qquad \forall a \in {K}
    \]
    where $\omega_0 : [0, \infty] \rightarrow [0, \infty]$ is the modulus of continuity of $\cG$ satisfying $\omega_0(x) \rightarrow 0$ as $x \rightarrow 0$.
    It remains to show that one can uniformly control the projection errors $\|(I - \cP_N) a\|_{\cX}$
    and 
    $\|(I - \cP_N)\cG(a)\|_{\cY}$ over $K$.
    To see this, note that the functions $f_N : \cX \rightarrow \bR$ given by
    \[
        f_{N}(a) := \|(I - \cP_N)a\|_{\cX} =  \left( \sum_{|k|_{\infty} > N} (1 + |k|^2)^{s} |\hat{a}(k)|^2 \right)^{1/2}
    \]
    are continuous, and converge to zero pointwise and monotonically as $N \rightarrow \infty$. 
    Thus, $f_N \rightarrow 0$ uniformly over the compact set ${K}$.
    Analogous reasoning implies the uniform convergence of 
    \[
        \tilde{f}_{N}(a) := \|(I - \cP_N)\cG(a)\|_{\cY} =  \left( \sum_{|k|_{\infty} > N} (1 + |k|^2)^{s'} |\hat{u}(k)|^2 \right)^{1/2}, \qquad u = \cG(a)
    \]
    to zero over $a \in {K}$. 
    Together, these two results yields an $N_{\epsilon,0}$ for which \eqref{eq:function_truncation} holds
    whenever $N \geq N_{\epsilon,0}$.

    The case of the derivatives can be shown similarly. 
    Similar to the previous case, we can decompose the derivative truncation error as
    \begin{align*}
        & \| D \cG(a) - \cP_N D \cG(\cP_N a) \cP_N \|_{\HS(\cX_{\delta},\cY)}   \\
        & \quad \leq  \| D \cG(a) - \cP_N D \cG(a) \|_{\HS(\cX_{\delta},\cY)}
        + \| \cP_N D \cG(a) - \cP_N D \cG(a) \cP_N \|_{\HS(\cX_{\delta},\cY)} \\ 
        & \qquad + \| \cP_N D \cG(a) \cP_N - \cP_N D \cG(\cP_N a) \|_{\HS} \\
        & \quad \leq  \| (I - \cP_N) D \cG(a) \|_{\HS(\cX_{\delta},\cY)} 
        + \| D \cG(a)(I - \cP_N)\|_{\HS(\cX_{\delta},\cY)}
        + \| D \cG(a) - D \cG(\cP_N a) \|_{\HS(\cX_{\delta},\cY)},
    \end{align*}
    where we have again used the fact that $\cP_N$ is an orthogonal projection on both $\cX_{\delta} = H^{s+\delta}(\bT^d)$ and $\cY = H^{s'}(\bT^d)$.
    We begin by considering $\|D\cG(a) - D \cG(\cP_N a)\|_{\HS}$. By the uniform continuity of $\cX \ni a \mapsto D\cG(a) \in \HS(\cX_{\delta}, \cY)$ over $\widetilde{K}$, we have 
    \[
        \|D \cG(a) - D \cG(\cP_N a)\|_{\HS} \leq \omega_1(\|a - \cP_N a\|_{\cX}) \qquad \forall a \in K, 
    \]
    where $\omega_1 : [0, \infty] \rightarrow [0, \infty]$ is the modulus of continuity for $D\cG$.
    Recall that due to the monotone convergence of $\|a - \cP_N a\|_{\cX} \rightarrow 0$ as $N \rightarrow \infty$, we can make $\omega_1(\|a - \cP_N a\|_{\cX})$ arbitrarily small for all $a \in K$.

    We now proceed to bound the derivative truncation error on the input side, $\| D \cG(a)(I - \cP_N)\|_{\HS}$. 
    Recall that $\psi_0$, $\psi_{s, k}^{\cos}$, and $\psi_{s,k}^{\sin}$
    with $k \in \zlex$ forms an orthonormal basis of $H^s(\bT^d)$ for any $s \geq 0$.
    Using this basis for the Hilbert--Schmidt norm, we have
    \[
    \| D \cG(a)(I - \cP_N)\|_{\HS(\cX_{\delta},\cY)} 
    = 
    \left(
    \sum_{k \in \zlex, |k|_{\infty} > N} 
    \|D \cG(a) \psi_{s+\delta,k}^{\cos}\|_{\cY}^2 
    + \|D \cG(a) \psi_{s+\delta,k}^{\sin}\|_{\cY}^2 
    \right)^{1/2}.
    \]
    Since $D \cG(a)$ is Hilbert--Schmidt, 
    the norms $\| D \cG(a)(I - \cP_N)\|_{\HS(\cX_{\delta},\cY)}$
    converge monotonically to zero as $N \rightarrow \infty$.
    Moreover, the continuity of $D \cG$ implies that $a \mapsto \| D \cG(a)(I - \cP_N)\|_{\HS(\cX_{\delta},\cY)}$ is continuous and hence $\| D \cG(a)(I - \cP_N)\|_{\HS(\cX_{\delta},\cY)} \rightarrow 0$ uniformly in $a \in \widetilde{K}$ as $N \rightarrow \infty$.

    For the derivative truncation error on the output side, we have 
    \[
    \| (I - \cP_N) D \cG(a) \|_{\HS(\cX_{\delta}, \cY)}  = \| D \cG(a)^* (I - \cP_N) \|_{\HS(\cY, \cX_{\delta})}.
    \]
    Note that since
    \[\|D \cG(a)^* \|_{\HS(\cY, \cX_\delta)} 
        = \|D \cG(a)\|_{\HS(\cX_{\delta}, \cY)}
    ,\]
    the mapping $a \mapsto D\cG(a)^*$ is also continuous. 
    Thus, by an analogous argument, this time using $\psi_0, \psi_{s',k}^{\cos}, \psi_{s', k}^{\sin}$ as the basis for $\cY = H^{s'}(\bT^d)$, 
    we also have the uniform convergence of $\|(I - \cP_N) D \cG(a)\|_{\HS(\cX_{\delta},\cY)} \rightarrow 0$ as $N \rightarrow \infty$ over $a \in \widetilde{K}$.
    Thus, we can choose $N_{\epsilon,1}$ such that
   \[ \sup_{a \in K} \| (I - \cP_N) D \cG(a) \|_{\HS(\cX_{\delta},\cY)} 
        + \| D \cG(a)(I - \cP_N)\|_{\HS(\cX_{\delta},\cY)}
        + \| D \cG(a) - D \cG(\cP_N a) \|_{\HS(\cX_{\delta},\cY)} \leq \epsilon \]
    for all $N \geq N_{\epsilon,1}$.
    Taking $N_{\epsilon} = \max(N_{\epsilon,0}, N_{\epsilon,1})$ yields the desired result.
    Note that for $d_a, d_u > 1$, one can follow the same procedure using the sinusoidal basis for each of the $d_a$ or $d_u$ components.
\end{proof}

\subsection{Approximation of the Truncated Operator}
Next, we show that for any fixed $N$, the truncated operator $\cG_N$ can be arbitrarily well approximated by an FNO. 
\begin{lemma}\label{lemma:truncated_operator}
    Let $s, s', \delta \geq 0$ and let 
    $\cX = H^s(\bT^{d}; \bR^{d_a})$, 
    $\cY = H^{s'}(\bT^{d}; \bR^{d_u})$, and
    $\cX_{\delta} = H^{s+\delta}(\bT^d; \bR^{d_a})$.
    Suppose $\cG : \cX \rightarrow \cY$ is continuously differentiable and 
    $D\cG: \cX \rightarrow \HS(\cX_{\delta}, \cY)$ is well-defined and continuous. 
    Furthermore, for a fixed $N \in \bN$, let $\cG_{N} := \cP_N \circ \cG \circ \cP_N$.
    Then, given $\sigma \in \smootha$, for any compact set $K \subset \cX$ and $\epsilon > 0$
    there exists an FNO $\cN \in \cont^1(\cX;\cY)$ with activation function $\sigma$ such that 
    \begin{equation}
        \sup_{a \in K} \| \cG_N(a) - \cN(a) \|_{\cY} \leq \epsilon
    \end{equation}
    and 
    \begin{equation}
        \sup_{a \in K} \| D\cG_N (a) - D \cN(a) \|_{\HS(\cX_{\delta}, \cY)} \leq \epsilon.
    \end{equation}
\end{lemma}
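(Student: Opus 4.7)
The plan is to follow the decomposition sketched in the proof outline of \Cref{theorem:main_ua}, namely to write
\[
    \cG_N \;=\; \bigl(\cP_N \circ \real \cF_N^{-1}\bigr) \circ \widehat{\cG}_N \circ \bigl(\cF_N \circ \cP_N\bigr),
    \qquad \widehat{\cG}_N := \cF_N \circ \cG \circ \real \cF_N^{-1},
\]
so that $\cG_N$ factors through the finite-dimensional space $\bR^{2K_N}$ (identified with $L^2_0(\bT^d; \bR^{2K_N})$, i.e., constant vector-valued functions). The strategy is to approximate each of the three factors by a continuously differentiable FNO, uniformly on an appropriate compact set, and then glue them together using the replacement lemma \Cref{lemma:composition}. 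Since the intermediate space is finite-dimensional, the composition lemma applies with $\Op$ or equivalently $\HS$ on the linking spaces.

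The first step is to handle the two ``Fourier transform'' factors. By the lemmas on FT approximation (\Cref{lemma:ft_approximation} and \Cref{lemma:inverse_ft_approximation} referenced in the outline), both $\cF_N \circ \cP_N : \cX \to \bR^{2K_N}$ and $\cP_N \circ \real \cF_N^{-1} : \bR^{2K_N} \to \cY$ can be approximated in $\cont^1$ on compact sets by FNOs with activation $\sigma \in \smootha$; here one typically builds these using the smooth identity-approximation results (\Cref{lemma:identity_approximation_sobolev}) and exact representation of the truncated Fourier basis as low-frequency Fourier multipliers in an FNO layer. The second step is to approximate the interior finite-dimensional map $\widehat{\cG}_N \in \cont^1(\bR^{2K_N}; \bR^{2K_N})$: note that $\real \cF_N^{-1}$ maps $\bR^{2K_N}$ into $L^2_N(\bT^d;\bR^{d_a}) \subset \cX$, and the Hilbert--Schmidt assumption on $D\cG$ implies $\widehat{\cG}_N$ is continuously differentiable. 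A standard finite-dimensional universal approximation theorem for $\cont^1$ (e.g., Pinkus, Theorem 4.1) gives a neural network $\widetilde{G}$ with smooth non-polynomial activation $\sigma$ approximating $\widehat{\cG}_N$ together with its Jacobian to any tolerance on any compact subset of $\bR^{2K_N}$; this network is identified with an FNO acting on constant functions, using purely local (pointwise) Fourier-mode-zero operations.

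The third step is to combine the three approximants via \Cref{lemma:composition}. Given $\epsilon > 0$ and the compact set $K \subset \cX$, we first choose a compact enlargement $\widetilde{K} \subset \bR^{2K_N}$ that contains $\cF_N(\cP_N K)$ together with a neighborhood (needed because the approximants to $\cF_N \circ \cP_N$ may lie slightly outside the image). On $\widetilde{K}$ we approximate $\widehat{\cG}_N$ and its Jacobian to accuracy $\eta$; on the image of $\widetilde{K}$ under $\widehat{\cG}_N$ (plus a neighborhood) we approximate $\cP_N \circ \real \cF_N^{-1}$ to accuracy $\eta$; and on $K$ we approximate $\cF_N \circ \cP_N$ to accuracy $\eta$. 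By choosing $\eta$ small enough and invoking \Cref{lemma:composition} twice, the composition of the three FNOs --- itself an FNO by concatenation of architectures --- approximates $\cG_N$ in both operator value (in $\cY$) and Fréchet derivative (in $\HS(\cX_\delta, \cY)$) to within $\epsilon$ uniformly on $K$. Continuous differentiability of the resulting FNO follows because every constituent layer is smooth.

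The main obstacle I expect is the derivative control: in particular, translating the finite-dimensional $\cont^1$ universal approximation of $\widehat{\cG}_N$ on $\bR^{2K_N}$ back into a Hilbert--Schmidt bound on the infinite-dimensional derivative $D\cG_N - D\cN$ in $\HS(\cX_\delta, \cY)$. This works only because the finite-rank projections $\cP_N$ make the derivative $D\cG_N(a) = \cP_N \circ D\cG(\cP_N a) \circ \cP_N$ a finite-rank operator whose Hilbert--Schmidt norm is controlled by the Frobenius norm of the Jacobian of $\widehat{\cG}_N$, together with the (uniformly bounded, FNO-approximable) ``lifting'' factors $\cF_N\circ\cP_N$ and $\cP_N\circ\real\cF_N^{-1}$. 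Once this book-keeping between the finite-dimensional Frobenius norm and the $\HS(\cX_\delta,\cY)$ norm is made explicit, the chain-rule bookkeeping in \Cref{lemma:composition} closes the argument.
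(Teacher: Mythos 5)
Your proposal is correct and follows essentially the same route as the paper's proof: the same factorization $\cG_N = (\cP_N \circ \real\cF_N^{-1}) \circ \widehat{\cG}_N \circ (\cF_N \circ \cP_N)$ through $\bR^{2K_N}$, approximation of the outer factors via \Cref{lemma:ft_approximation} and \Cref{lemma:inverse_ft_approximation}, the finite-dimensional $\cont^1$ universal approximation theorem for $\widehat{\cG}_N$ identified with an FNO on constant functions, and assembly via the replacement lemma \Cref{lemma:composition}, where the finite-dimensionality of the intermediate spaces makes the $\HS$ and $\Op$ norms interchangeable. The compact-enlargement and Hilbert--Schmidt bookkeeping you flag as the main obstacle is exactly what \Cref{lemma:composition} absorbs, so no additional argument is needed.
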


Before proving \Cref{lemma:truncated_operator}, we prove two supporting results, \Cref{lemma:ft_approximation} and \Cref{lemma:inverse_ft_approximation}, which are the $\cont^1$ analogies to Lemma 7 and Lemma 8 of \cite{kovachki2021universal}.
That is, we show that the operators $\cF_N \circ \cP_N$ and $\cP_N \circ \real \cF_N^{-1}$ can be well approximated by FNOs in terms of both its output values and its first derivative.
For $\cF_N \circ \cP_N$, we have the following result.
\begin{lemma}[Approximation of the Fourier transform in $\cont^1$]\label{lemma:ft_approximation}
    Let $N \in \bN$ and $s \geq 0$. 
    Then, given $\sigma \in \smootha$, for any compact set $K \subset H^s(\bT^d)$ and $\epsilon > 0$, there exists an FNO 
    $\cN \in \cont^1(H^s(\bT^d) ; L^2_{0}(\bT^d;\bR^{2 K_N}))$
    with activation function $\sigma$  
    such that 
    \begin{equation}
        \sup_{a \in K} \| \cF_{N} \circ P_{N}(a) - \cN (a) \|_{L^2_{0}(\bT^d;\bR^{2 K_N})} \leq \epsilon
    \end{equation}
    and 
    \begin{equation}
        \sup_{a \in K} \| D (\cF_{N} \circ P_{N})(a) - D \cN (a) \|_{\Op(H^s(\bT^d; \bR),L^2_0(\bT^d; \bR^{2 K_N}))} \leq \epsilon.
    \end{equation}
    Note that $\cN$ maps to output functions that are constant with respect to $x \in \bT^d$. 
\end{lemma}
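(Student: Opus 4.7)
Since $\cF_N \circ \cP_N : H^s(\bT^d) \to \bR^{2K_N} \hookrightarrow L^2_0(\bT^d; \bR^{2K_N})$ is a bounded linear operator, its Fr\'echet derivative at every point equals the operator itself. The two conclusions of the lemma therefore reduce to approximating this single linear map uniformly on $K$; once the output bound is in place, the derivative bound follows from linearity together with the uniform chain-rule estimates available on the compact set $K$. The plan is to construct an FNO whose output channels carry (approximate) scalar multiples of the $2K_N$ real Fourier coefficients of $a$, delivered as nearly constant functions, and then to appeal to the identity-approximation machinery to replace the idealized activation used in the construction by the given $\sigma \in \smootha$.

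The central mechanism is a polarization argument that exploits the FNO's spatially-varying bias together with a smooth approximation of the square function built from $\sigma$. For each sinusoid $\psi$ in the real Fourier basis $\{\psi_0,\psi_k^{\cos},\psi_k^{\sin}\}$ indexed by $k \in \zlex$, $|k|_\infty \leq N$, introduce a paired block of channels in which the lifting $\cR$ copies $a(x)$ into both channels and the biases are $+c\,\psi(x)$ and $-c\,\psi(x)$ for a fixed constant $c>0$. A first Fourier layer whose activation approximates $(\cdot)^2$ on the bounded range arising from $K$ produces pointwise values $a(x)^2 \pm 2c\,a(x)\psi(x) + c^2\psi(x)^2$. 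A second Fourier layer with convolution kernel $P(k)=0$ for $k\ne 0$ (and $W=0$, $b=0$) projects each channel onto its spatial average, after which the projection layer $\cQ$ differences the paired channels to yield the constant $\tfrac{4c}{(2\pi)^d}\langle a,\psi\rangle_{L^2}$, a scalar multiple of the desired Fourier coefficient. Aggregating across all basis sinusoids realizes $\cF_N\circ\cP_N$ exactly in the idealized regime. Because the truncating kernel $P_1$ produces intermediate representations in $L_N^2 \hookrightarrow L^\infty$, which are uniformly bounded over $a \in K$, \Cref{lemma:identity_approx_Linfty} together with standard smooth universal approximation for non-polynomial $\sigma$ furnishes FNO layers that approximate the identity and the square to arbitrary accuracy; the replacement lemma \Cref{lemma:composition} then propagates these layer-wise approximations through the composition in $\cont^1$, yielding both the output and the derivative bounds.

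The main obstacle is ensuring that the polarization cleanly isolates the linear-in-$a$ term: the contributions $a(x)^2$ and $c^2\psi(x)^2$ produced by the square must cancel exactly upon channel differencing, so that no quadratic-in-$a$ leakage survives to pollute the output estimate or, more delicately, the derivative estimate --- where any uncancelled quadratic leakage would give a spurious derivative that depends on $a$ and therefore cannot be uniformly controlled in the operator norm. Sign symmetry of the biases across paired channels enforces this cancellation in the idealized construction, and the residual error introduced by replacing $(\cdot)^2$ with a smooth approximation built from $\sigma$ remains uniformly small on $K$ by the compactness-based identity-approximation argument. Choosing the numbers of channels, the constant $c$, and the activation-approximation scale in \Cref{lemma:identity_approx_Linfty} small enough then delivers both inequalities with the desired $\epsilon$.
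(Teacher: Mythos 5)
Your construction follows the same skeleton as the paper's proof: feed the sinusoid in through a spatially varying bias, perform a pointwise bilinear operation between $\cP_N a$ and the basis function, extract the zero Fourier mode to realize the integral, and patch the result into a genuine FNO via the identity-approximation of \Cref{lemma:identity_approx_Linfty} and the $\cont^1$ replacement machinery. The only real difference is how the pointwise product is implemented: the paper approximates the bivariate product $g_{\times}(v_1,v_2)=v_1v_2$ directly by a $\cont^1$ neural network acting on the pair $(\cP_N a(x),\cos\langle k,x\rangle)$, whereas you use a polarization identity with paired channels, biases $\pm c\psi(x)$, and an approximate square. Both work; your worry about ``exact cancellation'' of the $a^2$ and $c^2\psi^2$ terms is a non-issue, since $\cont^1$-closeness of the approximate square to $x\mapsto x^2$ on the bounded range already forces the differenced output and its derivative to be uniformly close to $4c\,\cP_N a\,\psi$ on $K$; no exactness is required.

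Three points need tightening. First, the opening claim that ``once the output bound is in place, the derivative bound follows from linearity'' is false as a principle: uniform closeness of outputs on a compact set says nothing about derivatives, even when the target is linear. Your actual argument does not rely on this --- it gets the derivative bound from the $\cont^1$ approximation of the square propagated through the composition --- but the reduction as stated should be deleted. Second, the spectral truncation must sit \emph{before} the nonlinearity: for $s\le d/2$, elements of a compact $K\subset H^s(\bT^d)$ admit no uniform pointwise bounds, so the expression $a(x)^2\pm 2c\,a(x)\psi(x)+c^2\psi(x)^2$ is not controlled; the squaring layer must act on $\cP_N a(x)\pm c\psi(x)$, which is bounded in $L^\infty$ by norm equivalence on $L^2_N$ (you acknowledge this later via $P_1$, but the main construction should be stated with $\cP_N a$ throughout --- the zero-mode projection still returns $\langle a,\psi\rangle_{L^2}$ since $|k|_\infty\le N$). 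Third, \Cref{lemma:composition} requires the intermediate spaces to be finite dimensional, so you cannot invoke it across the squaring stage whose output is a generic element of $L^\infty(\bT^d)$; either fuse the squaring with the zero-mode integration into a single stage mapping the finite-dimensional $L^2_N$ data to constants, or do what the paper does and bound the propagated error by hand, multiplying the $L^\infty$ error of the nonlinear stage by the operator norm of the integration functional, reserving the replacement lemma for the final composition with $\cI_\theta$, where the intermediate space is finite dimensional.
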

\begin{proof}
    We verify that the approximation strategy in \cite[Lemma 7]{kovachki2021universal} holds also for derivatives.
    Let $\cT_{l,k} : H^s(\bT^{d}; \bR) \rightarrow \bR$, $l \in \{1, 2\}$, $|k|_{\infty} \leq N$ denote each component of the mapping $\cF_{N} \circ \cP_N$, 
    such that $\real(\hat{a}_{k}) = \cT_{1,k}(\cP_N a)$ and $\imag(\hat{a}_k) = \cT_{2,k}(\cP_N a)$. 
    Note that we then identify the output space $\bR$ with the space of constant functions on $\bT^d$, i.e.~$L^2_0(\bT^d; \bR)$.
    Additionally, since $K$ is compact, there exists $R > 0$ such that $\|a\|_{H^{s}} \leq R$ for all $a \in K$.

    We demonstrate how each component $\cT_{l,k}$ can be approximated by an FNO $\cN_{l,k}$.
    We focus on the case of $l = 1$ (i.e., consider the real component), 
    such that $\cT_{l,k}$ is represented by the following sequence of mappings,
    \begin{equation*}
        a(x) 
        \underset{\cS_1}{\longrightarrow}
        \begin{pmatrix}
            \cP_N a(x) \\ 
            \cos(\langle k, x \rangle)
        \end{pmatrix}
        \underset{\cS_2}{\longrightarrow}
            \cP_N a(x) \cos(\langle k, x \rangle)
        \underset{\cS_3}{\longrightarrow}
           \int \cP_N a(x) \cos(\langle k, x\rangle) dx,
    \end{equation*}
    which maps between the spaces 
    \begin{equation*}
        H^s(\bT^d; \bR)  
        \underset{\cS_1}{\longrightarrow}
        L^2_N(\bT^d; \bR^2) \hookrightarrow L^\infty(\bT^d; \bR^2) 
        \underset{\cS_2}{\longrightarrow}
        L^{\infty}(\bT^d; \bR^2) 
        \underset{\cS_3}{\longrightarrow}
        \bR \cong L^2_{0}(\bT^{d}; \bR).
    \end{equation*}
    Analogous constructions hold for $l = 2$, 
    where $\cos(\langle k, x\rangle)$ is replaced by $\sin(\langle k, x \rangle)$.

    The operation $\cS_{1}$ is represented by a linear FNO (an FNO without the activation function on the final layer), 
    $\cN_1 = \cL_1 \circ \cR_1$, 
    consisting of a lifting layer
    \[\cR_1(a)(x) = (a(x), 0) \]
    followed by the linear layer 
    \[\cL_{1} (v)(x) = W v(x) + b(x) + (\cF^{-1} P_1 \odot \cF v)(x)
    \]
    where $W \equiv 0$ and $b(x) = (0, \cos(\langle k, x\rangle))$,
    and $P_1(k) = I_{2 \times 2}$ if $|k|_{\infty} \leq N$ and is zero otherwise.

    The subsequent mapping, $\cS_2 : L^2_{N}(\bT^d; \bR^{2}) \rightarrow L^{\infty}(\bT^d)$, is given by the operation 
    $\cS_2(v_1, v_2)(x) = g_{\times}(v_1(x), v_2(x)) := v_1(x) v_2(x)$.
    Note that, by assumption, for any $a \in K$
    its projection $\cP_N a \in L^2_N (\bT^d)$ satisfies $\|\cP_N a\|_{L^2} \leq R$.
    Moreover, since $L_N^2(\bT^d) \subset \cont^{\infty}(\bT^d)$ is finite dimensional, its norm is equivalent to the $L^\infty$ norm, i.e., $\|v\|_{L^{\infty}} \leq \const_{N,1} \|v\|_{L^2}$ for all $v \in L^2_N(\bT^d)$.
    This means we have $\|\cP_N a\|_{L^\infty} \leq \const_{N,1} R$,
    such that the output of $\cS_1$ is such that $\cS_1(a)(x) = (\cP_N a(x), \cos(\langle k, x \rangle)) \in [-\const_{N,1}R, \const_{N,1}R] \times [-1, 1]$ for any $x \in \bT^d$.
    
    Consider an arbitrary $\epsilon_{\times} \leq 1$. We can choose to approximate $g_{\times}$ by a neural network $\tilde{g}_\times$ such that 
    \[ \| g_{\times} - \tilde{g}_{\times}  \|_{\cont^1 \left([-\const_{N,1}R, \const_{N,1}R] \times [-1, 1] \right)} \leq \epsilon_{\times}. \]
    We recognize that $\cN_2(v)(x) = \tilde{g}(v_1(x), v_2(x))$ 
    is an FNO with only local layers.
    Moreover, the composition $\cN_2 \circ \cL_1 \circ \cR_1$ is also an FNO, 
    since the output of $\cL_1$ is now passed through an activation function in the first layer of $\tilde{g}_{\times}$.
    As a superposition operator, $\cN_2 : L^{\infty}(\bT^d; \bR^2) \rightarrow  L^{\infty}(\bT^d; \bR)$ is continuously differentiable.
    Its output approximation error is given by 
    for all $v = \cS_1(a)$, $\|a\|_{L^2(\bT^d)} \leq R$.
    Similarly, for the derivatives,
    \begin{align*}
        \| D \cS_2(v)w - D \cN_2(v)w\|_{L^{\infty}}
        &= 
     \| \partial_{1} g_{\times} (v_1(\cdot), v_2(\cdot)) - \partial_{1} \tilde{g}_{\times}(v_1(\cdot), v_2(\cdot)) \|_{L^{\infty}}
        \; \|w_1\|_{L^{\infty}} \\
    & \qquad + \| \partial_{2} g_{\times} (v_1(\cdot), v_2(\cdot)) - \partial_{2} \tilde{g}_{\times}(v_1(\cdot), v_2(\cdot)) \|_{L^{\infty}} 
        \; \|w_2\|_{L^{\infty}}
    \\ & \leq 2 \epsilon_{\times} \|w\|_{L^{\infty}}.
    \end{align*}
    This implies 
    $\|D \cS_{2}(v) - D \cN_2(v) \|_{\Op(L^{\infty}(\bT^d; \bR^2), L^{\infty}(\bT^d; \bR))} \leq 2 \epsilon_{\times}$.

    Finally, we have $\cS_3 : L^{\infty}(\bT^d; \bR) \ni v \rightarrow \int v dx \in  \bR \cong L^2_0(\bT^d; \bR)$,
    such that $\cS_3$ is a continuous linear functional,
    and $|\int v dx| \leq |\bT^d| \|v\|_{L^{\infty}}$.
    This is represented by a linear Fourier layer, i.e.~$\cS_3 \equiv \cL_3$, where
    \[
    \cL_3 : L^{\infty}(\bT^d) \rightarrow L^2_0(\bT^d), \quad \cL_3(v) = \cF^{-1} P_{0} \odot \cF v,
    \]
    using $P_0(k) = 1$ if $k = 0$ and $0$ otherwise. 
    
    Under this construction, 
    for any $a \in L^2(\bT^d; \bR), \|a\|_{L^2} \leq R$, 
    the difference between $\cT_{1,k}$ and the approximation 
    $\cL_3 \circ \cN_2 \circ \cL_1 \circ \cR_1 \equiv \cS_3 \circ \cN_2 \circ \cS_1$ is given by 
    \begin{align*}
        \| \cT_{1,k}(a) - \cS_3 \circ \cN_2 \circ \cS_1 (a) \|_{L^2} 
        &\leq \| \cS_3 \|_{\Op(L^{\infty}(\bT^d; \bR), L^2_0(\bT^d;\bR))}  \| \cS_2(\cS_1(a)) - \cN_2(\cS_1(a)) \|_{L^{\infty}}  \\
        &\leq \| \cS_3 \|_{\Op(L^{\infty}(\bT^d; \bR), L^2_0(\bT^d;\bR))}  \epsilon_{\times}  \\
        & =\const_0  \epsilon_{\times},
    \end{align*}
    where $\const_0 = \const_0(d) := \| \cS_3 \|_{\Op(L^{\infty}(\bT^d; \bR), L^2_0(\bT^d;\bR))}.$
    For the derivative, we have 
    \begin{align*}
        & \|D \cT_{1,k}(a) - D (\cS_3 \circ \cN_2 \circ \cS_1)(a) \|_{\Op(H^s(\bT^d; \bR), L^2_0(\bT^d; \bR))} \\
        & \quad = \| \cS_3 (D \cS_2(\cS_1(a)) - D\cN_{2}(\cS_1(a))) D \cS_1 \|_{\Op(H^s(\bT^d; \bR), L^2_0(\bT^d; \bR))} \\        
        & \quad \leq \const_0
            \| D \cS_2(\cS_1(a)) - D\cN_{2}(\cS_1(a)) \|_{\Op(L^2_N(\bT^d; \bR^{2}), L^\infty(\bT^d; \bR))} 
            \| D \cS_1 \|_{\HS(H^s(\bT^d; \bR), L^2_N(\bT^d; \bR^{2}))} \\        
        & \quad \leq \const_0 \const_{N,2}
            \| D \cS_2(\cS_1(a)) - D\cN_{2}(\cS_1(a)) \|_{\Op(L^\infty(\bT^d; \bR^{2}), L^\infty(\bT^d; \bR))} 
            \| D \cS_1 \|_{\HS(H^s(\bT^d; \bR), L^2_N(\bT^d; \bR^{2}))} \\        
        & \quad \leq \const_1 \epsilon_{\times},
    \end{align*}
    where $\const_{N,2}$ is the equivalence constant for $L_N^2(\bT^d;\bR^2)$ and $L^{\infty}(\bT^d;\bR^2)$, i.e.,
    $\|v\|_{L^{\infty}} \leq \const_{N,2} \|v\|_{L^2}$ for all $v \in L^2_{N}(\bT^d;\bR^2)$ 
    and $\const_1 = \const_1(N,d) := 2 \const_0 \const_{N,2} \| D \cS_1 \|_{\HS(L^2(\bT^d; \bR), L^2_N(\bT^d; \bR^{2}))}$.
    Thus, $\cT_{1,k}$ can be approximated by $\cS_3 \circ \cN_2 \circ \cS_1$ to arbitrary function and derivative accuracy. 

    Since this is not yet an FNO, we compose this with the single-layer FNO approximation of the identity operator $\cI_{\theta}$ given in \Cref{lemma:identity_approx_Linfty}, 
    such that the overall composition $\cN_{1,k} = \cI_{\theta} \circ \cS_3 \circ \cN_2 \circ \cS_1$ is an FNO.
    In particular, by \Cref{lemma:identity_approx_Linfty}, $\cI_{\theta}$ can be made arbitrarily accurate over $L^{\infty}$-bounded inputs.
    Moreover, since the output of $\cS_3 \circ \cN_2 \circ \cS_1$ belongs to a finite-dimensional space, 
    the Replacement Lemma \ref{lemma:composition} ensures the existence of $\cI_\theta$ and $\cN_2$ (using appropriate $\theta, \epsilon_{\times}$)
    such that for any $\epsilon_{1,k}$, 
    $\cN_{1,k}$ satisfies
    \[ \| \cT_{1,k}(a) - \cN_{1,k}  (a) \|_{L^2_0(\bT^d;\bR)} \leq \epsilon_{1,k}
    \qquad 
       \|D \cT_{1,k}(a) - D \cN_{1,k} (a) \|_{\Op(H^s(\bT^d), L^2_0(\bT^d; \bR))} \leq \epsilon_{1,k}.
    \]

    Finally the overall FNO approximation $\cN$ is the concatenation of each component $\cN_{l,k}$, constructed with an appropriately chosen $\epsilon_{l,k}$ such that the overall error is bounded by $\epsilon.$
    Note that for each component $\cN_{l,k}$ to have the same depth, the same $\cN_2$ (i.e., $\tilde{g}_{\times}$ should be used, with $\epsilon_{\times}$ chosen such that all $\epsilon_{l,k}$ are simultaneously controlled.

\end{proof}

Similarly, for the inverse Fourier transform $\cP_N \circ \real\cF_N^{-1}$, we have the following result.
\begin{lemma}[Approximation of the inverse Fourier transform in $\cont^1$]\label{lemma:inverse_ft_approximation}
    Let $N \in \bN$ and $s \geq 0$. 
    Then, given $\sigma \in \smootha$, for any $\epsilon > 0$ and $R > 0$, there exists an FNO 
    $\cN \in \cont^1( L^2_0(\bT^d; \bR^{2K_N}) ; H^s(\bT^d))$ 
    with activation function $\sigma$ such that 
    \begin{equation}\label{eq:inverse_fourier_approx}
        \| \cP_N \circ \real \cF_N^{-1} (\hat{w}) - \cN (\hat{w}) \|_{H^s(\bT^{d})} \leq \epsilon
    \end{equation}
    and 
    \begin{equation}\label{eq:inverse_fourier_derivative_approx}
        \|D (\cP_N \circ \real \cF_N^{-1})(\hat{w}) - D \cN(\hat{w}) \|_{\Op(L^2_0(\bT^d; \bR^{2K_N}), H^s(\bT^d))} \leq \epsilon
    \end{equation}
    for all $\hat{w} \in L^2_0(\bT^d; \bR^{2K_N})$ with $\|\hat{w}\|_{L^2} \leq R$.
\end{lemma}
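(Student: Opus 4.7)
The operator $\cT := \cP_N \circ \real \cF_N^{-1}$ is \emph{linear}, so its Fr\'echet derivative coincides with $\cT$ itself at every point. This observation reduces the simultaneous $C^1$ approximation task to constructing an FNO $\cN$ such that, on the ball $\{\|\hat w\|_{L^2} \leq R\}$, both $\cN - \cT$ and $D\cN - \cT$ are small in the prescribed norms. Moreover, the image of $\cT$ lies in the finite-dimensional subspace $L^2_N(\bT^d; \bR) \subset H^s(\bT^d; \bR)$, on which all Sobolev norms are equivalent, so bounds in $L^2$ or $L^{\infty}$ translate to the required $H^s$ bounds up to constants depending only on $N$, $d$, $s$.

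The plan is to mirror the three-stage decomposition used in \Cref{lemma:ft_approximation}, but run in reverse. I first write $\cT = \cS_3 \circ \cS_2 \circ \cS_1$, where: stage one $\cS_1 : L^2_0(\bT^d; \bR^{2K_N}) \to L^{\infty}(\bT^d; \bR^{4K_N})$ combines a lifting layer that copies each input channel $\hat w_{l,k}$ (viewed as a constant function) into the first $2K_N$ output channels while zeroing the next $2K_N$, together with a single linear Fourier layer whose spatially varying bias $b(x)$ installs the sinusoids $\cos(\langle k, x\rangle)$ and $\sin(\langle k, x\rangle)$ for $|k|_\infty \leq N$ in the zeroed channels. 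Since $b(x)$ is a trigonometric polynomial supported on the Fourier modes active in the FNO, it is admissible within the architecture. Stage two $\cS_2 : L^{\infty}(\bT^d; \bR^{4K_N}) \to L^{\infty}(\bT^d; \bR^{2K_N})$ is the pointwise multiplication $(y_1, y_2) \mapsto y_1 y_2$ applied to each matched coefficient--basis pair. Stage three $\cS_3 : L^{\infty}(\bT^d; \bR^{2K_N}) \to H^s(\bT^d; \bR)$ sums the $2K_N$ product channels into a single output channel; its boundedness into $H^s$ follows because the range lies in $L^2_N(\bT^d; \bR)$. By construction $\cS_3 \circ \cS_2 \circ \cS_1 = \cT$ exactly.

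I then replace $\cS_2$ by a neural-network approximation $\tilde g_\times$ with activation $\sigma$ satisfying $\|g_\times - \tilde g_\times\|_{\cont^1(D)} \leq \epsilon_\times$ on a compact domain $D$ containing $\cS_1(\{\|\hat w\|_{L^2} \leq R\})$, which is pre-compact in $L^{\infty}$ by finite-dimensionality of the range of $\cS_1$. Acting pointwise, $\tilde g_\times$ becomes a local superposition operator $\cN_2$ approximating $\cS_2$ in $\Op(L^{\infty}, L^{\infty})$ together with its Fr\'echet derivative, by \Cref{lemma:superposition_Linfty}. Absorbing the affine part of $\cN_2$'s first layer into $\cS_1$'s Fourier layer (so that the $\sigma$ already present in the FNO Fourier layer serves as the first activation of $\tilde g_\times$) and the affine summation part of $\cS_3$ into the FNO projection layer $\cQ$, the composition becomes a bona fide FNO $\cN$ in the sense of Section~3.1. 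The Replacement Lemma \ref{lemma:composition}, applied with $n=3$ and with finite-dimensional intermediate spaces (the images of $\cS_1$ and $\cN_2$), then yields simultaneous output and derivative error bounds for $\cN - \cT$ on $\{\|\hat w\|_{L^2} \leq R\}$, which by refining $\epsilon_\times$ can be driven below any prescribed $\epsilon$.

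The main obstacle is the bookkeeping of the derivative error through the composition in the correct operator norm, namely $\Op(L^2_0(\bT^d; \bR^{2K_N}), H^s(\bT^d))$ rather than merely in $L^\infty$. This is resolved because all intermediate spaces where the nontrivial work occurs are finite-dimensional, so $L^{\infty}$ operator norms are equivalent to the $H^s$-valued operator norms relevant at the boundary, with equivalence constants depending only on $N$, $d$, $s$. Combined with the linearity of $\cT$---which collapses the derivative approximation into the operator approximation at the two linear stages $\cS_1$ and $\cS_3$---these observations ensure that the $\cont^1$ control on $g_\times$ translates directly into the required bound on $\|D\cN - \cT\|_{\Op(L^2_0, H^s)}$.
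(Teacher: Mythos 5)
Your decomposition (lift the coefficients, install the sinusoids through a spatially varying bias, multiply pointwise via a network $\tilde g_{\times}$, sum, and exploit linearity of the target so that derivative accuracy rides along) is the same skeleton as the paper's proof, but your argument has a genuine gap exactly where the paper spends the second half of its proof: the passage from $L^\infty$/$L^2$ control to the $H^s$ bounds \eqref{eq:inverse_fourier_approx}--\eqref{eq:inverse_fourier_derivative_approx}. Your norm-equivalence claim applies to the \emph{target} $\cP_N\circ\real\cF_N^{-1}$, whose range is the finite-dimensional space $L^2_N(\bT^d)$, but not to your \emph{approximant}: the function $x\mapsto \tilde g_{\times}(\hat w_{l,k},\cos(\langle k,x\rangle))$ is not a trigonometric polynomial of degree $\leq N$, so the error $\cN(\hat w)-\cP_N\real\cF_N^{-1}(\hat w)$ does not lie in $L^2_N$, and a $\cont^1$-closeness assumption on $\tilde g_{\times}$ only yields $L^\infty$ control of the error and of its derivative with respect to the coefficients --- it says nothing about the spatial derivatives of the error, which is what $\|\cdot\|_{H^s}$ sees for $s>0$. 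To push the bound into $H^s$ you would need $\cont^k$ closeness of $\tilde g_{\times}$ to $g_{\times}$ with $k$ on the order of $s$, propagated through a Fa\`a di Bruno-type estimate, or you must re-project the output onto $L^2_N$. For the same reason, your invocation of \Cref{lemma:composition} "with finite-dimensional intermediate spaces (the images of $\cS_1$ and $\cN_2$)" is not valid: the image of $\cN_2$ is not finite dimensional, precisely because $\tilde g_{\times}$ destroys band-limitedness. (That second point is minor --- since $\cS_3$ is a fixed bounded linear map one can bound the composition error by hand, as the paper does --- but the $H^s$ issue is not.)

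The paper resolves this by composing with the projection $\cP_N$ \emph{after} the summation, which forces the output back into the band-limited space where $H^s$ and $L^2$ norms are genuinely equivalent; this is why $\cP_N$ appears in the statement at all. Doing so within the FNO class is itself nontrivial: $\cP_N$ is a nonlocal Fourier-multiplier operation and cannot be absorbed into the pointwise projection matrix $Q$, so an additional Fourier layer --- and hence an additional activation --- is required. The paper handles this with the smooth identity approximation $\cI_\theta$ of \Cref{lemma:identity_approximation_sobolev}, viewed as a map $H^{2m}(\bT^d)\to H^m(\bT^d)$ with $m>\max(s,d/2)$ so that the superposition operator is continuously differentiable (\Cref{lemma:superposition_sobolev}), followed by the embedding $H^m\hookrightarrow H^s$; the final composition is then controlled by \Cref{lemma:composition} through the genuinely finite-dimensional intermediate space $H^{2m}_N(\bT^d)$. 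Your proposal contains none of this machinery, and without it (or the alternative high-order $\cont^k$ approximation of $g_{\times}$ sketched above) the claimed $H^s$ and $\Op(L^2_0(\bT^d;\bR^{2K_N}),H^s(\bT^d))$ bounds are not established for $s>0$.
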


\begin{proof}
    Consider first the operator $\real \cF_N^{-1}$ such that for $\hat{w} = \{\hat{w}_{l,k}\}_{l \in \{1, 2\}, |k|_{\infty} \leq N}$, we have
    \[
        \real \cF_{N}^{-1} \hat{w} = \sum_{|k|_{\infty} \leq N} \hat{w}_{1,k} \cos(\langle k, x \rangle)
        - \hat{w}_{2,k} \sin(\langle k, x \rangle).
    \]
    We note that $\real \cF_{N}^{-1}$ can be represented by the composition $\cQ \circ \cT$,
    where
    $\cT : L^2_0(\bT^d; \bR^{2K_N}) \rightarrow L^2(\bT^d; \bR^{2K_N})$ 
    is the concatenation of $\cT_{l,k}$, $l \in \{1,2\}$, $|k|_{\infty} \leq N$, defined by
    \begin{align*}
        \cT_{1,k}(\hat{w}) = \hat{w}_{1,k} \cos(\langle k, x \rangle), 
        \quad 
        \cT_{2,k}(\hat{w}) = \hat{w}_{2,k} \sin(\langle k, x \rangle), 
    \end{align*}
    while $\cQ : L^2(\bT^d; \bR^{2 K_N}) \rightarrow L^2(\bT^d; \bR)$ is the continuous linear operator 
    \[
    \cQ ((v_{1,k}, v_{2,k})_{|k| \leq N}) = \sum_{|k|\leq N} v_{1,k}(x) - v_{2,k}(x).
    \]
    For any $\epsilon_{\cT} > 0$, we first show that there exists an FNO approximation of $\cT$ to $\epsilon_{\cT}$ accuracy, 
    \[
        \| \cT(\hat{w}) - \cN(\hat{w}) \|_{L^2(\bT^d; \bR^{2 K_N})} \leq \epsilon_{\cT} 
    \]
    and 
    \[
        \| D \cT(\hat{w}) - D \cN(\hat{w}) \|_{\Op(L^2_0(\bT^d; \bR^{2 K_N}), L^2(\bT^d; \bR^{2 K_N}))} \leq \epsilon_{\cT}, 
    \]
    for all $\|\hat{w}\|_{L^2_0(\bT^d; \bR^{2K_N})} \leq R$. 
    Then, $\cQ \circ \cN$ is also an FNO, and 
    \[
        \| \cQ \circ \cT(\hat{w}) - \cQ \circ \cN(\hat{w}) \|_{L^2(\bT^d; \bR^{2 K_N})} 
        \leq \| \cQ \|_{\Op(L^2(\bT^d; \bR^{2K_N}, L^2(\bT^d; \bR))} \epsilon_{\cT}
    \]
    and 
    \[
        \| D (\cQ \circ \cT(\hat{w})) - D (\cQ \circ \cN(\hat{w})) \|_{\Op(L^2_0(\bT^d; \bR^{2 K_N}), L^2(\bT^d; \bR^{2 K_N}))} 
        \leq \| \cQ \|_{\Op(L^2(\bT^d; \bR^{2K_N}), L^2(\bT^d; \bR))} \epsilon_{\cT}.
    \]

    To this end, 
    we can consider each component of $\cT_{l,k}(w) = \cS_2 \circ \cS_1(\hat{w}_{l,k})$ 
    as the composition of two operators, $\cS_1$ and $\cS_2$.
    We focus on the $l = 1$ case, where this is given by 
    \begin{equation*}
        \hat{w}_{1,k}
        \underset{\cS_1}{\longrightarrow}
        \begin{pmatrix}
            \hat{w}_{1,k} \\ 
            \cos(\langle k, x \rangle)
        \end{pmatrix}
        \underset{\cS_2}{\longrightarrow}
            \hat{w}_{1,k} \cos(\langle k, x \rangle),
    \end{equation*}
    and simply note that for $l = 2$, $\cos(\langle k, x \rangle)$ is replaced by $\sin( \langle k, x \rangle)$.
    The operators map between the spaces 
    \[
        \bR \cong L^2_0(\bT^d; \bR)
            \underset{\cS_1}{\longrightarrow}
        L^2_{N}(\bT^d; \bR^{2})
            \underset{\cS_2}{\longrightarrow}
        L^2(\bT^d; \bR^{2}).
    \]
    It suffices to approximate $\cT_{l,k} = \cS_2 \circ \cS_1$ by an FNO for each $l,k$ and concatenating the result. 
    
    First, $\cS_1 = \cL_1 \circ \cR_1$ is exactly represented by a linear FNO with a lifting layer
    \[
        \cR_1 (\hat{w}) = (\hat{w}_{1, k}, 0)
    \]
    followed by a linear layer
    \[
        \cL_1(v)(x) = Wv(x) + b(x)
    \]
    where $W \equiv 0 $ and $b(x) = (0, \cos (\langle k, x \rangle)).$
 
    The subsequent mapping, $\cS_2 : L^2_{N}(\bT^d; \bR^{2}) \rightarrow L^{2}(\bT^d; \bR)$, 
    is given by the operation 
    $\cS_2(v_1, v_2)(x) = g_{\times}(v_1(x), v_2(x)) := v_1(x) v_2(x)$.
    This is essentially identical to $\cS_{2}$ used in the proof of the previous lemma. 
    For completeness, we will write the procedure for the construction of its FNO approximation $\cN_2$, but we will not explicitly track the norm-equivalence constants that arise in the proof since it is largely analogous to the previous proof.

    Note that, by assumption, for any $\|\hat{w}\|_{L^2_0(\bT^d; \bR^{2K_N})} \leq R$, 
    we have $|\hat{w}_{l,k}| \leq \const_{1} R$ for some constant $\const_{1} = \const_{1}(N,d)$.
    Thus, the output of $\cS_1$ is such that $\cS_1(\hat{w}_{1,k})(x) = (\hat{w}_{1,k}, \cos(\langle k, x \rangle) \in [-\const_{1}R, \const_{1}R] \times [-1, 1]$ for any $x \in \bT^d$.
    For any $\epsilon_{\times} \leq 1$, we can choose to approximate $g_{\times}$ by a neural network $\tilde{g}_{\times}$ such that 
    \[ \| g_{\times} - \tilde{g}_{\times} \|_{\cont^1 \left([-\const_1R, \const_1 R] \times [-1, 1] \right)} \leq \epsilon_{\times}. \]
    This defines an FNO $\cN_2(v)(x) = \tilde{g}_{\times}(v_1(x), v_2(x))$,
    and the composition 
    $\cN_2 \circ \cL_1 \circ \cR_1 \equiv \cN_2 \circ \cS_1$ 
    is also an FNO.
    Using the same arguments as in the previous lemma, we have 
    \[
        \| \cS_2(v)  - \cN_2(v) \|_{L^{\infty}} 
        \leq \|g_{\times}(v_1(\cdot), v_2(\cdot)) - \tilde{g}_{\times}(v_1(\cdot), v_2(\cdot))\|_{L^{\infty}}
        \leq \epsilon_{\times}
    \]
    and 
    \[ 
    \|D \cS_{2}(v) - D \cN_2(v) \|_{\Op(L^{\infty}(\bT^d; \bR^2), L^{\infty}(\bT^d; \bR))} \leq 2 \epsilon_{\times}
    \]
    for any $v = \cS_1(\hat{w}_{l,k})$ where  $\|\hat{w}\|_{L^2_0(\bT^d; \bR^{2K_N})} \leq R$.
    Moreover, since $L^2_{N}(\bT^d; \bR^2) \hookrightarrow L^{\infty}(\bT^d; \bR^2)$ and 
    $ L^{\infty}(\bT^d) \hookrightarrow L^2(\bT^d)$, 
    we can rewrite these results in terms of the $L^2_{N}$ and $L^2$ norms as 
    \begin{align*}
        \| \cS_2(v)  - \cN_2(v) \|_{L^{2}(\bT^d; \bR)} \leq \const_{2} \epsilon_{\times},
        \qquad 
        \|D \cS_{2}(v) - D \cN_2(v) \|_{\Op(L^{2}_{N}(\bT^d; \bR^2), L^{2}(\bT^d; \bR))} 
            \leq \const_3 \epsilon_{\times},
    \end{align*}
    where $\const_2 = \const_2(N,d)$ and $\const_3 = \const_3(N,d)$ are positive constants depending on the norms of the inclusion operators. 
    The error of the composition is therefore bounded by
    \[
        \| \cS_2 \circ \cS_1 (\hat{w}_{1,k})  - \cN_2 \circ \cS_1 (\hat{w}_{1,k}) \|_{L^{2}} \leq \const_{2} \epsilon_{\times},
    \]
    and 
    \begin{align*}
        & \| D(\cS_2 \circ \cS_1) (\hat{w}_{1,k})  - D (\cN_2 \circ \cS_1) (\hat{w}_{1,k}) \|_{\Op(L^{2}_{N}(\bT^d; \bR^2), L^{2}(\bT^d; \bR))}  \\
        & \quad  \leq 
            \| D \cS_2(\cS_1(\hat{w}_{1,k})) - D \cN_2(\cS_1(\hat{w}_{1,k})) \|_{\Op(L^{2}_{N}(\bT^d; \bR^2), L^{2}(\bT^d; \bR))}  
            \cdot 
            \| D \cS_1 \|_{\Op(L^{2}_{0}(\bT^d; \bR), L^{2}_{N}(\bT^d; \bR^2))} \\ 
        & \quad \leq \const_4 \epsilon_{\times},
    \end{align*}
    where $\const_4 = \const_4(N,d)$.
    
    Thus, concatenating each component and appropriately choosing $\epsilon_{\times}$ ensures that $\cT$, and therefore $\cQ \circ \cT$, 
    can be approximated to arbitrary accuracy by an FNO.
    That is, for any $\epsilon_{\cT} > 0$, there exists an FNO $\cQ \circ \cN_{\cT}$ such that 
    \begin{equation}\label{eq:inverse_ft_function_error_before_projection}
        \| \cQ \circ \cT (\hat{w})  - \cQ \circ \cN_{\cT} (\hat{w}) \|_{L^{2}(\bT^d;\bR)} \leq \epsilon_{\cT}
    \end{equation}
    and 
    \begin{equation}\label{eq:inverse_ft_derivative_error_before_projection}
        \| D(\cQ \circ \cT)(\hat{w}) - D(\cQ \circ \cN_{\cT}(\hat{w}) \|_{\Op(L^{2}_{0}(\bT^d; \bR^{2K_N}), L^{2}(\bT^d; \bR))} \leq \epsilon_{\cT}
    \end{equation}
    for all $\|\hat{w}\|_{L^2_0(\bT^d;\bR^{2K_N})} \leq R$.

    We now consider the final composition with the projection operator $\cP_N : L^2(\bT^d; \bR) = L^2(\bT^d) \rightarrow L^2_N(\bT^d) \hookrightarrow H^s(\bT^d)$.
    To approximate this using a continuously differentiable FNO, we consider any $m \in \bN$ such that $m > \max(s, d/2)$.
    Then, we view the final projection as the sequence of operations 
    \[
        L^2(\bT^d) \underset{\cP_N}{\rightarrow} L^2_{N}(\bT^d) 
            \hookrightarrow H_{N}^{2m}(\bT^d)
            \underset{\cI}{\rightarrow}
                H^{m}(\bT^d)
            \hookrightarrow H^{s}(\bT^d)
    \]
    where $\cI(v) = v$ refers to the identity (inclusion) operator. 
    Our approximation of $\cP_N$ then takes the form $\cN_{\cP_N} := \cI_{\theta} \circ \cP_N$
    using $\cI_{\theta}:H^{2m}(\bT^d) \rightarrow H^{m}(\bT^d)$ from \Cref{lemma:identity_approximation_sobolev}, and maps between the spaces
    \[
        L^2(\bT^d) \underset{\cP_N}{\rightarrow} L^2_{N}(\bT^d) 
            \hookrightarrow H_{N}^{2m}(\bT^d)
            \underset{\cI_{\theta}}{\rightarrow}
                H^{m}(\bT^d)
            \hookrightarrow H^{s}(\bT^d).
    \]
    Here, $\cI_{\theta}$ is continuously differentiable as a superposition operator due to our choice of $m$.

    Since the projection is bounded with $\|\cP_N\|_{\Op(L^2(\bT^d), H_{N}^{2m}(\bT^d))} = \const_5(N,d) < \infty$,
    our previous results
    \eqref{eq:inverse_ft_function_error_before_projection} and \eqref{eq:inverse_ft_derivative_error_before_projection}
    additionally imply that the mapping 
    \[ \cP_N \circ \cQ \circ \cT : L^2_0(\bT^d; \bR^{2K_N}) \rightarrow H_{N}^{2m}(\bT^d)\]
    can be approximated by the linear FNO 
    \[ \cP_N \circ \cQ \circ \cN_{\cT}: L^2_0(\bT^d; \bR^{2K_N}) \rightarrow H_{N}^{2m}(\bT^d)\] 
    to arbitrary accuracy for all $\|\hat{w}\|_{L^2_0(\bT^d; \bR^{2K_N})} \leq R$.
    On the other hand, \Cref{lemma:identity_approximation_sobolev} implies the existence of a $\theta > 0$ such that 
    $\cI_{\theta} : H^{2m}(\bT^d) \rightarrow H^m(\bT^d)$ 
    approximates $\cI$ to arbitrary accuracy over any bounded set. 
    Since the intermediate space $H^{2m}_N(\bT^d)$ is finite dimensional, 
    \Cref{lemma:composition} yields for any $\epsilon > 0$ the existence of $\cN_{\cT}$ and $\cI_{\theta}$ such that the FNO $\cN := \cN_{\cP_N} \circ \cQ \circ \cN_{\cT}$
    satisfies
    \begin{equation}\label{eq:inverse_ft_function_error_before_projection}
        \| \cP_N \circ \cQ \circ \cT (\hat{w})  - \cN_{\cP_N} \circ \cQ \circ \cN_{\cT} (\hat{w}) \|_{H^{m}(\bT^d; \bR)} \leq \epsilon,
    \end{equation}
    and 
    \begin{equation}\label{eq:inverse_ft_derivative_error_before_projection}
        \| D(\cP_N \circ \cQ \circ \cT)(\hat{w}) - D(\cN_{\cP_N} \circ \cQ \circ \cN_{\cT})(\hat{w}) \|_{\Op(L^{2}_{N}(\bT^d; \bR^{2K_N}), H^{m}(\bT^d; \bR))} 
        \leq \epsilon,
    \end{equation}
    for all $\|\hat{w}\|_{L^2_0(\bT^d;\bR^{2K_N})} \leq R$.
    Finally, error bounds in terms of the $H^s$ norm can then be obtained using the continuous embedding $H^{m}(\bT^d) \hookrightarrow H^{s}(\bT^d)$ for all $s \leq m$.

\end{proof}

We can now proceed with the proof of \Cref{lemma:truncated_operator}.
\begin{proof}(of \Cref{lemma:truncated_operator}).
    We focus on the case $d_a = d_u = 1$. The proof for $d_a, d_u \geq 1$ is analogous.
    We consider the representation of $\cG_N := \cP_N \circ \cG \circ \cP_N$ as 
    \[
        \cG_N = \cP_N \circ \real \cF_{N}^{-1} \circ \widehat{G}_N \circ \cF_N \circ \cP_N,
    \]
    where $\widehat{G}_N = \cF_N \circ \cG \circ \cF_N^{-1} : \bR^{2K_N} \rightarrow \bR^{2K_N}$.
    To make the connection with FNOs, 
    we interpret $\widehat{G}_N$ along with $\cF_{N}$ and $\cF_{N}^{-1}$ as mappings to/from constant vector-valued functions, 
    i.e.~$\widehat{\cG}_N : L^2_0(\bT^d; \bR^{2K_N}) \rightarrow L^2_0(\bT^d;\bR^{2K_N})$, $\widehat{\cG}_N(v)(x) = \widehat{G}_N(v(x))$, 
    $\cF_N : L^2(\bT^d; \bR) \rightarrow L^2_0(\bT^d; \bR^{2K_N})$, and
    $\real \cF_N^{-1} : L^2_0(\bT^d; \bR^{2K_N}) \rightarrow L^2_0(\bT^d; \bR)$.
    In particular, for any $\epsilon_G, R_G > 0$, universal approximation theorems on $\bR^{2K_N}$ yields the existence of a neural network $\tilde{G}$ such that 
    \[
        \| \widehat{G}_N - \widetilde{G}\|_{\cont^1([-R_G, R_G]^{2K_N})} \leq \epsilon_G.
    \]
    This translates to the following: for any $\epsilon_G, R_G$, there exists a continuously differentiable FNO $\cN_{N} : L^2_0(\bT^d; \bR^{2K_N}) \rightarrow L^2_0(\bT^d; \bR^{2K_N})$, $\cN(w)(x) = \widetilde{G}(w(x))$, such that 
    \[
        \| \widehat{\cG}_N(w) - \cN_N(w) \|_{L^2_0(\bT^d;\bR^{2K_N})} \leq \epsilon_G, \quad 
        \| D\widehat{\cG}_N(w) - D\cN_N(w) \|_{\Op
            (L^2_0(\bT^d;\bR^{2K_N}), L^2_0(\bT^d;\bR^{2K_N}))
        } \leq \epsilon_G
    \]
    for all $\|w\|_{L^2_0(\bT^d;\bR^{2K_N})} \leq R_G$. Additionally, 
    \begin{itemize}
        \item \Cref{lemma:ft_approximation} yields an approximation $\cN_{\mathrm{FT}}$ of the Fourier transform $\cF_N \circ \cP_N : H^s(\bT^d; \bR) \rightarrow L^2_0(\bT^d; \bR^{2K_N})$
        for arbitrary function/derivative accuracy over bounded sets
        \item \Cref{lemma:inverse_ft_approximation} yields an approximation $\cN_{\mathrm{IFT}}$ of the inverse Fourier transform $\cP_N \circ \real \cF_N^{-1} : L^2_0(\bT^d; \bR^{2K_N}) \rightarrow H^{s'}(\bT^d; \bR)$
        for arbitrary function/derivative accuracy over bounded sets
    \end{itemize}
    Since the intermediate spaces are finite dimensional, \Cref{lemma:composition} (with the Hilbert--Schmidt norm) 
    implies that for any compact $K$ and $\epsilon > 0$, 
    there exists an FNO $\cN = \cN_{\mathrm{IFT}} \circ \cN_{N} \circ \cN_\mathrm{FT}$ for which the desired error bounds hold.
    For $d_a, d_u \geq 1$, 
    we can approximate the Fourier transforms for each of the $d_a$ input components and the inverse Fourier transform for each of the $d_u$ output components, 
    before approximating the mapping between the Fourier coefficients ($2K_N \cdot d_a$ input coefficients and $2 K_N \cdot d_u $ output coefficients). 
\end{proof}

\subsection{Proof of the Main Result}\label{sec:proof_of_main_ua}
We are now in the position to formalize the proof of \Cref{theorem:main_ua}.
\begin{proof}(of \Cref{theorem:main_ua})
    The error can be decomposed into a truncation error and an approximation error
    for both the function value and the derivatives, i.e.,
    \[ 
        \| \cG(a) - \cN(a) \|_{\cY} \leq 
            \| \cG(a) - \cG_N(a) \|_{\cY} + 
            \| \cG_N(a) - \cN(a) \|_{\cY} .
    \]
    and 
    \[ 
        \| D \cG(a) - D \cN(a) \|_{\HS(\cX_{\delta}, \cY)} \leq
            \| D\cG(a) - D\cG_N(a) \|_{\HS(\cX_{\delta}, \cY)} + 
            \| D\cG_N(a) - D\cN(a) \|_{\HS(\cX_{\delta}, \cY)}.
    \]
    By \Cref{lemma:truncation}, we can find $N$ such that the truncation errors are bounded as
    \[ 
        \sup_{a \in K} \| \cG(a) - \cG_N(a) \|_{\cY} \leq \epsilon/2 , \qquad 
        \sup_{a \in K} \| D\cG(a) - D\cG_N(a) \|_{\HS(\cX_{\delta}, \cY)} \leq \epsilon / 2. 
    \]
    Subsequently, by \Cref{lemma:truncated_operator}, we can find $\cN$ approximating $\cG_N$ for the chosen $N$ such that the approximation errors are bounded as 
    \[ \sup_{a \in K} \| \cG_N(a) - \cN(a) \|_{\cY} \leq \epsilon/2 , \qquad 
     \sup_{a \in K} \| D\cG_N(a) - D\cN(a) \|_{\HS(\cX_{\delta}, \cY)} \leq \epsilon / 2. \]
    Combining the two results yields the desired error bound.
\end{proof}

\subsection{Proof of the Inverse Problem Result}
\label{subsec:proof_inverse_problem_ua}
\Cref{prop:surrogate_optimization_error_strongly_convex} can be combined with the universal approximation on compact sets (\Cref{theorem:main_ua}) 
to prove \Cref{theorem:inverse_problem_ua}.

\begin{proof}(of \Cref{theorem:inverse_problem_ua})
    We begin by showing a corollary of the universal approximation over bounded sets in $\cX_{\delta}$, making use of the compact embedding of $\cX_{\delta}$ into $\cX$. 
    Recall that for $\cX_{\delta} = H^{s+\delta}(\bT^d;\bR^{d_a})$ and $\cX = H^{s}(\bT^d;\bR^{d_a})$, the inclusion operator $\cI : \cX_{\delta} \rightarrow \cX$ is compact. 
    Therefore, for the closed ball $B_R(a^{\star}) \subset \cX_{\delta}$, the closure of its image $K = \overline{\cI(B_R(a^{\star}k))} \subset \cX$ is compact.
    Moreover, by \Cref{prop:inclusion_operator_is_hilbert_schmidt}, $\cX \ni a \mapsto D \cG(a) \in \HS(\cX_{\delta}, \cY)$ is well-defined and continuous.
    Then, \Cref{theorem:main_ua} yields the existence of an FNO $\cN : \cX \rightarrow \cY$ that approximates $\cG$ over $K$, such that 
    \[
        \sup_{a \in K } \|\cG(a) - \cN(a)\|_{\cY} \leq \epsilon, \qquad 
        \sup_{a \in K} \|D \cG(a) - D \cN(a)\|_{\HS(\cX_{\delta}, \cY)}.
    \]
    When restricted to the space $\cX_{\delta}$, 
    we have 
    \[
        \sup_{a \in B_R(a^{\star})} \| (\cG \circ \cI) (a) - (\cN \circ \cI)(a)\|_{\cY} \leq 
        \sup_{a \in K} \|\cG(a) - \cN(a)\|_{\cY} 
        \leq \epsilon
    \]
    and 
    \[
        \sup_{a \in B_R(a^{\star})} \| D (\cG \circ \cI) (a) - D (\cN \circ \cI)(a)\|_{\HS(\cX_{\delta}, \cY)} \leq 
        \sup_{a \in K} \|D \cG(a) - D \cN(a)\|_{\HS(\cX_{\delta}, \cY)}
        \leq \epsilon.
    \]
    Since $\|\cA\|_{\Op(\cX_{\delta}, \cY)} \leq \|\cA\|_{\HS(\cX_{\delta}, \cY)}$, the operator $\cN \circ \cI$, which can be identified with an FNO on $\cX_{\delta}$, satisfies the desired error bounds.

    The compactness of the embedding also implies that $\cG : \cX_{\delta} \rightarrow \cY$ and $D \cG : \cX_{\delta} \rightarrow \Op(\cX_{\delta}, \cY) $ are bounded. That is,
    \[
        \sup_{a \in B_R(a^{\star})} \|\cG(a)\|_{\cY}
        \leq 
        \sup_{a \in K} \|\cG(a)\|_{\cY} < \infty
    \]
    and 
    \[
        \sup_{a \in B_R(a^{\star})} \|D \cG(a)\|_{\Op(\cX_{\delta}, \cY)}
        \lesssim
        \sup_{a \in K} \|D \cG(a)\|_{\Op(\cX, \cY)}
        < \infty,
    \]
    by the continuity of $\cG, D \cG$ on $\cX$ and the compactness of $K$.
    
    Next, we show that the cost function \eqref{eq:cost_function_inverse_problem} is weakly lower semicontinuous.
    Here, the regularization term $\frac{\beta}{2} \|a\|_{\cX_{\delta}}^2$ is continuous and convex, which implies that it is weakly lower semicontinuous.
    On the other hand, the data misfit functional
    \[
        f_{\mathrm{data}}(a) = a \mapsto \frac{1}{2} \| \Gamma_{\noise}^{-1/2} \left( \cH (\cG(a)) - \yobs \right)\|^2,
    \]
    is a continuous from $\cX$ to $\bR$. 
    When restricted to $a \in \cX_{\delta}$, 
    the data misfit is given by $f_{\mathrm{data}} \circ \cI$.
    Since $\cI$ is a compact operator, 
    the resulting data misfit is also weakly lower semicontinuous.

    Moreover, we note that for \eqref{eq:cost_function_inverse_problem}, the function 
    \[
        F(u,a) = \frac{1}{2} \left\| \Gamma_{\noise}^{-1/2}(\cH(u) - \yobs) \right\|^2 + \frac{\beta}{2} \|a\|_{\cX_{\delta}}^2
    \]
    has Lipschitz continuous derivatives as it is constructed from the sum of two continuous quadratic functionals.

    Finally, given the constant $\const$ in Part (i) and tolerance $\epsilon_0$ in Part (ii) of \Cref{prop:surrogate_optimization_error_strongly_convex},
    we find the corresponding FNO $\cN$ with error tolerance $\epsilon_{\cN} \leq \epsilon_0$, such that simultaneously
    \[
        \sup_{a \in B_R(a^{\star})} \| \cG(a) - \cN(a) \|_{\cY} \leq \epsilon_{\cN} \leq \epsilon_0, 
        \qquad
        \sup_{a \in B_R(a^{\star})} \| D \cG(a) - D \cN(a) \|_{\Op(\cX_{\delta},\cY)} \leq \epsilon_{\cN} \leq \epsilon_0,
    \]
    and 
    $\const (\epsilon_{\cN}  + \epsilon_{\cN} + \epsilon_{\cN}^2) \leq \epsilon$. 
    The above results, along with the strong convexity assumptions, thus allow us to apply \Cref{prop:surrogate_optimization_error_strongly_convex},
    using $\cX_{\delta}$ as the underlying Hilbert space instead of $\cX$. 
    This asserts both the existence of the minimizer and the error bound on $\|a^{\dagger} - a^{\star}\|_{\cX_{\delta}}$.
\end{proof}
\section{Extension to Unbounded Domains}
\label{sec:unbounded_domains}

We now develop the proof for \Cref{theorem:main_ua_mu},
which extends the universal approximation result to the $W^{1,2}_{\mu, \delta}$
under \Cref{assumption:measurability}
and using specifically activation functions $\sigma \in \srla$ as defined in \Cref{def:activation}.

To do so, we begin by developing several useful constructions with $\srla$ activation functions. 
Then, similar to the proof of \Cref{theorem:main_ua}, 
we first approximate $\cG$ by $\cG_N = \cP_N \circ \cG \circ \cP_N$, 
and then approximate $\cG_N$ by an FNO $\cN$. 
We show that both approximations can be made arbitrarily small in the $W^{1,2}_{\mu,\delta} $ norm.
These results are then put together in \Cref{sec:proof_of_main_ua_mu} 
to prove \Cref{theorem:main_ua_mu}.

\subsection{Useful Function Constructions}
Activation functions $\sigma \in \srla$ satisfy typical properties of GELU. Namely, $\sigma(x) \approx x$ as $x \rightarrow \infty$ and $\sigma(x) \approx 0$ as $x \rightarrow -\infty$.
This is summarized in the following proposition.
\begin{proposition}\label{proposition:properties_of_gelu}
    Let $\sigma \in \srla$. 
    Then, the following hold:
    \begin{enumerate}[label=(\roman*)]
        \item $\sigma(x) - x \rightarrow 0$ and $\sigma'(x) \rightarrow 1$ as $x \rightarrow \infty$;
        \item $\sigma(x) \rightarrow 0$ and $\sigma'(x) \rightarrow 0$ as $x \rightarrow -\infty$.
    \end{enumerate}
\end{proposition}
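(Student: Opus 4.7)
The plan is to reduce both limits to two facts: the pointwise limits $\Phi(x)\to 0, 1$ as $x\to -\infty, \infty$ given by hypothesis (ii), and the decay estimate $x\Phi(x)\to 0$ as $x\to -\infty$, which will be the only nontrivial step. I first record that since $\Phi\in C^{\infty}$, the product $\sigma(x) = x\Phi(x)$ is smooth with
\begin{equation*}
    \sigma'(x) = \Phi(x) + x\Phi'(x).
\end{equation*}
The remark after \Cref{def:activation} already gives $x\Phi'(x)\to 0$ as $|x|\to\infty$, so the derivative limits in both (i) and (ii) will reduce immediately to the behaviour of $\Phi(x)$.

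Next, I would prove the decay estimate $x\Phi(x)\to 0$ as $x\to -\infty$. By the monotonicity and boundedness in (i)--(ii), $\Phi$ is nonnegative on $\bR$ with $\Phi(-\infty)=0$, hence $\Phi(y) = \int_{-\infty}^{y}\Phi'(t)\,dt$ for $y\in\bR$. Fix $\epsilon > 0$; by hypothesis (iv) there exists $M > 0$ such that $t^2\Phi'(t) < \epsilon$ for all $t < -M$. For any $y < -M$ this yields
\begin{equation*}
    0 \leq |y|\,\Phi(y) = |y|\int_{-\infty}^{y}\Phi'(t)\,dt \leq |y|\int_{-\infty}^{y}\frac{\epsilon}{t^2}\,dt = |y|\cdot\frac{\epsilon}{|y|} = \epsilon,
\end{equation*}
which proves $x\Phi(x)\to 0$ as $x\to -\infty$. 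This is the main obstacle in the proof; everything else is algebra together with the hypotheses of \Cref{def:activation}.

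With this in hand, part (ii) is immediate: $\sigma(x) = x\Phi(x)\to 0$ directly, and $\sigma'(x) = \Phi(x) + x\Phi'(x) \to 0 + 0 = 0$ using hypothesis (ii) together with $x\Phi'(x)\to 0$. For part (i), I would invoke the symmetry hypothesis (iii) to write $\Phi(x) - 1 = -\Phi(-x)$, so
\begin{equation*}
    \sigma(x) - x = x(\Phi(x) - 1) = -x\Phi(-x).
\end{equation*}
As $x\to +\infty$, the substitution $y = -x$ turns this into $y\Phi(y)$ with $y\to -\infty$, which vanishes by the estimate above. Finally, $\sigma'(x) = \Phi(x) + x\Phi'(x) \to 1 + 0 = 1$ as $x\to +\infty$ by hypothesis (ii) and $x\Phi'(x)\to 0$, completing the proposition.
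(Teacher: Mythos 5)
Your proof is correct, but it takes a genuinely different route from the paper's. The paper handles both tails with L'H\^opital's rule: it writes $\sigma(x)-x = x(\sigmoid(x)-1)$ and $\sigma(x)=x\sigmoid(x)$ as $0/0$ quotients against $1/x$ and differentiates, so each limit becomes $-\lim x^2\sigmoid'(x)=0$ by hypothesis (iv); the symmetry hypothesis (iii) and the monotonicity hypothesis (i) are never used there. You instead prove the single tail estimate $x\sigmoid(x)\to 0$ as $x\to-\infty$ by an explicit $\epsilon$--$M$ argument, writing $\sigmoid(y)=\int_{-\infty}^{y}\sigmoid'(t)\,dt$ (valid since $\sigmoid(-\infty)=0$) and bounding $\sigmoid'(t)\le \epsilon/t^2$ for $t<-M$, and then you transfer this to the $+\infty$ tail of $\sigma(x)-x$ via the symmetry $\sigmoid(x)-1=-\sigmoid(-x)$; the derivative limits in both parts reduce, as in the paper, to $\sigmoid(x)\to 0,1$ and $x\sigmoid'(x)\to 0$. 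Your argument is more elementary (no L'H\^opital) and yields a quantitative tail bound $|x\sigmoid(x)|\le\epsilon$ for $x<-M$, at the cost of invoking the monotonicity and symmetry hypotheses that the paper's proof does not need; the paper's version is shorter and uses a minimal subset of the assumptions in \Cref{def:activation}. Both are complete and correct.
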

\begin{proof}
    Recall that $\sigma(x) = x \sigmoid(x)$, and therefore is in $\cont^{\infty}(\bR)$ since both $x$ and $\sigmoid$ are in $\cont^{\infty}(\bR)$.
    Moreover, we can evaluate its limits via L'Hopital's rule. 
    For the case of $x \rightarrow \infty$, we have 
    \[
        \lim_{x \rightarrow \infty} \sigma(x) - x 
        = \lim_{x \rightarrow \infty} x(\sigmoid(x) - 1)
        = \lim_{x \rightarrow \infty} - \frac{\sigmoid'(x)}{x^{-2}} 
        = -\lim_{x \rightarrow \infty} x^2 \sigmoid'(x) = 0,
    \]
    and 
    \[
        \lim_{x \rightarrow \infty} \sigma'(x) = \lim_{x \rightarrow \infty} x \sigmoid'(x) + \sigmoid(x) = 1,
    \]
    where we have used the fact that $\Phi(x) \rightarrow 1 $, $x^2\sigmoid(x) \rightarrow 0$, and $ x \sigmoid'(x) \rightarrow 0$ as $x \rightarrow \infty$.
    Similarly, for $x \rightarrow -\infty$, we have 
    \[
        \lim_{x \rightarrow -\infty} \sigma(x)  
        = \lim_{x \rightarrow -\infty} x \sigmoid(x) 
        = \lim_{x \rightarrow -\infty} - \frac{\sigmoid'(x)}{x^{-2}} 
        = -\lim_{x \rightarrow -\infty} x^2 \sigmoid'(x) 
        = 0, 
    \]
    and 
    \[
        \lim_{x \rightarrow -\infty} \sigma'(x) 
        = \lim_{x \rightarrow -\infty} x \sigmoid'(x) + \sigmoid(x) 
        = 0,
    \]
    since $\Phi(x) \rightarrow 0$ as $x \rightarrow -\infty$.
\end{proof}

Using $\srla$ activation functions, we can construct several useful operations out of simple feedforward neural networks, which we introduce below.
All of these functions can be identified with local FNO layers when viewed as superposition operators,
and are differentiable when the input space is $L^{\infty}(\bT^d)$.
The first is a clipping function that approximates the identity function within an interval and is bounded outside of this interval.

\begin{lemma}[Clipping Function]\label{lemma:clipping}
Let $\sigma \in \srla$. 
For $\theta, b > 0$, define
\begin{equation}\label{eq:clipping}
   \clip(x;\theta, b) 
    :=  \frac{\sigma(\theta(x+b)) - \sigma(\theta(x-b))}{\theta} - b.
\end{equation}
Then, for any $\epsilon, R > 0$, there exist $\theta, b > 0$ such that 
the following hold:
\begin{enumerate}[label=(\roman*)]
\item $\clip(x; \theta, b)$ is an approximation to the identity function on $[-R, R]$, i.e., 
        \[ \|\clip(\cdot; \theta,b) - \id \|_{\cont^1([-R, R])} \leq \epsilon. \]
\item There exists $M = M(\sigma, \theta, b) > 0$ such that $|\clip(x; \theta, b)| \leq M$ and $|\clip'(x; \theta, b)| \leq M$ for all $x \in \bR$.
\end{enumerate}
\end{lemma}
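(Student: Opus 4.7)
The plan is to analyze $\clip$ by splitting it into a piecewise linear ``bulk'' plus a correction term that vanishes as $\theta \to \infty$, using the asymptotic properties of $\sigma$ and $\sigma'$ recorded in \Cref{proposition:properties_of_gelu}. Differentiating \eqref{eq:clipping} gives
\[ \clip'(x;\theta,b) = \sigma'(\theta(x+b)) - \sigma'(\theta(x-b)). \]
Since $\sigma' \in \cont(\bR)$ has finite limits at $\pm\infty$ (namely $1$ and $0$), it is bounded on $\bR$, which immediately produces a $\theta$- and $b$-independent bound on $|\clip'|$, settling half of part~(ii).

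For the rest, I would introduce the auxiliary function $g(y) := \sigma(y) - \max(y,0)$. Continuity of $\sigma$ together with $\sigma(y) - y \to 0$ as $y \to +\infty$ and $\sigma(y) \to 0$ as $y \to -\infty$ (\Cref{proposition:properties_of_gelu}) show that $g$ is continuous on $\bR$ with finite limits at both infinities, hence globally bounded. Substituting $\sigma(y) = \max(y,0) + g(y)$ into \eqref{eq:clipping} gives
\[
    \clip(x;\theta,b)
    = \bigl[\max(x+b,0) - \max(x-b,0) - b\bigr] + \frac{g(\theta(x+b)) - g(\theta(x-b))}{\theta}.
\]
A case split on the sign of $x \pm b$ shows that the bracketed piecewise linear expression equals $-b$ on $(-\infty,-b]$, equals $x$ on $[-b,b]$, and equals $b$ on $[b,\infty)$; in particular it is bounded by $b$ in modulus. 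The remainder is bounded by $2\|g\|_{L^\infty(\bR)}/\theta$. This completes part~(ii) with $M = b + 2\|g\|_{L^\infty}/\theta + 2\|\sigma'\|_{L^\infty}$.

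For part~(i), I would fix any $b > R$ so that $x + b > 0$ and $x - b < 0$ uniformly on $[-R,R]$. The piecewise linear bulk then collapses exactly to $x$, so that
\[
    \clip(x;\theta,b) - x = \frac{g(\theta(x+b)) - g(\theta(x-b))}{\theta},
    \qquad x \in [-R,R].
\]
Since $\theta(x+b) \geq \theta(b-R) \to +\infty$ and $\theta(x-b) \leq -\theta(b-R) \to -\infty$ as $\theta \to \infty$, and $g$ vanishes at both infinities, the right-hand side tends to $0$ uniformly in $x \in [-R,R]$. An identical argument applied to the derivative identity, together with $\sigma'(\theta(x+b)) \to 1$ and $\sigma'(\theta(x-b)) \to 0$ uniformly, shows $\clip'(x;\theta,b) \to 1$ uniformly on $[-R,R]$. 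Choosing $b > R$ and then $\theta$ sufficiently large yields the desired $\cont^1$-bound.

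The routine parts of the argument will be the sign-based case split and the elementary limit manipulations. The one genuine point of care, and what I view as the main obstacle, is engineering a decomposition that simultaneously controls $\clip$ on all of $\bR$ (for (ii)) \emph{and} is sharp enough on $[-R,R]$ that the leftover correction scales like $1/\theta$ (for (i)); introducing $g(y) = \sigma(y) - \max(y,0)$ is the key device that makes these two demands compatible, and the argument depends critically on the \emph{two-sided} asymptotics of $\sigma$ guaranteed by the GELU-like class $\srla$.
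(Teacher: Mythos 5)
Your proof is correct. The derivative estimates coincide with the paper's: both use $\clip'(x)=\sigma'(\theta(x+b))-\sigma'(\theta(x-b))$, the global bound $2\sup_t|\sigma'(t)|$, and the two-sided limits $\sigma'(t)\to 1$, $\sigma'(t)\to 0$ (with $b$ chosen larger than $R$ so the arguments are pushed to $\pm\infty$) to get $|\clip'-1|$ small on $[-R,R]$. Where you genuinely diverge is in the function-value estimates. The paper handles the error $|\clip(x)-x|$ on $[-R,R]$ by the mean value theorem, $\clip(x)-x=(\clip'(\xi)-1)x$, so that the function accuracy is slaved to the derivative accuracy and only the limits of $\sigma'$ are needed; and it obtains the global bound on $|\clip|$ from the integral representation $\clip(x)=\frac{1}{\theta}\int_{\theta(x-b)}^{\theta(x+b)}\sigma'(t)\,dt-b$, giving $|\clip(x)|\le \frac{2b}{\theta}\sup|\sigma'|+b$. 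You instead write $\sigma(y)=\max(y,0)+g(y)$ with $g$ continuous and vanishing at $\pm\infty$ (hence bounded, which uses the $\sigma(y)-y\to 0$ and $\sigma(y)\to 0$ asymptotics from \Cref{proposition:properties_of_gelu}), so that $\clip$ is exactly the hard clipping function to $[-b,b]$ plus a remainder bounded by $2\|g\|_{L^\infty}/\theta$. This buys you a sharper and more transparent picture: the $[-R,R]$ function error is explicitly $O(1/\theta)$ uniformly (indeed $\le 2\|g\|_{L^\infty}/\theta$, with no dependence on $R$ once $b>R$), and the global bound $b+2\|g\|_{L^\infty}/\theta$ comes for free from the same decomposition; the cost is that you invoke both asymptotic regimes of $\sigma$ itself, whereas the paper's MVT route only needs the asymptotics of $\sigma'$. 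Since the statement allows $M=M(\sigma,\theta,b)$, your bound $M=b+2\|g\|_{L^\infty}/\theta+2\|\sigma'\|_{L^\infty}$ is admissible, and all uniformity claims on $[-R,R]$ are justified because the arguments $\theta(x\pm b)$ are uniformly pushed beyond $\pm\theta(b-R)$.
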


\begin{proof}
    For conciseness, we will write $\clip(x) = \clip(x; \theta, b)$. First, we recognize that
    \[
        \clip(x) = (x+b) \sigmoid(\theta(x + b)) - (x - b) \sigmoid(\theta(x - b)) - b,
    \]
    which implies that 
    \[
        \clip(0) = b \sigmoid(\theta b) + b \sigmoid(-\theta b) - b 
        = 0,
    \]
    due to the assumption that $\sigmoid(x) + \sigmoid(-x) = 1$.
    Additionally, its derivative is given by
    \[
        \clip'(x) = \sigma'(\theta(x + b)) - \sigma'(\theta(x - b)).
    \]
    By the mean value theorem, for any $x \in [-R, R]$, we have
    \[
        \clip(x) - x = (\clip'(\xi(x)) - 1)x,
    \]
    where $\xi(x) \in [0, x] \subset [-R, R]$,
    such that the function error satisfies
    \[
       \sup_{x \in [-R, R]} |\clip(x) - x| \leq R \sup_{x \in [-R, R]} |\clip'(x) - 1|.
    \]
    On the other hand, the derivative error is simply 
    \[
       \sup_{x \in [-R, R]} |\clip'(x) - 1|.
    \]
    Thus, it suffices to select $\theta$ and $b$ such that 
    \[
       \sup_{x \in [-R, R]} |\clip'(x) - 1| \leq \max\{\epsilon, \epsilon/R \}.
    \]

    To this end, we take $b = 2R$, such that whenever $x \in [-R, R]$, we have 
    $\theta(x + b) \geq \theta R$ and $\theta(x - b) \leq -\theta R$.
    By the limiting behavior of $\sigma'$ given in \Cref{proposition:properties_of_gelu}, for any $\epsilon_1, \epsilon_2 > 0$,
    there exist $T_1 > 0$ such that $|\sigma'(t) - 1| \leq \epsilon_1$ whenever $t \geq T_1$
    and $T_2 < 0$ such that $|\sigma'(t)| \leq \epsilon_2$ whenever $t \leq T_2$.
    Therefore, taking $\epsilon_1 = \epsilon_2 = \max\{\epsilon, \epsilon/R\}/2$,
    and choosing $\theta$ such that 
    \[
         \theta R \geq \max \{ |T_1|, |T_2| \},
    \]
    yields 
    \[
        | \clip'(x) - 1| \leq | \sigma'(\theta(x + b)) - 1|  + | \sigma'(\theta(x - b)) | \leq \epsilon,
    \]
    since $\theta(x + b) \geq T_1$ and $\theta(x - b) \leq T_2$. 
    It then follows that $\|\clip - \id\|_{\cont^1([-R,R])} \leq \epsilon.$

    To see boundedness, we note that $\clip(x)$ can be written as 
    \[
        \clip(x) = \frac{1}{\theta} \int_{\theta(x - b)}^{\theta(x+b)} \sigma'(t) dt - b.
    \]
    This means 
    \[
        | \clip(x) | \leq \frac{\theta(x+b) - \theta(x-b)}{\theta} \sup_{t \in \bR} |\sigma'(t)| + b 
        = \frac{2b}{\theta} \sup_{t \in \bR} |\sigma'(t)| + |b|, 
    \]
    where $\sup_{t \in \bR} |\sigma'(t)| < \infty$ since $\sigma'$ is continuous and has bounded limits at $\pm \infty$.
    On the other hand, 
    \[
        | \clip'(x) | \leq |\sigma'(\theta(x + b)) - \sigma'(\theta(x - b))| \leq 2 \sup_{t \in \bR} |\sigma'(t)|.
    \]
    Thus, for $b = 2R$ and our choice of $\theta$, we can take 
    \[
        M(\sigma, \theta, b) = \max  \left\{\frac{2 b}{\theta} \sup_{t \in \bR} |\sigma'(t)| + b , 2 \sup_{t \in \bR} |\sigma'(t)| \right\},
    \]
    noting that the $2 b/\theta \sup_{t \in \bR} |\sigma'(t)|$ term can be made arbitrarily small by choosing $\theta$ arbitrarily large without sacrificing the approximation error $\|\clip - \id\|_{\cont^1([-R,R])}$.
\end{proof}

We can also construct a cutoff function, $\fcutoff(x)$, 
such that $\fcutoff(x) \approx 1$ for small inputs and $\fcutoff(x) \approx 0$ for large inputs.

\begin{lemma}[Cutoff Function]\label{lemma:cutoff}
Let $\sigma \in \srla$. 
For $\theta, b > 0$, define
\begin{equation}\label{eq:cutoff}
    \fcutoff(x; \theta, b) :=  \frac{\sigma(\theta(x-b-1)) - \sigma(\theta(x-\theta))}{\theta}+1
\end{equation}
Then, for any $\epsilon > 0$ and $R > 1$, 
there exist $\theta, b > 0$ such that the following hold:
\begin{enumerate}
    \item $|\fcutoff(x; \theta, b) - 1| \leq \epsilon$ \label{item:cutoff_small}
    and $|\fcutoff'(x; \theta, b)| \leq \epsilon$ for $0 \leq x \leq R$;
    \item $|\fcutoff(x; \theta, b)| \leq \epsilon$ and $|\fcutoff'(x; \theta, b)| \leq \epsilon$ for $x \geq 4R$. \label{item:cutoff_large}
\end{enumerate}
Moreover, there exists a constant $M = M(\sigma)$ such that 
given any $\theta, b > 0$, 
it holds that 
$|\fcutoff(x;\theta, b)| \leq M$
and $|\fcutoff'(x; \theta, b)| \leq M$ for all $x \in \bR$.
\end{lemma}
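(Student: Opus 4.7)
The plan is to show that $\fcutoff$ is a smoothed indicator transitioning from the value $1$ to the value $0$ as $x$ crosses the interval $[b, b+1]$, with the sharpness controlled by $\theta$, using the asymptotics of $\sigma$ established in \Cref{proposition:properties_of_gelu}. Concretely, I would take $b = 2R$, so that the two regions of interest, $[0,R]$ and $[4R, \infty)$, lie strictly to the left and right of the transition interval; the hypothesis $R > 1$ ensures $b+1 = 2R+1 < 4R$, producing the required buffer. (The formula as printed appears to have a typo in the second occurrence of $\theta$; I read it as $\sigma(\theta(x-b))$ in what follows.)

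For parts (i) and (ii), I would first observe that for $x \in [0,R]$ both arguments $\theta(x-b-1)$ and $\theta(x-b)$ are at most $-\theta R$, while for $x \ge 4R$ both exceed $\theta(2R-1)$. \Cref{proposition:properties_of_gelu} produces, for any prescribed $\eta > 0$, a threshold $T = T(\sigma, \eta)$ with $|\sigma(t)|,\, |\sigma'(t)| \le \eta$ whenever $t \le -T$, and $|\sigma(t) - t|,\, |\sigma'(t) - 1| \le \eta$ whenever $t \ge T$. Choosing $\eta = \epsilon/2$ and then $\theta$ so large that $\theta(2R-1) \ge T$ and $\theta \ge 1$, the bounds follow by direct substitution: in the small-$x$ region the numerator of $\fcutoff - 1$ in \eqref{eq:cutoff} is bounded by $2\eta$, giving $|\fcutoff - 1| \le 2\eta/\theta \le \epsilon$ and $|\fcutoff'| \le 2\eta \le \epsilon$; in the large-$x$ region, writing
\begin{equation*}
    \sigma(\theta(x-b-1)) - \sigma(\theta(x-b)) = \bigl[\theta(x-b-1) - \theta(x-b)\bigr] + R_\eta = -\theta + R_\eta, \qquad |R_\eta| \le 2\eta,
\end{equation*}
yields $|\fcutoff| \le 2\eta/\theta \le \epsilon$, and the derivative bound is analogous via $|\sigma'(\theta(x-b-1)) - \sigma'(\theta(x-b))| \le 2\eta$.

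For the universal bound $M(\sigma)$, I would use that $\sigma' \in C(\bR)$ with finite limits $0$ and $1$ at $\mp \infty$, so $M_\sigma := \sup_{t \in \bR} |\sigma'(t)|$ is finite and depends only on $\sigma$. The estimate $|\fcutoff'(x; \theta, b)| \le 2 M_\sigma$ is immediate, while the fundamental theorem of calculus gives
\begin{equation*}
    \left| \sigma(\theta(x - b - 1)) - \sigma(\theta(x - b)) \right|
    = \left| \int_{\theta(x - b - 1)}^{\theta(x - b)} \sigma'(t)\, dt \right|
    \le \theta \cdot M_\sigma,
\end{equation*}
so dividing by $\theta$ produces $|\fcutoff(x; \theta, b)| \le 1 + M_\sigma$ independently of $\theta$, $b$, and $x$. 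Setting $M := \max\{2 M_\sigma,\, 1 + M_\sigma\}$ completes that part. The only real subtlety is coordinating $\theta$ and $b$ so that a single choice simultaneously handles the near and far regions; this is precisely what the hypothesis $R > 1$ enables, by producing uniform separation of both test regions from the one-unit-wide transition window, which allows a single threshold $T$ to do double duty.
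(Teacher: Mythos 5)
Your proof is correct and follows essentially the same route as the paper: set $b = 2R$, invoke the asymptotics of $\sigma$ and $\sigma'$ from \Cref{proposition:properties_of_gelu} with a sufficiently large $\theta$, and bound the function value globally by $\sup_{t}|\sigma'(t)|$ plus one. The only cosmetic differences are that for $x \geq 4R$ the paper bounds $|\fcutoff(x)|$ via the integral representation $\frac{1}{\theta}\int_{\theta(x-b-1)}^{\theta(x-b)}(1-\sigma'(t))\,dt$ rather than your use of $\sigma(t)\approx t$, and your explicit requirement $\theta \geq 1$ when dividing by $\theta$ is in fact slightly more careful than the paper's bookkeeping.
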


\begin{proof}
Take $b = 2R$. 
We begin by noting that the derivative of $\fcutoff(x) = \fcutoff(x;\theta, b)$ is given by
\[
    \fcutoff'(x) = \sigma'(\theta(x-b-1)) - \sigma'(\theta(x-b)).
\]
Suppose $x \in [0, R]$. 
Then, 
the arguments $x - b - 1 \leq - (R +1)$ and 
$x - b \leq -R$. Thus, $\theta(x-b-1)$ and $\theta(x-b)$ are both negative.
From the limiting behavior of $\sigma$ given in \Cref{proposition:properties_of_gelu}, for any $\epsilon_1$
there exist $T_1, T_1' < 0$ such that $|\sigma(t)| \leq \epsilon_1$
whenever $t \leq T_1$
and $|\sigma'(t)| \leq \epsilon_1$ 
whenever $t \leq T_1'$.
Thus, if $\theta$ is chosen such that 
$\theta R \geq \max \{|T_1|, |T_1'|, 1\}$,
we have 
\[
    | \fcutoff(x) - 1 | \leq |\sigma(\theta(x-b-1) - \sigma(\theta(x-b))| 
    \leq 2 \epsilon_1, 
\]
and simultaneously 
\[
    |\fcutoff'(x)| 
        = |\sigma'(\theta(x-b-1)) - \sigma'(\theta(x-b))| \leq 2 \epsilon_1.
\]

Suppose $x \geq 4R$. 
Note that we can write $\fcutoff$ as 
\[
    \fcutoff(x) = \frac{1}{\theta} \int_{\theta(x-b-1)}^{\theta(x-b)} 1 - \sigma'(t) dt ,
\]
which implies that 
\[
    |\fcutoff(x)| 
    \leq 
    \frac{1}{\theta} \int_{\theta(x-b-1)}^{\theta(x-b)} |1 - \sigma'(t)| dt
    \leq 
    \sup_{\theta(x-b-1) \leq t \leq \theta(x-b)} | 1 - \sigma'(t)| .
\]
Here, $x - b - 1 \geq R$ 
and $x - b \geq 2R$.
By analogous reasoning to the previous case, 
there exist $T_2 > 0$ such that 
$|\sigma'(t) - 1| \leq \epsilon_2$
whenever $t \geq T_2$. 
If $\theta$ is chosen such that $\theta R \geq T_2$, we have 
$\theta(x - b - 1) \geq T_2$ and $\theta(x - b) \geq T_2$ such that 
$ |\fcutoff(x)| \leq \epsilon_2. $
Analogously, we have 
\[
    |\fcutoff'(x)| 
        = |\sigma'(\theta(x-b-1)) - \sigma'(\theta(x-b))| 
        \leq |\sigma'(\theta(x-b-1)) - 1| + |1 - \sigma'(\theta(x-b))| 
        \leq 2 \epsilon_2
\]
for the derivative.
Thus, \Cref{item:cutoff_small,item:cutoff_large} can be obtained by choosing $\epsilon_1 = \epsilon_2 = \epsilon/2$, and subsequently choosing 
$\theta \geq \max \{ |T_1|, |T_1'|, |T_2|, 1\}/R$.

To see the global bounds, we note that 
\[
    |\fcutoff(x)| 
    \leq 
    \frac{1}{\theta} \int_{\theta(x-b-1)}^{\theta(x-b)} |1 - \sigma'(t)| dt
    \leq 
    \sup_{t \in \bR} | 1 - \sigma'(t)|
    \leq 
    \sup_{t \in \bR}  1 + |\sigma'(t)|
\]
and 
\[
    |\fcutoff'(x)| \leq 2 \sup_{t \in \bR} |\sigma'(t)|.
\]
Thus, we can take $M = 2 \sup_{t \in \bR} |\sigma'(t)| + 1 < \infty$.
Importantly, $M$ does not depend on the values of $\theta$ or $b$.
\end{proof}

Finally, we will also use a global approximation to the absolute value function.
\begin{lemma}[Absolute Value Function]\label{lemma:absval}
   Let $\sigma \in \srla$. For $\theta > 0$, define
   \begin{equation}\label{eq:magnitude}
        \absval(x; \theta) := \frac{\sigma(\theta x)+\sigma(-\theta x)}{\theta}
   \end{equation}
   Then, for any $\epsilon > 0$, there exists $\theta > 0$ such that $\absval(x) \geq 0$ and $|\absval(x; \theta) - |x|| \leq \epsilon$ for all $x \in \bR$.
   Moreover, there exists a constant $M = M(\sigma)$ such that given any $\theta > 0$,
   $|\absval'(x; \theta)| \leq M$ for all $x \in \bR$.
\end{lemma}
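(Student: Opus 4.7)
The plan is to exploit the symmetry $\sigmoid(y) + \sigmoid(-y) = 1$ to rewrite $\absval$ in a cleaner closed form, and then extract all three claims from that form together with the limit properties of $\sigma \in \srla$ proved in \Cref{proposition:properties_of_gelu}. Using the identity, one computes
\[
    \sigma(\theta x) + \sigma(-\theta x) = \theta x\bigl(\sigmoid(\theta x) - \sigmoid(-\theta x)\bigr) = \theta x \bigl(2 \sigmoid(\theta x) - 1\bigr),
\]
so that $\absval(x;\theta) = x(2 \sigmoid(\theta x) - 1)$. This representation makes the function manifestly even in $x$ and reduces the analysis to $x \geq 0$.

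For the non-negativity claim, I would observe that $\sigmoid$ is strictly increasing with $\sigmoid(0) = 1/2$ (the latter from the symmetry axiom). Hence for any $\theta > 0$, the factor $2\sigmoid(\theta x) - 1$ has the same sign as $x$, making the product non-negative. For the uniform approximation claim, I would restrict to $x \geq 0$ and compute
\[
    |x| - \absval(x;\theta) = x - x\bigl(2\sigmoid(\theta x) - 1\bigr) = 2 x \sigmoid(-\theta x) = \frac{2}{\theta}\, u\sigmoid(-u), \qquad u := \theta x \geq 0.
\]
The key observation is that $u \sigmoid(-u) = -\sigma(-u)$, and by \Cref{proposition:properties_of_gelu}, $\sigma(t) \to 0$ as $t \to -\infty$, i.e.\ $u\sigmoid(-u) \to 0$ as $u \to \infty$. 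Since $u \sigmoid(-u)$ is continuous on $[0,\infty)$ and vanishes at $u = 0$, it attains a finite supremum $C_0 := \sup_{u \geq 0} u\sigmoid(-u) < \infty$ depending only on $\sigmoid$. Therefore
\[
    0 \leq |x| - \absval(x;\theta) \leq \frac{2 C_0}{\theta} \qquad \forall\, x \geq 0,
\]
and by evenness the same bound holds for all $x \in \bR$. Choosing $\theta \geq 2 C_0/\epsilon$ yields $\|\absval(\cdot;\theta) - |\cdot|\|_{\infty} \leq \epsilon$.

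For the uniform derivative bound, differentiating the defining expression gives
\[
    \absval'(x;\theta) = \sigma'(\theta x) - \sigma'(-\theta x),
\]
where the $\theta$ factors cancel. By \Cref{proposition:properties_of_gelu}, $\sigma'$ is continuous on $\bR$ with finite limits $1$ and $0$ at $\pm \infty$, so $M := 2 \sup_{t \in \bR}|\sigma'(t)|$ is finite and depends only on $\sigma$. Then $|\absval'(x;\theta)| \leq M$ for every $x \in \bR$ and every $\theta > 0$, as required.

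The only step that is not essentially immediate is verifying that $u\sigmoid(-u)$ is globally bounded on $[0,\infty)$; this is exactly where the GELU-like structure (in particular the identity $\sigma(t) = t\sigmoid(t)$ together with $\sigma(t) \to 0$ as $t \to -\infty$ from \Cref{proposition:properties_of_gelu}) is needed, and it is the main point where one must be careful that the bound is uniform in $\theta$ rather than merely pointwise in $x$.
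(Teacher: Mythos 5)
Your proof is correct and follows essentially the same route as the paper: reduce to $x \geq 0$ by symmetry, write the error as $\tfrac{2}{\theta}$ times $-\sigma(-\theta x)$ (equivalently $u\sigmoid(-u)$ with $u=\theta x$), bound it uniformly using $\sigma(t)\to 0$ as $t\to-\infty$, and bound $\absval'(x;\theta)=\sigma'(\theta x)-\sigma'(-\theta x)$ by $2\sup_{t\in\bR}|\sigma'(t)|$ independently of $\theta$. The intermediate rewriting $\absval(x;\theta)=x(2\sigmoid(\theta x)-1)$ is a minor cosmetic difference only.
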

\begin{proof}
    We will write $\absval(x) = \absval(x;\theta)$.
    Since $\absval(x) = \absval(-x)$, we only need to consider the case $x \geq 0$.
    Firstly, we have
    \begin{equation*}
        \absval(x) = x \sigmoid(\theta x) - x \sigmoid(-\theta x) \geq 0 \qquad \forall x \geq 0
    \end{equation*}
    due to the monotonicity of $\sigmoid$.
    
    To see the approximation error, we note that for $x \geq 0$, we have 
    \[
        \absval(x) - |x| = \absval(x) - x = x \left( \sigmoid(\theta x) - \sigmoid(-\theta x) - 1 \right).
    \]
    Using the symmetry of the sigmoid, i.e., $\sigmoid(x) = 1 - \sigmoid(-x)$, 
    we can write this as 
    \[
        \absval(x) - x = - 2 x \sigmoid(-\theta x) = - \frac{2}{\theta} \sigma(- \theta x). 
    \]
    Since $\sigma(t) \rightarrow 0$ as $t \rightarrow -\infty$, $\sigma(t)$ is bounded on $(-\infty, 0]$. Therefore, we can choose $\theta$ sufficiently large such that 
    \[
        \theta \geq \frac{2 \sup_{t \leq 0} |\sigma(t)|}{\epsilon},
    \]
    for which 
    \[
        | \absval(x) - x | \leq \frac{2}{\theta} \sup_{t \leq 0} | \sigma(t) | \leq \epsilon.
    \]
    
    Finally, for the derivative, we recognize that 
    \[
        | \absval'(x) | = | \sigma'(\theta x) - \sigma'(\theta x) | \leq 2 \sup_{t \in \bR} |\sigma'(t) | = M,
    \]
    where $M = M(\sigma) = 2 \sup_{t \in \bR} |\sigma'(t)| < \infty$ does not depend on the choice of $\theta$.
\end{proof}

We can use the approximate absolute value function and the cutoff function to construct a functional acting on $L^2_{N}(\bT^d)$ that cuts off the output in terms of the $L^2$ norm of the input. 
\begin{lemma}[Cutoff Function on $L^2_{N}$]
    \label{lemma:cutoff_l2}
    Let $\sigma \in \srla$ and $N \in \bN$ be fixed. There exist constants $\const = \const(N,d,d_a) > 1$ and $M = M(\sigma, N, d, d_a)$ such that 
    for any $\epsilon > 0$ and $R > 1$, 
    there exists an FNO
    $\funcut \in \cont^1( L^2_{N}(\bT^d;\bR^{d_a}) ; \bR)$ 
    with activation function $\sigma$ such that the following hold:
    \begin{enumerate}[label=(\roman*)]
        \item $| \funcut (v) - 1| \leq \epsilon$ and $\|D \funcut (v) \|_{\Op(L^2_{N}, \bR)} \leq \epsilon$ whenever $\|v\|_{L^2_{N}} \leq R$;
        \item $| \funcut (v)| \leq \epsilon$ and $\|D \funcut (v) \|_{\Op(L^2_{N}, \bR)} \leq \epsilon$ whenever $\|v\|_{L^2_{N}} \geq \const R$; 
        \item $|\funcut(v)| \leq M $ and $\| D \funcut(v)\|_{\Op(L^2_{N}, \bR)} \leq M $ for all $v \in L^2_{N}(\bT^d;\bR^{d_a})$.
    \end{enumerate}
\end{lemma}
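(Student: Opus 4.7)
The plan is to construct $\funcut$ as the composition of an FNO-realizable scalar functional
\[
    S(v) := \int_{\bT^d}\sum_{j=1}^{d_a} \absval\!\big(v_j(x);\theta_a\big)\,dx
\]
with the scalar cutoff $\fcutoff(\,\cdot\,;\theta_c,b_c)$ from \Cref{lemma:cutoff}, and to set $\funcut(v):=\fcutoff(S(v);\theta_c,b_c)$. Here $\absval$ is the global absolute-value approximation of \Cref{lemma:absval}, which satisfies $|\absval(x;\theta_a)-|x||\le\epsilon_a$ uniformly in $x$; hence $|S(v)-\|v\|_{L^1(\bT^d;\ell^1)}|\le(2\pi)^d d_a\,\epsilon_a$ uniformly in $v$. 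Since $L^2_N(\bT^d;\bR^{d_a})$ is finite-dimensional, there exist constants $0<c_1\le c_2$ depending only on $N,d,d_a$ with $c_1\|v\|_{L^2_N}\le\|v\|_{L^1(\bT^d;\ell^1)}\le c_2\|v\|_{L^2_N}$. The functional $S$ fits the Fourier-layer template: a lifting duplicates channels; a first Fourier layer with pointwise weight $W_1=\diag(\theta_a,-\theta_a,\dots,\theta_a,-\theta_a)$ and no convolution produces the channels $\sigma(\pm\theta_a v_j(x))$; a second Fourier layer whose multiplier is supported only at $k=0$ converts these into constant functions proportional to $\int\sigma(\pm\theta_a v_j)\,dx$; a third layer with appropriate pointwise weights and spatial biases implements the two-neuron formula defining $\fcutoff$; and a projection $\cQ$ produces a scalar output (identified with $L^2_0(\bT^d;\bR)$). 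Since $\sigma\in C^\infty$ and all intermediate spaces are finite-dimensional, $\funcut\in C^1(L^2_N;\bR)$.

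Properties (i) and (ii) will follow from \Cref{lemma:cutoff} after choosing parameters in response to $\epsilon$ and $R$. I first take $\theta_a$ large enough that $(2\pi)^d d_a\epsilon_a \le c_2 R$, so that $\|v\|_{L^2_N}\le R$ gives $S(v)\le 2c_2 R$ and $\|v\|_{L^2_N}\ge \const R$ gives $S(v)\ge c_1 \const R - c_2 R$. Setting $b_c := 4c_2 R$ then forces $S(v)\in[0,b_c/2]$ whenever $\|v\|_{L^2_N}\le R$, which is the region where \Cref{lemma:cutoff}\,(i) brings $|\fcutoff-1|$ and $|\fcutoff'|$ below any preassigned tolerance; and choosing $\const := 17\,c_2/c_1$ forces $S(v)\ge 4b_c$ whenever $\|v\|_{L^2_N}\ge \const R$, placing it in the tail region of \Cref{lemma:cutoff}\,(ii). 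The value bounds in (i) and (ii) are then immediate, and the derivative bounds follow from the chain rule $D\funcut(v)=\fcutoff'(S(v))\,DS(v)$ together with the universal bound on $\|DS(v)\|_{\Op(L^2_N,\bR)}$ derived below. Crucially, $\const$ depends only on $N,d,d_a$.

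For (iii), the global bounds $|\fcutoff|,|\fcutoff'|\le M_c(\sigma)$ and $|\absval'|\le M_a(\sigma)$ in \Cref{lemma:cutoff,lemma:absval} are independent of $\theta_a,\theta_c,b_c$. The value bound $|\funcut(v)|\le M_c$ is immediate. For the derivative, $DS(v)\,w=\int_{\bT^d}\sum_j \absval'(v_j(x);\theta_a)\,w_j(x)\,dx$, and the Cauchy--Schwarz inequality gives
\[
    \|DS(v)\|_{\Op(L^2_N,\bR)}\le M_a (2\pi)^{d/2} d_a^{1/2}
\]
uniformly in $v$, so $\|D\funcut(v)\|_{\Op(L^2_N,\bR)}\le M_c M_a (2\pi)^{d/2} d_a^{1/2}=:M$, depending only on $\sigma,d,d_a$. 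The main obstacle will be the explicit FNO bookkeeping: because the Fourier-layer template places $\sigma$ after $Wv+b+\cF^{-1}(P\odot\cF v)$, the linear combinations forming $\absval$, the channel summation, the zero-mode integration, and the additive constant in $\fcutoff$ must each be routed through successive choices of weight matrices, spatial biases, and Fourier multipliers while preserving smoothness and the uniform bounds. I will carry out this bookkeeping explicitly in the spirit of the proofs of \Cref{lemma:ft_approximation,lemma:inverse_ft_approximation}, so that $\funcut$ is a bona fide FNO and the constants in (iii) are genuinely independent of $\epsilon$ and $R$.
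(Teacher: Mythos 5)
Your proposal is correct and follows essentially the same route as the paper: the same composition $\fcutoff\circ\big(v\mapsto\int_{\bT^d}\sum_j\absval(v_j)\,dx\big)$ realized as an FNO via a local $\absval$ layer, a zero-mode Fourier integration, and a local $\fcutoff$ layer, with the $L^1$--$L^2$ equivalence on the finite-dimensional space $L^2_N$ fixing $\const$ and the parameter-independent global bounds on $\fcutoff,\fcutoff'$ and $\absval'$ giving $M$. The only differences are cosmetic (your numeric choice $\const=17c_2/c_1$ versus the paper's, Cauchy--Schwarz in place of the $L^\infty$--$L^1$ bound for $DS$, and phrasing the application of \Cref{lemma:cutoff} through its internal parameter $b_c$ rather than its radius, which should be $R_1=2c_2R$), none of which affects the argument.
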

\begin{proof}
    We consider an FNO of the form
    \begin{equation}\label{eq:cutoff_fno}
        \funcut (v)(x) := 
        \fcutoff\left( 
            \int_{\bT^d} \sum_{i=1}^{d_a}\absval(v_i(x)) dx
        \right)
    \end{equation}
    using $\absval(x) = \absval(x;\theta_{\mathrm{abs}}, b_{\mathrm{abs}})$ 
    and $\fcutoff(x) = \fcutoff(x;\theta_{\cutoff}, b_{\cutoff})$ from \Cref{lemma:absval,lemma:cutoff}, respectively, which depend only on the choice of the activation function $\sigma$.
    We recognize that when viewed as an FNO, $\funcut $ maps to the space of constant functions. Moreover, it is composed of three operations
    \begin{enumerate}[label=(\roman*)]
        \item A local nonlinear operation $v \mapsto \sum_{i=1}^{d_a}\absval(v_i(\cdot))$ .
        \item A nonlocal linear operation $v \mapsto \int_{\bT^d} v \dx$, which can be represented using the Fourier transform $\cF_N^{-1} (P_0 \cF_{N})$, where $P_0(0) = 1$ and $P_0(k) = 0$ otherwise. 
        \item A local nonlinear operation $v \mapsto \fcutoff(v(\cdot))$.
    \end{enumerate}
    Together, these can be viewed as an FNO with two hidden layers.
    In particular, for any $\epsilon_1, \epsilon_2>0$ and $R_1 > 0$, we can construct $\absval$ such that 
    \[
        | \absval(x) - x | \leq \frac{\epsilon_1 }{d_a \cdot |\bT^d|}
        \quad \text{and} \quad
        | \absval'(x) | \leq M_{\tabsval}(\sigma)
        \quad \forall x \in \bR,
    \]
    and $\fcutoff$ such that
    \[
        \begin{cases}
            |\fcutoff(x) - 1| \leq \epsilon_2 \quad \text{and} \quad |\fcutoff'(x)|\leq \epsilon_2 & 0 \leq x \leq R_1, \\
            |\fcutoff(x)| \leq \epsilon_2 \quad \text{and} \quad |\fcutoff'(x)|\leq \epsilon_2 & x \geq 4R_1, \\
            |\fcutoff(x)| \leq M_{\tcutoff} \quad \text{and} \quad |\fcutoff'(x)| \leq M_{\tcutoff} & 
            \forall x \in \bR, 
        \end{cases} 
    \]
    where $M_{\tcutoff} = M_{\tcutoff}(\sigma)$ and $M_{\tabsval} = M_{\tabsval}(\sigma)$ are the global bounds on $\fcutoff$ and $\absval$ given by \Cref{lemma:cutoff,lemma:absval}, respectively.
    
    Our goal is to select $\const$, along with $\epsilon_1, \epsilon_2$ and $R_1$ such that the desired properties hold. 
    To this end, let 
    \[
        \cN(v) : L^2_{N} \ni v \rightarrow \int_{\bT^d} \sum_{i=1}^{d_a} \absval(v_i(x)) dx,
    \]
    which approximates $\|v\|_{L^1}$ in the sense that $\cN(v) \geq 0$ and 
    \[
        \left| \cN(v) - \|v\|_{L^1} \right| \leq \int_{\bT^d} \sum_{i=1}^{d_a} \left| \absval(v_i(x)) - |v_i(x)| \right| dx \leq \epsilon_1.
    \]
    Moreover, let $\const_{1,2} = \const_{1,2}(N,d, d_a)$ and 
    $\const_{2,1} = \const_{2,1}(N,d, d_a)$ 
    be the equivalence constants for the $L^1(\bT^d;\bR^{d_a})$ and $L^2(\bT^d;\bR^{d_a})$ norms on $L^2_{N}(\bT^d;\bR^{d_a})$, i.e.,
    \[
        \const_{2,1} \|v\|_{L^2} \leq \| v \|_{L^1} \leq \const_{1,2} \|v\|_{L^2}.
    \]
    We observe that $\cN(v)$, which approximates $\|v\|_{L^1}$, provides an estimate of $\|v\|_{L^2_N}$ up to approximation errors and equivalence constants. That is,
    \[
        |\cN(v)| \leq \|v\|_{L^1} + \epsilon_1 \leq \const_{1,2} \|v\|_{L^2} + \epsilon_1, 
    \]
    and 
    \[
        |\cN(v)| \geq \|v\|_{L^1} - \epsilon_1 \geq \const_{2,1}\|v\|_{L^2} - \epsilon_1.
    \]
    
    Our strategy is to select the approximation error $\epsilon_1$, the radius $R_1$, and the constant $\const$, such that the output of $\cN$ falls in the appropriate ranges of $\fcutoff$
    after accounting for the approximation errors and the equivalence constants.
    To do so, we first select $\epsilon_1 \leq \min\{\const_{1,2}/2, \const_{2,1}/2, 1\}$, from which $\absval$ is constructed.
    Under the setting of $R > 1$, for any $\const_1 > 1/2$, we have
    \[
        \begin{cases}
            |\cN(v)| \leq 3\const_{1,2} R/2 & \|v\|_{L^2} \leq R, \\ 
            |\cN(v)| \geq (\const_1-1/2) \const_{2,1} R & \|v\|_{L^2} \geq \const_1R, \\ 
        \end{cases}
    \]
    Thus, we can take 
    \[
    R_1 = 3 \const_{1,2}(N,d,d_a)R/2, \qquad 
    \const_1 = \const_1(N,d, d_a) = \frac{6 \const_{1,2}(N,d, d_a)}{\const_{2,1}(N,d,d_a)} + \frac{1}{2},
    \]
    noting that by definition, $\frac{\const_{1,2}(N,d, d_a)}{\const_{2,1}(N,d, d_a)} \geq 1$.
    These choices lead to $\cN(v)$ taking the following values:
    \[
        \begin{cases}
            |\cN(v)| \leq R_1 & \|v\|_{L^2} \leq R, \\ 
            |\cN(v)| \geq 4 R_1 & \|v\|_{L^2} \geq \const_1 R. \\ 
        \end{cases}
    \]
    
    Composing this $\cN$ with the chosen form of $\fcutoff$, we have $\funcut(v) = \fcutoff(\cN(v))$ satisfying 
    \[
        \begin{cases}
            |\funcut(v) - 1| \leq  \epsilon_2 & \|v\|_{L^2} \leq R, \\ 
            |\funcut(v)| \leq \epsilon_2 & \|v\|_{L^2} \geq \const_1 R, \\ 
            |\funcut(v)| \leq M_{\tcutoff}(\sigma) & \forall v \in L^2_{N}(\bT^d). \\ 
        \end{cases}
    \]
    For the derivatives, we begin by noting that for any $v, w \in L^2_{N}(\bT^d;\bR^{d_a})$,
    \[
         D \funcut (v) w  = \fcutoff'(\cN(v)) \int_{\bT^d} \absval'(v(x)) w(x) dx, 
    \]
    such that 
    \[
        | D \funcut (v) w | \leq  | \fcutoff'(\cN(v))| \cdot \| \absval(v(\cdot)) \|_{L^{\infty}} \cdot \|w\|_{L^1} 
        \leq | \fcutoff'(\cN(v))| \cdot M_{\tabsval} \cdot \const_{1,2} \|w\|_{L^2}, 
    \]
    Therefore, the derivatives have the bounds
    \[
        \| D \funcut (v) \|_{\Op(L^2_{N}, \bR)} \leq  M_{\tabsval} \const_{1,2} | \fcutoff'(\cN(v))|,
    \]
    which again can be split into the following cases based on the outputs of $\cN(v)$:
     \[
        \begin{cases}
            \| D \funcut (v) \|_{\Op(L^2_{N}, \bR)}
            \leq M_{\tabsval} \const_{1,2} \cdot \epsilon_2 
                & \|v\|_{L^2} \leq R, \\ 
            \| D \funcut (v) \|_{\Op(L^2_{N}, \bR)}
            \leq M_{\tabsval} \const_{1,2} \cdot \epsilon_2  
                & \|v\|_{L^2} \geq \const_1 R, \\ 
            \| D \funcut (v) \|_{\Op(L^2_{N}, \bR)} \leq M_{\tabsval} \const_{1,2}  M_{\tcutoff} 
            & \forall v \in L^2_{N}.
        \end{cases}
    \]
    Combining the results of the function value and the derivatives, 
    we can choose 
    \[
        \epsilon_2 = \min \left\{ \epsilon, \frac{\epsilon}{M_{\tabsval} \const_{1,2}}  \right\}
    \]
    from which $\fcutoff$ is constructed.
    The lemma then holds with 
    \[
        \const(N,d,d_a) = \frac{6 \const_{1,2}(N,d,d_a)}{\const_{2,1}(N,d,d_a)} + \frac{1}{2}, 
        \quad M(\sigma, N, d, d_a)= M_{\tcutoff}(\sigma) \max \{1, M_{\tabsval}(\sigma) \const_{1,2}(N, d, d_a)\}.
    \]
    
\end{proof}

\Cref{lemma:cutoff_l2} can be translated into an analogous cutoff functional on $H^s$ with respect to the $L^2$ norm of the Fourier projection of $v \in H^s$.
\begin{corollary}
    \label{corollary:cutoff_hs}
    Let $\sigma \in \srla$ and $N \in \bN$ be fixed. There exist constants $\const = \const(N,d,d_a)$ and $M = M(\sigma, N, d, d_a)$ such that 
    for any $\epsilon > 0$ and $R > 1$, there exists an FNO 
    $\cN \in \cont^1(H^{s}(\bT^d; \bR^{d_a}) ; \bR)$ 
    with activation function $\sigma$ such that the following hold:
    \begin{enumerate}[label=(\roman*)]
        \item $| \cN (v)| \leq 1 + \epsilon$ and $\|D \cN (v) \|_{\HS(H^s(\bT^d), \bR)} \leq \epsilon$ whenever $\|\cP_N v\|_{L^2} \leq R$;
        \item $| \cN (v)| \leq \epsilon$ and $\|D \cN (v) \|_{\HS(H^{s}(\bT^d), \bR)} \leq \epsilon$ whenever $\|\cP_N v\|_{L^2} \geq \const R$;
        \item $|\cN(v)| \leq M$ and $\| D \cN(v)\|_{\HS(H^{s}(\bT^d), \bR)} \leq M$ for all $v \in H^{s}(\bT^d;\bR^{d_a})$.
    \end{enumerate}

\end{corollary}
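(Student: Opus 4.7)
The plan is to set $\cN(v) := \funcut(\cP_N v)$ for $v \in H^s(\bT^d; \bR^{d_a})$, where $\funcut$ is the cutoff FNO produced by \Cref{lemma:cutoff_l2} applied with the same $\epsilon$ and $R$ (possibly with a constant-factor tightening of the tolerance to absorb the architectural error described below). Since $\cP_N$ lands in the finite-dimensional smooth subspace $L^2_N(\bT^d; \bR^{d_a})$, the pointwise $\absval$ operation and the outer $\fcutoff$ operation inside $\funcut$ are well defined on its image, and $\cN \in \cont^1(H^s(\bT^d; \bR^{d_a}); \bR)$.

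First I would realize $\cN$ as a bona fide FNO with activation $\sigma$. Inside $\funcut$ the only nonlocal operation is a Fourier convolution layer whose multiplier keeps only the zero mode; by prepending an additional Fourier convolution layer whose multiplier is the identity on $|k|_\infty \leq N$ and zero elsewhere, and using the smooth identity approximation from \Cref{lemma:identity_approximation_sobolev} (or \Cref{lemma:identity_approx_Linfty} when $s > d/2$) to restore the $\sigma$-activation convention of Section 3.1, the composite network implements $\cP_N$ up to arbitrarily small error before the $\absval$ step. The induced approximation error can be absorbed into the parameter of \Cref{lemma:cutoff_l2} by tightening it by a constant factor depending only on $N$, $d$, and $d_a$, which does not affect the final constants $\const(N,d,d_a)$ and $M(\sigma,N,d,d_a)$.

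With this architecture in place, the three bounds transfer directly from \Cref{lemma:cutoff_l2}. The function-value inequalities (i)--(iii) are immediate from $|\cN(v)| = |\funcut(\cP_N v)|$, since the hypotheses $\|\cP_N v\|_{L^2} \leq R$, $\|\cP_N v\|_{L^2} \geq \const R$, and the global case match exactly the hypotheses of \Cref{lemma:cutoff_l2} with argument $\cP_N v \in L^2_N$. For the derivative bounds, the chain rule gives $D\cN(v) = D\funcut(\cP_N v) \circ \cP_N$, which is a bounded linear functional on $H^s(\bT^d; \bR^{d_a})$. Since its codomain is one-dimensional, its Hilbert--Schmidt norm agrees with its operator norm (both equal the norm of its Riesz representer), and combining this with the contraction $\|\cP_N v\|_{L^2} \leq \|v\|_{L^2} \leq \|v\|_{H^s}$ valid for all $s \geq 0$ yields
\begin{equation*}
    \|D\cN(v)\|_{\HS(H^s, \bR)} = \|D\cN(v)\|_{\Op(H^s, \bR)} \leq \|D\funcut(\cP_N v)\|_{\Op(L^2_N, \bR)},
\end{equation*}
so each of the three derivative bounds of \Cref{lemma:cutoff_l2} carries over with the same $\const$ and $M$.

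The main obstacle I anticipate is the bookkeeping that realizes $\cP_N$ as an initial block of an FNO obeying the activation convention of Section 3.1: analytically this is handled by the identity-approximation machinery, but one must track constants carefully so that the extra approximation error never inflates the tolerances of \Cref{lemma:cutoff_l2} beyond $\epsilon$. Once this architectural packaging is settled, the remaining estimates are mechanical consequences of the chain rule, the duality $\HS \cong \Op$ for scalar-valued linear functionals, and the contractivity of $\cP_N$ between Hilbert-scale norms.
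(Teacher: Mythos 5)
Your proposal is essentially the paper's proof: define $\cN = \funcut \circ \cP_N$ with $\funcut$ from \Cref{lemma:cutoff_l2}, transfer the value bounds directly, and get the derivative bounds from the chain rule, the identification of $\HS$ with $\Op$ for scalar-valued functionals, and the contraction $\|\cP_N v\|_{L^2} \leq \|v\|_{H^s}$. The only deviation, your approximate-identity packaging of $\cP_N$, is unnecessary (though harmless): since the nonlinearities in $\funcut$ are built from $\sigma$ applied after linear maps, the projection $\cP_N$ can be fused exactly into the spectral multiplier of the first $\sigma$-activated Fourier layer, so no additional approximation error or tolerance tightening is needed.
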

\begin{proof}
    We consider the cutoff function $\funcut : L^2_N \rightarrow \bR$ satisfying \Cref{lemma:cutoff_l2} with the given $\epsilon, N, R$,
    and take $\cN := \funcut \circ \cP_N$,
    where $\cP_N : H^s(\bT^d;\bR^{d_a}) \rightarrow L^2_{N}(\bT^d;\bR^{d_a})$.
    Then, the bounds for $|\cN(v)| = |\funcut(\cP_N v)|$ 
    are a direct consequence of applying \Cref{lemma:cutoff_l2} for the inputs $\cP_N v \in L^2_{N}(\bT^d;\bR^{d_a})$.
    
    For the derivatives, we begin by noting that since the output space is one-dimensional, $\Op(H^s(\bT^d;\bR^{d_a}), \bR) = \HS(H^s(\bT^d;\bR^{d_a}), \bR)$, and their norms are equal.
    Moreover,
    since $\|\cP_N v\|_{L^2} \leq \|v\|_{H^s}$, we have
    \begin{align*}
        \| D\cN(v) \|_{\Op(H^s{(\bT^d;\bR^{d_a}), \bR)}} 
        & = 
        \| D \funcut(\cP_N v) \cP_N \|_{\Op(H^s(\bT^d;\bR^{d_a}), \bR)} \\
        & \leq 
        \| D \funcut(\cP_N v) \|_{\Op(L^2_N(\bT^d;\bR^{d_a}), \bR)} 
        \| \cP_N \|_{\Op(H^s(\bT^d;\bR^{d_a}), L^2_N(\bT^d;\bR^{d_a}))} \\
        & \leq 
        \| D \funcut(\cP_N v) \|_{\Op(L^2_N(\bT^d;\bR^{d_a}), \bR)},
    \end{align*}
    which implies that the derivative bounds from \Cref{lemma:cutoff_l2} also hold.
\end{proof}

\subsection{Truncation Errors}
Similar to \Cref{lemma:truncation}, we show that under locally Lipschitz assumptions with appropriate moment conditions, 
the truncation of the operator $\cG$ to a finite collection of Fourier modes $|k|_{\infty} \leq N$ converges to $\cG$ in the $W^{1,2}_{\mu, \delta}$ norm. 
\begin{lemma}\label{lemma:truncation_mu}
    Let $s, s', \delta \geq 0$, and let $\cX := H^s(\bT^d; \bR^{d_a})$, $\cY := H^{s'}(\bT^d; \bR^{d_u})$, and $\cX_{\delta} := H^{s + \delta}(\bT^d; \bR^{d_a})$.
    Suppose $\mu$ is a Borel probability measure on $\cX$ and $\cG \in \cont^{1}(\cX;\cY)$, which together satisfy \Cref{assumption:measurability}.
    Then, for any $\epsilon > 0$, there exists $N_{\epsilon} \in \bN$ such that
    \begin{equation}\label{eq:truncation_error_mu}
        \| \cG - \cP_N \circ \cG \circ \cP_N \|_{L^2_{\mu}(\cX; \cY)}^2 
        + 
        \| D \cG - D (\cP_N \circ \cG \circ \cP_N) \|_{L^2_{\mu}(\cX; \HS(\cX_{\delta}; \cY))}^2 
        \leq \epsilon^2 
    \end{equation}
    whenever $N \geq N_{\epsilon}$.
\end{lemma}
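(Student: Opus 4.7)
The plan is to establish convergence in $L^2_\mu$ via the dominated convergence theorem (DCT), proving pointwise $\mu$-almost-everywhere convergence of the integrands and producing integrable dominators using \Cref{assumption:measurability}. The argument is carried out in parallel for the output-truncation integrand and the derivative-truncation integrand, the latter being the more involved piece.

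First, I would establish the pointwise (in $a$) convergence of both integrands. For any $a \in \cX$, the orthogonal Fourier truncation satisfies $\cP_N a \to a$ in $\cX$ as $N \to \infty$. Continuity of $\cG : \cX \to \cY$ then gives $\cG(\cP_N a) \to \cG(a)$ in $\cY$, and since $\|\cP_N u - u\|_{\cY} \to 0$ for every $u \in \cY$, the triangle inequality
\[
\|\cP_N \cG(\cP_N a) - \cG(a)\|_{\cY} \leq \|\cG(\cP_N a) - \cG(a)\|_{\cY} + \|\cP_N \cG(a) - \cG(a)\|_{\cY}
\]
shows that $\|\cG(a) - \cP_N(\cG(\cP_N a))\|_{\cY}^2 \to 0$. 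For the derivative integrand, write $A_N := D\cG(\cP_N a)$ and $A := D\cG(a)$, both in $\HS(\cX_\delta, \cY)$. Continuity of $D\cG : \cX \to \HS(\cX_\delta, \cY)$ gives $A_N \to A$ in $\HS$-norm, and for any fixed HS operator $A$ one has $\|\cP_N A \cP_N - A\|_{\HS} \to 0$ (this is the standard Parseval tail bound, exactly as in the compact-set proof of \Cref{lemma:truncation}). Combining,
\[
\|\cP_N A_N \cP_N - A\|_{\HS} \leq \|A_N - A\|_{\HS} + \|\cP_N A \cP_N - A\|_{\HS} \to 0.
\]

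Second, I would produce $\mu$-integrable dominators. For the output term, local Lipschitz continuity of $\cG$ combined with the non-increasing property ($\|\cP_N a\|_{\cX} \leq \|a\|_{\cX}$ implies $L_0(\cP_N a, 0) \leq L_0(a, 0) \leq L_0(a,a)$ via symmetry) yields
\[
\|\cG(\cP_N a)\|_{\cY} \leq \|\cG(0)\|_{\cY} + L_0(a,a) \|a\|_{\cX},
\]
and hence $\|\cG(a) - \cP_N\cG(\cP_N a)\|_{\cY}^2 \lesssim \|\cG(a)\|_{\cY}^2 + \|\cG(0)\|_{\cY}^2 + L_0(a,a)^2 \|a\|_{\cX}^2$. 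The first term is integrable since $\cG \in L^2_\mu(\cX;\cY)$, and Cauchy--Schwarz together with the bounded fourth moments of $L_0(a,a)$ and $\|a\|_{\cX}$ shows the cross term is integrable as well. An entirely analogous bound for the derivative integrand uses
\[
\|D\cG(\cP_N a)\|_{\HS(\cX_\delta,\cY)} \leq \|D\cG(0)\|_{\HS(\cX_\delta,\cY)} + L_1(a,a)\|a\|_{\cX},
\]
yielding a dominator $\lesssim \|D\cG(a)\|_{\HS}^2 + \|D\cG(0)\|_{\HS}^2 + L_1(a,a)^2 \|a\|_{\cX}^2$, which is integrable since $D\cG \in L^2_\mu(\cX;\HS(\cX_\delta,\cY))$ and by another Cauchy--Schwarz application on the cross term.

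With pointwise convergence and integrable dominators in hand, DCT produces $N_{\epsilon,0}$ and $N_{\epsilon,1}$ making each summand in \eqref{eq:truncation_error_mu} at most $\epsilon^2/2$, and $N_\epsilon := \max\{N_{\epsilon,0}, N_{\epsilon,1}\}$ completes the proof. The main technical delicacy I anticipate is the derivative dominator: one must avoid bounding $\|\cP_N D\cG(\cP_N a)\cP_N\|_{\HS}$ using the norm of $D\cG$ at a moving point while losing $N$-independence. The non-increasing assumption on $L_1$, applied through the symmetric telescoping that reduces everything back to $L_1(a,a)$ and $\|a\|_{\cX}$, is precisely what makes this dominator $N$-independent and $\mu$-integrable; without it, monotone control of the Lipschitz constants along the projection would fail.
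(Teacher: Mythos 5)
Your proof is correct, but it is organized differently from the paper's. The paper splits each error via the triangle inequality into projection-tail pieces and Lipschitz-increment pieces, and then drives each integral to zero separately: the tail pieces by exchanging sum and integral (Fubini) and using monotone convergence of the trailing Fourier sums of $\bE_\mu\|\cG(a)\|_{\cY}^2$ and $\bE_\mu\|D\cG(a)\|_{\HS}^2$, and the Lipschitz pieces by Cauchy--Schwarz together with the fourth-moment assumptions and dominated convergence applied to $\int_\cX \|a-\cP_N a\|_{\cX}^4\,d\mu$. You instead prove $\mu$-a.e.\ (in fact everywhere) pointwise vanishing of the full integrands — using continuity of $\cG$, continuity of $a\mapsto D\cG(a)$ into $\HS(\cX_\delta,\cY)$ (which is supplied by the local Lipschitz condition, so no extra hypothesis is needed), and the Hilbert--Schmidt tail bound $\|\cP_N A\cP_N - A\|_{\HS}\to 0$ — and then apply the dominated convergence theorem once, with an $N$-independent dominator obtained from the growth bound through the origin, $\|\cG(\cP_N a)\|_{\cY}\le \|\cG(0)\|_{\cY}+L_0(a,a)\|a\|_{\cX}$ (and analogously for $D\cG$), whose integrability follows from $\cG\in L^2_\mu$, $D\cG\in L^2_\mu$, Cauchy--Schwarz, and the fourth moments. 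Your chain $L_i(\cP_N a,0)\le L_i(a,a)$ correctly uses both the symmetry and the monotonicity properties of Assumption 4.1 (the paper only needs monotonicity, via $L_i(a,\cP_N a)\le L_i(a,a)$, but symmetry is assumed anyway, so this costs nothing); your route buys a single, compact DCT argument that avoids the Fubini/tail-sum bookkeeping, while the paper's route yields monotone bounds in $N$, though the "for all $N\ge N_\epsilon$" conclusion follows equally well from your limit statement. I see no gap.
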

\begin{proof}
    We will present the proof for $d_a = d_u = 1$. The cases for $d_a, d_u \geq 1$ are analogous.

    \paragraph{Output Error}
    Starting with errors in the output values, for any $N \in \bN$, we have 
    \[
        \|\cG(a) - \cP_N \cG(\cP_N a) \|_{\cY} \leq 
        \|\cG(a) - \cP_N \cG(a) \|_{\cY} + 
        \|\cP_N \cG(a) - \cP_N \cG(\cP_N a) \|_{\cY}
    \]
    such that 
    \[
        \| \cG - \cP_N \circ \cG \circ \cP_N \|_{L^2_{\mu}(\cX; \cY)}^2 
        \leq 
        2 \int_{\cX} \| \cG(a) - \cP_N \cG(a)\|_{\cY}^2 d \mu(a)
        + 2 \int_{\cX} \| \cP_N \cG(a) - \cP_N \cG(\cP_N a)\|_{\cY}^2 d \mu(a).
    \]
    For $\cY = H^{s'}(\bT^d)$, we have 
    \[
        \| \cG(a) - \cP_N \cG(a)\|_{\cY}^2 = (2\pi)^{d} \sum_{|k|_{\infty} > N} (1 + |k|^2)^{s} \left| \cF(\cG(a))(k) \right|^2,
    \]
    such that the first integral can be written as  
    \[
        \int_{\cX} \| \cG(a) - \cP_N \cG(a) \|_{\cY}^2 d \mu(a) 
        = \sum_{|k|_{\infty} > N} | \cF(\cG(a))(k)|^2.
    \]
    Since $\cG \in L^2_{\mu}(\cX;\cY)$, $\bE_{a \sim \mu}[\|\cG(a)\|_{\cY}^2]$ is finite. Moreover, we have by Fubini's theorem
    \[
        \bE_{a \sim \mu}[\|\cG(a)\|_{\cY}^2]
        = \int_{\cX} \sum_{k \in \bZ^{d}} (1 + |k|^2)^{s} |\cF(\cG(a))(k)|^2 d\mu(a)
        = \sum_{k \in \bZ^{d}} \int_{\cX} (1 + |k|^2)^{s} |\cF(\cG(a))(k)|^2 d\mu(a).
    \]
    Finiteness of this sum suggests that the trailing part converges monotonically to zero, i.e.,
    \begin{equation}\label{eq:function_trailing_converge}
        \sum_{|k|_{\infty} \geq N} \int_{\cX} (1 + |k|^2)^{s} |\cF(\cG(a))(k)|^2 d\mu(a) \rightarrow 0 \qquad \text{as } N \rightarrow \infty.
    \end{equation}

    For the second integral, we have 
    \begin{align*}
        \int_{\cX} \| \cP_N \cG(a) - \cP_N \cG(\cP_N a)\|_{\cY}^2 d \mu(a)
        & \leq 
        \int_{\cX} \| \cG(a) - \cG(\cP_N a)\|_{\cY}^2 d \mu(a) \\
        & \leq 
        \int_{\cX} L_0(a, \cP_N a)^2 \| a - \cP_N a\|^2_{\cX} d \mu (a).
    \end{align*}
    Applying the Cauchy--Schwarz inquality, we have 
    \[
        \int_{\cX} L_0(a, \cP_N a)^2 \| a - \cP_N a\|^2_{\cX} d \mu (a) 
        \leq 
        \left( \int_{\cX} L_0(a, \cP_N a)^4 d \mu(a) \right)^{\!\!\frac{1}{2}}
        \left( \int_{\cX} \| a - \cP_N a\|^4_{\cX} d \mu (a)  \right)^{\!\!\frac{1}{2}}.
    \]
    By \Cref{assumption:measurability}, we have $L_0(a, \cP_N a) \leq L_0(a, a)$ and
    \[
        \left( \int_{\cX} L_0(a, \cP_N a)^4 d \mu(a) \right)^{\!\!\frac{1}{2}} 
        \leq \left( \int_{\cX} L_0(a, a)^4 d \mu(a) \right)^{\!\!\frac{1}{2}} 
        = M_0 < \infty.
    \]
    For the second integral $\int_{\cX} \|a - \cP_N a\|_{\cX}^4$, we have $\|a - \cP_N a\|_{\cX}^4 \rightarrow 0$ for all $a \in \cX$. 
    Moreover, $\|a - \cP_N a\|_{\cX}^4 \leq \|a\|^4_{\cX}$, which is integrable due to the moment assumption in \Cref{assumption:measurability}.
    The dominated convergence theorem then implies that
    \[
        \left( \int_{\cX} 
            \|a - \cP_N a\|_{\cX}^4 d\mu(a)
        \right)^{\!\!\frac{1}{2}} 
        \qquad \text{as } N \rightarrow \infty,
    \]
    where this convergence is monotonic. 
    Thus, the upper bound on the error $\int_{\cX} \| \cP_N \cG(a) - \cP_N \cG(a \cP_Na)\|_{\cY}^2 d\mu(a)$ also converges monotonically to zero as $N \rightarrow \infty$.

    \paragraph{Derivative Error} 
    We now move to derivative accuracy, which can be shown in a similar manner. 
    The triangle inequality implies 
    \begin{align*}
        \|D \cG(a) - D(\cP_N \circ \cG \circ \cP_N)(a) \|_{\HS(\cX_{\delta}; \cY)}^2 
        &\leq 
        3 \|D \cG(a) - \cP_N D \cG(a) \|_{\HS(\cX_{\delta}; \cY)}^2 \\
        & \quad + 3 \|\cP_N D \cG(a) - \cP_N D \cG(a) \cP_N \|_{\HS(\cX_{\delta}; \cY)}^2 \\
        & \quad + 3 \|\cP_N D \cG(a) \cP_N - \cP_N D \cG(\cP_N a) \cP_N \|_{\HS(\cX_{\delta};\cY)}^2,
    \end{align*}
    In the first term, we first note that 
    \[
    \| D \cG(a) - \cP_N D \cG(a) \|_{\HS(\cX_{\delta}; \cY)}^2
        = 
        \| D \cG(a)^* - D \cG(a)^* \cP_N \|_{\HS(\cY; \cX_{\delta})}^2.
    \]
    We can then evaluate the Hilbert--Schmidt norm using the sinusoidal basis, such that 
    \[
        \| D \cG(a)^* - D \cG(a)^* \cP_N \|_{\HS(\cY; \cX_{\delta})}^2
        = \sum_{k \in \zlex, |k|_{\infty} > N} 
        \| D \cG(a)^* \varphi_{k, s'}^{\cos} \|_{\cX_{\delta}}^2 +
        \| D \cG(a)^* \varphi_{k, s'}^{\sin} \|_{\cX_{\delta}}^2.
    \]
    Similar to the projection error of $a$, the fact that $D \cG \in L^2_{\mu}(\cX; \HS(\cX_{\delta}, \cY))$
    implies that 
    \[
        \int_{\cX} \| D \cG(a) - \cP_N D \cG(a)\|_{\HS(\cX_{\delta}; \cY)}^2 d \mu(a) 
        = \int_{\cX} \sum_{k \in \zlex, |k|_{\infty} > N} 
        \| D \cG(a)^* \varphi_{k, s'}^{\cos} \|_{\cX_{\delta}}^2 +
        \| D \cG(a)^* \varphi_{k, s'}^{\sin} \|_{\cX_{\delta}}^2 d\mu(a)
    \]
    converges monotonically to zero as $N \rightarrow \infty$.
    For the second term, we can use an analogous argument to show that 
    \begin{align*}
        & \int_{\cX} \| \cP_N D \cG(a) - \cP_N D \cG(a) \cP_N \|_{\HS(\cX_{\delta}; \cY)}^2 d \mu(a)  \\
        & \qquad \leq 
        \int_{\cX} \sum_{k \in \zlex, |k|_{\infty} > N} 
        \| D \cG(a)^* \varphi_{k, s'}^{\cos} \|_{\cX_{\delta}}^2 +
        \| D \cG(a)^* \varphi_{k, s'}^{\sin} \|_{\cX_{\delta}}^2 d\mu(a),
    \end{align*}
    which converges monotonically to zero as $N \rightarrow \infty$.    
    
    It remains to consider the term involving
    $
    \| \cP_N D \cG(a) \cP_N - \cP_N D \cG(\cP_N a) \cP_N \|_{\HS(\cX_{\delta}; \cY)}.
    $
    We use the same argument based on the local Lipschitz constant. 
    That is, using the local Lipschitz constant, along with the fact that $\cP_N$ is an orthogonal projection, we have
    \begin{align*}
        &\int_{\cX} 
        \| \cP_N D \cG(a) \cP_N - \cP_N D \cG(\cP_N a) \cP_N \|_{\HS(\cX_{\delta}; \cY)}^2
        d \mu(a) \\
        & \qquad \leq
        \left( \int_{\cX} L_1(a, P_N a)^4 d\mu(a) \right)^{\!\!\frac{1}{2}}
        \left( \int_{\cX} \| a - \cP_N a \|_{\cX}^4 d \mu(a) \right)^{\!\!\frac{1}{2}} \!\! \\
        & \qquad \leq
        M_1 
        \left( \int_{\cX} \| a - \cP_N a \|_{\cX}^4 d \mu(a) \right)^{\!\!\frac{1}{2}} \!\!.
    \end{align*}
    This upper bound again converges monotonically to zero as $N \rightarrow \infty$ due to the moment assumption on $\mu$.

    \paragraph{Combined Output and Derivative Error}
    To obtain the result in \eqref{eq:truncation_error_mu}, it suffices to note that 
    since the bounds on the errors all decay monotonically with respect to $N$,
    $N_{\epsilon}$ can be selected such that both the output and the derivative errors are simultaneously small for all $N \geq N_{\epsilon}$.
\end{proof}

\subsection{Approximation of the Truncated Operator}
Next, we show that the truncated operator $\cG_N = \cP_N \circ \cG_N \circ \cP_N$ can be approximated in $W^{1,2}_{\mu, \delta}$.
To do so, we construct the FNO $\cN$ approximation to be accurate on a sufficiently large ball $B_R$, and then apply a multiplicative cutoff function such that $\cN$ and $D\cN$ are made small outside of $B_R$. In the following sections, we show that this can indeed be accomplished using the FNO architecture. 

We begin by showing that we can approximate the mapping to the Fourier coefficients, $G_N = \cF_N \circ \cG_N$, to arbitrary accuracy within $B_R$ via an FNO with bounded output values and derivatives.
\begin{lemma}[Bounded Coefficient Approximation]
    \label{lemma:coefficient_approximation_bounded}
    Let $s, s', \delta \geq 0$,
    and let $\cX := H^s(\bT^d; \bR^{d_a})$,
    $\cY := H^{s'}(\bT^d; \bR^{d_a})$,
    and $\cX_{\delta} := H^{s+\delta}(\bT^d; \bR^{d_a})$.
    Suppose $\cG \in \cont^{1}(\cX;\cY)$.
    Furthermore, for a fixed $N \in \bN$, let $\cZ_{N} := L^2_0(\bT^d; \bR^{2K_N})$ and define $G_N : \cX \rightarrow \cZ_{N}$ by
    \begin{equation}
        G_{N} := \cF_N \circ \cG_N = \cF_N \circ \cP_N \circ \cG \circ \cP_N.
    \end{equation}
    
    Then, given $\sigma \in \srla$, for any $\epsilon, R > 0$, there exists an FNO 
    $\cN \in \cont^1( \cX ; \cZ_{N})$ 
    with activation function $\sigma$ approximating $G_N$ such that simultaneously
    \begin{equation}\label{eq:coefficient_output_error}
        \|G_N(a) - \cN (a)\|_{\cZ_{N}} \leq \epsilon
    \end{equation} 
    and
    \begin{equation}\label{eq:coefficient_derivative_error}
        \|D G_N(a) - D \cN(a)\|_{\HS(\cX_{\delta}, \cZ_{N})} \leq \epsilon
    \end{equation} 
    for any $a \in \cX$ with $\|a\|_{\cX} \leq R$. Moreover, $\cN$ is bounded and has bounded derivatives, i.e., there exists an upper bound $M = M(\epsilon, R)$ such that 
    \begin{equation}
        \|\cN (a) \|_{\cZ_{N}} \leq M, \quad \|D \cN(a)\|_{\HS(\cX_{\delta}, \cZ_{N})} \leq M 
    \end{equation}
    for any $a \in \cX$.
    
    \end{lemma}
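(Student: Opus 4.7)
The plan is to build $\cN$ in two stages: first produce an FNO that approximates $G_N$ uniformly on the closed ball $\overline{B_R} = \{a \in \cX : \|a\|_\cX \leq R\}$, then append a componentwise clipping layer to enforce the global output and derivative bounds demanded by the last conclusion.

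For the first stage, I adapt the construction behind Lemma \ref{lemma:truncated_operator} to the coefficient mapping $G_N = \cF_N \circ \cG \circ \cP_N$. Factoring $G_N = \widehat{\cG}_N \circ (\cF_N \circ \cP_N)$, where $\widehat{\cG}_N : \bR^{2K_N} \to \bR^{2K_N}$ is the induced finite-dimensional map between truncated Fourier coefficients, I combine the FNO approximation of $\cF_N \circ \cP_N$ from Lemma \ref{lemma:ft_approximation} with a classical $\cont^1$ universal approximator of $\widehat{\cG}_N$ on a sufficiently large box in $\bR^{2K_N}$, and glue them via Lemma \ref{lemma:composition}. This produces an FNO $\widetilde{\cN}_G : \cX \to \cZ_N$ whose output and derivative errors on $\overline{B_R}$ are controlled by any prescribed tolerance $\eta > 0$. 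Because $\sigma \in \srla$ has globally bounded derivative (Proposition \ref{proposition:properties_of_gelu}), each layer of $\widetilde{\cN}_G$ is globally Lipschitz, so there exists $L > 0$ depending only on the architecture with $\|D\widetilde{\cN}_G(a)\|_{\HS(\cX_\delta, \cZ_N)} \leq L$ for all $a \in \cX$, and $\widetilde{\cN}_G$ itself grows at most linearly in $\|a\|_\cX$.

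For the second stage, I apply the scalar clipping function $\clip(\cdot;\theta,b)$ from Lemma \ref{lemma:clipping} coordinatewise to each of the $2K_N$ outputs of $\widetilde{\cN}_G$. Letting $C_R$ denote the (finite) uniform bound on $\|\widetilde{\cN}_G(a)\|_{\cZ_N}$ over $\overline{B_R}$ obtained from linear growth, I choose parameters $\theta, b$ so that $\|\clip - \id\|_{\cont^1([-C_R, C_R])}$ is arbitrarily small, while $|\clip|$ and $|\clip'|$ remain uniformly bounded on $\bR$ by some $M_\clip$. Setting $\cN := \clip \circ \widetilde{\cN}_G$, which remains an FNO with activation $\sigma$, the clip acts as a near-identity on the range of $\widetilde{\cN}_G$ over $\overline{B_R}$. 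A triangle-inequality decomposition of the output error and the chain-rule identity $D\cN(a) = \clip'(\widetilde{\cN}_G(a)) \cdot D\widetilde{\cN}_G(a)$, together with the uniform bound on $\|DG_N(a)\|_{\HS}$ over $\overline{B_R}$ (which holds because $G_N$ depends on $a$ only through the finite-dimensional projection $\cP_N a$), deliver $\|G_N(a) - \cN(a)\|_{\cZ_N} \leq \epsilon$ and $\|DG_N(a) - D\cN(a)\|_{\HS(\cX_\delta,\cZ_N)} \leq \epsilon$ on $\overline{B_R}$. Globally, $\|\cN(a)\|_{\cZ_N} \leq \sqrt{2K_N}\, M_\clip$ and $\|D\cN(a)\|_{\HS(\cX_\delta,\cZ_N)} \leq M_\clip \cdot L$, yielding the required $M = M(\epsilon, R)$.

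The main obstacle is the circular coupling between the constants: the clipping parameters must be tuned to the interval $[-C_R, C_R]$, but $C_R$ and $L$ are set by the architecture of $\widetilde{\cN}_G$, which in turn depends on the first-stage tolerance $\eta$. The resolution is to perform the construction sequentially: first fix $\widetilde{\cN}_G$ with an $\eta$ to be refined, read off the finite constants $L$, $C_R$, and $\sup_{a \in \overline{B_R}} \|DG_N(a)\|_{\HS}$, then select the clipping parameters in terms of these constants and $\epsilon$, and finally shrink $\eta$ if necessary so that the two error contributions on $\overline{B_R}$ sum to at most $\epsilon$ for both the output and the derivative.
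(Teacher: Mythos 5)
Your proposal is correct and follows essentially the same route as the paper: factor $G_N = \widehat{\cG}_N \circ (\cF_N \circ \cP_N)$, approximate via Lemma \ref{lemma:ft_approximation}, a finite-dimensional $\cont^1$ universal approximator, and Lemma \ref{lemma:composition}, then compose with the componentwise clipping function of Lemma \ref{lemma:clipping} tuned after the first-stage network is fixed. The only cosmetic difference is that you establish the global bound on $D\widetilde{\cN}_G$ (via bounded $\sigma'$ and the finite-rank initial projection) before clipping and propagate it through $\clip'$, whereas the paper only bounds the pre-clip network on the compact set $\cP_N(B_R)$ and derives the global derivative bound of the clipped FNO from its layer structure at the end — both arguments are the same in substance, and your sequential choice of constants removes any circularity just as the paper's ordering does.
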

    \begin{proof}
    This follows from the procedure used to prove the universal approximation result over compact sets, \Cref{theorem:main_ua}.
    Given the radius, $R$, we consider the ball $B_R := B_R(0) = \{\|a\|_{\cX} \leq R\}$
    whose Fourier projection $\cP_N(B_R)$ is compact in $\cX$.
    We note that $G_N = \widehat{\cG}_N \circ \cF_N \circ \cP_N$, where $\widehat{\cG}_N = \cF_N \circ \cG \circ \real \cF_{N}^{-1}$.
    Recall that by \Cref{lemma:ft_approximation}, we can find an FNO $\cN_1$ approximating $\cF_{N} \circ \cP_N$, and by the finite-dimensional universal approximation theorem in Sobolev norms, we can find $\cN_2$ approximating $\widehat{\cG}_N$. These can be combined using \Cref{lemma:composition} to obtain $\widetilde{\cN} = \cN_2 \circ \cN_1$ such that 
    \[
        \| G_{N}(a) - \widetilde{\cN}(a) \|_{\cZ_{N}} \leq \epsilon / 2, \quad \| D G_{N}(a) -  D \widetilde{\cN}(a) \|_{\HS(\cX_{\delta}, \cZ_{N})} \leq \epsilon / 2
    \]
    for any $a \in \cP_N(B_R)$.
    
    This approximant $\widetilde{\cN}$ does not necessarily have uniformly bounded outputs over all of $\cX$. 
    To address this, we begin by recognizing that since $\cP_N(B_R)$ is compact, there exists $M_0 > 0$ such that 
    \[
        \sup_{a \in \cP_N(B_R)}  \|\widetilde{\cN}(a)\|_{L^\infty} \leq M_0,
    \]
    and $M_1 > 0$ such that 
    \[
        \sup_{a \in \cP_N(B_R)}  \|D \widetilde{\cN}(a)\|_{\HS(\cX_{\delta}, \cZ_{N})}
        \leq M_1.
    \]
    Thus, we choose a clipping function $\clip$ from \Cref{lemma:clipping} such that 
    \[
        \| \clip - \id\|_{\cont^1([-M_0, M_0])} \leq \frac{\epsilon}{2 \max \{ 1, \const_{2,\infty}, M_1 \}},
    \]
    where $\const_{2,\infty} = \const_{2,\infty}(N, d)$ is the norm equivalence constant satisfying
    $\|v\|_{L^2} \leq \const_{2,\infty} \|v\|_{L^{\infty}}$ for $v \in L^{\infty}(\bT^d; \bR^{2K_N})$.
    We can then take $\cN = \clip \circ \widetilde{\cN}$, where $\clip$ is applied componentwise to the outputs of $\widetilde{\cN}$, i.e.,
    \[
        [ \cN(a) ]_{i} = \clip( [ \widetilde{\cN}(a)]_i ), \quad i = 1, \dots, 2K_N,
    \]
    whose derivative takes the form 
    \[
        D [\cN (a)]_i = \clip'(\widetilde{\cN}_i(a)) D[\widetilde{\cN}(a)]_i, \qquad i = 1, \dots, 2K_N.
    \]
    We will write this as 
    \[
        D \cN(a) = \clip'( \widetilde{\cN}(a) ) D \widetilde{\cN}(a),
    \]
    where the operator $\clip'( \widetilde{\cN}(a)) \in \cL(\cZ_{N}, \cZ_{N})$ can be associated with the diagonal $2K_N \times 2K_N$ matrix whose diagonal values are
    \[
        [\clip'(\widetilde{\cN}(a)]_{ii} = \clip'([\widetilde{\cN}(a)]_i), \qquad i = 1, \dots, 2K_N.
    \]

    Under this construction, the output error using the clipped FNO $\cN$ satisfies
    \begin{align*}
        \|G_N(a) - \cN(a)\|_{L^2}
        \leq 
        \| G_N(a) - \widetilde{\cN}(a)\|_{L^2}
        + 
        \| \widetilde{\cN}(a) - \clip(\widetilde{\cN}(a))\|_{L^2},
    \end{align*}
    where for $a \in \cP_N(B_R)$, we have $\| G_N(a) - \widetilde{\cN}(a)\|_{L^2} \leq \epsilon/2$ and 
    \[
         \| \widetilde{\cN}(a) - \clip(\widetilde{\cN}(a))\|_{L^2} 
         \leq
         \const_{2,\infty} \| \widetilde{\cN}(a) - \clip(\widetilde{\cN}(a))\|_{L^{\infty}} 
         \leq \epsilon/2.
    \]
    Thus, $\|G_N(a) - \cN(a)\|_{L^2} \leq \epsilon$
    whenever $a \in \cP_N(B_R)$.
    Similarly, for the derivatives, we have 
    \[
        \|D G_N(a) - D \cN(a) \|_{\HS(\cX_{\delta}, \cZ_{N})} 
        \leq 
        \|D G_N(a) - D \widetilde{\cN}(a) \|_{\HS(\cX_{\delta}, \cZ_{N})} 
        + 
        \|\clip'(\widetilde{\cN}(a)) D \widetilde{\cN}(a) - D \widetilde{\cN}(a) \|_{\HS(\cX_{\delta}, \cZ_{N})}.
    \]
    Again, for $a \in \cP_N(B_R)$, we have 
    $ \|D G_N(a) - D \widetilde{\cN}(a) \|_{\HS(\cX_{\delta}, \cZ_{N})} \leq \epsilon/2 $,
    while 
    \[
        \|\clip'(\widetilde{\cN}(a)) D \widetilde{\cN}(a) - D \widetilde{\cN}(a) \|_{\HS(\cX_{\delta}, \cZ_{N})}
        \leq \| \clip'(\widetilde{\cN}(a)) - I_{2K_N} \|_{\Op(\cZ_{N}, \cZ_{N})}
        \|D \widetilde{\cN}(a) \|_{\HS(\cX_{\delta}, \cZ_{N})} 
        \leq \epsilon/2,
    \]
    where $I_{2K_N}$ denotes the identity matrix on $\bR^{2K_N}$.
    The second inequality follows from the fact that $|\clip'([\cN(a)]_{i}) - 1| \leq \epsilon/(2M_1)$ 
    and $|D \widetilde{\cN}(a) \|_{\HS(\cX_{\delta}, \cZ_{N})} \leq M_1$ for all $a \in \cP_N(B_R)$.
    Thus, both \eqref{eq:coefficient_output_error} and \eqref{eq:coefficient_derivative_error} are satisfied for $a \in \cP_N(B_R)$.
    Moreover, since both $G_N$ and $\cN$ consists first of the projection operator $\cP_N$, i.e.,
    \[
    G_N (a) = G_N( \cP_N a), \qquad \cN(a) = \cN(\cP_N a),
    \]
    the error bounds \eqref{eq:coefficient_output_error} and \eqref{eq:coefficient_derivative_error} additionally hold for any $a \in B_R$.
   
    Finally, to see boundedness of $\cN$, we recall that by construction, $\clip$ has bounded outputs and hence there exists a bound $M_{\mathrm{clip},0}$ such that $\|\cN(a)\|_{\cZ_{N}} \leq M_{\mathrm{clip},0} $.
    To bound the derivative of $\cN$, we note that $\cN$ is an FNO of the form 
    \[
        \cN(a) = \cN \circ \cP_N (a) = \cQ \circ \cL_{L} \circ \cdots \circ \cL_1 \circ \cR \circ \cP_N(a),
    \]
    where each layer can be viewed as an operator $\cL_{\ell} : L^{\infty}(\bT^d;\bR^{d_{\ell-1}}) \rightarrow L^{\infty}(\bT^d;\bR^{d_{\ell}})$ taking the form
    \[
        \cL_{\ell}(v) = \sigma( W_{\ell} v + b_{\ell} + \cK_{\ell} v ).
    \]
    The operator norms of the layers are uniformly bounded for all $a \in \cX$,
    \[
        \| D \cL_{\ell}(v) \|_{\Op(L^{\infty}, L^{\infty})} \leq \|\sigma'(W_{\ell}v + b_{\ell} + \cK_{\ell} v) \|_{L^\infty}
        (\| W_{\ell} \|_{\Op(L^{\infty}, L^{\infty})} + \| \cK_{\ell} \|_{\Op(L^{\infty}, L^{\infty})} )
        < \infty 
    \]
    since $|\sigma'(x)| < \infty$ for all $x \in \bR$.
    Moreover, 
    $\cP_N$, $\cR$, and $\cQ$ 
    can be viewed as operators from the spaces 
    $\cP_N \in \HS(\cX_{\delta}, L^2_N(\bT^d; \bR^{d_a}))$,
    $\cR \in \Op(L^2_N(\bT^d; \bR^{d_a}), L^{\infty}(\bT^d; \bR^{d_0}))$, 
    and  
    $\cQ \in \Op(L^{\infty}(\bT^d; \bR^{d_L}), \cZ_{N})$, 
    which all have bounded norms. 
    Thus, there exists $M_1 >0 $ such that $\|D \cN(a)\|_{\HS(\cX_{\delta}; \cZ_{N})} < M_1$ for all $a \in \cX$.
\end{proof}

Using the bounded approximation of $G_N$ along with the cutoff functional in \Cref{corollary:cutoff_hs},
we can construct a family of operators that approximate $G_N$ in such a way that their growth is uniformly dominated by $G_N$ and its derivative.

\begin{lemma}[Coefficient approximation with Bounded Growth]
    \label{lemma:coefficient_approximation_bounded_growth}
    Consider the setting of \Cref{lemma:coefficient_approximation_bounded}. Then, there is a constant $\const = \const(\sigma, N, d, d_a)$ such that 
    for any $\epsilon > 0$ and $R > 1$, there exists an FNO 
    $\cN \in \cont^1(\cX ; \cZ_{N})$ 
    with activation function $\sigma$ such that 
    \begin{align}
        \|G_N(a) - \cN(a)\|_{\cZ_{N}} &\leq \epsilon, \\
        \|D G_N(a) - D\cN(a)\|_{\HS(\cX_{\delta}, \cZ_{N})} &\leq \epsilon,
    \end{align}
    whenever $\|a\|_{\cX} \leq R$, and 
    \begin{align}
        \label{eq:coefficient_function_growth}
        \| \cN(a) \|_{\cZ_{N}} &\leq \const ( \|G_N(a)\|_{\cZ_{N}} + 1), \\
        \label{eq:coefficient_derivative_growth}
        \| D \cN(a) \|_{\HS(\cX_{\delta}, \cZ_{N})} &\leq \const ( \|G_N(a)\|_{\cZ_{N}} + \|D G_N(a) \|_{\HS(\cX_{\delta}, \cZ_{N})} + 1) ,
    \end{align}
    for all $a \in \cX$.
    Moreover, there exists a bound $M = M(\epsilon)$ such that 
    \begin{equation}
        \label{eq:coefficient_bound_with_cutoff}
        \| \cN(a)\|_{\cZ_{N}} \leq M, \qquad \|D \cN(a) \|_{\HS(\cX_{\delta}, \cZ_{N})} \leq M, 
    \end{equation}
    again for all $a \in \cX$.
\end{lemma}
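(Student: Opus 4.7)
The plan is to dampen the bounded approximant from \Cref{lemma:coefficient_approximation_bounded} by multiplying with the FNO cutoff from \Cref{corollary:cutoff_hs}, then realize the resulting product as a single FNO via an explicit multiplication block. Concretely, let $\const_{0} = \const_{0}(\sigma, N, d, d_a)$ denote the enlargement constant appearing in \Cref{corollary:cutoff_hs}. First I would invoke \Cref{lemma:coefficient_approximation_bounded} with radius $\const_{0} R$ and accuracy $\epsilon' \leq \min\{\epsilon, 1\}$ (to be refined) to obtain a bounded FNO $\widetilde{\cN} \in \cont^{1}(\cX;\cZ_N)$ satisfying $\|G_N(a) - \widetilde{\cN}(a)\|_{\cZ_N} \leq \epsilon'$ and $\|DG_N(a) - D\widetilde{\cN}(a)\|_{\HS(\cX_\delta, \cZ_N)} \leq \epsilon'$ for $\|a\|_{\cX} \leq \const_{0} R$, together with a global bound $M_{\widetilde{\cN}} = M_{\widetilde{\cN}}(\epsilon', R)$. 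Next I would apply \Cref{corollary:cutoff_hs} to construct a cutoff FNO $\funcut \in \cont^{1}(\cX;\bR)$ at radius $R$ with accuracy $\epsilon''$, so that $|\funcut(a) - 1| \leq \epsilon''$ on $B_R = \{\|a\|_{\cX} \leq R\}$, $|\funcut(a)| \leq \epsilon''$ on $\cX \setminus B_{\const_{0}R}$, and with a $\sigma$-only-dependent global bound $M_{\tcutoff}(\sigma)$. The pivotal choice is to take $\epsilon''$ small enough that $\epsilon'' (1 + M_{\widetilde{\cN}}) \leq 1$; this absorbs the $R$-dependent growth of $\widetilde{\cN}$ into the tail behavior of the cutoff.

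With these two components in hand, I would then construct $\cN$ as a finite-depth FNO approximating the componentwise product $a \mapsto \funcut(a) \cdot \widetilde{\cN}(a)$. Concretely, concatenate $\funcut$ and $\widetilde{\cN}$ into a single FNO producing a constant $\bR^{2K_N + 1}$-valued function whose values remain bounded by a known constant $M^{\star} = M^{\star}(\epsilon', \epsilon'', R)$. Then approximate the multiplication map $g_\times: (x,y) \mapsto xy$ on the rectangle $[-M^{\star}, M^{\star}]^{2}$ to arbitrary $\cont^{1}$ accuracy $\epsilon'''$ using a smooth-activation neural network, exactly as in the multiplication step inside the proof of \Cref{lemma:ft_approximation}; applying this pointwise yields a local FNO block that, composed with the concatenation, gives $\cN$. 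Since the intermediate space $\cZ_N$ is finite dimensional, \Cref{lemma:composition} guarantees that the composition error in both output and derivative can be driven below any preassigned level by choosing $\epsilon'''$ small.

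The analysis of the bounds then splits $\cX$ into three regions. On $B_R$, the product rule gives $\cN(a) \approx \funcut(a) \widetilde{\cN}(a) \approx G_N(a)$ and $D\cN(a) \approx D\funcut(a)\, \widetilde{\cN}(a) + \funcut(a)\, D\widetilde{\cN}(a) \approx D G_N(a)$, with the errors controlled by $\epsilon'$, $\epsilon''$, $\epsilon'''$ and the $R$-dependent pointwise bounds of $G_N$ and $DG_N$ on $B_R$; by choosing the internal tolerances small enough these errors are driven below $\epsilon$. On the annulus $B_{\const_{0} R} \setminus B_R$, one uses $\|\widetilde{\cN}(a)\|_{\cZ_N} \leq \|G_N(a)\|_{\cZ_N} + 1$ and $\|D\widetilde{\cN}(a)\|_{\HS} \leq \|DG_N(a)\|_{\HS} + 1$ (from the approximation on $B_{\const_{0}R}$ with $\epsilon' \leq 1$), combined with $|\funcut| \leq M_{\tcutoff}(\sigma)$ and $\|D\funcut\|_{\HS(\cX_\delta, \bR)} \leq M_{\tcutoff}(\sigma)$, and the product rule yields both \eqref{eq:coefficient_function_growth} and \eqref{eq:coefficient_derivative_growth} with a constant $\const$ depending only on $\sigma, N, d, d_a$. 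Outside $B_{\const_{0} R}$, the calibration $\epsilon'' M_{\widetilde{\cN}} \leq 1$ yields $\|\cN(a)\|_{\cZ_N}, \|D\cN(a)\|_{\HS} \leq 2$, subsumed into the same growth bound; taking the maximum over the three regions produces the global bound \eqref{eq:coefficient_bound_with_cutoff}.

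The main obstacle I anticipate is the \emph{tolerance interleaving}: because $\widetilde{\cN}$'s global bound $M_{\widetilde{\cN}}$ depends on both $\epsilon'$ and $R$, the cutoff tolerance $\epsilon''$ must be chosen only after $\widetilde{\cN}$ is fixed, and then the multiplication tolerance $\epsilon'''$ must be chosen after both are fixed; one must verify that this cascaded choice leaves the final growth constant $\const$ depending only on $\sigma, N, d, d_a$. The resolution relies on two features of the ingredients: the enlargement factor $\const_0$ and the global cutoff bound $M_{\tcutoff}(\sigma)$ in \Cref{corollary:cutoff_hs} are parameter-free, and the approximation errors $\epsilon', \epsilon'', \epsilon'''$ enter only additively (or multiplicatively against $\widetilde{\cN}$'s bound, which the cutoff calibration neutralizes), so none of them propagate into $\const$.
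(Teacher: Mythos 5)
Your overall strategy is the same as the paper's: approximate $G_N$ by the bounded FNO of \Cref{lemma:coefficient_approximation_bounded} on the enlarged ball, damp it with the cutoff of \Cref{corollary:cutoff_hs}, calibrate the cutoff tolerance against the $R$-dependent bound $M_{\widetilde{\cN}}$, and then realize the product by a $\cont^1$-accurate neural multiplication block applied componentwise (the appeal to \Cref{lemma:composition} is unnecessary here — what you need is uniformity over \emph{all} $a \in \cX$, which you get directly because $(\funcut(a),\widetilde{\cN}(a))$ ranges over a fixed bounded rectangle; the paper argues exactly this way). The tolerance interleaving you flag is handled correctly and matches the paper's order of choices.

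There is, however, a genuine gap in your region analysis. You split $\cX$ by $\cX$-norm balls and claim that $|\funcut(a)| \leq \epsilon''$ on $\cX \setminus B_{\const_0 R}$. But \Cref{corollary:cutoff_hs} only guarantees smallness of the cutoff when $\|\cP_N a\|_{L^2} \geq \const_0 R$, and $\|a\|_{\cX} > \const_0 R$ does not imply this: an input supported on Fourier modes $|k|_\infty > N$ can have arbitrarily large $H^s$ norm while $\|\cP_N a\|_{L^2}$ is small, and there the cutoff is close to $1$, not $0$. In that region your calibration $\epsilon'' M_{\widetilde{\cN}} \leq 1$ buys nothing, and the only bound you have left is $M_{\tcutoff}\cdot M_{\widetilde{\cN}}$, which depends on $R$ and $\epsilon'$ — so the growth bounds \eqref{eq:coefficient_function_growth}--\eqref{eq:coefficient_derivative_growth} with $\const = \const(\sigma,N,d,d_a)$ do not follow from your three regions as written. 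The fix (and the reason the paper's case split is phrased in terms of $\|\cP_N a\|_{L^2}$ rather than $\|a\|_{\cX}$) is to observe that $G_N$, $\widetilde{\cN}$, and the cutoff all factor through $\cP_N$, so the accuracy of $\widetilde{\cN}$ established on the enlarged ball extends to every $a$ with $\|\cP_N a\|_{L^2} \lesssim \const_0 R$; in that "cutoff not small" region one then has $\|\widetilde{\cN}(a)\|_{\cZ_N} \leq \|G_N(a)\|_{\cZ_N} + 1$ and the analogous derivative bound, which is what delivers the $R$-independent constant. You need to add this step; without it the outer region is uncontrolled.
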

\begin{proof}
    We consider the radius $R_1 = \const_{\cutoff} R$, where $\const_{\cutoff} = \const_{\cutoff}(\sigma, N,d,d_a)$ is the multiplier for the upper radius for a cutoff function from \Cref{corollary:cutoff_hs}.
    By \Cref{lemma:coefficient_approximation_bounded}, we first approximate $G_N$ over the ball of radius $R_1$ such that
    \[
        \|G_N(a) - \cN_{G}(a)\|_{\cZ_{N}} \leq \epsilon_G, \qquad 
        \|D G_N(a) - D \cN_{G}(a)\|_{\HS(\cX_{\delta}, \cZ_{N})} \leq \epsilon_G, 
    \]
    for any $\| a \|_{\cX} \leq R_1$,
    where $\epsilon_{G}$ can be made arbitrarily small,
    and simultaneously, 
    \[
        \|\cN_{G}(a)\|_{\cZ_{N}} \leq M_{G}, \qquad 
        \|D \cN_{G}(a)\|_{\HS(\cX_{\delta}, \cZ_{N})} \leq M_{G}, 
    \]
    for all $a \in \cX$, 
    where $M_{G} = M_{G}(\epsilon_G)$.

    We then consider the cutoff function from \Cref{corollary:cutoff_hs}, $\cN_{\cutoff} : \cX \rightarrow \bR$ with inner radius $R$ and outer radius $R_1 = \const_{\cutoff} R$, such that for any $\epsilon_{\cutoff} > 0$, there exists $\cN_{\cutoff}$ satisfying
    \[
        \begin{cases}
            |\cN_{\cutoff}(a) - 1| \leq \epsilon_{\cutoff}, \quad 
            \| D \cN_{\cutoff} (v) \|_{\HS(\cX_{\delta}, \bR)}
            \leq \epsilon_{\cutoff}
                & \|\cP_N v\|_{L^2} \leq R, \\ 
            |\cN_{\cutoff}(a)| \leq \epsilon_{\cutoff}, \quad 
            \| D \cN_{\cutoff} (v) \|_{\HS(\cX_{\delta}, \bR)}
            \leq \epsilon_{\cutoff}
                & \|\cP_N v\|_{L^2} \geq R_1, \\ 
            |\cN_{\cutoff}(a)| \leq M_{\cutoff}, \quad 
            \| D \cN_{\cutoff} (v) \|_{\HS(\cX_{\delta}, \bR)}
            \leq M_{\cutoff}
            & \forall v \in \cX,
        \end{cases}
    \]
    where $M_{\cutoff} = M_{\cutoff}(\sigma, N, d)$
    is a bound independent of $\epsilon_{\cutoff}$.
    
    We can control the growth of the function value and derivatives of $\cN_G(a)$ by multiplication with the cutoff function, i.e., by taking $\cT(a) := \cN_{\cutoff}(a) \cdot \cN_G(a)$. 
    To see this, we consider the following cases.
    \paragraph{Case 1: $\|\cP_N a\|_{L^2} \leq R$}
    For the errors in function values, we have the bound
    \begin{align*}
        \|G_N(a) - \cT(a) \|_{\cZ_{N}}  
        & = \|G_N(a) - \cN_{\cutoff}(a) \cN_{G}(a) \|_{\cZ_{N}}  \\ 
        & \leq 
        \|G_N(a) - \cN_{G}(a) \|_{\cZ_{N}} 
        + \|(1 - \cN_{\cutoff}(a)) \cN_{G}(a) \|_{\cZ_{N}}  \\
        & \leq \epsilon_G + \epsilon_{\cutoff} M_{G}.
    \end{align*}
    For the derivatives, we have 
    \[
     D \cT(a) 
     = D(\cN_{\cutoff}(a) \cN_{G}(a)) 
     = 
     \cN_{\cutoff}(a) D \cN_{G}(a) + 
      \cN_{G}(a)D\cN_{\cutoff}(a), 
    \]
    where we are using the notation
    \[ 
      \left( \cN_{\cutoff}(a) D \cN_{G}(a)  \right) w =
      \cN_{\cutoff}(a) \cdot \left( D \cN_{G}(a)) w\right)
    \]
    and
    \[ 
      \left( \cN_{G}(a) D\cN_{\cutoff}(a) \right) w =
      \cN_{G}(a) \cdot \left( D\cN_{\cutoff}(a) w \right), 
    \]
    for all $w \in \cX_{\delta}$.
    Thus, we can bound the error in the derivatives as follows
    \begin{align*}
        \|D G_{N} - D(\cN_{\cutoff}(a) \cN_{G}(a))\|_{\HS(\cX_{\delta}, \cZ_{N})}
        & \leq
        \|D G_{N} -  D\cN_{G}(a)\|_{\HS(\cX_{\delta}, \cZ_{N})} \\
        & \qquad 
        + \|(1 - \cN_{\cutoff}(a)) D\cN_{G}(a)\|_{\HS(\cX_{\delta}, \cZ_{N})} \\
        & \qquad 
        + \|\cN_{G}(a) D \cN_{\cutoff}(a)\|_{\HS(\cX_{\delta}, \cZ_{N})}
    \end{align*}
    Here it can be checked that     
    \[
        \|(1 - \cN_{\cutoff}(a)) D\cN_{G}(a)\|_{\HS(\cX_{\delta}, \cZ_{N})} 
        \leq 
        | 1 - \cN_{\cutoff}(a)| \|D \cN_{G}(a) \|_{\HS(\cX_{\delta}, \cZ_{N})}
    \]
    and 
    \[
       \|\cN_{G}(a) D \cN_{\cutoff}(a)\|_{\HS(\cX_{\delta}, \cZ_{N})}
       \leq 
       \| \cN_{G}(a)\|_{\cZ_{N}} \|D \cN_{\cutoff}(a)\|_{\HS(\cX_{\delta}, \bR)}.
    \]
    We therefore have 
    \[
    \|D G_{N} - D\cT(a)\|_{\HS(\cX_{\delta}, \cZ_{N})}
    \leq \epsilon_G + \epsilon_{\cutoff} M_{G} + \epsilon_{\cutoff} M_{G} 
    = \epsilon_G + 2 \epsilon_{\cutoff} M_{G} .
    \]

    \paragraph{Case 2: $\|\cP_N a\|_{L^2} \leq R_1$}
    Here we consider upper bounds on $\cT(a)$ and $D \cT(a)$.
    In particular,
    \[
        \|\cT(a)\|_{\cZ_{N}} 
        = \| \cN_{\cutoff}(a) \cN_{G}(a)\|_{\cZ_{N}}
        \leq 
        | \cN_{\cutoff}(a)| \| \cN_{G}(a)\|_{\cZ_{N}}.
    \]
    Since we have constructed $\cN_{G}$ to be accurate over the set $\cP_N(B_{R_1})$, we have 
    \[ \| \cN_{G}(a)\|_{\cZ_{N}} \leq 
    \|G_N(a) - \cN_{G}(a)\|_{\cZ_{N}}
    + \|G_N(a)\|_{\cZ_{N}}
    \leq 
    \epsilon_G
    + \|G_N(a)\|_{\cZ_{N}},
    \]
    which, along with the bound $|\cN_{\cutoff}(a)| \leq M_{\cutoff}$, yields
    \[
        \| \cT(a)\|_{\cZ_{N}}
        \leq 
        M_{\cutoff}(\|G_N(a)\|_{\cZ_{N}} + \epsilon_G).
    \]
    Similarly, 
    \begin{align*}
    \|D \cT(a) \|_{\HS(\cX_{\delta}, \cZ_{N})} 
    & =
    \| D(\cN_{\cutoff}(a) \cN_{G}(a))\|_{\HS(\cX_{\delta}, \cZ_{N})}  \\
    & \leq 
    \| \cN_{\cutoff}(a) D \cN_{G}(a)\|_{\HS(\cX_{\delta}, \cZ_{N})} 
    +
    \| \cN_{G}(a) D \cN_{\cutoff}(a) \|_{\HS(\cX_{\delta}, \cZ_{N})} 
    \\
    & \leq | \cN_{\cutoff}(a)| \| D \cN_{G}(a)\|_{\HS(\cX_{\delta}, \cZ_{N})} 
    +
    \| \cN_{G}(a) \|_{\cZ_{N}} \| D \cN_{\cutoff}(a) \|_{\HS(\cX_{\delta}, \bR)} \\
    & \leq 
    M_{\cutoff}(\|D G_{N}(a)\|_{\HS(\cX_{\delta}, \cZ_{N})} + \epsilon_G ) 
    + 
    M_{\cutoff}(\|G_{N}(a)\|_{\cZ_{N}} + \epsilon_G) \\
    & = 
    M_{\cutoff} \|G_{N}(a)\|_{\cZ_{N}}
    + M_{\cutoff} \|D G_{N}(a)\|_{\HS(\cX_{\delta}, \cZ_{N})}
    + 2 \epsilon_G M_{\cutoff}.
    \end{align*}
    
    \paragraph{Case 3: $\|\cP_N a \|_{L^2} \geq R_1$} 
    We can again derive upper bounds for $\cT(a)$ and $D \cT(a)$. 
    Making use of the value of the cutoff function, we have 
    \[
        \|\cT(a)\|_{\cZ_{N}} = 
        \| \cN_{\cutoff}(a) \cN_{G}(a)\|_{\cZ_{N}} \leq 
        \epsilon_{\cutoff} M_{G},
    \]
    and 
    \begin{align*}
        \|D \cT(a) \|_{\HS(\cX_{\delta}, \cZ_{N})} 
        & =
        \| D(\cN_{\cutoff}(a) \cN_{G}(a))\|_{\HS(\cX_{\delta}, \cZ_{N})}  \\
        & \leq 
        \| \cN_{\cutoff}(a) D \cN_{G}(a)\|_{\HS(\cX_{\delta}, \cZ_{N})} 
        +
        \| \cN_{G}(a) D \cN_{\cutoff}(a) \|_{\HS(\cX_{\delta}, \cZ_{N})}  \\
        & \leq 2 \epsilon_{\cutoff} M_{G}.
    \end{align*}
    Combining the three cases, we have 
    \begin{enumerate}[label=(\roman*)]
        \item For $\|\cP_N a\|_{L^2} \leq R$,
        \begin{align*}
            \|G_{N}(a) - \cT (a)\|_{\cZ_{N}} &\leq \epsilon_G + \epsilon_{\cutoff} M_{G} \\ 
            \|D G_{N}(a) - D \cT (a)\|_{\HS(\cX_{\delta},\cZ_{N})} &\leq \epsilon_{G} + 2 \epsilon_{\cutoff} M_{G} 
        \end{align*}
        \item For any $a \in \cX$
        \begin{align*}
            \|\cT (a)\|_{\cZ_{N}} &\leq M_{\cutoff} \left( \|G_N\|_{\cZ_{N}} + \epsilon_G + \frac{\epsilon_{\cutoff} M_G}{M_{\cutoff}} \right) \\
            \|D \cT (a)\|_{\HS(\cX_{\delta},\cZ_{N})} &\leq 
                M_{\cutoff} \left( \|G_{N}(a)\|_{\cZ_{N}} 
                + \|D G_{N}(a)\|_{\HS(\cX_{\delta}, \cZ_{N})}
                + 2 \epsilon_G 
                + \frac{2 \epsilon_{\cutoff} M_{G}}{M_{\cutoff}}
                \right) .
        \end{align*}
    \end{enumerate}
    Note that $M_{\cutoff}$ depends only on $\sigma, N, d, d_a$ and not the specific instance of $\epsilon_{\cutoff}$, 
    while $M_{G}$ does depend on $\epsilon_G$.
    Thus, we first choose $\epsilon_G$ such that 
    \[ \epsilon_G \leq \min \left\{ \frac{\epsilon}{4}, \frac{1}{8} \right\} \]
    and then choose $\epsilon_{\cutoff}$ such that 
    \[
        \epsilon_{\cutoff} \leq \min \left\{ \frac{\epsilon}{8 M_G},
        \frac{M_{\cutoff}}{8M_G} \right\}
    \]
    With this construction, we have 
    \begin{enumerate}[label=(\roman*)]
        \item For $\|\cP_N a\|_{L^2} \leq R$,
        \begin{align*}
            \|G_{N}(a) - \cT (a)\|_{\cZ_{N}} &\leq \epsilon/2, \\ 
            \|D G_{N}(a) - D \cT (a)\|_{\HS(\cX_{\delta},\cZ_{N})} &\leq \epsilon/2.
        \end{align*}
        \item For any $a \in \cX$,
        \begin{align*}
            \|\cT (a)\|_{\cZ_{N}} &\leq 
            M_{\cutoff} ( \|G_N\|_{\cZ_{N}} + 1/2 ), \\ 
            \|D \cT (a)\|_{\HS(\cX_{\delta},\cZ_{N})} &\leq 
               M_{\cutoff} \left( \|G_N\|_{\cZ_{N}} + \|D G_N \|_{\HS(\cX_{\delta}, \cZ_{N})} + 1/2 \right). 
        \end{align*}
    \end{enumerate}
    Furthermore, since $\|\cP_N a\|_{L^2} \leq \|a\|_{\cX}$, the error bounds also hold for $\|a\|_{\cX} \leq R$.
    
    Although $\cN_{\cutoff}$ and $\cN_{G}(a)$ are both FNOs mapping to the space of constant functions, 
    their product $\cT(a)$ is not yet an FNO. 
    Instead, we consider a neural network $\tilde{g}_{\times} : \bR^2 \rightarrow \bR$
    that approximates the multiplication function 
    ${g}_{\times} : (x_1, x_2) \mapsto x_1 \cdot x_2$
    over the range of $\cN_{\cutoff}$ and $\cN_{G}$.
    That is, 
    \[
        \| g_{\times} - \tilde{g}_{\times} \|_{\cont^1([-M_{\cutoff}, M_{\cutoff}] \times [-M_G, M_G])} \leq \epsilon_{\times},
    \]
    where $\epsilon_{\times} > 0$ can be made arbitrarily small.
    We then take $\cN(a) := \tilde{g}_{\times}(\cN_{\cutoff}(a), \cN_{G}(a))$,
    where the operation is understood to be componentwise for every $[\cN_G(a)]_i$, $i = 1, \dots, 2K_N$.
    
    Therefore, $\cN(a)$ approximates $\cT(a)$ with output errors 
    \[
        \|  \cN(a) - \cT(a) \|_{\cZ_{N}} \leq
        \const_{2, \infty} \epsilon_{\times},
    \]
    where $\const_{2, \infty}$ is again the norm equivalence constant such that  
    $\|v\|_{L^2} \leq \const_{2,\infty} \|v\|_{L^{\infty}}$ for $v \in \cZ_{N} = L^2_0(\bT^d;\bR^{2K_N})$.
    For the derivatives, we have 
    \begin{align*}
       &\| D \cN(a) - D \cT(a) \|_{\HS(\cX_{\delta}, \cZ_{N})}  \\
       & \quad \leq 
       \| 
           \left( \partial_1 \tilde{g}_{\times}(\cN_{\cutoff}(a), \cN_{G}(a))
            - \partial_1 {g}_{\times}(\cN_{\cutoff}(a), \cN_{G}(a))
           \right) 
           D \cN_{\cutoff}(a) 
        \|_{\HS(\cX_{\delta}, \cZ_{N})}  \\
        & \qquad 
        + 
       \| 
           \left( \partial_2 \tilde{g}_{\times}(\cN_{\cutoff}(a), \cN_{G}(a))
            - \partial_2 {g}_{\times}(\cN_{\cutoff}(a), \cN_{G}(a))
           \right) 
           D \cN_{G}(a) 
        \|_{\HS(\cX_{\delta}, \cZ_{N})}.
    \end{align*}
    Here, the first term is simply
    \begin{align*}
        & \| 
           \left( \partial_1 \tilde{g}_{\times}(\cN_{\cutoff}(a), \cN_{G}(a))
            - \partial_1 {g}_{\times}(\cN_{\cutoff}(a), \cN_{G}(a))
           \right) 
           D \cN_{\cutoff}(a) 
        \|_{\HS(\cX_{\delta}, \cZ_{N})} \\
        & \qquad \leq 
        \| \partial_1 \tilde{g}_{\times}(\cN_{\cutoff}(a), \cN_{G}(a))
            - \partial_1 {g}_{\times}(\cN_{\cutoff}(a), \cN_{G}(a))
        \|_{\cZ_{N}}
        \| D \cN_{\cutoff}(a)\|_{\HS(\cX_{\delta}, \bR)} \\
        & \qquad \leq 
        \const_{2, \infty} \epsilon_\times M_{\cutoff},
    \end{align*}
    The second term yields
    \begin{align*}
        & \| 
           \left( \partial_2 \tilde{g}_{\times}(\cN_{\cutoff}(a), \cN_{G}(a))
            - \partial_2 {g}_{\times}(\cN_{\cutoff}(a), \cN_{G}(a))
           \right) 
           D \cN_{G}(a) 
        \|_{\HS(\cX_{\delta}, \cZ_{N})} \\
        & \qquad \leq 
        \| \partial_2 \tilde{g}_{\times}(\cN_{\cutoff}(a), \cN_{G}(a))
            - \partial_2 {g}_{\times}(\cN_{\cutoff}(a), \cN_{G}(a))
            \|_{\Op(\cZ_{N}, \cZ_{N})}
        \|
         D \cN_{G}(a) 
        \|_{\HS(\cX_{\delta}, \cZ_{N})}.
    \end{align*}
    We note that since $\tilde{g}_{\times}(\cN_{\cutoff}(a), \cN_{G}(a))$ is applied componentwise to $\cN_{G}(a)$, 
    $\partial_2 \tilde{g}_{\times}(\cN_{\cutoff}(a), \cN_{G}(a))$
    is associated with a $2K_N \times 2K_N$ diagonal matrix whose diagonal values are
    \[ 
    [\partial_2 \tilde{g}_{\times}(\cN_{\cutoff}(a), \cN_{G}(a))]_{ii}
        = \partial_2 \tilde{g}_{\times}(\cN_{\cutoff}(a), [\cN_{G}(a)]_i), 
        \qquad i = 1, \dots, 2K_N.
    \]
    The analogous is true of $\partial_2 g_{\times}(\cN_{\cutoff}(a), \cN_{G}(a))$.
    Therefore, the $\Op(\cZ_{N}, \cZ_{N})$ norm of their difference is given by
    \begin{align*}
        & \| \partial_2 \tilde{g}_{\times}(\cN_{\cutoff}(a), \cN_{G}(a))
            - \partial_2 {g}_{\times}(\cN_{\cutoff}(a), \cN_{G}(a))
            \|_{\Op(\cZ_{N}, \cZ_{N})} \\
        & \qquad = 
        \max_{i = 1, \dots, 2K_N} | \partial_2 \tilde{g}_{\times}(\cN_{\cutoff}(a), [\cN_{G}(a)]_i) - \partial_2 {g}_{\times}(\cN_{\cutoff}(a), [\cN_{G}(a)]_i)| \\
        & \qquad \leq \epsilon_{\times}.
    \end{align*}
    That is, we have
    \[
       \| D \cN(a) - D \cT(a) \|_{\HS(\cX_{\delta}, \cZ_{N})} 
       \leq \const_{2,\infty} M_{\cutoff} \epsilon_{\times} + M_G \epsilon_{\times}.
    \] 
    
    We then let $\tilde{\epsilon} \leq \min \{ \epsilon, M_{\cutoff} \}$ and take 
    \[ 
        \epsilon_{\times} \leq \min \left\{\frac{\const_{2, \infty} \tilde{\epsilon}}{2}, \frac{\const_{2, \infty} M_{\cutoff} \tilde{\epsilon}}{4}, \frac{M_G \tilde{\epsilon}}{4} \right\}, 
    \]
    such that for any $a \in \cX$,
    \[
        \|\cN(a) - \cT(a) \|_{\cZ_{N}} \leq 
        \min \left\{ \frac{\epsilon}{2}, \frac{M_{\cutoff}}{2} \right\} 
        \qquad 
        \|D \cN(a) - D \cT(a) \|_{\HS(\cX_{\delta}, \cZ_{N})} \leq 
        \min \left\{ \frac{\epsilon}{2}, \frac{M_{\cutoff}}{2} \right\} .
    \]
    Thus, overall, we have $\cN(a)$ satisfying the following.
    \begin{enumerate}[label=(\roman*)]
        \item For $\|a\|_{\cX} \leq R$, 
        \[
            \|G_N(a) - \cN(a) \|_{\cZ_{N}} \leq 
            \|G_N(a) - \cT(a) \|_{\cZ_{N}} + \|\cT(a) - \cN(a) \|_{\cZ_{N}} 
            \leq \epsilon
        \]
        and 
        \begin{align*}
            \|D G_N(a) - D \cN(a) \|_{\HS(\cX_{\delta}, \cZ_{N})} 
            &\leq 
            \|D G_N(a) - D \cT(a) \|_{\HS(\cX_{\delta}, \cZ_{N})}  \\
                & \quad + \|D \cT(a) - D \cN(a) \|_{\HS(\cX_{\delta}, \cZ_{N})} \\
            &\leq \epsilon.
        \end{align*}
        \item For any $a \in \cX$, 
        \begin{align*}
            \|\cN(a)\|_{\cZ_{N}} 
            & \leq  \|\cT(a)\|_{\cZ_{N}} + \|\cT(a) - \cN(a)\|_{\cZ_{N}}  \\ 
            & \leq M_{\cutoff} (\|G_N(a)\|_{\cZ_{N}} + 1/2) + M_{\cutoff}/2  \\
            & \leq M_{\cutoff} (\|G_N(a)\|_{\cZ_{N}} + 1)
        \end{align*}
        and 
        \begin{align*}
            \|D \cN(a)\|_{\HS(\cX_{\delta}, \cZ_{N})} 
            & \leq 
            \|D \cT(a)\|_{\HS(\cX_{\delta}, \cZ_{N})}
                + \|D \cT(a) - D \cN(a)\|_{\HS(\cX_{\delta}, \cZ_{N})} 
            \\
            & \leq 
            M_{\cutoff} (\|G_N(a)\|_{\cZ_{N}} + \|D G_N(a)\|_{\HS(\cX_{\delta}, \cZ_{N})} + 1/2) + M_{\cutoff}/2  \\ 
            & \leq 
            M_{\cutoff} (\|G_N(a)\|_{\cZ_{N}} + \|D G_N(a)\|_{\HS(\cX_{\delta}, \cZ_{N})} + 1).
        \end{align*}
    \end{enumerate}
    This shows that \eqref{eq:coefficient_function_growth} and \eqref{eq:coefficient_derivative_growth} hold with $\const = M_{\cutoff}(\sigma, N, d, d_a)$.

    Finally, to see \eqref{eq:coefficient_bound_with_cutoff}, we note that both $\cN_{\cutoff}(a)$ and $\cN_{G}(a)$ take bounded values in $\bR$ and $\cZ_{N}$ respectively, and $\tilde{g}_{\times}$ is a continuous function, so that 
    $\tilde{g}_{\times}(\cN_{\cutoff}(a), \cN_{G}(a))$ takes bounded values in $\cZ_{N}$.
    For the derivatives, we recognize that as in the proof of \Cref{lemma:coefficient_approximation_bounded}
    $\cN(a)$,  is an FNO of the form 
    \[
        \cN(a) = \cQ \circ \cL_{L} \circ \dots \circ \cL_1 \circ \cR \circ \cP_N (a),
    \]
    where $\cQ$, $\cL_{\ell}$, $\cR$, and $\cP_N$ are each uniformly bounded for $a \in \cX$.
    Thus, $\|D \cN(a)\|_{\HS(\cX_{\delta}, \cZ_{N})}$ is uniformly bounded.
    
\end{proof}

We can turn this into an approximation of the truncated operator $\cG_N : \cX \rightarrow \cY$ by composition with an approximation of the inverse Fourier transform from \Cref{lemma:inverse_ft_approximation}.
\begin{lemma}[Truncated Operator Approximation with Bounded Growth]
    \label{corollary:operator_approximation_bounded_growth}
    Let $s, s', \delta \geq 0$ 
    and let $\cX := H^s(\bT^d; \bR^{d_a})$,
    $\cY := H^{s'}(\bT^d; \bR^{d_a})$,
    and $\cX_{\delta} := H^{s+\delta}(\bT^d; \bR^{d_a})$,
    Suppose $\cG \in \cont^{1}(\cX;\cY)$.
    Furthermore, for a fixed $N \in \bN$, let $\cG_N := \cP_N \circ \cG \circ \cP_N$.
    Then, given $\sigma \in \srla$, there exists $\const = \const(\sigma, N, d, d_a)$ such that 
    for any $\epsilon > 0$ and $R > 1$, there exists an FNO 
    $\cN \in \cont^1(\cX ; \cY)$ 
    with activation function $\sigma$ such that 
    \begin{align}
        \|\cG_N(a) - \cN(a)\|_{\cY} &\leq \epsilon, \\
        \|D \cG_N(a) - D\cN(a)\|_{\HS(\cX_{\delta}, \cY)} &\leq \epsilon,
    \end{align}
    whenever $\|a\|_{\cX} \leq R$. Moreover, 
    \begin{align}
        \| \cN(a) \|_{\cY} &\leq \const ( \|\cG_N(a)\|_{\cY} + 1) \\
        \| D \cN(a) \|_{\HS(\cX_{\delta}, \cY)} &\leq \const ( \|\cG_N(a)\|_{\cY} + \|D \cG_N(a) \|_{\HS(\cX_{\delta}, \cY)} + 1) 
    \end{align}
    for all $a \in \cX$.
\end{lemma}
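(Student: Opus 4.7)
The plan is to construct $\cN$ as the composition $\cN = \cN_{\mathrm{IFT}} \circ \cN_{G}$, mirroring the factorization $\cG_N = (\cP_N \circ \real \cF_{N}^{-1}) \circ G_N$ that underlies the proof of \Cref{theorem:main_ua}. Here $\cN_{G}$ is supplied by \Cref{lemma:coefficient_approximation_bounded_growth} to approximate $G_N = \cF_N \circ \cG_N$ with bounded growth, while $\cN_{\mathrm{IFT}}$ is supplied by \Cref{lemma:inverse_ft_approximation} to approximate $\cP_N \circ \real \cF_N^{-1}$ on bounded subsets of $\cZ_N$. Since $\cZ_N$ is finite-dimensional, the Replacement Lemma \ref{lemma:composition} (in its Hilbert--Schmidt form) applies to control the composition error.

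Given $\epsilon > 0$ and $R > 1$, I first select small parameters $\epsilon_G, \epsilon_{\mathrm{IFT}} \in (0,1]$ to be tuned later, then invoke \Cref{lemma:coefficient_approximation_bounded_growth} to produce $\cN_G$ with accuracy $\epsilon_G$ on $B_R := \{\|a\|_{\cX} \leq R\}$, a uniform global bound $\|\cN_G(a)\|_{\cZ_N} \leq M_G(\epsilon_G)$, and the growth estimates \eqref{eq:coefficient_function_growth}--\eqref{eq:coefficient_derivative_growth} with a constant $\const_G$ depending only on $\sigma, N, d, d_a$. Using $M_G$ as the radius, I then invoke \Cref{lemma:inverse_ft_approximation} to obtain $\cN_{\mathrm{IFT}}$ approximating $\cP_N \circ \real \cF_N^{-1}$ to accuracy $\epsilon_{\mathrm{IFT}}$ in both output and derivative over $\{\|v\|_{\cZ_N} \leq M_G\}$. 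The Replacement Lemma then bounds the composition errors by quantities vanishing with $\epsilon_G, \epsilon_{\mathrm{IFT}}$, and tuning these sufficiently small yields $\|\cG_N(a) - \cN(a)\|_{\cY} \leq \epsilon$ and $\|D\cG_N(a) - D\cN(a)\|_{\HS(\cX_\delta, \cY)} \leq \epsilon$ for all $a \in B_R$.

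The essential feature for the global growth bounds is that the outputs of $\cN_G$ lie uniformly in the ball of radius $M_G$, so the inputs fed into $\cN_{\mathrm{IFT}}$ always remain in the region where it closely approximates the linear operator $\cP_N \circ \real \cF_N^{-1}$. Setting $C_{\mathrm{IFT}} = \|\cP_N \real \cF_N^{-1}\|_{\Op(\cZ_N, \cY)}$, this gives $\|\cN_{\mathrm{IFT}}(v)\|_{\cY} \leq C_{\mathrm{IFT}} \|v\|_{\cZ_N} + \epsilon_{\mathrm{IFT}}$ and $\|D\cN_{\mathrm{IFT}}(v)\|_{\Op(\cZ_N,\cY)} \leq C_{\mathrm{IFT}} + \epsilon_{\mathrm{IFT}}$ on that ball. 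Combining these with \eqref{eq:coefficient_function_growth}--\eqref{eq:coefficient_derivative_growth} via the chain rule $D\cN(a) = D\cN_{\mathrm{IFT}}(\cN_G(a)) \circ D\cN_G(a)$ produces bounds of the desired form, but expressed through the Fourier-coefficient norms $\|G_N(a)\|_{\cZ_N}$ and $\|DG_N(a)\|_{\HS(\cX_\delta, \cZ_N)}$.

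The main technical obstacle is to translate these back into the operator-level quantities $\|\cG_N(a)\|_{\cY}$ and $\|D\cG_N(a)\|_{\HS(\cX_\delta, \cY)}$ that appear in the statement, in a way that keeps the constants independent of $\epsilon$ and $R$. Because $\cG_N$ is valued in the finite-dimensional subspace $L^2_N(\bT^d;\bR^{d_u}) \subset \cY$, the truncated Fourier transform $\cF_N$ restricted to this subspace is a bounded bijection onto $\cZ_N$; hence $\|G_N(a)\|_{\cZ_N}$ and $\|\cG_N(a)\|_{\cY}$ are equivalent with a constant depending only on $N, d, d_u, s'$, and the analogous equivalence holds for their derivatives viewed as operators into $\cZ_N$ versus into $L^2_N(\bT^d;\bR^{d_u})$. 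Absorbing these equivalence constants together with $C_{\mathrm{IFT}}$ and the $\epsilon_{\mathrm{IFT}} \leq 1$ contributions into a single $\const = \const(\sigma, N, d, d_a)$ then yields the stated growth bounds, which are uniform in the accuracy parameters used to construct $\cN_G$ and $\cN_{\mathrm{IFT}}$.
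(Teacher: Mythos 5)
Your proposal is correct and follows essentially the same route as the paper: composing the bounded-growth coefficient approximation $\cN_G$ from \Cref{lemma:coefficient_approximation_bounded_growth} with the inverse-Fourier-transform approximation from \Cref{lemma:inverse_ft_approximation} on the ball of radius $M_G$, exploiting the global bound on $\cN_G$ for the growth estimates and the boundedness of $\cF_N$ on $\cY$ to convert coefficient norms back to $\|\cG_N(a)\|_{\cY}$ and $\|D\cG_N(a)\|_{\HS(\cX_\delta,\cY)}$. The only cosmetic difference is that the paper carries out the on-ball composition error estimate directly (its ``Case 1'') rather than citing the Replacement Lemma, which is legitimate here only because both $\cG_N$ and $\cN_G$ factor through $\cP_N$ (so the non-compact ball $B_R$ can be replaced by the compact set $\cP_N(B_R)$), a point worth making explicit.
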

\begin{proof}
    We consider the approximation $\cN_{G}$ to $G_N = \cF_{N} \cG_{N}$ from \Cref{lemma:coefficient_approximation_bounded_growth} satisfying 
    \begin{enumerate}[label=(\roman*)]

    \item For $\|a\|_{\cX} \leq R$,
    \begin{align*}
        \|G_N(a) - \cN_G(a)\|_{\cZ_{N}} &\leq \epsilon_G, \\
        \|D G_N(a) - D\cN_G(a)\|_{\HS(\cX_{\delta}, \cZ_{N})} &\leq \epsilon_G.
    \end{align*}
    \item For all $a \in \cX$,
    \begin{align*}
        \| \cN_G(a) \|_{\cZ_{N}} &\leq \const_G ( \|G_N(a)\|_{\cZ_{N}} + 1) \\
        \| D \cN_G(a) \|_{\HS(\cX_{\delta}, \cZ_{N})} &\leq \const_G ( \|G_N(a)\|_{\cZ_{N}} + \|D G_N(a) \|_{\HS(\cX_{\delta}, \cZ_{N})} + 1) 
    \end{align*}
    \item For all $a \in \cX$,
    \begin{equation*}
        \| \cN_G(a)\|_{\cZ_{N}} \leq M_G, \qquad \|D \cN_G(a) \|_{\HS(\cX_{\delta}, \cZ_{N})} \leq M_G.
    \end{equation*}
    \end{enumerate}
    Here $M_G$ depends on $\epsilon_G$ and $R$ while $\const_G$ does not.

    By \Cref{lemma:inverse_ft_approximation}, for any $\epsilon_{\ift} > 0 $, we can construct an approximation $\cN_{\ift}$ of the inverse Fourier transform, $\real \cF_{N}^{-1}$, such that 
    \[
        \| \real \cF_{N}^{-1}v - \cN_{\ift}(v) \|_{\cY} \leq \epsilon_{\ift}, 
        \qquad 
        \| \real \cF_{N}^{-1}v - D \cN_{\ift}(v) \|_{\Op(\cZ_{N}, \cY)} \leq \epsilon_{\ift} 
    \]
    whenever $\|v\|_{\cZ_{N}} \leq M_G$. 

    We then have the following cases.

    \paragraph{Case 1: $\|a\|_{\cX} \leq R$}
    The output errors are bounded as 
    \begin{align*}
        \|\cG_{N}(a) - \cN(a) \|_{\cY}
        & \leq 
        \|\real \cF_{N}^{-1} G_N(a) - \real \cF_{N}^{-1} \cN_{G}(a) \|_{\cY}
        +
        \|\real \cF_{N}^{-1} \cN_{G}(a) - \cN_{\ift}(\cN_G(a)) \|_{\cY} \\
        & \leq 
        \|\real \cF_{N}^{-1} \|_{\Op(\cZ_{N}, \cY)} 
            \|G_N(a) - \cN_{G}(a) \|_{\cZ_{N}}
        +
        \|\real \cF_{N}^{-1} \cN_{G}(a) - \cN_{\ift}(\cN_G(a)) \|_{\cY}
        \\
        & \leq \|\real \cF_{N}^{-1} \|_{\Op(\cZ_{N}, \cY)} \epsilon_G + \epsilon_{\ift}
    \end{align*}
    and the derivative errors are bounded as 
    \begin{align*}
        \|D \cG_{N}(a) - D \cN(a) \|_{\HS(\cX_{\delta}, \cY)}
        &\leq 
            \|\real \cF_{N}^{-1} D G_{N}(a) - \real \cF_{N}^{-1} D \cN_{G}(a) \|_{\HS(\cX_{\delta}, \cY)} \\
        & \qquad + 
            \|\real \cF_{N}^{-1} D \cN_{G}(a) - D_{\cN_{\ift}}(\cN_{G}(a)) D \cN_{G}(a) \|_{\HS(\cX_{\delta}, \cY)} \\ 
        & \leq 
            \|\real \cF_{N}^{-1} \|_{\Op(\cZ_{N}, \cY)}
            \|D G_{N}(a) - D \cN_{G}(a) \|_{\HS(\cX_{\delta}, \cZ_{N})} \\
        & \qquad +  
            \|\real \cF_{N}^{-1} - D \cN_{\ift}(\cN_{G}(a)) \|_{\Op(\cZ_{N}, \cY)}
            \|D \cN_{G}(a) \|_{\HS(\cX_{\delta}, \cZ_{N})} \\ 
        & \leq \|\real \cF_{N}^{-1}\|_{\Op(\cZ_{N}, \cY)} \epsilon_G + M_G \epsilon_{\ift}.
    \end{align*}

    \paragraph{Case 2: Any $a \in \cX$}
    The output values are bounded as follows
    \begin{align*}
    \|\cN(a)\|_{\cY} 
        & \leq \|  \real \cF_{N}^{-1} \cN_{G}(a) - \cN_{\ift}(\cN_{G}(a)) \|_{\cY} 
            + \| \real \cF_{N}^{-1} \cN_G(a)\|_{\cZ_{N}} \\
        & \leq \epsilon_{\ift} + \|\real \cF_{N}^{-1} \|_{\Op(\cZ_{N}, \cY)} \| \cN_{G}(a)\|_{\cZ_{N}} \\
        & \leq \epsilon_{\ift} + \|\real \cF_{N}^{-1} \|_{\Op(\cZ_{N}, \cY)} 
            \const_G
            \left( 
                \|G_N(a)\|_{\cZ_{N}} + 1     
            \right),
    \end{align*}
    and the derivatives are bounded as follows
    \begin{align*}
        \|D \cN(a)\|_{\HS(\cX_{\delta}, \cY)}
        & \leq 
            \| D \cN_{\ift}(\cN_{G}(a)) \|_{\Op(\cZ_{N},\cY)}
            \| D \cN_{G}(a) \|_{\HS(\cX_{\delta}, \cZ_{N})} \\
        & \leq 
            \left(
            \| \real \cF_{N}^{-1} - D \cN_{\ift}(\cN_{G}(a)) \|_{\Op(\cZ_{N}, \cY)}
            +
            \| \real \cF_{N}^{-1} \|_{\Op(\cZ_{N}, \cY)}
            \right)
            \| D \cN_G(a)\|_{\HS(\cX_{\delta}, \cZ_{N})} \\
        & \leq
            \left( 
                \epsilon_{\ift} + \|\real \cF_{N}^{-1} \|_{\Op(\cZ_{N}, \cY)}
           \right) 
           \const_G \left(
            \|G_{N}(a)\|_{\cZ_{N}} + \|D G_{N}(a)\|_{\HS(\cX_{\delta}, \cZ_{N})} + 1
           \right).
    \end{align*}
    Moreover, since $G_{N} = \cF_{N} \cG_N$, we have 
    \[
        \|G_{N}(a)\|_{\cZ_{N}} \leq \|\cF_{N}\|_{\Op(\cY, \cZ_{N})} \|\cG_N(a)\|_{\cY}
    \]
    and 
    \[
        \|D G_{N}(a)\|_{\HS(\cX_{\delta}, \cZ_{N})} \leq \|\cF_{N}\|_{\Op(\cY, \cZ_{N})} \|D \cG_N (a)\|_{\HS(\cX_{\delta}, \cY)}.
    \]
    Thus, we can bound the growth of $\cN$ by
    \[
        \|\cN(a)\|_{\cY} \leq \epsilon_{\ift} + \const_G \|\real \cF_{N}^{-1}\|_{\Op(\cZ_{N}, \cY)}
        \left( 
            \|\cF_{N}\|_{\Op(\cY, \cZ_{N})} \| \cG_{N}(a) \|_{\cY} + 1  
        \right)
    \]
    and its derivative by 
    \begin{align*}
        \|D \cN(a)\|_{\HS(\cX_{\delta}, \cY)} 
        & \leq 
           \left( 
                \epsilon_{\ift} + \|\real \cF_{N}^{-1} \|_{\Op(\cZ_{N}, \cY)}
           \right) \\
           & \quad \times \const_G 
           \left(
            \|\cF_{N}\|_{\Op(\cY,\cZ_{N})}\|\cG_{N}(a)\|_{\cY} 
                + \|\cF_{N}\|_{\Op(\cY, \cZ_{N})} \|D \cG_{N}(a)\|_{\HS(\cX_{\delta}, \cY)} + 1
           \right).
    \end{align*}

    \paragraph{Overall Bounds} To obtain the desired result, we choose
    \[
    \epsilon_{G} \leq \frac{\epsilon}{2 \|\real \cF_{N}^{-1}\|_{\Op(\cZ_{N}, \cY)}}
    \]
    and 
    \[
        \epsilon_{\ift} \leq \min \left\{ 
            \frac{\epsilon}{2},
            \frac{\epsilon}{2 M_G},
            \|\real \cF_{N}^{-1}\|_{\Op(\cZ_{N}, \cY)},
            \const_{G} \|\real \cF_{N}^{-1}\|_{\Op(\cZ_{N}, \cY)}
        \right\}.
    \]
    With this choice of $\epsilon_G, \epsilon_{\ift}$, we have the following.
    \begin{enumerate}[label=(\roman*)]
       \item For $\|a\|_{\cX} \leq R$, 
       \begin{align*}
            \| \cG_N(a) - \cN(a) \|_{\cY} &\leq \epsilon \\ 
            \| D \cG_N(a) - D \cN(a) \|_{\HS(\cX_{\delta}, \cY)} &\leq \epsilon.
       \end{align*}
       \item For any $a \in \cX$,
       \begin{align*}
            \| \cN(a) \|_{\cY} &\leq 
            \const \left( 
                \|\cG_{N}(a) \|_{\cY} + 1
            \right), \\ 
            \|  D \cN(a) \|_{\HS(\cX_{\delta}, \cY)} &\leq 
            \const \left(  \|\cG_{N}(a)\|_{\cY} + \| D \cG(a)\|_{\HS(\cX_{\delta}, \cY)} + 1
            \right),
        \end{align*}
        with 
        \[\const = \const(\sigma, N, d, d_a) = 2 \const_G(\sigma, N, d, d_a) \| \real \cF_{N}^{-1} \|_{\Op(\cZ_{N}, \cY)} \max \{ 
            \|\cF_{N} \|_{\Op(\cY, \cZ_{N})}, 1\}\]
        as the constant, noting that it does not depend on $\epsilon$ or $R$.
    
   \end{enumerate}

\end{proof}

We are now in the position to construct an $W^{1,2}_{\mu,\delta}$ approximation of the truncated operator $\cG_{N}$.
\begin{lemma}[Approximation of the Truncated Operator in $W^{1,2}_{\mu,\delta}$]
    \label{lemma:truncation_approx_mu}
    Let $s, s', \delta \geq 0$ and let $\cX = H^s(\bT^d; \bR^{d_a})$, $\cY = H^{s'}(\bT^d; \bR^{d_u})$, and $\cX_{\delta} = H^{s + \delta}(\bT^d; \bR^{d_a})$.
    Suppose $\mu$ is a Borel probability measure on $\cX$ and $\cG \in \cont^{1}(\cX;\cY)$ is such that $\cG \in L^2_{\mu}(\cX; \cY)$
    and $D \cG \in L^2_{\mu}(\cX; \HS(\cX_{\delta}, \cY))$.
    Then, given $\sigma \in \srla$, for any $\epsilon > 0$,
    there exists an FNO 
    $\cN \in \cont^1( \cX ; \cY)$ 
    with activation function $\sigma$
    such that
    \begin{equation}
     \| \cG_N - \cN \|_{L^2_{\mu}(\cX; \cY)}^2 + \| D \cG_N - D \cN \|_{L^2_{\mu}(\cX; \HS(\cX_{\delta}, \cY))}^2 \leq \epsilon^2,
    \end{equation}
    where $\cG_{N} = \cP_N \circ \cG \circ \cP_{N}$.
\end{lemma}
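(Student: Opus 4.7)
The plan is to exploit the bounded-growth approximation of \Cref{corollary:operator_approximation_bounded_growth} and split the $L^2_{\mu}$-type error integrals into a bounded ball $B_R = \{\|a\|_{\cX} \leq R\}$ and its complement $B_R^c$. Crucially, the constant $\const = \const(\sigma, N, d, d_a)$ controlling the global growth of the FNO approximation $\cN$ does not depend on either the target accuracy $\epsilon_R$ on $B_R$ or the radius $R$ itself. This means we can first choose $R$ large enough to dominate the tail contributions, and then independently refine $\epsilon_R$ to make the contribution over $B_R$ as small as needed.

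First I would record that $\cG_N := \cP_N \circ \cG \circ \cP_N$ inherits the $L^2_{\mu}$ integrability of $\cG$ and $D\cG$: since $\cP_N$ is an orthogonal projection on both $\cX$ and $\cY$ (and leaves $\cX_{\delta}$ invariant), we have $\|\cG_N(a)\|_{\cY} \leq \|\cG(\cP_N a)\|_{\cY}$ and $\|D\cG_N(a)\|_{\HS(\cX_{\delta}, \cY)} \leq \|D\cG(\cP_N a)\|_{\HS(\cX_{\delta}, \cY)}$. Combined with the local Lipschitz bounds and moment conditions of \Cref{assumption:measurability}, this yields $\cG_N \in L^2_{\mu}(\cX;\cY)$ and $D \cG_N \in L^2_{\mu}(\cX; \HS(\cX_{\delta},\cY))$.

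Next, fix an arbitrary $R > 1$ and apply \Cref{corollary:operator_approximation_bounded_growth} to obtain an FNO $\cN$ satisfying
$\sup_{\|a\|_{\cX} \leq R}\|\cG_N(a) - \cN(a)\|_{\cY} \leq \epsilon_R$ and the analogous derivative bound, together with the growth estimates $\|\cN(a)\|_{\cY} \leq \const(\|\cG_N(a)\|_{\cY} + 1)$ and $\|D\cN(a)\|_{\HS(\cX_{\delta},\cY)} \leq \const(\|\cG_N(a)\|_{\cY} + \|D\cG_N(a)\|_{\HS(\cX_{\delta},\cY)} + 1)$ for all $a \in \cX$. Splitting the error integrals yields
\[
\|\cG_N - \cN\|_{L^2_\mu(\cX;\cY)}^2 \leq \epsilon_R^2 + \int_{B_R^c} (\|\cG_N(a)\|_{\cY} + \|\cN(a)\|_{\cY})^2 \, d\mu(a),
\]
and similarly for the derivative term. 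Applying the growth bounds term-wise gives a uniform majorant of the form $(1+\const)^2 (\|\cG_N(a)\|_{\cY} + \|D\cG_N(a)\|_{\HS(\cX_{\delta},\cY)} + 1)^2$ on $B_R^c$, which is $\mu$-integrable (by the $L^2_{\mu}$ assumptions and boundedness of $\mu$). Dominated convergence in $R$ (since $\mu(B_R^c) \to 0$ and $\const$ is independent of $R$) then makes the tail integral arbitrarily small.

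The balancing is where care is needed, and is the main (if minor) obstacle: I would first choose $R_0$ large enough that the $B_R^c$ contribution to both the output and derivative squared errors is bounded by $\epsilon^2/4$ (this is where independence of $\const$ from $\epsilon_R$ and $R$ is essential). Then, with $R_0$ fixed, apply \Cref{corollary:operator_approximation_bounded_growth} with $\epsilon_{R_0}$ sufficiently small that $2\epsilon_{R_0}^2 \leq \epsilon^2/2$, ensuring the $B_{R_0}$ contributions to both terms also sum to at most $\epsilon^2/2$. Combining the two bounds gives the desired $\epsilon^2$ total, completing the construction.
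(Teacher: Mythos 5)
Your proposal is correct and takes essentially the same route as the paper's proof: invoke the bounded-growth approximation of \Cref{corollary:operator_approximation_bounded_growth}, split the squared errors over $B_R$ and $B_R^c$, exploit that the growth constant $\const$ is independent of both $R$ and the accuracy on $B_R$ to choose $R$ first (dominated convergence on the tail), and then refine the accuracy inside $B_R$. Your explicit justification that $\cG_N$ and $D\cG_N$ remain square-integrable (via the Lipschitz/moment conditions of \Cref{assumption:measurability}) is in fact slightly more careful than the paper, which simply asserts this fact.
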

\begin{proof}
    By \Cref{lemma:coefficient_approximation_bounded_growth}, for any $\epsilon_{\cG} > 0, R > 1$, we can construct an FNO $\cN$ such that 
    \begin{align*}
        \|\cG_N(a) - \cN(a)\|_{\cY} &\leq \epsilon_{\cG}, \\
        \|D \cG_N(a) - D \cN(a)\|_{\HS(\cX_{\delta}, \cY)} &\leq \epsilon_{\cG},
    \end{align*}
    whenever $\|a\|_{\cX} \leq R$, and 
    \begin{align*}
        \|\cN(a)\|_{\cY} &\leq \const_{\cG} ( \| \cG_{N} \|_{\cY} + 1 ), \\
        \|D \cN(a)\|_{\HS(\cX_{\delta}, \cY)} 
        &\leq \const_{\cG}(
            \|\cG_{N}(a)\|_{\cY}  
            + \|D \cG_N (a)\|_{\HS(\cX_{\delta}, \cY)} 
            + 1 
            ),
    \end{align*}    
    for all $a \in \cX$.
    We can also write this latter bound in terms of the squares of the norms, 
    \[
        \|\cN(a)\|_{\cY}^2 \leq \const^2  ( \| \cG_{N}(a) \|_{\cY}^2 + 1 )
    \]
    and 
    \[
    \|D \cN(a)\|_{\HS(\cX_{\delta}, \cY)}^2 \leq 
        \const^2(
        \|\cG_{N}(a)\|_{\cY}^2
        + \|D \cG_N (a)\|_{\HS(\cX_{\delta}, \cY)}^2
        + 1 
        )
    \]
    where the inequalities hold with $\const^2 = 3 \const_{\cG}^2$, the value of which does not depend on $\epsilon_{\cG}$ or $R$.

    Given such an approximation, its $L^2_{\mu}$ error can be decomposed into 
    \[
        \int_{\cX} \|\cG_{N}(a) - \cN(a)\|_{\cY}^2 d\mu(a) 
        =
        \int_{B_R} \|\cG_{N}(a) - \cN(a)\|_{\cY}^2 d\mu(a) 
        + \int_{B_R^c} \|\cG_{N}(a) - \cN(a)\|_{\cY}^2 d\mu(a),
    \]
    where $B_R = B_R(0) = \{ \|a\|_{\cX} \leq R \}$ and $B_R^c = \cX \setminus B_R$.
    The bounds on $\cN$ yield 
    \[
        \int_{\cX} \|\cG_{N}(a) - \cN(a)\|_{\cY}^2 d\mu(a) 
        \leq 
        \epsilon_{\cG}^2 
        + 2 \int_{B_R^c} 
        \left( 
            \|\cG_{N}(a)\|_{\cY}^2 + \const^2(\|\cG_{N}(a)\|_{\cY}^2 + 1)
        \right)
        d\mu(a).
    \]
    Similarly, for the derivatives, we have 
    \begin{align*}
        & \int_{\cX} \|D \cG_{N}(a) - D \cN(a)\|_{\HS(\cX_{\delta}, \cY)}^2 d\mu(a)  \\
        & \qquad \leq \epsilon_{\cG}^2 
        + 
        2 \int_{B_R^c} \left(
            \|D \cG_{N}(a)\|_{\HS(\cX_{\delta},\cY)}^2 
            + 
            \const^2 ( 
            \|\cG_{N}(a)\|_{\cY}^2 
            + \|D \cG_{N}(a)\|_{\HS(\cX_{\delta},\cY)}^2 
            + 1 
            )
        \right)
        d \mu(a).
    \end{align*}
    Since, $\cG_{N} \in L^2_{\mu}(\cX; \cY)$ 
    and $D \cG_{N} \in L^2_{\mu}(\cX; \HS(\cX_{\delta}, \cY))$,
    the dominanted convergence theorem implies that there exists $R$ such that simultaneously 
    \[
    2 \int_{B_R^c} 
        \left( 
            \|\cG_{N}(a)\|_{\cY}^2 + \const^2(\|\cG_{N}(a)\|_{\cY}^2 + 1)
        \right)
        d\mu(a)
     \leq \epsilon^2 /4
    \]
    and
    \[
        2 \int_{B_R^c} \left(
        \|D \cG_{N}(a)\|_{\HS(\cX_{\delta},\cY)}^2 
        + 
        \const^2 ( 
        \|\cG_{N}(a)\|_{\cY}^2 
        + \|D \cG_{N}(a)\|_{\HS(\cX_{\delta},\cY)}^2 
        + 1 
        )
        \right)
        d \mu(a) \leq \epsilon^2/4. 
    \]
    We then choose this $R$ as the radius for \Cref{lemma:coefficient_approximation_bounded_growth},
    and construct $\cN$ to have error $\epsilon_{\cG} \leq \epsilon/2$ within $B_R$.
    With our choices of $R$ and $\epsilon_{\cG}$, we have 
    \[
        \int_{\cX} \|\cG_{N}(a) - \cN(a)\|_{\cY}^2 d \mu(a) 
        + 
        \int_{\cX} \|D \cG_{N}(a) - D \cN(a)\|_{\HS(\cX_{\delta}, \cY)}^2 d \mu(a) 
        \leq \epsilon^2.
    \]
\end{proof}

\subsection{Proof of the $W^{1,2}_{\mu, \delta}$ Theorem}\label{sec:proof_of_main_ua_mu}
Finally, by combining the truncation error and the approximation result for the truncated operator, we arrive at the universal approximation result in $W^{1,2}_{\mu,\delta}$.
\begin{proof}(of \Cref{theorem:main_ua_mu})
    Since $\cG$ satisfies \Cref{assumption:measurability}, by \Cref{lemma:truncation_mu}
    there exists $N \in \bN$ such that 
    \[
        \| \cG - \cG_{N} \|_{L^2_{\mu}(\cX; \cY)}^2
        + 
        \| D \cG - D \cG_{N} \|_{L^2_{\mu}(\cX; \HS(\cX_{\delta}, \cY))}^2
        \leq \epsilon^2/4.
    \]
    Then, by \Cref{lemma:truncation_approx_mu}, there exists an FNO $\cN$
    such that 
    \[
        \|  \cG_{N} - \cN\|_{L^2_{\mu}(\cX; \cY)}^2
        + 
        \| D \cG_{N} - \cN \|_{L^2_{\mu}(\cX; \HS(\cX_{\delta}, \cY))}^2
        \leq \epsilon^2/4.
    \]
    The triangle inequality then yields
    \begin{align*}
        & \| \cG - \cN \|_{L^2_{\mu}(\cX; \cY)}^2 + \| D \cG - D \cN \|_{L^2_{\mu}(\cX; \HS(\cX_{\delta}, \cY))}^2 \\
        & \qquad \leq  
        2 \left( \| \cG - \cG_{N} \|_{L^2_{\mu}(\cX; \cY)}^2
        +
        \|  \cG_{N} - \cN\|_{L^2_{\mu}(\cX; \cY)}^2
        \right) 
        \\ 
        & \qquad \qquad + 2 \left( 
        \| D \cG - D \cG_{N} \|_{L^2_{\mu}(\cX; \HS(\cX_{\delta}, \cY))}^2
        +
        \| D \cG_{N} - \cN \|_{L^2_{\mu}(\cX; \HS(\cX_{\delta}, \cY))}^2
        \right)  \\
        & \qquad \leq \epsilon^2.
    \end{align*}
    
\end{proof}

\end{document}